\newcommand{\1}{\mathbf{1}}
\newtheorem{question}{Question}
\newcommand{\bbE}{\mathbb{E}}
\newcommand{\bbP}{\mathbb{P}}
\newcommand{\bbR}{\mathbb{R}}
\newcommand{\pa}[1]{\left(#1\right)}
\newcommand{\ac}[1]{\left\{#1\right\}}
\newcommand{\cro}[1]{\left[#1\right]}
\title[Phase Transition in SBM]{Phase Transition for Stochastic Block Model with more than $\sqrt{n}$ Communities}
\begin{document}

\maketitle

\begin{abstract}
   Predictions from statistical physics postulate that recovery of the communities in the Stochastic Block Model (SBM) with a fixed number $K$ of communities is possible in polynomial time above, and only above, the Kesten-Stigum (KS) threshold. This conjecture has given rise to a rich literature,  proving that non-trivial community recovery is indeed possible in SBM above the KS threshold. Failure of low-degree polynomials (LDP) below the KS threshold was also proven, as long as $K\ll \sqrt{n}$, where $n$ is the number of nodes in the observed graph.
    
    When $K\geq \sqrt{n}$,~\cite{pmlr-v291-chin25a} recently proved that,  in a \emph{sparse regime},  community recovery  in polynomial time is possible below the KS threshold by counting non-backtracking paths. This breakthrough led them to postulate a new threshold for the many-communities regime $K\geq \sqrt{n}$. In this work, we provide evidence supporting their conjecture:\\
 1- We prove that, for \emph{any graph density}, LDP fail to recover communities below the threshold postulated by \cite{pmlr-v291-chin25a} ;\\
 2- We prove that community recovery is possible in polynomial time above the postulated threshold, not only in the \emph{sparse regime} considered in~\cite{pmlr-v291-chin25a}, but also in \emph{moderately sparse regimes}, by counting occurrences of some specific motifs inspired by the LDP analysis.\\
 In particular, counting  self-avoiding paths of length $\log(n)$—which is closely related to spectral algorithms based on the Non-Backtracking operator—is optimal only in the sparse regime. More complex motifs based on the blow-up of a cycle must be considered in denser regimes. 
\end{abstract}

\begin{keywords}%
  Stochastic Block Model, Low Degree Polynomials, Computational Barrier
\end{keywords}

\section{Introduction}

Network analysis  investigates random interactions among individuals or objects.  A typical network over $n$ individuals can be represented as an undirected graph with $n$ nodes, where edges encode the observed binary interactions between individuals. 
Examples of such networks include social networks (interactions being e.g., friendships, links, or followers), 
biological networks (e.g., gene---gene or gene---protein interactions), 
and information networks (e.g., email communication networks, internet, or citation networks).
The \emph{Stochastic Block Model} (SBM), introduced by Holland, Laskey, and Leinhardt~\cite{holland1983stochastic} is a popular statistical model for network analysis. In the SBM, the set of nodes is partitioned into $K < n$ disjoint groups, or \emph{communities}, according to a random assignment. Then, 
conditional on the (unobserved) community assignments, edges are generated independently, 
with probabilities determined solely by the community memberships of the nodes they connect. 
In its simplest version, the probability of connection is $p$ if both nodes belong to the same community, and $q$ otherwise. When $p>q$, nodes within the same community tend to exhibit stronger connectivity among themselves than with nodes outside their community. 
Because of its simple analytic structure, its ability to capture community formation, and its interesting 
theoretical properties, the SBM has become an iconic model for network data, and it has received significant attention from theoreticians in statistics, computer science and probability~\cite{AbbeReview2018}. 

The central task in SBM analysis is to recover the latent community memberships of the nodes 
from a single observed network. For a fixed  number of communities $K$, a community assignment uniformly at random,  a diverging population size $n$, and probabilities of connection $p$ and $q$ scaling as $1/n$, the seminal paper of Decelle, Krzakala, Moore and Zdeborov\'a~\cite{Decelle2011} conjectured with the replica heuristic from statistical physics, that, while non-trivial community recovery is possible above the information threshold
\begin{equation}\label{eq:inform}
{n\lambda^2 \over K\lambda  + K^2q} \gtrsim {\log(K)\over K},\quad \text{where}\ \lambda=p-q\ ;
\end{equation}
community recovery is possible in polynomial time  only above the Kesten–Stigum (KS) threshold
\begin{equation}\label{eq:KS}
\lambda>\lambda_{KS},\quad \text{with $\lambda_{KS}$ defined as the solution to}\quad
{n\lambda_{KS}^2 \over K\lambda_{KS}  + K^2q} =1\ .
\end{equation}
This conjecture then suggests that, for $K\geq 5$, there is an information-computation gap for community recovery in SBM in this sparse regime, which means that the minimal separation $\lambda$ required for community recovery in polynomial-time is larger than the minimal separation required without algorithmic constraints. This conjecture has raised a high interest among theoreticians, and a lot of efforts have been devoted to confirm this conjecture.

On the one hand, non-trivial recovery in polynomial time above the KS threshold was established in this setting in~\cite{massoulie2014community,mossel2018proof,bordenave2015non,AbbeSandon2015a,pmlr-v291-chin25a}, see~\cite{AbbeReview2018} for a more exhaustive review.
On the other hand, a strong effort was devoted to prove the impossibility to recover the communities in poly-time below the KS threshold. 
Due to the random nature of the observed graph in the SBM, the classical notions of worst-case hardness, such as P, NP, etc are not suitable for proving computational hardness in this framework. 
In such random settings, other notions are considered, such as reduction to other hard statistical problems, like planted clique~\cite{brennan2020reducibility,pmlr-v30-Berthet13,brennan2018reducibility}, or computational hardness in some specific models of computations, such as SoS \cite{HopkinsFOCS17,Barak19}, overlap gap property \cite{gamarnik2021overlap}, statistical query \cite{kearns1998efficient,brennan2020statistical}, and low-degree polynomials (LDP) \cite{hopkins2018statistical,KuniskyWeinBandeira,SchrammWein22,SohnWein25}. Among these, LDP bounds have gained attention for establishing state-of-the-art lower bounds in tasks like community detection~\cite{Hopkins17},  spiked tensor models~\cite{Hopkins17,KuniskyWeinBandeira}, sparse PCA~\cite{ding2024subexponential}, Gaussian clustering~\cite{Even25a}, and many other models~\cite{SurveyWein2025}. In this model of computation, only estimators that are multivariate polynomials of degree at most $D$ are considered. It is conjectured that, for many problems, failure of degree $D = O(\log n)$ polynomials enforces failure of any polynomial-time algorithm~\cite{KuniskyWeinBandeira} (LDP conjecture). 
The first LDP hardness results  for SBM were for the detection problem~\cite{Hopkins17,bandeira2021spectral,kunisky2024low}, which is the problem of testing $\lambda=0$ against $\lambda>0$.
LDP hardness below the KS threshold~\eqref{eq:KS} for community recovery was proven  in~\cite{luo2023computational, SohnWein25, ding2025low}.

The setting where the number $K$ of communities grows with $n$ has attracted much less attention until recently. A polynomial time algorithm performing non-trivial recovery close to the KS threshold was already proposed in~\cite{chen2014statistical}. Yet, for growing $K$, non-trivial community recovery in poly-time  just above the    KS threshold  was only obtained recently in \cite{stephan2024community,pmlr-v291-chin25a}. 
Lower bounds for growing $K$ have been derived in~\cite{luo2023computational, ding2025low,pmlr-v291-chin25a}, proving failure of LDP below the KS threshold~\eqref{eq:KS} when $K=o(\sqrt{n})$, both in  the sparse case discussed here ($q$ scaling like $1/n$), and in the denser case where probabilities of connection  grow faster than $1/n$. In particular, \cite{pmlr-v291-chin25a} proves that, conditional on the LDP conjecture, the prediction from \cite{Decelle2011} holds in the sparse case as long as $K=o(\sqrt{n})$. Indeed, in addition to proving  failure of LDP below the KS threshold in any regime, they also prove the existence of a poly-time algorithm based on non-backtracking statistics which succeeds to partially recover the communities above the KS threshold in the sparse regime (where $q$ scales like $1/n$).

The regime with a higher number of communities, $K\geq \sqrt{n}$ remains mostly not understood. Chin, Mossel, Sohn and Wein~\cite{pmlr-v291-chin25a} made an important breakthrough by proving that when $K\gg \sqrt{n}$, a poly-time algorithm based on non-backtracking statistics  succeeds to partially recover the communities below the KS threshold in the \emph{sparse regime} (where $q$ scales like $1/n$). A similar phenomenon had  already been observed in \cite{Even24} for the related problem of clustering in Gaussian mixture: When $K\geq \sqrt{n}$, a simple hierarchical clustering algorithm succeeds to recover the clusters below the KS threshold (of Gaussian mixture models), and it has been proved to be optimal  up to possible log factors~\cite{Even24,Even25a}.  For the SBM in the sparse regime  ($q$ scaling like $1/n$), \cite{pmlr-v291-chin25a} prove that, when $K\gg \sqrt{n}$, partial recovery of the communities is possible in polynomial time when
\begin{equation}\label{eq:new0}
\lambda \gtrsim_{\log} \pa{q+{\lambda \over K}}^{1-\log_{n}(K)},
\end{equation}
where $\log_{n}$ is the logarithm in base $n$ (i.e. $K=n^{\log_{n}(K)}$) and where $\gtrsim_{\log}$ hides a poly-log factor, which is proportional to the square of the natural logarithm of the average density. The Condition~\eqref{eq:new0} can be simplified as
\begin{equation}\label{eq:new}
\lambda \gtrsim_{\log} q^{1-\log_{n}(K)},
\end{equation}
for $q\geq 1/n$.
This upper-bound for the sparse regime led them to pose the following open questions (Question 1.9 in~\cite{pmlr-v291-chin25a}, that we transpose with our notation)
\begin{center}
{\it Let $\log_{n}(K)={1\over 2}+\Omega(1)$. Does the phase transition for efficient weak
recovery occur at $\lambda \gtrsim_{\log} \pa{q+{\lambda \over K}}^{1-\log_{n}(K)}$, where $\gtrsim_{\log}$  hides poly-log factors? Moreover, is there a sharp or coarse
transition in the recovery accuracy as $\lambda$ varies as a function of the average density?}
\end{center}
Let us (partially) rephrase these two questions for convenience of the discussion. When $K\gg \sqrt{n}$:
\begin{enumerate}\setlength{\itemsep}{0pt}
\item[Q1:]  Does the phase transition occur at \eqref{eq:new} in the sparse regime ($q$ scaling like $1/n$)?
\item[Q2:] Where does the phase transition occur in denser regimes?
\end{enumerate}
For the first question Q1, one may guess that the answer should be positive. Indeed,  non-backtracking statistics are tightly related to Belief-Propagation and the Bethe free-energy, the latter being known to be a valid approximation of the log-likelihood in the sparse regime, while the former is expected to be optimal again in the sparse regime~\cite{krzakala2013spectral,Moore2017}. Hence, as far as the analysis of~\cite{pmlr-v291-chin25a} is tight, we can expect that \eqref{eq:new} is the threshold where the phase transition occurs in the sparse regime. 

The answer to the second question Q2 seems much more open. First, the upper-bound of~\cite{pmlr-v291-chin25a} is for the sparse regime, and we are not aware of any result proving non-trivial recovery below the KS threshold in a denser regime. Second, even if the upper-bound of~\cite{pmlr-v291-chin25a} were to hold in a denser regime, there is no strong indication that non-backtracking statistics should be optimal in this denser regime. In particular, in the light of the results for the Gaussian clustering problem~\cite{Even24}, we may guess that some other algorithms could be more powerful in  denser regimes.\medskip

\subsection{Main contributions}
In this paper, we provide an answer to these two questions. We prove that, when $K\geq \sqrt{n}$
\begin{enumerate}[wide, labelindent=0pt]
\setlength{\itemsep}{0pt}
\item LDP fail for partial recovery when $\lambda \lesssim_{\log} q^{1-\log_{n}(K)}$, for all density regimes;
\item Community recovery in poly-time is possible in  moderately sparse regimes when  \eqref{eq:new} holds.
\end{enumerate}
These two results provide evidence that, when $K\geq \sqrt{n}$, the phase transition for community recovery in poly-time holds around the Threshold~\eqref{eq:new} conjectured by~\cite{pmlr-v291-chin25a}.
Let us present our two main contributions with more details.

\paragraph{Contribution 1: Low degree polynomial lower bound.}
Our first main result is a LDP lower bound for community recovery for any $K\leq n$. Low-degree polynomials are not well-suited for directly outputting a partition of the nodes, which is combinatorial by nature. Instead, as standard for clustering or community recovery~\cite{luo2023computational,Even24,SohnWein25,pmlr-v291-chin25a,Even25a}, we focus on the problem of estimating $x_{ij}=\mathbf{1}\ac{i,j\ \text{are in the same community}}-1/K$, which can be directly related to the problem of clustering~\cite{Even24,Even25a}. Theorem~\ref{thm:BI},
%Corollary~\ref{cor:LB}, 
Appendix~\ref{sec:main}, states a precise version of the following.

\begin{theorem}[Informal]\label{thm:informal1}
%Assume that $K\geq \sqrt{n}$. 
When %$\lambda \lesssim_{\log} \pa{q+{\lambda \over K}}^{1-\log_{n}(K)}$, 
\vspace{-0.3cm}
\begin{equation}\label{eq:phase}
\lambda\lesssim_{\log} \lambda_{c},\quad \text{with $\lambda_{c}$ solution to}\quad \sup_{r\geq 1}\, {n\lambda_{c}^{2r}\over K\lambda_{c}^{r}+K^2 q^{r}}=1\ ,
\end{equation}
any $O(\log(n))$-degree polynomial $f$ fails to estimate $x_{ij}$ significantly better than $f=\bbE[x_{ij}]=0$.
\end{theorem}

At first sight, the Condition~\eqref{eq:phase} looks unrelated to the Threshold~\eqref{eq:new} identified by~\cite{pmlr-v291-chin25a}. 
Next lemma, proved in Appendix~\ref{sec:signal}, highlights 
 that the Condition~\eqref{eq:phase} is closely related both to the Kesten-Stigum threshold \eqref{eq:KS} when $K\leq \sqrt{n}$ and the Threshold~\eqref{eq:new} when $K\geq \sqrt{n}$.  

\begin{lemma}\label{lem:rstar}
Assume that $q=n^{-\alpha_{n}}$ with $\alpha_{n}\in(0,+\infty)$, and $K=n^{1+\delta_{n}\over 2}$ ---i.e. $\log_{n}(K)=(1+\delta_{n})/2$--- with $\delta_{n}\in(-1,1)$. Define 
%$\lambda_{c}$ such that
%\begin{equation}\label{eq:supr-transition}
%\sup_{r\geq 1} \ac{n\lambda_{c}^{2r}\over K \lambda_{c}^r+K^2 q^r}=1,
%\end{equation}
%and 
$\bar r_{n}$ as the value $r\geq 1$ achieving the maximum in~\eqref{eq:phase}.
Then, for any small $\epsilon>0$, and $n$ large 
\begin{enumerate}
\item If $K\leq (1-\epsilon)\sqrt{n}$, then $\bar r_{n}=1$ and 
$\lambda_{c}=\lambda_{KS}:={K\over 2n} \pa{1+\sqrt{1+2nq}}.$
\item If $(1+\epsilon)\sqrt{n}\leq K\leq n^{1-\epsilon}$, then
\begin{itemize}[wide, labelindent=0pt]
\item If $q\leq {1-\delta_{n}^2\over 4\delta_{n}^2}\times{1-o(1)\over n}$, then $\bar r_{n}=1$ and $\lambda_{c}=\lambda_{KS}$;
\item $q\geq {1-\delta_{n}^2\over 4\delta_{n}^2}\times{1-o(1)\over n}$, then \ 
$\displaystyle{\bar r_{n}\sim \alpha_{n}^{-1} \pa{1-{\log\pa{1-\delta_{n}^2\over 4 \delta_{n}^2}\over \log(n)}-O\pa{1\over (\log (n))^2}}}$ and
%\vspace{-0.5cm}
\begin{align}\label{eq:lambdac}
 %\\ \text{and}\qquad
\lambda_{c} &\sim q^{(1-\delta_{n})/2} \pa{(1+\delta_{n})^{1+\delta_{n}}\over (1-\delta_{n})^{1-\delta_{n}}(2\delta_{n})^{2\delta_{n}}}^{\alpha_{n}/2}.
\end{align}
\end{itemize}
\end{enumerate}
\end{lemma}

When $K\leq \sqrt{n}$, we have $\lambda_{c}=\lambda_{KS}$, and we recover that low-degree polynomials fail to estimate $x_{ij}$ better than 0 
below the Kesten-Stigum threshold, as already proved in~\cite{pmlr-v291-chin25a}. 
When $K\geq \sqrt{n}$ and $q\gtrsim n^{-1}$, then the threshold value~\eqref{eq:lambdac} matches the Threshold~\eqref{eq:new} since $(1-\delta_{n})/2=1-\log_{n}(K)$.

%To prove Theorem~\ref{thm:informal1}, we built on the ideas of~\cite{CGGV25} to construct a basis of low-degree polynomials, which is permutation invariant and almost orthonormal when $\lambda \lesssim_{\log} \lambda_{c}$. Proving almost orthonormality is the main hurdle of the proof of Theorem~\ref{thm:informal1}, see Section~\ref{sec:overview} for a detailed discussion.  Then, the lower bound can be simply derived.
 
 Theorem~\ref{thm:informal1} establishes that the Threshold~\eqref{eq:new} identified by~\cite{pmlr-v291-chin25a} is correct, up to a poly-logarithmic factor, in the sparse regime $q\asymp 1/n$ with $K\gg \sqrt{n}$. Moreover, it serves as a lower bound on the minimal separation required for polynomial-time recovery in denser regimes. Our second main contribution is to leverage our LDP analysis for deriving novel optimal motif counts. These counts enable polynomial-time community recovery in the intermediate dense regime $q\asymp n^{-\alpha}$ (for $\alpha\in(0,1)\cap \mathbb{Q}$), provided that $K\geq \sqrt{n}$ and~\eqref{eq:new} are satisfied.

\paragraph{Contribution 2: Successful recovery above threshold $\lambda_{c}$ with motif counts.}
\cite{pmlr-v291-chin25a} prove that, in the sparse regime ($q$ scaling like $1/n$), partial recovery of the communities is possible above the threshold~\eqref{eq:new}  with non-backtracking statistics, when $K\gg \sqrt{n}$. Our second contribution --- Corollaries~\ref{cor:cliques} ;~\ref{cor:SAP} and~\ref{cor:blow-up}  Appendix~\ref{sec:positive} ---  is to prove that partial recovery of the communities is possible above the Threshold~\eqref{eq:lambdac}, with an  algorithm based on (weighted) motif counting.

\begin{theorem}[Informal]\label{thm:informal2}
Assume that $q\asymp n^{-\alpha}$ with $\alpha\in(0,1)\cap \mathbb{Q}$. When $\lambda\gtrsim_{\log} \lambda_{c}$, with $\lambda_{c}$ given in~\eqref{eq:lambdac},
community recovery is possible in polynomial time with an algorithm based on counting a motif, whose shape depends on $\alpha$. 
\end{theorem}

Combining Theorem~\ref{thm:informal1} with Theorem~\ref{thm:informal2} ---and the result of~\cite{pmlr-v291-chin25a} for the case $\alpha=1$---, we get that for  $q\asymp n^{-\alpha}$ with $\alpha\in (0,1]\cap \mathbb{Q}$,  the phase transition for weak recovery in poly-time occurs at level $\lambda_{c}$, up to possible $\log$ factors.  When $\alpha$ is irrational, it follows from our results that community recovery is possible either above the threshold $\lambda_{c}$ with a polynomial of degree $\log(n)$ ---but with $O(n^{\log(n)})$ algorithmic complexity---, or in polynomial time if $\lambda\geq n^{\epsilon} \lambda_{c}$, where $\epsilon>0$ can be arbitrarily small. See the discussion after Corollary~\ref{cor:blow-up} (page \pageref{cor:blow-up}) for details. 

As mentioned in  Theorem~\ref{thm:informal2}, the motifs involved in the log-optimal algorithm depends on the power $\alpha$ of the between group probability of connection $q=n^{-\alpha}$.
For some very specific values of $\alpha$, some log-optimal motifs are very simple. Indeed, we prove in Corollaries~\ref{cor:cliques} and~\ref{cor:SAP}, Appendix~\ref{sec:positive}, the following.

\begin{corollary}[Informal]\label{cor:cliques-SAP}
Assume that $K\geq \sqrt{n}$ and  $\lambda \gtrsim_{\log} \lambda_{c}$.\vspace{-0.2cm}
\begin{enumerate}[wide, labelindent=0pt]\setlength{\itemsep}{0pt}
\item When $q\asymp n^{-{2\over m+1}}$, with $m\in \ac{3,4,5,\ldots}$, community recovery is possible in polynomial time with an algorithm based on counting cliques of size $m$.
\item When  $q\asymp n^{-{m-2\over m-1}}$, with $m\in \ac{3,4,5,\ldots}$, community recovery is possible in polynomial time with an algorithm based on counting self-avoiding paths of length $m-1$.
\end{enumerate}
\end{corollary}

\begin{figure}[h!]
  \centering
  \includegraphics[width=0.9\linewidth]{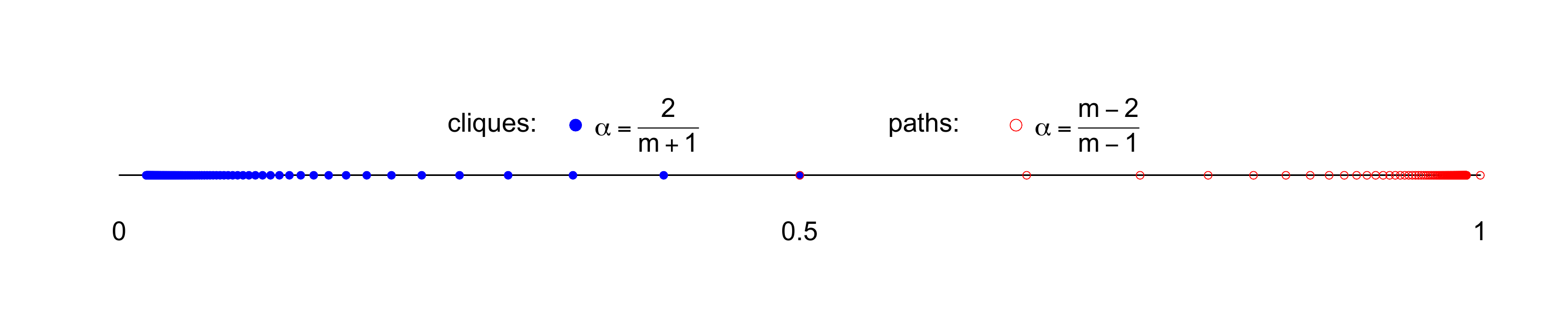}
  \vspace{-0.5cm}
\caption{Values of $\alpha\in \big\{{2\over m+1}:m=3,4,\ldots\big\}\cup\big\{{m-2\over m-1}:m=3,4,\ldots\big\}$ corresponding to densities $q\asymp {n^{-\alpha}}$ for which $m$-cliques counting (blue solid markers) and $(m-1)$-self-avoiding paths counting (red circles) succeed at the (log-)optimal threshold  $\lambda \gtrsim_{\log} \lambda_{c}$.}
\end{figure}

We notice that taking $m$ slowly diverging\footnote{When $m$ diverges with $n$, counting length-$(m-1)$ self-avoiding paths has a complexity super-polynomial in $n$. This issue can be overcome by counting non-backtracking paths instead~\cite{pmlr-v291-chin25a}, which can be done efficiently.} with $n$, we recover the success of self-avoiding paths counting above the threshold~\eqref{eq:new} when $q\asymp {1\over n}$, as already proved in~\cite{pmlr-v291-chin25a}. We underline yet, that counting length-$\log(n)$  self-avoiding paths ---which is closely related to spectral algorithms based on the Non-Backtracking operator--- is optimal only in the sparse regime $q\asymp {1\over n}$. Other motifs counts should be considered in Theorem~\ref{thm:informal2} in denser regimes. 

For general values of $\alpha$, our LDP analysis dictates a more complex family  of (log-)optimal motifs. 
Writing $\alpha^{-1}= \gamma + a  $, where $\gamma\geq 1$ is an integer  and $a$ is a rational in $(0,1)$, the motif is constructed by considering a $\gamma$-blow-up of a cycle of length $\kappa$, which is then linked evenly to the reference nodes $i$ and $j$ with  $a\kappa\gamma$ edges ---referred to as fasteners in the following. The length $\kappa$ of the cycle is chosen in such a way that $a\kappa\gamma$ is a sufficiently  large even integer. We refer to the discussion Section~\ref{sec:intro-motif} for some intuition on the choice of this motif, and to Section~\ref{sec:blow-up:def} page~\pageref{sec:blow-up:def} for the formal description.

It is worth mentioning that counting self-avoiding paths of length 2 or cliques of size $m=3$ is closely related to the hierarchical clustering algorithm in~\cite{Even24}, which is log-optimal for Gaussian mixture when $K\gg \sqrt{n}$. Yet, counting $3$-cliques is only log-optimal for SBM in a narrow regime of density where $q\asymp n^{-1/2}$. Indeed, contrary to the Gaussian case, the minimal separation $\lambda$ remains bounded, and in denser regimes, better results can be obtained by averaging the observations over a wider $m$-clique, to reduce the variance. 

\paragraph{Open questions.}
While our results, combined with those of~\cite{pmlr-v291-chin25a}, provide a  better understanding of the phase transition for community recovery when $K\geq \sqrt{n}$, some questions remain open.
A first question is relative to the exact location of the phase transition when $K\geq \sqrt{n}$. 
Indeed, while~\cite{pmlr-v291-chin25a} prove that the phase transition occurs exactly at $\lambda_{c}=\lambda_{KS}$ when $K\leq \sqrt{n}$ (and $q\asymp 1/n$ for the upper bound),   the upper and lower bounds for $K\geq \sqrt{n}$ involve (un-matching) poly-log factors.

\begin{question}\label{Q1}
Is the threshold $\lambda_{c}$ defined in~\eqref{eq:phase} ---equivalently~\eqref{eq:lambdac}--- the \emph{exact} threshold above which weak recovery is possible in polynomial time?
\end{question}

A second question is relative to optimal motifs for successful recovery. Indeed, while we show that, when $\lambda\gtrsim_{\log} \lambda_{c}$,  we can recover the communities with the family of blow-up  motifs introduced in Section~\ref{sec:blow-up:def}, the question of sharp optimality (in terms of  log factors) remains open. 

\begin{question}\label{Q2}
What are the optimal poly-time algorithms for weakly recovering the communities above the computational barrier (with sharp constants)?
%For $q\asymp n^{-\alpha}$, with $\alpha \notin \ac{{2\over 4},{2\over 5},{2\over 6},\ldots}\cup\ac{{1\over 2},{2\over 3},{3\over 4},{4\over 5},\ldots}\cup\ac{1}$, what are the optimal algorithms for community recovery when $K\gg \sqrt{n}$? Do they also succeed when  $\lambda \gtrsim_{\log} \pa{q+{\lambda \over K}}^{1-\log_{n}(K)}$?
\end{question}

%As discussed at the end of Section~\ref{sec:overview} below, our analysis suggests that, when $q \asymp n^{-\alpha}$ with $\alpha\leq 1$, an optimal poly-time algorithm is based on the counting of the occurrence --- within the observed graph --- of a connected motif with an average of  $\approx \alpha^{-1}$  edges per node and fulfilling some regularity conditions, see Conjecture~\ref{conj:motif} for a detailed conjecture. 
%This is coherent with the optimality of length-$(m-1)$ self-avoiding paths counting  when $\alpha=(m-2)/(m-1)$, and $m$-cliques counting when $\alpha=2/(m+1)$. 
%The proof of the conjecture is deferred to a future paper.

\subsection{Proof technique, insights, and limitation}\label{sec:overview}
%\subsubsection{Statistical model.}
Let us denote by $Y^{*}\in\bbR^{n\times n}$ the adjacency matrix of the observed graph on $n$ vertices.
The graph being undirected and with no self-loops, the matrix $Y^*$ is symmetric with zero on the diagonal. We assume that $Y^*$ has been generated according to the stochastic block model: Each node $i$ is assigned uniformly at random to one among $K$ communities, and then nodes within a same community are connected with probability $p$, and those between different communities with probability $q$.

More formally, let $K$ be an integer in $[2,n]$, and $p,q\in (0,1)$, with $p>q$. We assume that $Y^*$ is sampled according to the following distribution:
\vspace{-0.2cm}
\begin{enumerate}[wide,labelindent=0mm]
\setlength{\itemsep}{0pt}
\item  $z_{1},\ldots,z_{n}\stackrel{i.i.d.}{\sim} \text{Uniform}\ac{1,\ldots,K}$;
\item Conditionally on $z_{1},\ldots,z_{n}$, the entries $(Y^*_{ij})_{i<j}$ are sampled independently, with $Y^*_{ij}$ distributed as a Bernoulli random variable with parameter $q+\lambda{\bf 1}\ac{z_{i}=z_{j}}$, where $\lambda=p-q$. 
\end{enumerate}
To get simpler formulas, in the remaining of the paper, we will work with the (almost)-recentered adjacency matrix
\begin{equation}\label{eq:definition:Y}
Y_{ij} = Y^{*}_{ij}- q \ , \text{ for any }1\leq i < j \leq n.
\end{equation}
We will also recurrently use the notation
\begin{equation}\label{eq:bar-proba}
\bar q=q(1-q),\quad \text{and}\quad \bar p= \bar q +\lambda (1-2q).
\end{equation}

\subsubsection{Low-degree lower bound}
Let $\bbP_{\lambda}$ denote the distribution of the SBM with a prescribed separation $\lambda$. Let us consider the problem of estimating $x_{12}=\mathbf{1}\ac{z_1=z_2}-1/K$. 
We say that degree-$D$ polynomials fail to estimate $x_{12}$ in $L^2(\bbP_{\lambda})$-norm,  when no degree-$D$ polynomial can estimate $x_{12}$ significantly better than the constant estimate $\bbE_{\lambda}[x_{12}]=0$. In mathematical language, proving  failure of degree-$D$ polynomials for estimating $x_{12}$ amounts to prove that
\begin{equation}\label{eq:MMSE}
MMSE_{D}:=\inf_{f: \mathrm{deg}(f)\leq D} \mathbb{E}_{\lambda}\cro{(f-x_{12})^2}=\mathbb{E}_{\lambda}[x_{12}^2]- \sup_{f: \mathrm{deg}(f)\leq D}\frac{\mathbb{E}_{\lambda}[fx_{12}]^2}{\mathbb{E}_{\lambda}[f^2]}=(1+o(1))MMSE_{0},
\end{equation}
where the $\inf$ and $\sup$ are over all the polynomials of the observation (the adjacency matrix in our case) with degree at most $D$. Since $MMSE_{0}=\bbE_{\lambda}[x_{12}^2]-\bbE_{\lambda}[x_{12}]^2=\bbE_{\lambda}[x_{12}^2]$, the problem reduces to proving that
\begin{equation}\label{eq:corr:goal}
   \mathrm{Corr}_{\leq D}^2 :=   \sup_{f: \mathrm{deg}(f)\leq D}\frac{\bbE_{\lambda}\cro{x_{12}f}^2}{{\bbE_{\lambda}\cro{f^2}}} \leq  o\pa{\bbE_{\lambda}\cro{x_{12}^2}}=o\pa{1\over K}.
\end{equation}

Low-degree polynomials theory has emerged in a sequence of works~\cite{HopkinsFOCS17,Hopkins17,hopkins2018statistical,KuniskyWeinBandeira} focusing on detection problems, where the goal is to perform a test with a simple null distribution (typically with independent entries). For SBM, the detection problem amounts to test $\lambda=0$ against  $\lambda>0$.  In practice, it reduces to proving $ \mathrm{Corr}_{\leq D}^2=o(1)$, with $\lambda=0$ and $x_{12}$ replaced by the likelihood ratio between $\bbP_{\lambda}$ and $\bbP_{0}$.  
The core strategy for detection problems is to expand the polynomials $f$ on a $L^2\pa{\bbP_{0}}$-orthonormal basis of low-degree polynomials, and then to solve the resulting optimization problem explicitly. This approach has been successfully implemented for SBM in~\cite{Hopkins17}. A similar approach cannot be implemented for estimation (i.e.\ recovery) problems, as no simple explicit $L^2\pa{\bbP_{\lambda}}$-orthonormal basis of low-degree polynomials is known for $\lambda>0$. Three strategies have been proposed to circumvent this issue:
\begin{enumerate}[wide, labelindent=0pt]\setlength{\itemsep}{0pt}
\item Schramm and Wein~\cite{SchrammWein22} schematically apply a partial Jensen inequality with respect to the latent variables (the community assignment in SBM), and expand the polynomials $f$ on a $L^2\pa{\bbP_{0}}$-orthonormal basis. Then, the optimization problem~\eqref{eq:corr:goal} can be upper-bounded recursively. This approach has been implemented by Luo and Gao~\cite{luo2023computational} for SBM.
\item To get tighter bounds, Sohn and Wein~\cite{SohnWein25} expand the polynomials $f$ over a basis which is orthonormal in an enlarged probability space including the latent variables. While the analysis is much more delicate, it leads to sharp results. It has been implemented for SBM in~\cite{SohnWein25,pmlr-v291-chin25a} to prove failure of low-degree polynomials below the KS threshold~\eqref{eq:KS} when $K=o(\sqrt{n})$. 
\item To provide a simple and more direct analysis, Carpentier, Giancola, Giraud and Verzelen~\cite{CGGV25} build a basis for permutation invariant polynomials which is almost $L^2\pa{\bbP_{\lambda}}$-orthonormal when the signal is low, i.e.\ typically when recovery is impossible. Compared to~\cite{SohnWein25}, a counterpart of the simplicity of the analysis, is the derivation of less tight lower bounds. Indeed, similarly as for the Schramm and Wein~\cite{SchrammWein22} approach, bounds derived from the approach of~\cite{CGGV25}  are typically tight only up to poly-logarithmic factors.
\end{enumerate}

To prove Theorem~\ref{thm:informal1},  we leverage the ideas introduced in~\cite{CGGV25}, that we adapt to our setting.
 We adjust the basis proposed in~\cite{CGGV25} by using a better renormalisation. On top of the proof strategy of~\cite{CGGV25}, the main technical result is Proposition~\ref{prop:christophe}, which controls the correlation between two basis elements.
We highlight below the main steps and ideas. All the details are provided in the appendices. We recall that we consider the problem of estimating $x_{12}={\bf 1}\ac{z_{1}=z_{2}}-{1\over K}$. 

\paragraph{Almost orthonormal basis for permutation-invariant polynomials.}
%Let $Y^*$ denote the adjacency matrix of the observed graph, and let $Y=Y^*-q$ be a ``centered'' version we will work with.
A first key idea is to notice that, since the distribution of $x_{12}$ and $Y$ is invariant by permutation of the indices --- except 1,2--- the maximizer of $\mathrm{Corr}^2_{\leq D}$ ---defined by \eqref{eq:corr:goal}---  must be a polynomial  invariant by permutation of the indices,  except 1,2. Hence, we only need to focus on such polynomials, we call \emph{invariant polynomials}.

We can build a simple family spanning the set of degree-$D$ invariant polynomials. Let $G=(V,E)$ be a graph on a node set $V=\ac{v_{1},\ldots,v_{D}}$, and 
$\Pi_{V}$ be the set of injections $\pi:V\to \ac{1,\ldots,n}$ fulfilling $\pi(v_{1})=1$ and $\pi(v_{2})=2$. Then, any degree-$D$ invariant polynomials can be decomposed on the family $(P_{G})_{G}$ where
$$P_G = \sum_{\pi\in \Pi_V} P_{G,\pi},\quad \text{with} \quad P_{G,\pi}(Y)= \prod_{(i,j)\in E} Y_{\pi(i),\pi(j)} .$$
We observe that, when applied to the vanilla adjacency matrix $Y^*$, the polynomial $P_{G}(Y^*)$ merely counts  the number of occurrence of the motif $G$ ``attached'' at 1,2. The family $(P_{G})_{G}$ itself is not an almost-orthonormal basis of degree-$D$ invariant polynomials. First, it is overcomplete, so we need to extract a maximal free subfamily $(P_{G})_{G\in \mathcal{G}_{\leq D}}$. Second, $(P_{G})_{G\in \mathcal{G}_{\leq D}}$ is neither almost-orthogonal, nor almost-normalized. To get almost-orthogonality, we apply some correction as in~\cite{CGGV25} (see Appendix~\ref{sec:basis}), and to get normality, we rescale the resulting basis by some normalizing factor, providing an almost-orthonormal basis $(\Psi_{G})_{G\in \mathcal{G}_{\leq D}}$. Improving the normalization process compared to~\cite{CGGV25} is decisive for deriving our results, as explained below.

\paragraph{Normalization of the polynomials.} 
For simplicity, let us drop the correction applied to $P_{G}$ to achieve approximate orthogonality. Our goal is then to normalize the polynomial $P_{G}$ by its standard deviation $\sqrt{{\rm var}(P_{G})}$. The second moment of $P_{G}$ can be expanded as
\begin{equation}\label{eq:moment1-init}
\bbE\cro{P_{G}^2}= \sum_{\pi^{(1)},\pi^{(2)}\in \Pi_{V}} \bbE\cro{P_{G,\pi^{(1)}}P_{G,\pi^{(2)}}}.
\end{equation}
Assume that the dominating terms in this sum are those for which $\pi^{(1)}(G)=\pi^{(2)}(G)$, where $\pi(G)=(\pi(V),\ac{(\pi(u),\pi(v)):(u,v)\in E})$ represents the graph $G$ with nodes labeled by $\pi$. This yields the simplification
\begin{equation}\label{eq:moment2-init}
\bbE\cro{P_{G}^2}\asymp \sum_{\pi \in \Pi_{V}} |{\rm Aut}(G)|\, \bbE\cro{P_{G,\pi}^2}\asymp n^{|V|} |{\rm Aut}(G)|\, \bbE\cro{P_{G,\pi}^2},
\end{equation}
where $|{\rm Aut}(G)|$ denotes the number of automorphisms of $G$. 
While \cite{CGGV25} uses $q^{|E|}$ as a simple proxy for the second moment $\bbE[P_{G,\pi}^2]$ to greatly simplify their analysis, this choice is not precise enough to derive tight bounds in our context.

To understand why the proxy $q^{|E|}$ is insufficient, let us consider a clique $G=(V,E)$, for which approximate computations can be performed explicitly. Let $|E^{\neq}|$ denote the number of edges in $\pi(G)$ between nodes of distinct communities, and let $\ell(G)$ denote the number of distinct communities in $\pi(G)$. Direct computation yields $\bbP[\ell(G)=\ell]\asymp K^{\ell-|V|}$ for $1\leq \ell\leq |V|$ and
\begin{align*}
\bbE\cro{P_{G,\pi}^2} &\asymp \lambda^{|E|} \bbE\cro{\pa{q\over \lambda}^{|E^{\neq}|}} \ \asymp \ \sum_{\ell=1}^{|V|} {\lambda^{|E|}\over K^{|V|-\ell}}\ \bbE\cro{\pa{q\over \lambda}^{|E^{\neq}|}\Big| \ell(G)=\ell}\\
&\asymp \sum_{\ell=1}^{|V|} {\lambda^{|E|}\over K^{|V|-\ell}}\ \pa{q\over \lambda}^{(\ell-1)(|V|-\ell/2)},
\end{align*}
where the last equivalence follows because $q<\lambda$, and, conditioned on $\ell(G)=\ell$, the exponent $|E^{\neq}|$ is minimized when all but $\ell-1$ vertices belong to the same community. Since the overall exponent in the sum is quadratic in $\ell$, the sum is typically dominated by its first ($\ell=1$) or last ($\ell=|V|$) terms, leading to
\begin{equation}\label{eq:moment3-init}
\bbE\cro{P_{G,\pi}^2} \asymp {\lambda^{|E|}\over K^{|V|-1}}+q^{|E|}.
\end{equation}
Hence, for a clique $G$, we observe an additional term $\lambda^{|E|}/ K^{|V|-1}$ compared to the proxy $q^{|E|}$, which cannot be neglected in our setting. 

Consequently, instead of normalizing the polynomials $P_{G}$ by the proxy $n^{|V|/2} |{\rm Aut}(G)|^{1/2}\, q^{|E|/2}$ as in~\cite{CGGV25}, we apply an implicit normalization by $n^{|V|/2} |{\rm Aut}(G)|^{1/2}\bbE[P_{G,\pi}^2]^{1/2}$. The price to pay for this nearly exact normalization is a significantly more involved analysis to prove the approximate orthonormality of the basis $(\Psi_{G})_{G\in \mathcal{G}_{\leq D}}$, particularly when controlling the correlation terms.

%Compared to~\cite{CGGV25}, we choose a better normalizing factor, which lead to a more delicate analysis, but is crucial for deriving our results. 

\paragraph{Controlling correlation terms.}
Since the polynomials are now normalized by $\bbE[P_{G,\pi}^2]^{1/2}$, a central challenge in proving the approximate orthonormality of the basis $(\Psi_{G})_{G\in \mathcal{G}_{\leq D}}$ is to tightly upper-bound the absolute value of the correlation
$${\bbE\cro{P_{G^{(1)},\pi^{(1)}}P_{G^{(2)},\pi^{(2)}}}\over
\sqrt{\bbE\cro{P_{G^{(1)},\pi^{(1)}}^2} \bbE\cro{P_{G^{(2)},\pi^{(2)}}^2}}}$$
between $P_{G^{(1)},\pi^{(1)}}(Y)$ and $P_{G^{(2)},\pi^{(2)}}(Y)$ for two given motifs $G^{(1)},G^{(2)}$ and two injections $\pi^{(1)},\pi^{(2)}\in\Pi_{V}$. 
This correlation is upper-bounded in Proposition~\ref{prop:christophe}, which forms the cornerstone of our analysis. With this bound in hand, the proof of the approximate orthonormality of $(\Psi_{G})_{G\in \mathcal{G}_{\leq D}}$ follows a path similar to that in~\cite{CGGV25}.

It is worth noticing that for proving success of counting a motif $G$, like e.g. a clique of size $m$, a key feature is to upper-bound a variant of the correlation
${\bbE[P_{G,\pi^{(1)}}P_{G,\pi^{(2)}}]/ \bbE[P_{G,\pi^{(1)}}]^2}$,
see Appendix~\ref{sec:S}. So, understanding the scalar product terms $\bbE[P_{G^{(1)},\pi^{(1)}}P_{G^{(2)},\pi^{(2)}}]$ appears to be crucial for understanding the nature of the phase transition in SBM. 

\paragraph{Identifying the optimal polynomials.}
Once we have proven that $(\Psi_{G})_{G\in \mathcal{G}_{\leq D}}$ is almost-orthonormal, we have
$$\mathrm{Corr}^2_{\leq D} \approx \sum_{G\in \mathcal{G}_{\leq D}} \mathbb{E}[x_{12}\Psi_G]^2.$$
Furthermore, setting $\bar q=q(1-q)$, we prove in Appendix~\ref{sec:proof:BI} that 
\begin{equation}\label{eq:optimalG}
 \mathbb E[x_{12}\Psi_G]^2     \leq   \frac{1}{n}\left[\frac{n \lambda^{2r(G)}}{(K\lambda^{r(G)}) \vee (K^2 \bar q^{r(G)})}\right]^{|V|-1},\qquad\text{with}\quad r(G):={|E|\over |V|-1}\ ,
\end{equation}
%\begin{equation}\label{eq:optimalG}
% \mathbb E[x_{12}\Psi_G]^2     \leq   \frac{1}{n}\left[\left(\frac{n}{K^2}\left(\frac{\lambda^2}{{\bar q}}\right)^{|E|\over |V|-1}\right) \land \left({\frac{n}{K}}\lambda^{|E|\over |V|-1} \right)\right]^{|V|-1},
%\end{equation}
when $G$ is connected, and $\bbE[x_{12}\Psi_G]=0$ otherwise. 
Hence $\mathrm{Corr}^2_{\leq D}=O(1/n)$ when $\lambda \lesssim_{D} \lambda_{c}$, with $\lambda_{c}$ defined in~\eqref{eq:phase}.
%According to the next lemma, proved in Appendix~\ref{sec:signal}, this implies that $\mathrm{Corr}^2_{\leq D}=O(1/n)$ under the condition $\lambda \lesssim_{D} \pa{q+{\lambda \over K}}^{1-\log_{n}(K)}$.
%\begin{lemma}\label{lem:condition-equivalence}
%Let $c>0$, $D\geq 1$, and $K\geq \sqrt{n}$. Assume that 
%\begin{equation}\label{eq:condition-signal2}
%2D^{c} \lambda \leq \pa{{\lambda \over K}+\bar q}^{1-\log_{n}(K)}.
%\end{equation}
%Then, we have for any $r \geq 1$
%\begin{align*}
%\left(\frac{n}{K^2} \left(\frac{\lambda^2}{{\bar q}}\right)^{r}\right) \land \left(\frac{n\lambda^{r}}{K} \right)  \leq   D^{-cr}.
%\end{align*}
%\end{lemma}

%. Indeed, the later condition is equivalent to
%\begin{align*}
% \left(\frac{n}{K^2}\left(\frac{\lambda^2}{{\bar q}}\right)^{r}\right) \land \left({\frac{n}{K}}\lambda^{r} \right) \leq_{D}   1\quad \quad \text{for all}\ r\geq 1,
%\end{align*}
%as proved in Lemma~\ref{lem:condition-equivalence}. \ch{veut-on le citer ici?}

\subsubsection{Optimal community recovery with motif counts.}\label{sec:intro-motif}
In addition to provide a simple and direct proof of the lower-bound, the construction of the almost orthonormal basis allows to get insights on the best degree-$D$ polynomials, and hence on the optimal poly-time procedures.
We underline that, schematically, 
\begin{enumerate}[wide, labelindent=0pt]\setlength{\itemsep}{0pt}
\item the Bound~\eqref{eq:optimalG} is valid only when the bracket in the right-hand side is small compared to 1;   
\item non-trivial recovery becomes possible when this bracket becomes larger than 1 for some $G$.
\end{enumerate}
The bound~\eqref{eq:optimalG} then provides some interesting information on the optimal motifs $G$. Indeed, taking for granted that~\eqref{eq:optimalG} is quite tight below the Threshold~\eqref{eq:new} --- it should be so---,  we observe that the motifs $G$ maximizing  $\mathbb{E}[x_{12}\Psi_G]^2$ are those being connected and maximizing
$$\left[\frac{n \lambda^{2r(G)}}{(K\lambda^{r(G)}) \vee (K^2 \bar q^{r(G)})}\right]^{|V|-1},\qquad\text{with}\quad r(G):={|E|\over |V|-1}\ .$$
%$$\left[\left(\frac{n}{K^2}\left(\frac{\lambda^2}{{\bar q}}\right)^{|E|\over |V|-1}\right) \land \left({\frac{n}{K}}\lambda^{|E|\over |V|-1} \right)\right]^{|V|-1}.$$
If this remains true at the phase transition, where this quantity is close to 1, it provides insights on the choice of the motif $G$ for optimal recovery.

When $K\geq \sqrt{n}$ and $q\asymp n^{-\alpha}$,  Lemma~\ref{lem:rstar}  ensures that
 the value $\bar r$ achieving the maximum in~\eqref{eq:phase} fulfills $\bar r\sim (\alpha^{-1}\vee 1)$. 
%Let us first understand when the exponent 
%$$r^*=\mathop{\rm argmax}_{r\geq 1} \left(\frac{n}{K^2}\left(\frac{\lambda^2}{{\bar q}}\right)^{r}\right) \land \left({\frac{n}{K}}\lambda^{r} \right)$$
%is equal to a given value $r\in[1,+\infty)$ when $K\geq \sqrt{n}$. Writing the first order condition, we observe that $r^*$ is solution to
%$(\lambda/q)^{r^*}\asymp_{\log} K$, where $\asymp_{\log}$ denotes an equality up to a log factor. So, at the critical level  $\lambda \asymp_{\log} q^{1-\log_{n}(K)}$, we have 
%$$q^{1-\log_{n}(K)} \asymp_{\log} \lambda \asymp_{\log} q K^{1/r^*}.$$
%Hence, the maximum is achieved at the value $r^*=r\vee 1$, with $r$ such that $q\asymp n^{-1/r}$.
Accordingly,
let us focus on the density regime $q\asymp n^{-1/r}$ with $r> 1$, and with signal condition $\lambda \geq_{\log} \lambda_{c}\asymp q^{1-\log_{n}(K)}$. In particular, 
\begin{equation}\label{eq:cond:lambda/q}
\lambda^r\geq_{\log} q^r K\asymp {K\over n}.
\end{equation}
Equation~\eqref{eq:optimalG} suggests to count the occurrence of a connected motif $G$ fulfilling 
$r=|E|/(|V|-1)$, where $r$ is such that $q\asymp n^{-1/r}$. In fact, since our objective is to test whether 1 and 2 belong to the same community rather than to estimate $x_{12}$, this ratio must be slightly adjusted to (see Equation~\eqref{eq:E12:intro} below)
\begin{equation}\label{def:r}
r={|E|\over |V|-2}\ .
\end{equation}
Introducing the notation $\bbP_{12}:=\bbP\cro{\cdot|z_{1}=z_{2}}$ and $\bbP_{\not 12}:=\bbP\cro{\cdot|z_{1}\neq z_{2}}$,
%\begin{equation}\label{eq:conditional:P}
%\bbP_{ij}:=\bbP\cro{\cdot|z_{i}=z_{j}},\quad\text{and}\quad \bbP_{\not ij}:=\bbP\cro{\cdot|z_{i}\neq z_{j}},
%\end{equation}
a direct computation gives $\bbE_{\not 12}\cro{P_{G}}=0$. So,
for testing  $z_{1}=z_{2}$ against $z_{1}\neq z_{2}$ with a small error, we seek for a connected motif $G$ fulfilling
\begin{equation}\label{eq:test:success}
\bbE_{12}\cro{P_{G}}^2 \gg  \mathrm{var}_{12}(P_{G}) \vee  \mathrm{var}_{\not 12}(P_{G}).
\end{equation}
The expectation can be simply evaluated (see Equations~\eqref{eq:comp} and~\eqref{eq:card-label})
\begin{equation}\label{eq:E12:intro}
\bbE_{12}\cro{P_{G}} \asymp \pa{n \lambda^{|E|/(|V|-2)}\over K}^{|V|-2}=\pa{n \lambda^{r}\over K}^{|V|-2},
\end{equation}
when $G$ is connected and \eqref{def:r} holds.
So, according to~\eqref{eq:cond:lambda/q}, we have $\bbE_{12}\cro{P_{G}}\gtrsim 1$. 
For analyzing the variance terms, we proceed as in~(\ref{eq:moment2-init}).
%Given a labeling $\pi$ of $V$, we write $\pi(G)$ for the corresponding labelled graph with nodes $\pi(V)$ and edges $\pi(E)=\ac{(\pi(u),\pi(v)):(u,v)\in E}$. 
We recall that $|E^{\neq}|$ denotes the number of edges in $\pi(G)$  between nodes of distinct communities, 
%We recall the notation %~\eqref{eq:bar-proba} 
%introduce the notation
$\bar q=q(1-q)$ and $\bar p= \bar q +\lambda (1-2q)$, with $\bar p\sim \lambda$ since $\lambda \geq_{\log} K^{1/r} q$ according to~\eqref{eq:cond:lambda/q}.
A direct computation gives (see Equation~\eqref{eqn:old basis proof ingredients})
\begin{align}
\bbE_{12}\cro{P_{G,\pi}^2} & =\bbE_{12}\Bigg[\prod_{\substack{(i,j)\in \pi(E) \\ z_{i} =z_{j}}}\bar p\prod_{\substack{(i,j)\in \pi(E) \\ z_{i} \neq z_{j}}}\bar q\Bigg]
%\ =\  \bbE_{12}\cro{ \bar p^{|E|-|E^{\neq}|}\bar q^{|E^{\neq}|}} 
\ \asymp\ \lambda^{|E|}\ \bbE_{12}\cro{\pa{q\over \lambda}^{|E^{\neq}|}}.\label{eq:moment2-intro}
\end{align}
We expect that the leading terms in the variance decomposition 
$\displaystyle{\mathrm{var}_{12}(P_{G})=\sum_{\pi,\pi'} \mathrm{cov}_{12}(P_{\pi,G},P_{\pi',G})}$,
are those for which $\pi(G)=\pi'(G)$.
 Writing $|\mathrm{Aut}(G)|\leq |V|^{|V|}$ for the number of automorphisms of $G$, and recalling \eqref{eq:moment2-intro} and $|E|=r(|V|-2)$, we then expect to have
\begin{align}
\mathrm{var}_{12}(P_{G})\vee  \mathrm{var}_{\not 12}(P_{G})&\stackrel{?}{\lesssim} \sum_{\pi\in \Pi_{V}} |\mathrm{Aut}(G)|\,\mathrm{var}_{12}(P_{G,\pi}) 
%& \ \asymp\ n^{|V|-2}\bbE_{ij}\cro{ \lambda^{|E|-|E^{\neq}|}q^{|E^{\neq}|}}
\ \lesssim\ \pa{n\lambda^r}^{|V|-2}\ \bbE_{12}\cro{(q/\lambda)^{|E^{\neq}|}},\label{eq:basic:var}
\end{align}
where $\lesssim$  hides some factors depending on $|E|$ and $|V|$ only. 

We recall that $\ell(G)$ denotes the number of distinct communities in $\pi(G)$.
Combining~\eqref{eq:basic:var} with~\eqref{eq:cond:lambda/q} and $\bbP_{12}[\ell(G)=\ell]\asymp K^{-(|V|-1-\ell)}$, we then get
\begin{align}
\mathrm{var}_{12}(P_{G})\vee  \mathrm{var}_{\not 12}(P_{G})&\lesssim  \pa{n\lambda^r\over K}^{|V|-2}\ \sum_{\ell=1}^{|V|-1} K^{\ell-1} \bbE_{12}\cro{K^{-|E^{\neq}|/r}\Big|\ell(G)=\ell}. \label{eq:V12:intro}
\end{align}
Taking for granted that the upper-bound in \eqref{eq:V12:intro} is valid up to constants or log factors, comparing~\eqref{eq:V12:intro} and~\eqref{eq:E12:intro},  a test based on $P_{G}$ 
will fulfill~\eqref{eq:test:success}, as soon as the sum in~\eqref{eq:V12:intro} remains small. This will occur in particular if the motif $G$ fulfills 
\begin{equation}\label{eq:Eneq}
|E^{\neq}| \geq r(\ell-1)\ ,
\end{equation} 
for any partition of the nodes of $G$ into $\ell$ communities, with $v_{1},v_{2}$ in the same community.

Cliques (after removing the edge $(v_{1},v_{2})$) and self-avoiding paths (SAP) between $v_{1}$ and $v_{2}$, are two families of motifs fulfilling the Condition~\eqref{eq:Eneq}. 
To prove Corollary~\ref{cor:cliques-SAP}, we provide a rigorous proof of the heuristic sketched above, and  we combine cliques/SAP counting with a Median-of-Means (MoM) post-processing step. The benefit of the MoM post-processing step is that we only need to control the expectation and the variance of the clique/SAP counting --- see Lemma~\ref{lem:cliques} and~\ref{lem:paths} --- in order to get good enough concentration inequalities. The main drawback is that we lose a constant factor in the variance, so we cannot expect our analysis to provide upper bounds with sharp constants.

Cliques and SAP counting only  cover a small subset of the exponents $\alpha\in(0,1)\cap \mathbb{Q}$. 
To prove Theorem~\ref{thm:informal2} for any rational $r>1$: 
\begin{itemize}[wide, labelindent=2pt, nosep]\setlength{\itemsep}{0pt}
\item we construct in Section~\ref{sec:blow-up:def} a connected motif $G$ ---based on the blow-up of a cycle---  fulfilling the Conditions~\eqref{eq:Eneq} and~\eqref{def:r}; 
\item we prove that when  $q\asymp n^{-1/r}$  and $\lambda\gtrsim_{\log} \lambda_{c}$ hold, we can recover the communities in the SBM with an algorithm based on $P_{G}$.
\end{itemize}

\subsubsection{Construction of the cycle blow-up  with fasteners}\label{sec:blow-up:def}

Consider any rational number $a\in (0,1)$, any positive integer $\gamma$. Fix $\kappa\geq 3\vee (2a^{-1})$ an integer such that $a\kappa \gamma$ is an even integer. In the following, we construct the cycle blow-up with fasteners $G_{\kappa,\gamma,a}=(V,E)$. 

Let $C_\kappa$ be a simple cycle on vertices $[\kappa]$. For each vertex $\omega\in [\kappa]$, define a \emph{layer}
\[
L_\omega := \{ v_{\omega,t}: t=1,\dots,\gamma\},
\]
consisting of $\gamma$ vertices that represent the ``blow-up'' of the cycle node $\omega$. The collection of all such vertices forms the \emph{cycle nodes} of the graph:
$V_{\mathrm{cyc}} := \bigcup_{\omega=1}^\kappa L_\omega$, with $|V_{\mathrm{cyc}}| = \kappa \gamma$.
Between every two consecutive layers $L_\omega$ and $L_{\omega+1}$ (with the convention $L_{\kappa+1}=L_1$), we insert a complete bipartite graph:
$
E_{\mathrm{cyc}} := \bigcup_{\omega=1}^\kappa \{(v_{\omega,t},v_{\omega+1,t'}) : 1\le t,t'\le \gamma\,\},
$
so that every vertex in a layer $L_\omega$ is connected to all $\gamma$ vertices in $L_{\omega-1}$ and all $\gamma$ vertices in $L_{\omega+1}$. This produces a total of $|E_{\mathrm{cyc}}| = \kappa \gamma^2$ edges and the graph induced by $G_{\mathrm{cyc}}=(V_{\mathrm{cyc}},E_{\mathrm{cyc}})$ is $2\gamma$-regular. We refer to $G_{\mathrm{cyc}}$ as the cycle part of the graph $G$. 

We then introduce the two additional \emph{distinguished} vertices $v_1$ and $v_2$, and define the set of nodes $V := V_{\mathrm{cyc}} \cup \{v_1,v_2\}$. 
We connect $v_{1},v_{2}$ to $V_{\mathrm{cyc}}$ in such a way that $|E|=r(|V|-2)=(\gamma+a)(|V|-2)$. Accordingly, we connect $v_{1}$ or $v_{2}$ to each node in $V_{\mathrm{cyc}}$ with frequency $a$. 
We denote by 
 $V_{\mathrm{fst}}\subseteq V_{\mathrm{cyc}}$ the cyclic vertices to be connected to $v_1$ or $v_2$,  which we call the \emph{fastener nodes}. Informally, the set $V_{\mathrm{fst}}$ whose cardinality is equal to $a \kappa \gamma$ is defined such that the nodes $V_{\mathrm{fst}}$ are as evenly spread as possible. More formally, for each layer \(\omega=1,\dots,\kappa\), we define the integers $s_\omega$ by
\begin{equation}\label{eq:definition:Fastener}
s_0=0,\qquad s_\omega = \lfloor a \omega \gamma \rfloor - (s_{0}+\ldots+s_{\omega-1})\ . 
\end{equation}
Note, that $s_{\omega}\in \{\lfloor a \gamma\rfloor, \lceil a \gamma\rceil\}$. In layer $L_\omega$, exactly $s_\omega$ vertices are selected into $V_{\mathrm{fst}}$. By enumerating $V_{\mathrm{fst}}$ in the lexicographic order and alternatively assigning these nodes to two sets $V_{\mathrm{fst},1}$ or to $V_{\mathrm{fst},2}$,  we partition $V_{\mathrm{fst}}$ into two subsets of size $a\kappa\gamma/2$ --which is  an integer, as we have assumed $a\kappa \gamma$ to be an even integer.

Finally, we introduce the collection $E_{\mathrm{fst}}$ of \emph{fastener edges} by 
\[
E_{\mathrm{fst}} := \{\, (v_1,v) : v\in V_{\mathrm{fst},1} \,\}\;\cup\; \{\, (v_2,v) : v\in V_{\mathrm{fst},2} \,\}\enspace .
\]
The complete edge set of the graph is then $E := E_{\mathrm{cyc}} \cup E_{\mathrm{fst}}$. This construction is illustrated in Figure~\ref{fig:illBU}.

\vspace{-0.5cm}
\begin{figure}[h]
\begin{center}
\begin{tikzpicture}[scale=1, every node/.style={circle, draw, minimum size=4mm, inner sep=1pt, font=\small}]

% Parameters
\def\kappa{6}
\def\gamma{2}
\def\radius{2}
\def\shift{0.2}

% --- Place the cycle layers ---
\foreach \w in {1,...,8}{
    \pgfmathsetmacro{\angle}{45*(\w-1)}
    \pgfmathsetmacro{\x}{\radius*cos(\angle)}
    \pgfmathsetmacro{\y}{\radius*sin(\angle)}
    % Two vertices per layer (gamma=2)
    \node[fill=blue!20] (n\w1) at ({\x+\shift}, {\y+\shift}) {$(\w,1)$};
    \node[fill=blue!20] (n\w2) at ({\x-\shift}, {\y-\shift}) {$(\w,2)$};
}

% --- Add complete bipartite edges between consecutive layers ---
\foreach \w in {1,...,8}{
    \pgfmathtruncatemacro{\next}{mod(\w,8)+1}
    \foreach \t in {1,2}{
        \foreach \tp in {1,2}{
            \draw[thick,gray!70] (n\w\t) -- (n\next\tp);
        }
    }
}

% --- Fasteners v1 and v2 ---
\node[fill=orange!80] (v1) at (-4,0) {$v_1$};
\node[fill=orange!80] (v2) at (4,0) {$v_2$};

% --- Determine fastener nodes (a=0.3) ---
% Computed s_w = [0,0,1,1,0,1]
% So layers 3,4,6 each have 1 fastener node
\foreach \w/\t in {2/1,4/1,6/1,8/1}{
    \node[fill=red!70] at (n\w\t) {$(\w,1)$};
}

% Assign alternately to v1 and v2
\draw[thick, red] (v1) -- (n21);
\draw[thick, red] (v1) -- (n61);
\draw[thick, red] (v2) -- (n41);
\draw[thick, red] (v2) -- (n81);

\end{tikzpicture}
\end{center}
\caption{Blow-up graph with fasteners $G_{\kappa,\gamma,a}$, in the case where $\gamma = 2, \kappa = 8, a=0.25$. The distinguished nodes $v_1,v_2$ are in orange, the {\it fastener nodes} (in $V_{\mathrm{fst}}$) are in red, the other cycle nodes are in blue. The {\it fastener edges} (in $E_{\mathrm{fst}}$) are in red, while the other edges are in gray.}\label{fig:illBU}
\end{figure}

\vspace{-0.3cm}

Interestingly, we observe that $|V| = \kappa \gamma + 2$ and $|E| = \kappa \gamma^2 + a \kappa \gamma$,
%\begin{align*}
%|V| &= \kappa \gamma + 2 \ ; |E| = \kappa \gamma^2 + a \kappa \gamma\ ,
%\end{align*}
so that the graph $G_{\kappa,\gamma, a}$ satisfies
%\[
$\frac{|E|}{|V|-2}= \gamma + a$. 
%\]
The following proposition, proved in Appendix~\ref{sec:key:blow-up}, ensures that the motif $G_{\kappa,\gamma,a}$ fulfills the Condition~\eqref{eq:Eneq} we are looking for. We recall that  $\kappa\geq (3\vee 2/a)$ is such $a \kappa\gamma$ is an even integer.

\begin{proposition}\label{prop:key:blow-up}
Fix any positive integer $I\leq |V_{\mathrm{cyc}}|$ and consider any partition of $V$ into $I+1$ communities such that both $v_1$ and $v_2$ are in the same community. Define $E^{\neq}\subset E$ as the set of edges between nodes of distinct communities. Then, we have
%\[
${|E^{\neq}|\ge I\, \frac{|E|}{|V|-2}}\enspace .$
%\]
\end{proposition}

\subsection{Organisation and notation.}
In Appendix~\ref{sec:main}, we describe more precisely the statistical setting, and the low-degree lower bound is precisely stated in Appendix~\ref{sec:LD-lower-bound}. Appendix~\ref{sec:positive} gathers the results on community recovery above the threshold $\lambda_{c}$:  cliques counting is analyzed in Appendix~\ref{sec:cliques-counting},  self-avoiding paths counting in Appendix~\ref{sec:SAP-counting}, and cycle blow-up counting in Appendix~\ref{sec:blow-up-count}. 
Proofs are deferred to the last Appendices.

We denote by $\log(\cdot)$ the natural logarithm, and by $\log_{n}(\cdot)$ the logarithm in base $n$, i.e.\ $K=n^{\log_{n}(K)}$.
%For better readability, we 
We write ${\bf 1}\ac{A}$ for the indicator function ${\bf 1}_{A}$ of the set $A$
and
%We also sometimes use the short notation 
$[D]$ for the set $\ac{1,\ldots,D}$. 
In the discussion of the results, we use the symbol $a\lesssim_{D} b$ (respectively $a\lesssim_{\log} b$) to state that $a$ is smaller than $b$ up to a possible polynomial factor in $D$ (resp. up to a poly-log factor), and we use $a\asymp b$ to state that $a\lesssim b \lesssim a$. 

\subsection*{Acknowledgments}
The work of the  three authors has  been partially supported by ANR-21-CE23-0035 (ASCAI, ANR).
The work of A. Carpentier is partially supported by the Deutsche Forschungsgemeinschaft (DFG)- Project-ID 318763901 - SFB1294 ``Data Assimilation'', Project A03,  by the DFG on the Forschungsgruppe FOR5381 ``Mathematical Statistics in the Information Age~-~Statistical Efficiency and Computational Tractability'', Project TP 02 (Project-ID 460867398), and by  the DFG on the French-German PRCI ANR-DFG ASCAI CA1488/4-1 ``Aktive und Batch-Segmentierung, Clustering und Seriation: Grundlagen der KI'' (Project-ID 490860858), by the European Research council (ERC) through the ERC Consolidator Grant SOCE (Nr: 101229569).  The work of A.~Carpentier and N.~Verzelen is also supported by the Universit\'e franco-allemande (UFA) through the college doctoral franco-allemand CDFA-02-25 ``Statistisches Lernen für komplexe stochastische Prozesse''. The work of C.~Giraud has been partially supported by ANR-19-CHIA-0021-01 (BiSCottE, ANR).

\bibliography{biblio}

\begin{thebibliography}{35}
\providecommand{\natexlab}[1]{#1}
\providecommand{\url}[1]{\texttt{#1}}
\expandafter\ifx\csname urlstyle\endcsname\relax
  \providecommand{\doi}[1]{doi: #1}\else
  \providecommand{\doi}{doi: \begingroup \urlstyle{rm}\Url}\fi

\bibitem[Abbe(2018)]{AbbeReview2018}
Emmanuel Abbe.
\newblock Community detection and stochastic block models: Recent developments.
\newblock \emph{Journal of Machine Learning Research}, 18\penalty0
  (177):\penalty0 1--86, 2018.
\newblock URL \url{http://jmlr.org/papers/v18/16-480.html}.

\bibitem[Abbe and Sandon(2015)]{AbbeSandon2015a}
Emmanuel Abbe and Colin Sandon.
\newblock {Community Detection in General Stochastic Block Models: Fundamental
  Limits and Efficient Algorithms for Recovery}.
\newblock In \emph{{Proceedings of the 2015 IEEE 56th Annual Symposium on
  Foundations of Computer Science (FOCS)}}, {FOCS '15}, pages 670--688,
  Washington, DC, USA, 2015. IEEE Computer Society.
\newblock ISBN 978-1-4673-8191-8.
\newblock URL \url{http://dx.doi.org/10.1109/FOCS.2015.47}.

\bibitem[Bandeira et~al.(2021)Bandeira, Banks, Kunisky, Moore, and
  Wein]{bandeira2021spectral}
Afonso~S Bandeira, Jess Banks, Dmitriy Kunisky, Christopher Moore, and Alex
  Wein.
\newblock Spectral planting and the hardness of refuting cuts, colorability,
  and communities in random graphs.
\newblock In \emph{Conference on Learning Theory}, pages 410--473. PMLR, 2021.

\bibitem[Barak et~al.(2019)Barak, Hopkins, Kelner, Kothari, Moitra, and
  Potechin]{Barak19}
Boaz Barak, Samuel Hopkins, Jonathan Kelner, Pravesh~K. Kothari, Ankur Moitra,
  and Aaron Potechin.
\newblock A nearly tight sum-of-squares lower bound for the planted clique
  problem.
\newblock \emph{SIAM Journal on Computing}, 48\penalty0 (2):\penalty0 687--735,
  2019.
\newblock URL \url{https://doi.org/10.1137/17M1138236}.

\bibitem[Berthet and Rigollet(2013)]{pmlr-v30-Berthet13}
Quentin Berthet and Philippe Rigollet.
\newblock {Complexity Theoretic Lower Bounds for Sparse Principal Component
  Detection}.
\newblock In Shai Shalev-Shwartz and Ingo Steinwart, editors,
  \emph{{Proceedings of the 26th Annual Conference on Learning Theory}},
  volume~30 of \emph{{Proceedings of Machine Learning Research}}, pages
  1046--1066, Princeton, NJ, USA, 12--14 Jun 2013. PMLR.
\newblock URL \url{http://proceedings.mlr.press/v30/Berthet13.html}.

\bibitem[Bordenave et~al.(2015)Bordenave, Lelarge, and
  Massouli{\'e}]{bordenave2015non}
Charles Bordenave, Marc Lelarge, and Laurent Massouli{\'e}.
\newblock Non-backtracking spectrum of random graphs: community detection and
  non-regular ramanujan graphs.
\newblock In \emph{2015 IEEE 56th Annual Symposium on Foundations of Computer
  Science}, pages 1347--1357. IEEE, 2015.

\bibitem[Brennan and Bresler(2020)]{brennan2020reducibility}
Matthew Brennan and Guy Bresler.
\newblock Reducibility and statistical-computational gaps from secret leakage.
\newblock In \emph{Conference on Learning Theory}, pages 648--847. PMLR, 2020.

\bibitem[Brennan et~al.(2018)Brennan, Bresler, and
  Huleihel]{brennan2018reducibility}
Matthew Brennan, Guy Bresler, and Wasim Huleihel.
\newblock Reducibility and computational lower bounds for problems with planted
  sparse structure.
\newblock In \emph{Conference On Learning Theory}, pages 48--166. PMLR, 2018.

\bibitem[Brennan et~al.(2021)Brennan, Bresler, Hopkins, Li, and
  Schramm]{brennan2020statistical}
Matthew Brennan, Guy Bresler, Samuel~B Hopkins, Jerry Li, and Tselil Schramm.
\newblock Statistical query algorithms and low-degree tests are almost
  equivalent.
\newblock In Mikhail Belkin and Samory Kpotufe, editors, \emph{Proceedings of
  Thirty Fourth Conference on Learning Theory}, volume 134 of \emph{Proceedings
  of Machine Learning Research}, pages 774--774. PMLR, 15--19 Aug 2021.

\bibitem[Carpentier et~al.(2025{\natexlab{a}})Carpentier, Giancola, Giraud, and
  Verzelen]{CGGV25}
Alexandra Carpentier, Simone~Maria Giancola, Christophe Giraud, and Nicolas
  Verzelen.
\newblock Low-degree lower bounds via almost orthonormal bases.
\newblock \emph{arXiv preprint arXiv:2509.09353}, 2025{\natexlab{a}}.

\bibitem[Carpentier et~al.(2025{\natexlab{b}})Carpentier, Giraud, and
  Verzelen]{carpentier2025phase}
Alexandra Carpentier, Christophe Giraud, and Nicolas Verzelen.
\newblock Phase transition for stochastic block model with more than $\sqrt{n}$
  communities.
\newblock \emph{arXiv preprint arXiv:2509.15822}, 2025{\natexlab{b}}.

\bibitem[Chen and Xu(2016)]{chen2014statistical}
Yudong Chen and Jiaming Xu.
\newblock {Statistical-computational tradeoffs in planted problems and
  submatrix localization with a growing number of clusters and submatrices}.
\newblock \emph{Journal of Machine Learning Research}, 17\penalty0
  (27):\penalty0 1--57, 2016.

\bibitem[Chin et~al.(2025)Chin, Mossel, Sohn, and Wein]{pmlr-v291-chin25a}
Byron Chin, Elchanan Mossel, Youngtak Sohn, and Alexander~S. Wein.
\newblock Stochastic block models with many communities and the
  {K}esten–{S}tigum bound - extended abstract.
\newblock In Nika Haghtalab and Ankur Moitra, editors, \emph{Proceedings of
  Thirty Eighth Conference on Learning Theory}, volume 291 of \emph{Proceedings
  of Machine Learning Research}, pages 1253--1258. PMLR, 30 Jun--04 Jul 2025.
\newblock URL \url{https://proceedings.mlr.press/v291/chin25a.html}.

\bibitem[Decelle et~al.(2011)Decelle, Krzakala, Moore, and
  Zdeborov\'a]{Decelle2011}
Aurelien Decelle, Florent Krzakala, Cristopher Moore, and Lenka Zdeborov\'a.
\newblock Asymptotic analysis of the stochastic block model for modular
  networks and its algorithmic applications.
\newblock \emph{Phys. Rev. E}, 84:\penalty0 066106, Dec 2011.
\newblock URL \url{https://link.aps.org/doi/10.1103/PhysRevE.84.066106}.

\bibitem[Ding et~al.(2025)Ding, Hua, Slot, and Steurer]{ding2025low}
Jingqiu Ding, Yiding Hua, Lucas Slot, and David Steurer.
\newblock Low degree conjecture implies sharp computational thresholds in
  stochastic block model.
\newblock \emph{arXiv preprint arXiv:2502.15024}, 2025.

\bibitem[Ding et~al.(2024)Ding, Kunisky, Wein, and
  Bandeira]{ding2024subexponential}
Yunzi Ding, Dmitriy Kunisky, Alexander~S Wein, and Afonso~S Bandeira.
\newblock Subexponential-time algorithms for sparse pca.
\newblock \emph{Foundations of Computational Mathematics}, 24\penalty0
  (3):\penalty0 865--914, 2024.

\bibitem[Even et~al.(2024)Even, Giraud, and Verzelen]{Even24}
Bertrand Even, Christophe Giraud, and Nicolas Verzelen.
\newblock Computation-information gap in high-dimensional clustering.
\newblock In Shipra Agrawal and Aaron Roth, editors, \emph{Proceedings of
  Thirty Seventh Conference on Learning Theory}, volume 247 of
  \emph{Proceedings of Machine Learning Research}, pages 1646--1712. PMLR, 30
  Jun--03 Jul 2024.
\newblock URL \url{https://proceedings.mlr.press/v247/even24a.html}.

\bibitem[Even et~al.(2025)Even, Giraud, and Verzelen]{Even25a}
Bertrand Even, Christophe Giraud, and Nicolas Verzelen.
\newblock Computational lower bounds in latent models: clustering,
  sparse-clustering, biclustering.
\newblock \emph{arXiv preprint arXiv:2506.13647}, 2025.
\newblock URL \url{https://arxiv.org/abs/2506.13647}.

\bibitem[Gamarnik(2021)]{gamarnik2021overlap}
David Gamarnik.
\newblock The overlap gap property: A topological barrier to optimizing over
  random structures.
\newblock \emph{Proceedings of the National Academy of Sciences}, 118\penalty0
  (41):\penalty0 e2108492118, 2021.

\bibitem[Holland et~al.(1983)Holland, Laskey, and
  Leinhardt]{holland1983stochastic}
Paul~W Holland, Kathryn~Blackmond Laskey, and Samuel Leinhardt.
\newblock {Stochastic blockmodels: First steps}.
\newblock \emph{Social networks}, 5\penalty0 (2):\penalty0 109--137, 1983.

\bibitem[Hopkins(2018)]{hopkins2018statistical}
Samuel Hopkins.
\newblock \emph{Statistical inference and the sum of squares method}.
\newblock PhD thesis, Cornell University, 2018.

\bibitem[Hopkins and Steurer(2017)]{Hopkins17}
Samuel~B. Hopkins and David Steurer.
\newblock Efficient bayesian estimation from few samples: Community detection
  and related problems.
\newblock In \emph{2017 IEEE 58th Annual Symposium on Foundations of Computer
  Science (FOCS)}, pages 379--390, 2017.
\newblock URL \url{https://ieeexplore.ieee.org/abstract/document/8104074}.

\bibitem[Hopkins et~al.(2017)Hopkins, K.~Kothari, A.~Potechin, Raghavendra,
  Schramm, and Steurer]{HopkinsFOCS17}
Samuel~B. Hopkins, Pravesh K.~Kothari, Aaron A.~Potechin, Prasad Raghavendra,
  Tselil Schramm, and David Steurer.
\newblock The power of sum-of-squares for detecting hidden structures.
\newblock In \emph{2017 IEEE 58th Annual Symposium on Foundations of Computer
  Science (FOCS)}, pages 720--731, Los Alamitos, CA, USA, oct 2017. IEEE
  Computer Society.
\newblock URL \url{https://doi.ieeecomputersociety.org/10.1109/FOCS.2017.72}.

\bibitem[Kearns(1998)]{kearns1998efficient}
Michael Kearns.
\newblock Efficient noise-tolerant learning from statistical queries.
\newblock \emph{Journal of the ACM (JACM)}, 45\penalty0 (6):\penalty0
  983--1006, 1998.

\bibitem[Krzakala et~al.(2013)Krzakala, Moore, Mossel, Neeman, Sly,
  Zdeborov{\'a}, and Zhang]{krzakala2013spectral}
Florent Krzakala, Cristopher Moore, Elchanan Mossel, Joe Neeman, Allan Sly,
  Lenka Zdeborov{\'a}, and Pan Zhang.
\newblock Spectral redemption in clustering sparse networks.
\newblock \emph{Proceedings of the National Academy of Sciences}, 110\penalty0
  (52):\penalty0 20935--20940, 2013.

\bibitem[Kunisky(2024)]{kunisky2024low}
Dmitriy Kunisky.
\newblock Low coordinate degree algorithms ii: Categorical signals and
  generalized stochastic block models.
\newblock \emph{arXiv preprint arXiv:2412.21155}, 2024.

\bibitem[Kunisky et~al.(2019)Kunisky, Wein, and Bandeira]{KuniskyWeinBandeira}
Dmitriy Kunisky, Alexander~S Wein, and Afonso~S Bandeira.
\newblock Notes on computational hardness of hypothesis testing: Predictions
  using the low-degree likelihood ratio.
\newblock In \emph{ISAAC Congress (International Society for Analysis, its
  Applications and Computation)}, pages 1--50. Springer, 2019.
\newblock URL
  \url{https://link.springer.com/chapter/10.1007/978-3-030-97127-4_1}.

\bibitem[Luo and Gao(2024)]{luo2023computational}
Yuetian Luo and Chao Gao.
\newblock Computational lower bounds for graphon estimation via low-degree
  polynomials.
\newblock \emph{The Annals of Statistics}, 52\penalty0 (5):\penalty0
  2318--2348, 2024.

\bibitem[Massouli{\'e}(2014)]{massoulie2014community}
Laurent Massouli{\'e}.
\newblock Community detection thresholds and the weak ramanujan property.
\newblock In \emph{Proceedings of the forty-sixth annual ACM symposium on
  Theory of computing}, pages 694--703, 2014.

\bibitem[Moore(2017)]{Moore2017}
Cristopher Moore.
\newblock {The Computer Science and Physics of Community Detection: Landscapes,
  Phase Transitions, and Hardness}.
\newblock \emph{CoRR}, abs/1702.00467, 2017.
\newblock URL \url{http://arxiv.org/abs/1702.00467}.

\bibitem[Mossel et~al.(2018)Mossel, Neeman, and Sly]{mossel2018proof}
Elchanan Mossel, Joe Neeman, and Allan Sly.
\newblock A proof of the block model threshold conjecture.
\newblock \emph{Combinatorica}, 38\penalty0 (3):\penalty0 665--708, 2018.

\bibitem[Schramm and Wein(2022)]{SchrammWein22}
Tselil Schramm and Alexander~S Wein.
\newblock Computational barriers to estimation from low-degree polynomials.
\newblock \emph{The Annals of Statistics}, 50\penalty0 (3):\penalty0
  1833--1858, 2022.
\newblock URL
  \url{https://projecteuclid.org/journals/annals-of-statistics/volume-50/issue-3/Computational-barriers-to-estimation-from-low-degree-polynomials/10.1214/22-AOS2179.short}.

\bibitem[Sohn and Wein(2025)]{SohnWein25}
Youngtak Sohn and Alexander~S. Wein.
\newblock Sharp phase transitions in estimation with low-degree polynomials.
\newblock \emph{Proceedings of the 57th Annual ACM Symposium on Theory of
  Computing}, pages 891--902, 2025.
\newblock URL \url{https://arxiv.org/abs/2502.14407}.

\bibitem[Stephan and Zhu(2024)]{stephan2024community}
Ludovic Stephan and Yizhe Zhu.
\newblock Community detection with the bethe-hessian.
\newblock \emph{arXiv preprint arXiv:2411.02835}, 2024.

\bibitem[Wein(2025)]{SurveyWein2025}
Alexander~S. Wein.
\newblock Computational complexity of statistics: New insights from low-degree
  polynomials.
\newblock \emph{arXiv preprint arXiv:2506.10748}, 2025.
\newblock URL \url{https://arxiv.org/abs/2506.10748}.

\end{thebibliography}

\appendix
\section{Low-degree lower bound}\label{sec:main}

\subsection{Setting}\label{sec:setting}

\paragraph{Community recovery.}
Let $\mathcal{C}^*=\ac{\mathcal{C}^*_{1},\ldots,\mathcal{C}^*_{K}}$ be the partition of $\ac{1,\ldots,n}$ induced by the community assignment $z_{1},\ldots,z_{n}$, i.e.
$$\mathcal{C}^*_{k}:=\ac{i\in [n]:z_{i}=k},\quad k=1,\ldots,K.$$
Let $\widehat{\mathcal{C}}$ be a partition estimating $\mathcal{C}^*$. Exact recovery corresponds to the event $\widehat{\mathcal{C}}=\mathcal{C}^*$. Non-trivial recovery corresponds to a recovery of $\mathcal{C}^*$ better than random guessing. More precisely, let us define the clustering error as
$$err(\widehat{\mathcal{C}},\mathcal{C}^*) := \min_{\pi \in \mathrm{Perm}\pa{\ac{1,\ldots,K}}} {1\over 2n} \sum_{k=1}^K |\mathcal{C}^*_{k}\triangle \widehat{\mathcal{C}}_{\pi(k)}|,$$
where $\triangle$ is the symmetric difference, and
where the permutation $\pi$ accounts for the non-identifiability of the labels $k$ of the partitions elements $\mathcal{C}^*_{k}$. 
A zero clustering  error $err(\widehat{\mathcal{C}},\mathcal{C}^*)=0$ corresponds to perfect recovery, while non trivial recovery holds when $err(\widehat{\mathcal{C}},\mathcal{C}^*)=o(1)$. 

\paragraph{Membership matrix estimation.}
Low degree polynomials cannot directly output a partition $\hat{\mathcal{C}}$. For this reason, the usual alternative objective  
to clustering or community recovery~\cite{luo2023computational,Even24,SohnWein25,pmlr-v291-chin25a,Even25a}, is to estimate the (centered) membership matrix
$$[x_{ij}]_{1\leq i<j \leq n}=[\mathbf{1}\ac{i,j\ \text{are in the same community}}-1/K]_{1\leq i<j \leq n}$$
in $L^2$-norm. As explained in~\cite{Even24,Even25a}, this estimation problem can be directly related to the problem of clustering / community recovery.  

The distribution of $Y$ being invariant by permutation of the indices, we have
\begin{align*}
\inf_{f_{ij}: \mathrm{deg}(f_{ij})\leq D} \mathbb{E}\cro{{2\over n(n-1)}\sum_{1\leq i < j\leq n}(f_{ij}(Y)-x_{ij})^2}&=\inf_{f_{12}: \mathrm{deg}(f_{12})\leq D} \mathbb{E}\cro{(f_{12}(Y)-x_{12})^2}\\
&= \mathbb{E}[x_{12}^2]- \sup_{f_{12}: \mathrm{deg}(f_{12})\leq D}\frac{\mathbb{E}[f_{12}(Y)x_{12}]^2}{\mathbb{E}[f_{12}(Y)^2]}.
\end{align*}
Hence, to prove that degree-$D$ polynomials cannot perform significantly better than degree-0 polynomials, all we need is to prove that the supremum is $o(\mathbb{E}[x_{12}^2])=o(1/K)$. To simplify the notation, in the remaining of the paper we write $x:=x_{12}$ and 
\begin{equation}\label{eq:corr:def}
\mathrm{Corr}_{\leq D}^2 := \sup_{f: \mathrm{deg}(f)\leq D}\frac{\mathbb{E}[f(Y)x]^2}{{\mathbb{E}[f(Y)^2]}}.
\end{equation}
Proving that $\mathrm{Corr}_{\leq D}^2=o(1/K)$  for $D$ on the order of $\log(n)$ provides strong evidence of computational hardness, as it is conjectured to imply that no polynomial-time algorithm in $n$ and $K$ can estimate the membership matrix significantly better than the trivial estimator $\hat x=0$.

\subsection{Low-degree lower bound}\label{sec:LD-lower-bound}

Our first main result is a low-degree lower bound for the problem of estimating the membership matrix $x$.
We recall that $\bar q=q(1-q)$.

\begin{theorem}\label{thm:BI}
    Let $c\geq 14$, $D\geq 2$, $q\leq 1/2$, $q+2\lambda \leq 1$, and  $K\leq n$. 
    Let $\bar \lambda = 2 D^{16c}\lambda$. 
    Assume that 
\begin{align}\label{eq:signal1bis} 
 \lambda \leq  2 D^{16c}\lambda_{c},\quad \text{with $\lambda_{c}$ solution to}\quad \sup_{r\geq 1}\, {n\lambda_{c}^{2r}\over K\lambda_{c}^{r}+K^2 \bar q^{r}}=1\ .
%\sup_{1\leq r\leq D} \ac{n\bar \lambda^{2r}\over K\bar\lambda^r+K^2\bar q^r} \leq 1
%\sup_{1\leq r\leq D}\ac{ D^{8cr} \times \cro{\left(\frac{\sqrt{n}}{K} \left(\frac{\lambda}{\sqrt{\bar q}}\right)^{r}\right) \land \left(\frac{\lambda^{r/2}}{\sqrt{K/n}} \right)}}  \leq   1.
\end{align}
Then, $\mathrm{Corr}^2_{\leq D}$ defined by \eqref{eq:corr:def} fulfills
$$\mathrm{Corr}^2_{\leq D} \leq \frac{4}{n}D^{-15 c}\enspace \ .$$
\end{theorem}

 As mentioned in the proof sketch  Section~\ref{sec:overview}, the strategy to prove Theorem~\eqref{thm:BI} is to build an almost-orthonormal basis of degree-$D$ polynomials, and then to  essentially ``solve'' the optimization problem~\eqref{eq:corr:def}.   
 We refer to Appendix~\ref{sec:proof:BI} for all the details.
 
%Theorem~\ref{thm:BI} ensures that degree-$D$ polynomials fail when~\eqref{eq:signal1bis}  holds. 
 Since, we ``solve'' almost exactly the optimization problem~\eqref{eq:corr:def}, we can provide some interpretation of the Condition~\eqref{eq:signal1bis} in terms of algorithms. Let us consider a connected graph $G$ made of $e$ edges and $v$ nodes. As sketch in Section~\ref{sec:overview}, schematically, the term with $r\approx e/v$ in Condition~\eqref{eq:signal1bis} ensures the failure of an algorithm based on  counting --- within the observed graph $Y$ ---  the occurrence of motifs $G$ involving the nodes $1$ and $2$. For example, the term $r\approx1$ is related to counting self-avoiding paths, while the term $r\approx m/2$ is related to counting $m$-cliques. We refer to Section~\ref{sec:overview} for a more detailed discussion of this point.

\paragraph{Remarks:} 
\begin{itemize}
\item Condition~\eqref{eq:signal1bis} with $r=1$ is the KS condition $\lambda^2\leq {K\over n}\lambda+{K^2\over n} q$, or equivalently $\lambda^2 \lesssim {K^2\over n^2} + {K^2\over n} q$. The ``first condition''  namely
 $$ \lambda \lesssim_{D} {K\over n},$$
 corresponds to the Information-Theoretic barrier for diverging $K$ ---see Theorem 1.7 in~\cite{pmlr-v291-chin25a}. All the other terms (including the second one for $r=1$) correspond to an additional computational barrier.
\item When $K\leq \sqrt{n}$, according to Lemma~\ref{lem:rstar}, (i) the maximum in~\eqref{eq:signal1bis} is achieved for $r=1$, which suggests that counting self-avoiding walks is optimal, which is already known from~\cite{pmlr-v291-chin25a}, (ii) our 
Condition~\eqref{eq:signal1bis}  is
 merely the KS threshold~\eqref{eq:KS} --- up to a poly-$D$ factor. Our result is yet not as tight as the lower bound from~\cite{pmlr-v291-chin25a}, which proves failure with sharp constants, while we are loosing a poly-$D$ factor.
 \item When $K\geq \sqrt{n}$, according to Lemma~\ref{lem:rstar}, (i) the supremum in  Condition~\eqref{eq:signal1bis} can be achieved for $r>1$, leading to a phase transition below the KS threshold, (ii) for $q\gtrsim 1/n$ the threshold $\lambda_{c}$ fulfills $\lambda_{c} \asymp  q^{1-\log_{n}(K)}$ as conjectured by~\cite{pmlr-v291-chin25a} for the sparse case $q\asymp 1/n$.
\end{itemize}

\section{Community recovery above the Threshold $\lambda_{c}$}\label{sec:positive}

\subsection{Community recovery above the threshold $\lambda_{c}$ with clique-counting}\label{sec:cliques-counting}
The proof of Theorem~\ref{thm:BI} suggests that counting cliques of size $m$ might be optimal in some regimes, where the supremum in~\eqref{eq:signal1bis} is achieved for $r\approx m/2$. 
In this subsection, we confirm this insight by proving that, for $q\asymp n^{-2\over m+1}$, we can recover the communities above the threshold $\lambda \gtrsim_{\log} \lambda_{c}$ by counting $m$-cliques. 
% Condition~\eqref{eq:signal1bis} 

\paragraph{Counting $m$-cliques.}
We want to determine if $i$ and $j$ are in the same community, i.e. to estimate $x_{ij}={\bf 1}\ac{z_{i}=z_{j}}-1/K$.
 Let us modify the observed graph $Y^*$ by adding an edge between the nodes $i$ and $j$, if there is no such an edge in the initial graph. Schematically, our strategy consists in counting the number of $m$-cliques involving the nodes $i$ and $j$ in the modified graph. 

More precisely, let $V=\ac{v_{1},\ldots,v_{m}}$ be a node set, and 
 $G=(V,E)$ be a clique on $V$, where we remove the edge $(v_{1},v_{2})$. 
For $i< j$, we define $\Pi_{i,j}$ as the set of injections $\pi:V\to \ac{1\ldots,n}$ such that $\pi(v_{1})=i$ and $\pi(v_{2})=j$, and we set
\begin{equation}\label{eq:def:S}
S_{ij}=\sum_{\pi \in \Pi_{i,j}} P_{G,\pi}(Y),\quad \text{with}\quad P_{G,\pi}(Y)=\prod_{(v,v')\in E}Y_{\pi(v),\pi(v')},
\end{equation}
where $Y$ is the ``centered'' adjacency matrix~\eqref{eq:definition:Y}.
Should $P_{G,\pi}$ be applied to the initial adjacency matrix $Y^*$ instead of the ``centered'' one $Y$, the sum  $S_{ij}$ would be equal to $(m-2)!$ times the number of $m$-cliques involving the nodes $i$ and $j$ in the modified graph (where we have enforced an edge between $i$ and $j$). 
The time complexity to compute $S_{ij}$ is  $O(m^2(en/m)^m)$, so it can be computed in polynomial time, as long as $m$ is considered as a constant.

Our strategy to estimate whether $i$ and $j$ are in the same community is, essentially, to compare $S_{ij}$ to a  threshold to be determined. To analyse this strategy, we need to compute some concentration bounds on $S_{ij}$ conditionally on $z_{i}=z_{j}$ and $z_{i}\neq z_{j}$. A first step in this direction is to compute the conditional means and variances of $S_{ij}$ given these two events. It will be convenient to use the notation $\bar q=q(1-q)$,
\begin{equation}\label{eq:conditional:P}
\bbP_{ij}:=\bbP\cro{\cdot|z_{i}=z_{j}},\quad\text{and}\quad \bbP_{\not ij}:=\bbP\cro{\cdot|z_{i}\neq z_{j}}.
\end{equation}

\begin{proposition}\label{prop:mean:variance}
 Assume that $q\leq 1/2$, $q+2\lambda \leq 1$, and $3\leq m\leq K$.
Let $i<j$ and let $S_{ij}$ be defined by \eqref{eq:def:S}.
We have 
\begin{align}
\bbE_{ij}\cro{S_{ij}}&={(n-2)!\over (n-m)!} \pa{\lambda^{m+1\over 2} \over K}^{m-2}, \label{eq:mean:12:prop}\\
\bbE_{\not ij}\cro{S_{ij}}&= 0. \label{eq:mean:not12:prop}
\end{align}
In addition, if for some $\rho>0$
\begin{align}
{(n-2) \lambda^{{m+1 \over 2}}\over K} & \geq \rho (m-2)^2 \pa{1+{\bar q\over \lambda}}^{m+1\over 2} \label{eq:cond1}\\
 \text{and}\quad\quad{n-2\over K^2 }\pa{\lambda^2\over \bar q}^{{m+1\over 2}} & \geq \rho (m-2), \label{eq:cond2}
 \end{align}
then, we have
\begin{align}
\mathrm{var}_{ij}(S_{ij})  & \leq \bbE_{ij}\cro{S_{ij}}^{2}\cro{\pa{2\over \rho}^{m-2}+\pa{(m-2)\pa{1+{\bar q\over \lambda}}^{m+1}+{1\over \rho}}^{m-2}\pa{1+{\bar q\over \lambda}}^{1/2}\lambda^{1/2}}\label{eq:var:12:prop}\\
\mathrm{var}_{\not ij}(S_{ij}) &\leq \bbE_{ij}\cro{S_{ij}}^{2}\cro{\pa{2\over \rho}^{m-2}+\pa{{1\over \rho}}^{m-2}}.
\end{align}
\end{proposition}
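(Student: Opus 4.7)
The plan is to evaluate the first and second moments of $S_{ij}$ directly, exploiting the conditional independence structure: given the latent labels $z=(z_1,\ldots,z_n)$, the centered Bernoulli variables $(Y_{ab})_{a<b}$ are independent with $\bbE[Y_{ab}\mid z]=\lambda\mathbf{1}\{z_a=z_b\}$ and $\bbE[Y_{ab}^2\mid z]=\bar p\mathbf{1}\{z_a=z_b\}+\bar q\mathbf{1}\{z_a\neq z_b\}$. For the means, conditioning on $z$ yields
\[
\bbE[P_{G,\pi}(Y)\mid z]=\lambda^{|E|}\prod_{(v,v')\in E}\mathbf{1}\{z_{\pi(v)}=z_{\pi(v')}\}.
\]
Since $G$ is a clique on $m\geq 3$ vertices minus a single edge, it is connected, so this product is non-zero only when $z_{\pi(v_1)}=\cdots=z_{\pi(v_m)}$. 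This contradicts $z_i\neq z_j$ under $\bbP_{\not ij}$, giving \eqref{eq:mean:not12:prop}. Under $\bbP_{ij}$, conditioning on the common value of $z_i=z_j$ forces the remaining $m-2$ labels to agree, costing a factor $K^{-(m-2)}$; with $|E|=\binom{m}{2}-1=(m+1)(m-2)/2$ and $|\Pi_{i,j}|=(n-2)!/(n-m)!$, summation over $\pi$ yields \eqref{eq:mean:12:prop}.

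For the variance I would expand $\bbE[S_{ij}^2]=\sum_{\pi,\pi'}\bbE[P_{G,\pi}(Y)P_{G,\pi'}(Y)]$ and partition the double sum by the overlap type $(s,\tau)$, where $s:=|\pi(V)\cap\pi'(V)|-2\in\{0,\ldots,m-2\}$ and $\tau$ encodes the bijection identifying the $s$ shared non-$\{i,j\}$ vertices of $\pi$ and $\pi'$. The multigraph $\pi(G)\cup\pi'(G)$ on $2m-s-2$ vertices splits into a set $E_d$ of doubled edges (edges common to both copies under $\tau$) and a set $E_s$ of single edges (edges present in exactly one copy). Conditional independence then gives
\[
\bbE\bigl[P_{G,\pi}P_{G,\pi'}\mid z\bigr]=\lambda^{|E_s|}\prod_{(a,b)\in E_s}\mathbf{1}\{z_a=z_b\}\prod_{(a,b)\in E_d}\bigl(\bar p\mathbf{1}\{z_a=z_b\}+\bar q\mathbf{1}\{z_a\neq z_b\}\bigr).
\]
Expanding the double-edge factor into ``same'' ($\bar p$) and ``different'' ($\bar q$) branches reduces the problem to counting, for each subset $T\subseteq E_d$, community assignments consistent with the ``same-community'' constraints on $E_s\cup T$ and the ``different-community'' constraints on $E_d\setminus T$, weighted by $\lambda^{|E_s|}\bar p^{|T|}\bar q^{|E_d\setminus T|}$. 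The number of pairs with overlap pattern $(s,\tau)$ equals $|\Pi_{i,j}|\binom{m-2}{s}^2 s!(n-m)_{m-2-s}$, so the contribution of overlap $s$ to $\bbE[S_{ij}^2]/\bbE_{ij}[S_{ij}]^2$ scales, up to combinatorial factors, like $(K^2/n)^s(\bar p/\lambda^2)^{|E_d|}$, with further corrections of the form $(\bar q/\bar p)^{|E_d\setminus T|}$ from the ``different'' branches.

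The main obstacle is the careful identification, at each overlap level $s$, of the dominant isomorphism type $\tau$ maximising $|E_d|$, together with the tracking of how $\bar p=\bar q+\lambda(1-2q)$ propagates through the products; writing $\bar p/\lambda^2\leq(\bar q/\lambda^2)(1+\lambda/\bar q)$ turns these corrections into the factors $(1+\bar q/\lambda)^{m+1}$ visible in \eqref{eq:var:12:prop}. The diagonal term $s=m-2$ (with $\pi$ and $\pi'$ coinciding up to relabeling) produces the first contribution $(2/\rho)^{m-2}$ under condition \eqref{eq:cond2}, while the intermediate overlaps $0\leq s<m-2$ yield the second contribution through the residual factor $(1+\bar q/\lambda)^{1/2}\lambda^{1/2}$ arising from single edges incident to the non-shared vertices; condition \eqref{eq:cond1} controls the ratio between consecutive overlap levels, ensuring that the geometric series is dominated by its end terms. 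The case $\bbP_{\not ij}$ is simpler: since $\bbE_{\not ij}[S_{ij}]=0$, the variance equals the raw second moment, and the constraint $z_i\neq z_j$, propagated through the single-edge connectivity of each copy of $G$, suppresses all configurations except the diagonal and a single correction, yielding the cleaner bound $[(2/\rho)^{m-2}+\rho^{-(m-2)}]\bbE_{ij}[S_{ij}]^2$.
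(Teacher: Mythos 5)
Your overall strategy coincides with the paper's: compute the means exactly, expand $\bbE[S_{ij}^2]$ over pairs of injections, classify pairs by the overlap of their ranges, condition on $z$ so that doubled edges contribute $\bar p$/$\bar q$ branches and single edges contribute $\lambda\mathbf{1}\{z_a=z_b\}$, and then invoke \eqref{eq:cond1}--\eqref{eq:cond2}. The means, the conditional factorization, the pair count at overlap $s$, and the vanishing of all non-diagonal overlaps under $\bbP_{\not ij}$ (connectivity through single edges forces $z_i=z_j$) are all correct. The gap is in the quantitative heart of the variance bound. For $1\leq s\leq m-3$ the single edges force all $2m-2-s$ vertices of the union into one community, so conditionally on $z_i=z_j$ the cross moment equals $\lambda^{2|E|-2|E_d|}\,\bar p^{|E_d|}\,K^{-(2m-4-s)}$, and the contribution of overlap $s$ to $\bbE_{ij}[S_{ij}^2]/\bbE_{ij}[S_{ij}]^2$ is of order $\binom{m-2}{s}^2 s!\,(K/n)^s(\bar p/\lambda^2)^{|E_d|}$, not $(K^2/n)^s(\bar p/\lambda^2)^{|E_d|}$ as you state: you have a spurious factor $K^s$. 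This is not a harmless slip. Conditions \eqref{eq:cond1}--\eqref{eq:cond2} do not control $(K^2/n)(\bar p/\lambda^2)^{(s+3)/2}$ --- take $s=1$ and $\bar q\asymp\lambda^2$: both conditions can hold while this quantity is of order $K^2/(n\lambda^2)\gg 1$ --- so the argument as sketched would not close. It is the correct $(K/n)^s$ scaling, combined with $\bar p\leq \lambda\pa{1+\bar q/\lambda}$, that \eqref{eq:cond1} absorbs, and the leftover power of $\bar p$ is what produces the factor $\lambda^{1/2}(1+\bar q/\lambda)^{1/2}$ in \eqref{eq:var:12:prop}; the factor $K^2/n$ only enters through the all-distinct-labels ($\bar q$) branch of the diagonal $s=m-2$, which is exactly what \eqref{eq:cond2} is for.

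Two further points. Your attribution of the conditions is off: the diagonal term does not follow from \eqref{eq:cond2} alone. Classifying the coinciding copies by the number of distinct labels among the $m$ nodes, its normalized contribution is bounded by $\cro{\frac{(m-2)^2K}{n-2}\pa{\frac{\bar p}{\lambda^2}}^{\frac{m+1}{2}}+\frac{(m-2)K^2}{n-2}\pa{\frac{\bar q}{\lambda^2}}^{\frac{m+1}{2}}}^{m-2}$, the first summand being handled by \eqref{eq:cond1} and the second by \eqref{eq:cond2}; adding the two bounds $1/\rho+1/\rho$ is precisely where $(2/\rho)^{m-2}$ comes from. Finally, the ``main obstacle'' you single out --- identifying the isomorphism type $\tau$ maximizing $|E_d|$ --- is vacuous here: since each labeled copy is the clique on its range minus the edge $(i,j)$, the doubled edges form exactly the clique on the shared vertices minus $(i,j)$, so $|E_d|=\binom{s+2}{2}-1$ for every $\tau$. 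In short, your plan is the right one and is the same as the paper's, but the key estimate is mis-stated in a way the assumed conditions cannot repair, and the passage from the moment computations to the specific bounds \eqref{eq:var:12:prop} is not actually carried out.
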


\begin{proof}[Proof of Proposition~\ref{prop:mean:variance}]
The core of the proof is to provide some unconditional upper-bounds on the variance, the proof of which is deferred to Appendix~\ref{sec:S}.
\begin{lemma}\label{lem:cliques}
Assume that $q\leq 1/2$, $q+2\lambda \leq 1$, $3\leq m\leq K$, and set $\bar p=\bar q+\lambda(1-2q)$.
Then, for any $1\leq i< j\leq n$, we have 
\begin{align}
\bbE_{ij}\cro{S_{ij}}&={(n-2)!\over (n-m)!} \pa{\lambda^{m+1\over 2} \over K}^{m-2}, \label{eq:mean:12}\\
\bbE_{\not ij}\cro{S_{ij}}&= 0, \label{eq:mean:not12}\\
\mathrm{var}_{ij}(S_{ij}) & \leq {(n-2)! (m-2)!\over (n-m)!} \bar p^{(m+1)(m-2)\over 2} \pa{{m-2 \over K}+\pa{\bar q\over \bar p}^{{m+1\over 2}}}^{m-2} \nonumber\\
&\quad +{(n-2)!(m-2)!\over (n-m)!}{\bar p^{(m+1)(m-2)} \over K^{2m-4}}
%\pa{1\wedge\pa{m-2\over (n-m)\bar p^{m/2}/K+1}} \pa{n-m+{K\over \bar p^{m/2}}}^{m-2}
\pa{n-m+{K\over \bar p^{m+1\over 2}}}^{m-2}\bar p^{1/2} \label{eq:var:12}\\
 %\mathrm{var}_{ij}(S_{ij}) & \leq {(n-2)! (m-2)!\over (n-m)!}  \pa{{m-2\over K}\bar p^{m+1\over 2} +\bar q^{m+1\over 2}}^{m-2} 
 %+\pa{(n-2)\bar p^{m+1\over 2} \over K}^{2(m-2)} 2\sqrt{\bar p}{(m-2)^2K\over (n-2)\bar p^{(m+1)/2}},\label{eq:var:12}\\
 \mathrm{var}_{\not ij}(S_{ij}) & \leq {(n-2)! (m-2)!\over (n-m)!} \bar p^{(m+1)(m-2)\over 2} \pa{\pa{{m-2 \over K}+\pa{\bar q\over \bar p}^{{m+1\over 2}}}^{m-2} \pa{\bar q\over \bar p}^{{m-1\over 2}}+ \pa{\bar q\over \bar p}^{{(m+1)(m-2)\over 2}} }.\label{eq:var:not12}
\end{align}
\end{lemma}
Let us upper-bound $\mathrm{var}_{ij}(S_{ij})$ and $\mathrm{var}_{\not ij}(S_{ij})$ under the conditions~\eqref{eq:cond1} and~\eqref{eq:cond2}. 
Since $\bar p \leq \lambda+\bar q$, we have
\begin{align*}
 \mathrm{var}_{ij}(S_{ij}) & \leq \pa{(n-2)!\over (n-m)!}^2 \pa{m-2\over n-2}^{m-2} \cro{\pa{\bar p^{m+1\over 2}{m-2\over K}+\bar q^{m+1\over 2}}^{m-2}+
 \pa{\bar p^{m+1}{n-m\over K^2}+{\bar p^{m+1\over 2}\over K}}^{m-2}\bar p^{1/2}}\\
 &\leq  \pa{(n-2)!\over (n-m)!}^2  \pa{\lambda^{m+1\over 2} \over K}^{2(m-2)}\cro{\pa{2\over \rho}^{m-2}+\pa{(m-2)\pa{1+{\bar q\over \lambda}}^{m+1}+{1\over \rho}}^{m-2}\pa{1+{\bar q\over \lambda}}^{1/2}\lambda^{1/2}}\\
& \leq \bbE_{ij}\cro{S_{ij}}^{2}\cro{\pa{2\over \rho}^{m-2}+\pa{(m-2)\pa{1+{\bar q\over \lambda}}^{m+1}+{1\over \rho}}^{m-2}\pa{1+{\bar q\over \lambda}}^{1/2}\lambda^{1/2}}.
\end{align*}
Similarly, since $\bar p\geq \bar q$,
\begin{align*}
 \mathrm{var}_{\not ij}(S_{ij}) & \leq \pa{(n-2)! \over (n-m)!}^2 \pa{m-2\over n-2}^{m-2}
 \cro{\pa{\bar p^{m+1\over 2}{m-2\over K}+\bar q^{m+1\over 2}}^{m-2}\pa{\bar q\over \bar p}^{{m-1\over 2}}+{\bar q^{(m+1)(m-2)\over 2}} }\\
&\leq  \pa{(n-2)! \over (n-m)!}^2  \pa{\lambda^{m+1\over 2} \over K}^{2(m-2)}\cro{\pa{2\over \rho}^{m-2}\pa{\bar q\over \bar p}^{{m-1\over 2}}+{\pa{{1\over \rho}}^{m-2}}}\\
& \leq \bbE_{ij}\cro{S_{ij}}^{2}\cro{\pa{2\over \rho}^{m-2}+\pa{{1\over \rho}}^{m-2}},
\end{align*}
which concludes the proof of Proposition~\ref{prop:mean:variance}.
\end{proof}

\paragraph{Adding a Median-of-Means post-processing step.}
In Proposition~\ref{prop:mean:variance}, we control the mean and the variance of the clique count $S_{ij}$. A concentration inequality based on a second moment Markov inequality would not be tight enough to ensure meaningful results. Indeed, with a second moment inequality, we can ensure a correct result only with a probability approximately $1-\sqrt{\lambda}$, which is worse than the probability $1-K^{-1}$ of correctness of the trivial estimator $\hat x_{ij}^{0}=-K^{-1}$. So we need a better concentration inequality.
To avoid computing higher moments, we add a ``Median-of-Means post-processing step'' to get  concentration bounds good enough for our purpose.

Assume that $m\leq n/(24 \log(n))$ and $1\leq i < j \leq n$. 
Let $\Lambda=24\log(n)$ and assume for simplicity\footnote{when $(n-2)/\Lambda$ is not an integer, we partition $\ac{1,\ldots,n}\setminus\ac{i,j}$ into $\Lambda$ disjoint sets of cardinality $\lfloor (n-2)/\Lambda\rfloor$ and $\lceil (n-2)/\Lambda\rceil$. The only change in Theorem~\ref{thm:clique} is that $(n-2)/(24 \log (n))$ is replaced by $\lfloor (n-2)/(24\log (n))\rfloor$.} that $(n-2)/\Lambda$ is a positive integer larger than $m-2$. We define $N=(n-2)/\Lambda+2$, and we partition the set of nodes $\ac{1,\ldots,n}\setminus\ac{i,j}$ into $L$ disjoint parts $J^{(1)},\ldots,J^{(\Lambda)}$ with the same cardinality $N-2$. For $\ell=1,\ldots, \Lambda$, we define $\Pi_{i,j}^{(\ell)}$ has the set of injections $\pi:V\to \ac{i,j}\cup J^{(\ell)}$, such that $\pi(v_{1})=i$ and $\pi(v_{2})=j$, and we introduce the partial clique count
\begin{equation}\label{eq:partial-clique}
S_{ij}^{(\ell)}:= \sum_{\pi \in \Pi_{i,j}^{(\ell)}} P_{G,\pi}(Y)\enspace .
\end{equation}
Finally, we define $M_{ij}$ as a median of the set $\ac{S^{(1)}_{ij},\ldots,S^{(\Lambda)}_{ij}}$
and we estimate $x_{ij}={\bf 1}_{z_{i}=z_{j}}-{1\over K}$ by 
\begin{equation}\label{eq:hatx}
\hat x_{ij}=\mathbf{1}\ac{M_{ij}> {(N-2)!\over 2(N-m)!} \pa{\lambda^{m+1\over 2} \over K}^{m-2}}-{1\over K},\quad \text{with}\quad N={n-2\over 24 \log(n)}+2.
\end{equation}
We can now state our second main result.
\begin{theorem}\label{thm:clique}
Assume that $q\leq 1/2$,  $N=2+(n-2)/(24\log(n))$ is an integer, and $3\leq m\leq K\wedge N$. When
\begin{align}
{(n-2) \lambda^{{m+1 \over 2}}\over  K \log(n)} & \geq 48 (32)^{1\over m-2} (m-2)^2 \pa{1+{\bar q\over \lambda}}^{m+1\over 2} \label{eq:cond1:MoM}\\
{n-2\over K^2 \log(n) }\pa{\lambda^2\over \bar q}^{{m+1\over 2}} & \geq 48 (32)^{1\over m-2} (m-2) \label{eq:cond2:MoM}\\
 \lambda& \leq 32^{-2}\pa{1+{\bar q\over \lambda}}^{-2(m+1)(m-2)-1}(2m-4)^{-(2m-4)}\ , \label{eq:lambda}
 \end{align}
we have for $\hat x_{ij}$ defined by \eqref{eq:hatx}
$$\bbP\pa{\hat x_{ij}=x_{ij}}\geq 1-n^{-3}.$$ 
\end{theorem}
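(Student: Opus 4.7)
The plan is to apply Proposition~\ref{prop:mean:variance} to each partial clique count $S_{ij}^{(\ell)}$ separately (with $N$ playing the role of $n$), deduce via Chebyshev's inequality that each $S_{ij}^{(\ell)}$ lies on the correct side of the threshold $\mu/2 := \bbE_{ij}[S_{ij}^{(\ell)}]/2$ with probability at least $3/4$, and then boost this to a $1-n^{-3}$ guarantee for the median using the conditional independence across blocks.

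The first step is to establish that, given $(z_i,z_j)$, the partial counts $S_{ij}^{(1)},\dots,S_{ij}^{(L)}$ are mutually independent. Indeed, the polynomial $P_{G,\pi}$ only involves edges of the clique on $V$ minus $(v_1,v_2)$, so the edges feeding into $S_{ij}^{(\ell)}$ lie inside $\ac{i,j}\cup J^{(\ell)}$ and never include $(i,j)$ itself. Since the blocks $J^{(\ell)}$ are disjoint, the random data entering distinct partial counts (namely the labels $z_k$ for $k\in J^{(\ell)}$ together with the relevant Bernoulli entries of $Y$) are independent conditional on $(z_i,z_j)$, hence also conditional on $\ac{z_i=z_j}$ or $\ac{z_i\neq z_j}$.

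The next step is to invoke Proposition~\ref{prop:mean:variance} with $n$ replaced by $N=2+(n-2)/L$. Dividing the Hypotheses~\eqref{eq:cond1:MoM} and~\eqref{eq:cond2:MoM} by $L=24\log(n)$ shows that Conditions~\eqref{eq:cond1} and~\eqref{eq:cond2} hold with $\rho = 2\cdot 32^{1/(m-2)}$. For this value one has $(2/\rho)^{m-2}=1/32$ and $\rho^{-(m-2)}\leq 1/32$. The delicate part is the second contribution to~\eqref{eq:var:12:prop}: Assumption~\eqref{eq:lambda}, with its precise exponents, is engineered so that
\[
\pa{(m-2)\pa{1+\tfrac{\bar q}{\lambda}}^{m+1}+\tfrac{1}{\rho}}^{m-2}\pa{1+\tfrac{\bar q}{\lambda}}^{1/2}\lambda^{1/2}\leq \tfrac{1}{32}.
\]
Combining these estimates yields $\mathrm{var}_{ij}(S_{ij}^{(\ell)})\leq \mu^2/16$ and $\mathrm{var}_{\not ij}(S_{ij}^{(\ell)})\leq \mu^2/16$. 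Since $\bbE_{ij}[S_{ij}^{(\ell)}]=\mu$ and $\bbE_{\not ij}[S_{ij}^{(\ell)}]=0$ by~\eqref{eq:mean:12:prop}--\eqref{eq:mean:not12:prop}, Chebyshev's inequality delivers $\bbP_{ij}(S_{ij}^{(\ell)}>\mu/2)\geq 3/4$ and $\bbP_{\not ij}(S_{ij}^{(\ell)}<\mu/2)\geq 3/4$.

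The final step is median amplification. Thanks to Step~1, the indicators of the ``wrong side'' events from the previous step form $L$ independent Bernoulli variables of mean at most $1/4$; the median $M_{ij}$ falls on the wrong side only if their sum exceeds $L/2$, which by Hoeffding's inequality occurs with probability at most $\exp(-2L(1/4)^2)=\exp(-3\log n)=n^{-3}$. This gives $\bbP(\hat x_{ij}=x_{ij})\geq 1-n^{-3}$ in both the same-community and different-community cases. The main obstacle in this plan lies entirely in the variance bookkeeping of the second step: showing that, under the (tight) numerical constants in~\eqref{eq:cond1:MoM}--\eqref{eq:lambda}, the second term in the variance bound~\eqref{eq:var:12:prop} does not overwhelm $\mu^2$ --- it is precisely there that condition~\eqref{eq:lambda} is used in an essential way, while the rest of the argument is routine.
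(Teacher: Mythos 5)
Your proposal is correct and follows essentially the same route as the paper's proof: apply Proposition~\ref{prop:mean:variance} blockwise with $N$ in place of $n$ (the choice $\rho=2\cdot 32^{1/(m-2)}$ and the role of~\eqref{eq:lambda} in taming the second variance term check out, giving $\mathrm{var}\leq \mu^2/16$), then Chebyshev at threshold $\mu/2$, conditional independence across the blocks $J^{(\ell)}$, and a binomial/Hoeffding tail bound $e^{-L/8}=n^{-3}$ for the median. The only difference is cosmetic: you spell out the variance bookkeeping that the paper asserts directly, and you invoke Hoeffding where the paper states the equivalent stochastic-domination bound.
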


Before proving Theorem~\ref{thm:clique}, let us comment this result. We recall that $m$ is a fixed integer.
Since $\bar q \leq \lambda$, the Condition~\eqref{eq:lambda} merely requires $\lambda$ to be smaller than the constant 
\begin{equation}\label{eq:kappa}
\eta_{m}:=2^{-11}\pa{2^{(m+2)}(m-2)}^{-(2m-4)},
\end{equation}
that  depends only on $m$. As for Conditions~\eqref{eq:cond1:MoM} and \eqref{eq:cond2:MoM}, they  roughly correspond to the opposite of Condition~\eqref{eq:signal1} (which is another formulation of Condition~\eqref{eq:signal1bis}) in 
Theorem~\ref{thm:BI:Appendix} for $r=(m+1)/2$. Let us relate these two conditions to the threshold $\lambda_{c}$.
%~\eqref{eq:condition-signal2:cor}. 
First, in light of Theorem~\ref{thm:BI} and Conditions~\eqref{eq:cond1:MoM}-\eqref{eq:cond2:MoM}, we emphasize that $m$-cliques counting can  only be optimal if the maximum in Condition~\eqref{eq:signal1bis}  is achieved for $r= (m+1)/2$. 
%As explained in Section~\ref{sec:overview} page~\pageref{eq:cond:lambda/q}, at the critical value  $\lambda\asymp_{\log} \pa{{\lambda\over K}+\bar q}^{1-\log_{n}(K)}$, 
According to Lemma~\ref{lem:rstar}, this happens when $ q \asymp n^{-{2\over m+1}}$.
%Let us identify when this occurs.
%First, we observe that $\lambda\gtrsim \pa{{\lambda\over K}+\bar q}^{1-\log_{n}(K)}$ can be rewritten as
%$\lambda\gtrsim {K\over n} \vee \bar q^{1-\log_{n}(K)}$. When $1\gg \bar q\gg 1/n$, it becomes $\lambda\gtrsim  q^{1-\log_{n}(K)}$.
%Second, ignoring the $D$ factor,  the maximum in Condition~\eqref{eq:signal1}  is achieved for $r\approx (m+1)/2$ if
%$${n\over K^2} \pa{\lambda^2\over { q}}^{m+1\over 2} \asymp_{\log} {n\over K} {\lambda}^{m+1\over 2},\quad \text{i.e.} \quad 
%\lambda\asymp_{\log} K^{2\over m+1}  q.$$
% Comparing the latter condition with $\lambda\gtrsim_{\log}  q^{1-\log_{n}(K)}$ suggests that $m$-clique counting is optimal if 
% $ q ^{-\log_{n}(K)}\asymp K^{2\over m+1}$, i.e. $ q \asymp n^{-{2\over m+1}}$. 
% This is indeed the case, as stated in this
% \ch{v\'erifier ici que le discours est comprehensible}
The next (immediate) corollary of Theorem~\ref{thm:clique} confirms that $m$-cliques counting is successful above the threshold $\lambda_{c}$ in this density regime. We omit the proof, which is a straightforward check that Conditions~\eqref{eq:cond1:MoM} and \eqref{eq:cond2:MoM} hold under~\eqref{eq:clique:final}.
 
\begin{corollary}\label{cor:cliques}
Assume that $N=2+(n-2)/(24\log(n))$ is an integer,  $m\in \ac{3,4,\ldots, K\wedge N}$, $\bar q=n^{-2\over m+1}$, and $\lambda\leq \eta_{m}$, with $\eta_{m}$ defined in~\eqref{eq:kappa}. When
\begin{equation}\label{eq:clique:final}
\lambda \geq w_{m} \log(n)^{2\over m+1}  q^{1-\log_{n}(K)},\quad \text{with}\  w_{m}=2\pa{96(m-2)^2(32^{1\over m-2})}^{2\over m+1},
\end{equation}
the estimator $\hat x_{ij}$ defined by \eqref{eq:hatx} fulfills
$$\bbP\pa{\hat x_{ij}=x_{ij}}\geq 1-n^{-3}.$$ 
\end{corollary}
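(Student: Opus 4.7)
My plan is a direct reduction to Theorem~\ref{thm:clique}: given the assumptions of the corollary, I would verify that the three hypotheses (\ref{eq:cond1:MoM}), (\ref{eq:cond2:MoM}) and (\ref{eq:lambda}) all hold, and then simply invoke the theorem. The algebraic identity that drives the whole verification is $\bar q^{(m+1)/2} = 1/n$, which follows immediately from the prescribed density $\bar q = n^{-2/(m+1)}$. This identity is exactly what calibrates $m$-clique counting to the density $\bar q$: it also yields $\bar q^{(1-\log_n K)(m+1)/2} = K/n$ and $\bar q^{(1-\log_n K)(m+1)} = K^2/n^2$, which is precisely the form in which the signal condition (\ref{eq:clique:final}) feeds into the variance conditions of Theorem~\ref{thm:clique}.

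The first step I would carry out is a preliminary bound $1 + \bar q/\lambda \leq 2$. Since $\bar q \leq 1$ and $1-\log_n K \in [0,1]$, we have $\bar q^{1-\log_n K} \geq \bar q$; combined with $\log(n) \geq 1$, the signal condition (\ref{eq:clique:final}) yields $\lambda \geq w_m \bar q$, and inspection of the formula for $w_m$ gives $w_m \geq 2$ for every $m \geq 3$. With this bound in hand, condition (\ref{eq:lambda}) reduces to $\lambda \leq 32^{-2} \cdot 2^{-(2(m+1)(m-2)+1)}(2m-4)^{-(2m-4)}$, and a short computation shows that this right-hand side equals $\kappa_m$ as defined in (\ref{eq:kappa}); hence (\ref{eq:lambda}) follows from the assumption $\lambda \leq \kappa_m$.

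For the variance conditions, (\ref{eq:cond2:MoM}) reduces, using $\bar q^{(m+1)/2} = 1/n$, to $\lambda^{m+1} \geq 48 \cdot 32^{1/(m-2)}(m-2) K^2 \log(n)/(n(n-2))$. Raising (\ref{eq:clique:final}) to the power $m+1$ and using $\bar q^{(1-\log_n K)(m+1)} = (K/n)^2$ yields $\lambda^{m+1} \geq w_m^{m+1} \log(n)^2 K^2/n^2$, and since $w_m^{m+1} = 2^{m+1}(96(m-2)^2 \cdot 32^{1/(m-2)})^2$ the required lower bound is comfortably satisfied for $n \geq 4$. Condition (\ref{eq:cond1:MoM}) is handled analogously: using $1 + \bar q/\lambda \leq 2$ it reduces to $\lambda^{(m+1)/2} \geq 48 \cdot 32^{1/(m-2)}(m-2)^2 \cdot 2^{(m+1)/2} K\log(n)/(n-2)$, and raising (\ref{eq:clique:final}) to the power $(m+1)/2$ with $\bar q^{(1-\log_n K)(m+1)/2} = K/n$ gives $\lambda^{(m+1)/2} \geq w_m^{(m+1)/2}\log(n) K/n$ with $w_m^{(m+1)/2} = 2^{(m+1)/2}\cdot 96(m-2)^2\cdot 32^{1/(m-2)}$; the desired inequality then follows. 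The only step requiring any care --- though no real analytic difficulty --- is the bookkeeping of constants: the specific exponent $2/(m+1)$ in $w_m$ is chosen precisely so that both $w_m^{(m+1)/2}$ and $w_m^{m+1}$ contain the factor $96(m-2)^2 \cdot 32^{1/(m-2)}$, which allows $w_m$ to absorb both variance constants simultaneously. Once the three conditions are verified, Theorem~\ref{thm:clique} yields the conclusion $\bbP(\hat x_{ij}=x_{ij}) \geq 1 - n^{-3}$.
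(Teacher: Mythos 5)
Your proposal is correct and follows exactly the route the paper indicates (the paper omits the proof, describing it as a straightforward check that Conditions~\eqref{eq:cond1:MoM} and~\eqref{eq:cond2:MoM} hold under~\eqref{eq:clique:final}, plus~\eqref{eq:lambda} via $\lambda\le\kappa_m$, before invoking Theorem~\ref{thm:clique}). Your verification of the constants — in particular that $32^{-2}2^{-(2(m+1)(m-2)+1)}(2m-4)^{-(2m-4)}=\kappa_m$ and that the factor $96$ versus $48$ absorbs $n/(n-2)\le 2$ — checks out.
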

  
 We now turn to the proof of Theorem~\ref{thm:clique}.

\begin{proof}[Proof of Theorem~\ref{thm:clique}]
Without loss of generality,  we focus on the case $(i,j)=(1,2)$ to reduce the number of indetermined indices. 
We recall that $L=24\log(n)$ and  $N=(n-2)/\Lambda+2$.
We observe that the statistic  $S^{(\ell)}_{12}$ is simply the statistic~\eqref{eq:def:S}
applied to the graph $Y$ restricted to the node set  $\ac{1,2}\cup J^{(\ell)}$, whose cardinality is $N$. Since the graph $Y$ restricted to this node set follows a SBM with the same parameters $K,q,\lambda$,  the results proven for $S_{12}$
hold for  $S^{(\ell)}_{12}$ with $n$ replaced by $N$.
From Proposition~\ref{prop:mean:variance}, when \eqref{eq:cond1:MoM}, \eqref{eq:cond2:MoM}, and\eqref{eq:lambda} hold,  we have for all $\ell=1,\ldots,\Lambda$, 
\begin{equation}\label{eq:EN}
\bbE_{\not 12}\cro{S^{(\ell)}_{12}}=0,\quad \quad \bbE_{12}\cro{S^{(\ell)}_{12}}=E_{12}[N]:={(N-2)!\over (N-m)!} \pa{\lambda^{m+1\over 2} \over K}^{m-2}\ ,
\end{equation}
and
$$\mathrm{var}_{12}(S^{(\ell)}_{12}) \vee  \mathrm{var}_{\not 12}(S^{(\ell)}_{12})\leq V[N]:={E_{12}[N]^2\over 16}.$$
As a consequence, we have from Markov inequality
\begin{equation}\label{eq:markov}
\bbP_{\not 12}\cro{S^{(\ell)}_{12}\geq 2 \sqrt{V[N]}}\leq {1\over 4},\quad \text{and}\quad 
\bbP_{12}\cro{S^{(\ell)}_{12}\leq E_{12}[N] - 2 \sqrt{V[N]}}\leq {1\over 4}\ .
\end{equation}

A key feature of the partial clique counts $S^{(1)}_{12},\ldots,S^{(L)}_{12}$ is that they are independent both under $\bbP_{12}$ and $\bbP_{\not 12}$. Indeed, conditionally on $z_{1}=z_{2}$ (resp. $z_{1}\neq z_{2}$), the random variables $W_{1},\ldots,W_{\Lambda}$ defined by $W_{\ell}=\ac{Y_{1j},Y_{2j}}_{j\in J^{(\ell)}}\cup \ac{Y_{ij}}_{i<j\in J^{(\ell)}}$ are independent, and $S^{(\ell)}_{12}$ is $\sigma(W_{\ell})$-measurable.
Hence, the number of $S^{(\ell)}_{12}$ smaller than $E_{12}[N] - 2 \sqrt{V[N]}$ under $\bbP_{12}$ (resp. exceeding $2 \sqrt{V[N]}$ under $\bbP_{\not 12}$) is stochastically dominated by a binomial distribution with parameter $(\Lambda,1/4)$, and so $M_{12}$ being a median of the $\ac{S^{(1)}_{12},\ldots,S^{(\Lambda)}_{12}}$, we get
\begin{equation}\label{eq:MoM}
\bbP_{\not 12}\cro{M_{12}\geq 2 \sqrt{V[N]}}\leq e^{-\Lambda/8}={1\over n^3},\quad \text{and}\quad 
\bbP_{12}\cro{M_{12}\leq E_{12}[N] - 2 \sqrt{V[N]}}\leq e^{-\Lambda/8}={1\over n^3}\ .
\end{equation}
Since  $V[N]= \pa{E_{12}[N]/4}^2$, we conclude that 
\begin{equation}\label{eq:separation}
\bbP_{\not 12}\cro{M_{12}<{E_{12}[N]\over 2}}\geq 1-{1\over n^3},\quad \text{and}\quad 
\bbP_{12}\cro{M_{12}> {E_{12}[N]\over 2} }\geq 1-{1\over n^3}\ .
\end{equation}
The estimator
$$\hat x_{12}=\mathbf{1}\ac{M_{12}> {E_{12}[N]\over 2}}-{1\over K},$$
with $E_{12}[N]$ defined in \eqref{eq:EN}, then fulfills $\hat x_{12}=x_{12}$ with probability at least $1-n^{-3}$.
\end{proof}
\medskip

\noindent{\bf Remark:} for the simplicity of the exposition, we have assumed that $(n-2)/(24\log(n))$ is an integer. In general, we can set $N=\left\lfloor {n-2\over 24 \log(n)}\right\rfloor+2$ and splits the set $\ac{1,\ldots,n}\setminus\ac{i,j}$ into $\Lambda=24\log(n)$ subsets with cardinality $N$ or $N+1$. Then, all the results hold with ${n-2\over 24 \log(n)}$ replaced by $N-2$.

\paragraph{Successful recovery.} 
Theorem~\ref{thm:clique} readily gives a non-trivial bound for the estimation of $x_{ij}$ by $m$-cliques couting. We recall that $MMSE_{0}=\bbE[x_{ij}^2]=K^{-1}(1-K^{-1})$. Corollary~\ref{cor:cliques} ensures that $\bbE\cro{\pa{\hat x_{ij}-x_{ij}}^2}\leq n^{-3}=o(K^{-1})$, when $\bar q\asymp n^{-2\over m+1}$ and~\eqref{eq:clique:final} holds. 

Once we have estimated $x_{ij}$, we still need a last step to output communities. 
Let us define the matrix $\widehat X\in \ac{0,1}^{n\times n}$ by $\widehat X_{ii}=0$ for $i=1,\ldots,n$, and \label{clust:scheme}
$$\widehat X_{ij}=\widehat X_{ji}=\hat x_{ij}+{1\over K},\quad \text{for}\ i<j.$$
 Seeing $\widehat X$ as the adjacency matrix of a graph $\bar X$, we estimate the communities by the connected components of $\bar X$. The overall complexity is $O(n^2m^2(en/m)^m)$.
When $\bar q\asymp n^{-2\over m+1}$ and~\eqref{eq:clique:final} holds, Corollary~\ref{cor:cliques}
ensures that we recover the communities with probability at least $1-1/n$.

%\nico{New Section Here}

\subsection{Counting self-avoiding paths}\label{sec:SAP-counting}

Our analysis suggests that counting self-avoiding paths of length $m-1$ should be optimal for the  density $q\asymp n^{-{m-2\over m-1}}$. In this subsection, we confirm this prediction. Our analysis follows the same lines as the one for $m$-cliques counting. We expose it more succinctly. 
 
Let $V=\ac{v_{1},\ldots,v_{m}}$ be a node set and   $G=(V,E)$ be the self-avoiding path $v_1, v_3,v_4,\ldots, v_{m},v_2$ on $V$. Define
\begin{equation}\label{eq:def:T}
T_{ij}=\sum_{\pi \in \Pi_{i,j}} P_{G,\pi}(Y),\quad \text{with}\quad P_{G,\pi}(Y)=\prod_{(v,v')\in E}Y_{\pi(v),\pi(v')}\enspace .
\end{equation}
Should $P_{G,\pi}$ be applied to the initial adjacency matrix $Y^*$ instead of the ``centered'' one $Y$, the sum  $T_{ij}$ would be equal to the number of self-avoiding path of length $m-1$ with end nodes  $i$ and $j$. 
%$m$-cliques involving the nodes $i$ and $j$ in the modified graph (where we have enforced an edge between $i$ and $j$). 
The time complexity to compute $T_{ij}$ is  $O(m(en/m)^m)$, so it can be computed in polynomial time, as long as $m$ is considered as a constant.

As for the clique counting, our strategy to estimate whether $i$ and $j$ are in the same community is, essentially, to compare $T_{ij}$ to a  threshold to be determined. To analyse it, we first compute the conditional means and variances of $T_{ij}$ given the two events $z_{i}=z_{j}$ and $z_{i}\neq z_{j}$.
We remind the Definition~\eqref{eq:conditional:P} of $\bbP_{ij}$ and $\bbP_{\not ij}$, as well as the notation $\bar q=q(1-q)$.

\begin{proposition}\label{prop:mean:variance:path}
 Assume that $q\leq 1/2$, $q+2\lambda \leq 1$, and $3\leq m\leq (K\wedge n/2)$.
Let $i<j$ and let $T_{ij}$ be defined by \eqref{eq:def:T}.
We have 
\begin{align}
\bbE_{ij}\cro{T_{ij}}&={(n-2)!\over (n-m)!} \cdot \frac{\lambda^{m-1}}{K^{m-2}}\ , \quad \quad \nonumber
\bbE_{\not ij}\cro{T_{ij}}= 0\ . 
\end{align}
In addition, if for some $\rho>1$, we have $n\geq 6\rho K m^3$,  
\begin{align}
    \lambda &\geq 8\left[1+ \frac{\bar{q}}{\lambda}\right]  \left[2\rho \frac{m^3K}{n}+ e^3m^2 \rho^{1/(m-1)}\left(\frac{K}{n}\right)^{1-1/(m-1)}\right] \\ 
    \frac{\lambda^2}{\bar{q}} & \geq 4\rho \frac{Km^3}{n}+ 4 \rho^{1/(m-1)}e^3m^2 \left(\frac{K^2}{n}\right)^{1-1/(m-1)}\ ,   
\nonumber 
\end{align}
then we have $\mathrm{var}_{ij}(T_{ij})\vee \mathrm{var}_{\not ij}(T_{ij})   \leq 6 \rho^{-1}\bbE_{ij}^2\cro{T_{ij}}$.
\end{proposition}
As for the clique counting problem, relying on $T_{ij}$ alone together with a Markov type bound is not sufficient. For this reason, we rely again on a Median-of-Means post-processing step and we use the same notation as in that section.  As in the previous subsection, fix $\Lambda= 24\log(n)$ and we assume for simplicity that $(n-2)/\Lambda$ is an integer. Recall 
the  definition of $N=(n-2)/\Lambda+2$ and let $J^{(1)},\ldots,J^{(\Lambda)}$ be a partition of $\ac{1,\ldots,n}\setminus\ac{i,j}$ into $L$ disjoint parts. Then, as in~\eqref{eq:partial-clique} for clique counts, we introduce the partial self-avoiding path count  $T_{ij}^{(\ell)}:= \sum_{\pi \in \Pi_{i,j}^{(\ell)}} P_{G,\pi}(Y)$, and we define $M_{ij}$ as a median of the set $\ac{T^{(1)}_{ij},\ldots,T^{(\Lambda)}_{ij}}$. We estimate  $x_{ij}={\bf 1}_{z_{i}=z_{j}}-{1\over K}$ by 
\begin{equation}\label{eq:hatx:path}
\hat x_{ij}=\mathbf{1}\ac{M_{ij}> {(N-2)!\over 2(N-m)!} {\lambda^{m-1} \over K^{m-2}}}-{1\over K},\quad \text{where}\quad N={n-2\over 24 \log(n)}+2.
\end{equation}
We can now state our second main result.  
\begin{theorem}\label{thm:self_avoiding_path}
Assume that $q\leq 1/2$, $q+2\lambda \leq 1$,  $N=2+(n-2)/(24\log(n))$ is an integer, and $3\leq m\leq K\wedge N/2$. When $n \geq c_0 Km^3\log(n)$, 
\begin{align*}
{n \lambda^{1+ 1/(m-2)}\over  K \log(n)} & \geq c_1 m^3 \pa{1+{\bar q\over \lambda}}^{1+1/(m-2)} \ ; \\
{n \over K^2 \log(n) }\pa{\lambda^2\over \bar q}^{1+ 1/(m-2)} & \geq c_2 m^{2} \  , 
 \end{align*}
we have for $\hat x_{ij}$ defined by \eqref{eq:hatx:path} that $\bbP\pa{\hat x_{ij}=x_{ij}}\geq 1-n^{-3}$. 
\end{theorem}

\begin{proof}[Proof of Theorem~\ref{thm:self_avoiding_path}]
The proof follows exactly the same lines as that of Theorem~\ref{thm:clique} to the difference that we build upon Proposition~\ref{prop:mean:variance:path} instead of Proposition~\ref{prop:mean:variance}. We skip the details.
\end{proof}

%. For this reason, we rely on a MOM scheme with $L=24\log(n)$ blocks to define an estimator $\hat{x}_{ij}$. Arguing as in the proof of Theorem~\ref{thm:clique}, we can derive that the corresponding estimator $\hat{x}_{ij}$ satisfies $\mathbb{P}[\hat{x}_{ij}=x_{ij}]\geq 1- 1/n^3$ as long as 
%\begin{align} \nonumber
%n & \geq c_0 K m^3\log(n)\  ; \quad \quad 
%\lambda \geq c_1  m^3  \left(\frac{K\log(n)}{n}\right)^{1-1/(m-1)}; \quad \frac{\lambda^2}{\bar{q}}\geq c_3  m^3  \left(\frac{K^2\log(n)}{n}\right)^{1-1/(m-1)}\enspace . 
%\end{align}
%where $c_0$, $c_1$, and $c_2$ are positive numerical constant. 

%In particular. Fix $r= \frac{m-1}{m-2}$ and consider the regime where  $q= n^{-1/r}$. It follows from the previous results that the estimator $\hat{x}$ based on the number of self-avoiding paths together with a MOM scheme satisfies $\mathbb{P}[\hat{x}_{ij}=x_{ij}]\geq 1- 1/n^3$ as long as 
%\begin{align*}
%\frac{n\lambda^{2r}}{K\lambda^{r}+ K^2\bar{q}^r}\geq c  m^6\log(n)\ , 
%\end{align*}
%where $c$ is a positive numerical contant. This matches, up to poly-logarithms, our lower bounds for all sparsity regimes of the form $r= \frac{m-1}{m-2}$. 

\begin{corollary}\label{cor:SAP}
Let  $\bar q=n^{-{m-2\over m-1}}$ for some  $m\in \ac{3,4,\ldots, K\wedge N/2}$. When $n\geq c_0Km^3\log(n)$, $\lambda \leq 1-2q$, and 
\begin{equation}\label{eq:SAP:final}
\lambda \geq w'_{m} \log(n)^{m-2\over m-1} \, \bar q^{1-\log_{n}(K)},%\quad %\text{with}\  w_{m}=2\pa{96(m-2)^2(32^{1\over m-2})}^{2\over m+1},
\end{equation}
with $w'_{m}$ depending only on $m$, then the  estimator $\hat{x}$ based on the number of self-avoiding path recovers the communities with probability at least $1-1/n$. 
\end{corollary}

\begin{proof}[Proof of Corollary~\ref{cor:SAP}]
Define $r = (m-1)/(m-2)$. In this corollary, we consider the regime $\bar{q}= n^{-1/r}$. It follows from Theorem~\ref{thm:self_avoiding_path} that $\hat{x}$ recovers the communities with probability at least $1-1/n$ as long as $q+ 2\lambda \leq 1$ and 
\[
\lambda \geq \eta'_{m} \log(n)^{1/r} \left(\frac{K}{n}\right)^{1/r}= \eta'_m  \log(n)^{1/r} \bar{q}^{1-\log_n(K)} \  ,
\]
where $\eta'_m$ depends only on $m$. The result follows.
\end{proof}

\begin{proof}[Proof of Proposition~\ref{prop:mean:variance:path}]
Similarly as for the proof of Proposition~\ref{prop:mean:variance}, the proof of Proposition~\ref{prop:mean:variance:path} is a direct consequence of the following lemma proved in Appendix~\ref{sec:lem:paths}.
\begin{lemma}\label{lem:paths}
Assume that $q\leq 1/2$, $q+2\lambda \leq 1$, $3\leq m\leq K$, and set $\bar p=\bar q+\lambda(1-2q)$.
Then, for any $1\leq i< j\leq n$, we have 
\begin{align}
\bbE_{ij}\cro{T_{ij}}&={(n-2)!\over (n-m)!} \pa{\lambda^{m-1} \over K^{m-2}}, \label{eq:mean:12:path}\\
\bbE_{\not ij}\cro{T_{ij}}&= 0, \label{eq:mean:not12:path}\\ \nonumber
\mathrm{var}_{ij}(T_{ij}) & \leq \left(\frac{(n-2)!}{(n-m)!}\right)^2  \frac{\lambda^{2(m-1)}}{K^{2m-4}}m \Bigg[\frac{8Km^2 \bar p}{ (n-m) \lambda^2}+  \frac{2m^2K\bar{q}}{(n-m) \lambda^2} + 3m^2\frac{K}{n-m}+ 3\left(\frac{m^2 K}{n-m}\right)^{m-2}\\ &\hspace{3cm} +  \frac{2nm^2}{K} \left(\frac{4Km^2\bar p}{(n-m)\lambda^2}\right)^{m-1} + \frac{n m^2}{K^2}\left(\frac{2m^2K^2\bar{q}}{(n-m)\lambda^2} \right)^{m-1}
\Bigg] \ . 
 \label{eq:var:12:path}\\ \nonumber
 \mathrm{var}_{\not ij}(T_{ij}) & \leq  \left(\frac{(n-2)!}{(n-m)!}\right)^2 \frac{\lambda^{2(m-1)}}{K^{2m-4}}m \Bigg[\frac{4m^2 \bar p}{ (n-m) \lambda^2}+  \frac{2m^2K\bar{q}}{(n-m) \lambda^2} + 2m^2\frac{K}{n-m}+ 2\left(\frac{m^2 K}{n-m}\right)^{m-2}\\ &  \hspace{3cm}+ \frac{nm^2}{K} \left(\frac{4Km^2\bar p}{(n-m)\lambda^2}\right)^{m-1}+ \frac{n m^2}{K^2}\left(\frac{2m^2K^2\bar{q}}{(n-m)\lambda^2} \right)^{m-1}
\Bigg] \ . \label{eq:var:not12:path}
\end{align}
\end{lemma}

\end{proof}

\subsection{Counting Blow-up Motifs}\label{sec:blow-up-count}
Consider a cycle blow-up  with fasteners $G=G_{\kappa,\gamma , a }$, as defined in Section~\ref{sec:blow-up:def}. For $i< j$, we remind that $\Pi_{i,j}$ is the set of injections $\pi:V\to \ac{1\ldots,n}$ such that $\pi(v_{1})=i$ and $\pi(v_{2})=j$, and we set
\begin{equation}\label{eq:def:R}
R_{ij}=\sum_{\pi \in \Pi_{i,j}} P_{G,\pi}(Y),\quad \text{with}\quad P_{G,\pi}(Y)=\prod_{(v,v')\in E}Y_{\pi(v),\pi(v')},
\end{equation}
where $Y$ is the ``centered'' adjacency matrix~\eqref{eq:definition:Y}.

In the following proposition, we control the mean and the variance of $R_{ij}$ both when $z_{i}=z_{j}$ and when $z_i\neq z_j$. We recall that the conditional probabilities
$\bbP_{ij}$ and $\bbP_{\not ij}$ are defined in~\eqref{eq:conditional:P}.

\begin{proposition}\label{prop:mean:variance:blow_up}
 Assume that $q\leq 1/4$, $q+2\lambda \leq 1$, and $n\geq 2\kappa\gamma +4$. We also assume that $a\kappa \gamma$ is an even integer and that $\kappa\geq 3\vee 2/a$. 
Let $i<j$ and let $R_{ij}$ be defined by \eqref{eq:def:R}.
We have 
\begin{align}
\bbE_{ij}\cro{R_{ij}}&={(n-2)!\over (n-\kappa\gamma-2)!} \left({\lambda^{ \gamma + a } \over K}\right)^{\kappa \gamma }, \label{eq:mean:12:prop:blow_up} \quad \quad 
\bbE_{\not ij}\cro{R_{ij}} = 0 \enspace . 
\end{align}
In addition, if for some $\rho>1$, 
\begin{align}
\left(\frac{\lambda^2}{2\bar q}  \right)^{\gamma + a} \geq \frac{2K^2(\kappa \gamma)^5 \rho}{n}\ ; \quad \quad    \lambda^{\gamma + a}\geq 
 \frac{2K(\kappa \gamma)^5 \rho}{n} \label{eq:cond1:blow_up}
 \end{align}
then, we have
\begin{align}
\mathrm{var}_{ij}(R_{ij})\bigvee  \mathrm{var}_{\not ij}(R_{ij})  & \leq \frac{4}{\rho} \cdot \bbE^2_{ij}\cro{R_{ij}} \label{eq:var:12:prop:blow_up}\enspace . 
\end{align}
\end{proposition}
As for the clique counting or self-avoiding path counting, relying on $R_{ij}$ alone together with a Markov type bound is not sufficient. For this reason, we add again a Median-of-Means post-processing and we use the same notation as before. In particular, we fix $\Lambda= 24\log(n)$ and we assume for simplicity that $(n-2)/\Lambda$ is an integer. Recall 
the  definition of $N=(n-2)/\Lambda+2$ and let $J^{(1)},\ldots,J^{(\Lambda)}$ be a partition of $[n]\setminus\ac{i,j}$ into $\Lambda$ disjoint parts. For $\ell=1,\ldots, \Lambda$, we define $\Pi_{i,j}^{(\ell)}$ has the set of injections $\pi:V\to \ac{i,j}\cup J^{(\ell)}$, such that $\pi(v_{1})=i$ and $\pi(v_{2})=j$. Then, we introduce the blow-up count  $R_{ij}^{(\ell)}:= \sum_{\pi \in \Pi_{i,j}^{(\ell)}} P_{G,\pi}(Y)$, and we define $M_{ij}$ as a median of the set $\ac{R^{(1)}_{ij},\ldots,R^{(\Lambda)}_{ij}}$. We estimate  $x_{ij}={\bf 1}_{z_{i}=z_{j}}-{1\over K}$ by 
\begin{equation}\label{eq:hatx:blow-up}
\hat x_{ij}=\mathbf{1}\ac{M_{ij}> {(N-2)!\over 2(N-\kappa\gamma -2)!} \left({\lambda^{ \gamma + a } \over K}\right)^{\kappa \gamma }}-{1\over K},\quad \text{where}\quad N={n-2\over 24 \log(n)}+2.
\end{equation}
We can now state our main result.
\begin{theorem}\label{thm:blow-up}
There exists a numerical constant $c$ such that the following holds. 
Assume that $q\leq 1/4$, $q+2\lambda \leq 1$,  that $N:=2+(n-2)/(24\log(n))$ is an integer, and $N\geq 2\kappa \gamma + 4$, and that $\kappa \gamma a$ is an an even integer.  Provided that 
\begin{align*}
\left(\frac{\lambda^2}{2\bar q}  \right)^{\gamma + a} \geq c\frac{K^2(\kappa \gamma)^5 \log(n)}{n}\ ; \quad \quad    \lambda^{\gamma + a}\geq 
 c \frac{K(\kappa \gamma)^5\log(n) }{n}\enspace  , 
 \end{align*}
we have for $\hat x_{ij}$ defined by \eqref{eq:hatx:blow-up} that $\bbP\pa{\hat x_{ij}=x_{ij}}\geq 1-n^{-3}$. 
\end{theorem}

\begin{proof}[Proof of Theorem~\ref{thm:blow-up}]
The proof follows exactly the same lines as that of Theorem~\ref{thm:clique} 
to the difference that we build upon Proposition~\ref{prop:mean:variance:blow_up} instead of Proposition~\ref{prop:mean:variance}. We skip the details.
\end{proof}

\begin{corollary}\label{cor:blow-up}
Let  $\bar q=n^{-1/r}$ for some  $r= \gamma + \theta /\beta $ where  $\gamma$, $\theta $, and $\beta$ are positive integers with $\theta<\beta$. Consider the blow-up graph with fasteners $G_{\kappa,\gamma,a }$ with $\kappa = 2\beta\gamma$  and $a=\theta /\beta $. When $n\geq c_0 \beta\gamma^2  \log(n)$, $\lambda \leq 1-2q$, and 
\begin{equation}\label{eq:blow-up:final}
\lambda \geq w'_{r} \log^{1/r}(n) \,\bar q^{1-\log_{n}(K)},%\quad %\text{with}\  w_{m}=2\pa{96(m-2)^2(32^{1\over m-2})}^{2\over m+1},
\end{equation}
with $w'_{r}$ depending only on $r$, then the  estimator $\hat{x}$ based on the number of blow-up motifs recovers the communities with probability at least $1-1/n$. 
\end{corollary}

If $q=n^{-1/r}$, with $r$ a rational number, recovering the communities above the threshold~\eqref{eq:new} is feasible with a polynomial of fixed degree, only depending on $r$. 
When $r>1$ is not a rational number, we can still count blow-up motifs for some rational number $\bar{r}$ close to $r$. Indeed, consider any $\epsilon < 1$. There exists an integer $\overline{\beta}\leq 2/[\epsilon r^2]\vee 1$ and a rational number $\overline{r}= \overline{\gamma}+ \overline{\alpha}/\overline{\beta}$ such that $|\overline{r}-r|\leq \epsilon r^2/2 $. Here, we have $\bar \gamma = \lfloor r \rfloor$. Choosing $\overline{\kappa}= 2\overline{\beta}\overline{\gamma}$, we 
deduce from Theorem~\ref{thm:blow-up}, that  the  estimator $\hat{x}$ based on counting occurrence of blow-up motifs $G_{\bar \kappa,\bar \gamma, \bar \alpha/\bar \beta}$ recovers the communities with probability at least $1-1/n$ as long as 
\[
\lambda \geq w''_{r} \epsilon^{-5} \log^{1/r}(n) \,\bar q^{1-\log_{n}(K)} n^{\epsilon}\ . 
\]
The corresponding polynomial has a degree of the order $\epsilon^{-1} r$. In particular, if we take $\epsilon=\log(\log(n))/\log(n)$, we establish that, for any irrational $r$, it is possible to recover the communities above the threshold~\eqref{eq:new} with a polynomial of degree $O(\log(n)/\log(\log(n)))$. Whether this polynomial can be computed (or well-approximated) in polynomial-time remains an open question.

%\subsection*{Acknowledgements}
%
%The work of A. Carpentier is partially supported by the Deutsche Forschungsgemeinschaft (DFG)- Project-ID 318763901 - SFB1294 "Data Assimilation", Project A03,  by the DFG on the Forschungsgruppe FOR5381 "Mathematical Statistics in the Information Age - Statistical Efficiency and Computational Tractability", Project TP 02 (Project-ID 460867398), and by  the DFG on the French-German PRCI ANR-DFG ASCAI CA1488/4-1 "Aktive und Batch-Segmentierung, Clustering und Seriation: Grundlagen der KI" (Project-ID 490860858). The work of the last two authors has  partially been supported by ANR-21-CE23-0035 (ASCAI, ANR) and ANR-19-CHIA-0021-01 (BiSCottE, ANR). The work of A.~Carpentier and N.~Verzelen is also supported by the Universite franco-allemande (UFA) through the college doctoral franco-allemand CDFA-02-25 "Statistisches Lernen für komplexe stochastische Prozesse".

\section{Construction of the almost orthonormal basis and preliminary results}\label{sec:basis}

\subsection{Construction of the permutation-invariant basis}

In this section, we closely follow the construction of the polynomial basis in Section 3 of~\cite{CGGV25} up to a few (but important) changes.

\paragraph{Definition of invariant polynomials.}

Given a permutation $\sigma: [n]\mapsto [n]$, we define the matrix $Y_{\sigma}$ by $(Y_{\sigma})_{ij}= Y_{\sigma(i)\sigma(j)}$. A function $f$ is said to invariant by permutation of $[n]$ up to $1$ and $2$, if for any permutation $\sigma:[n]\mapsto [n]$ such that $\sigma(1)=1$ and $\sigma(2)=2$, we have $f(Y)= f(Y_{\sigma})$. 

\begin{lemma}\label{lem:reduction:permutation}
Fix any any degree $D>0$. Then, the minimum low-degree risk  $\mathrm{Corr}_{\leq D}$ is achieved by a function $f$ that is invariant by permutation up to individuals $1$ and $2$.
\end{lemma}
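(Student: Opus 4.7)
The plan is to use a standard symmetrization argument. Let $G$ denote the group of permutations $\sigma:[n]\to[n]$ with $\sigma(1)=1$ and $\sigma(2)=2$. The crucial observation is that the joint distribution of $(Y, x_{12})$ is invariant under $G$: for any $\sigma \in G$, the community labels $(z_{\sigma(i)})_i$ are i.i.d.\ uniform on $[K]$, the conditional law of $Y_\sigma$ given these labels matches that of $Y$ given the original labels, and $x_{12} = \mathbf{1}\{z_1 = z_2\} - 1/K = \mathbf{1}\{z_{\sigma(1)} = z_{\sigma(2)}\} - 1/K$ is unchanged.

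Given any polynomial $f$ of degree at most $D$, I would define the symmetrized polynomial
\[
\tilde f(Y) := \frac{1}{|G|} \sum_{\sigma \in G} f(Y_\sigma).
\]
Three facts are then immediate: (i) $\tilde f$ has degree at most $D$, since each $Y \mapsto Y_\sigma$ is linear and the sum of degree-$D$ polynomials stays of degree at most $D$; (ii) $\tilde f$ is invariant under $G$, because averaging over a group kills the action; (iii) by the invariance of the joint law of $(Y, x_{12})$, each term satisfies $\mathbb{E}[f(Y_\sigma)\, x_{12}] = \mathbb{E}[f(Y)\, x_{12}]$, so $\mathbb{E}[\tilde f(Y) x_{12}] = \mathbb{E}[f(Y) x_{12}]$.

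For the denominator, I would apply Jensen's inequality (or, equivalently, Cauchy--Schwarz on the uniform average):
\[
\mathbb{E}[\tilde f(Y)^2] = \mathbb{E}\!\left[\left(\frac{1}{|G|} \sum_{\sigma \in G} f(Y_\sigma)\right)^{\!2}\right] \leq \mathbb{E}\!\left[\frac{1}{|G|} \sum_{\sigma \in G} f(Y_\sigma)^2\right] = \mathbb{E}[f(Y)^2],
\]
where the last equality again uses the $G$-invariance of the law of $Y$. Combining with (iii) yields
\[
\frac{\mathbb{E}[\tilde f(Y) x_{12}]^2}{\mathbb{E}[\tilde f(Y)^2]} \;\geq\; \frac{\mathbb{E}[f(Y) x_{12}]^2}{\mathbb{E}[f(Y)^2]},
\]
so the supremum in \eqref{eq:corr:def} is not decreased by restricting to $G$-invariant polynomials. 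A standard compactness argument (the supremum in \eqref{eq:corr:def} is scale-invariant, hence attained on the unit sphere of a finite-dimensional space of polynomials) then ensures the optimum is achieved by some $G$-invariant $f$.

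There is no serious obstacle here; the only subtle point to check carefully is the distributional invariance $(Y_\sigma, x_{12}) \stackrel{d}{=} (Y, x_{12})$ for $\sigma \in G$, which reduces to the fact that the prior on $(z_i)_{i\in[n]}$ is exchangeable and that $z_1, z_2$ are fixed pointwise by $\sigma$. Everything else is formal.
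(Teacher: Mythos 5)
Your symmetrization argument is correct and is exactly the idea the paper invokes: the paper's proof of this lemma simply defers to Lemma 3.5 of \cite{CGGV25}, noting it is "a consequence of the permutation invariance of the distribution," which is precisely the group-averaging (Jensen plus invariance of the joint law of $(Y,x_{12})$) argument you spell out. Your additional remarks on degree preservation, the unchanged numerator, and attainment via finite-dimensionality are all sound, so this fills in the deferred proof along the same route.
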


\begin{proof}
This result is established is the proof of Lemma 3.5 in~\cite{CGGV25}. It is a consequence of the permutation invariance of the distribution $\mathbb{P}$. 
\end{proof}

As a consequence, we need to build a suitable basis of invariant polynomials. We follow the same approach as in Section 3.2 of~\cite{CGGV25}.
 In what follows, we consider simple undirected graphs $G= (V,E)$ where $V=\{v_1,\ldots v_{r}\}$ is the set of nodes and where $E$ is the set of edges.% We write $\#\mathrm{CC}_G$ for its number of connected components $G$. 

Let $G^{(1)}=(V^{(1)},E^{(1)})$ and $G^{(2)}= (V^{(2)},E^{(2)})$ be two graphs. We say $G^{(1)}$ and $G^{(2)}$ are equivalent if there exist a bijection $\sigma: V^{(1)} \mapsto V^{(2)}$ such that that $\sigma(v^{(1)}_1)= v_1^{(2)}$, $\sigma(v^{(1)}_2)= v_2^{(2)}$, and $\sigma$ preserves the edges. In other words, the graphs $G^{(1)}$ and $G^{(2)}$ are isomorphic with the additional constraint that the corresponding bijection maps the two first nodes.  
\begin{definition}[Collection $\mathcal{G}_{\leq D}$]
Let $\mathcal{G}_{\leq D}$ be any maximum collection of graphs $G=(V,E)$ such that (i) $|V|\geq 2$, (ii) $G$ does not contain any isolated node to the possible exceptions of $v_1,v_2$,  (iii) $1\leq |E|\leq D$, and (iv) no graphs in $\mathcal{G}_{\leq D}$ are equivalent.
\end{definition}

The collection $\mathcal{G}_{\leq D}$ corresponds to the collections of equivalence classes of all graphs with at most $D$ edges and at least $2$ nodes, and without isolated nodes (except maybe the first two nodes), if we keep the first two nodes fixed. Henceforth, we refer to $\mathcal{G}_{\leq D}$ as the collection of \emph{templates}. In fact, $\mathcal{G}_{\leq D}$ corresponds to $\mathcal{G}^{(1,2)}_{\leq D}$ in Section 3.2~\cite{CGGV25}-- here we drop the exponent $(1,2)$ in the notation of the templates and of the polynomials because we only consider a basis for estimation.

Consider a template $G= (V,E)\in \mathcal{G}_{\leq D}$. We define $\Pi_V$ the set of injective mappings from $V\rightarrow [n]$ that satisfy  $\pi(v_1) = 1, \pi(v_2) = 2$. An element $\pi\in \Pi_V$ corresponds to a labeling of the generic nodes in $V$ by elements in $[n]$. For $\pi\in \Pi_V$, we define the polynomials
\begin{equation}\label{eq:definition:P_G}
P_{G,\pi}(Y)= \prod_{(i,j)\in E} Y_{\pi(i),\pi(j)} \quad \text{and}\quad \quad \quad P_G = \sum_{\pi\in \Pi_V} P_{G,\pi}.
\end{equation}
 For short, we sometimes write $P_G$ for $P_G(Y)$ when there is no ambiguity. For the invariant polynomials $P_G$, we say that $G$ is the {\bf template} (graph) that indexes the polynomial.

Consider a template $G\in \mathcal{G}_{\leq D}$ with $c$ connected components $(G_1,G_2,\ldots, G_c)$ that contain at least one edge. To improve the orthogonality of the family $(P_{G})_{\mathcal{G}_{\leq D}}$,  
we apply a correction as in \cite{CGGV25}
\begin{equation*} 
\overline{P}_{G} := \sum_{\pi\in \Pi_V} \overline{P}_{G,\pi} \ ; \quad  \overline{P}_{G,\pi}:=\prod_{l=1}^c \left[P_{G_l,\pi} - \mathbb{E}[P_{G_l,\pi} ]\right] \ . 
\end{equation*}
%Here, we use the convention $\overline{P}_{G_l,\pi} = \overline{P}_{G_l}=1$ when $G_l$ is a graph with no edges.
Define $\mathrm{Aut}(G)$ is group of automorphisms of the graph $G$ that let $v_1$ and $v_2$ fixed. Then, given $\pi\in \Pi_V$, we will normalize $\bar P_{G}$ with the 
 variance proxy
\begin{equation}~\label{eqn:variance of graph:estimation}
%\mathbb V(G) = \frac{n!}{(n-|V|)!} |\mathrm{Aut}(G)|  \sum_{\bar V\subset V}  \bar q^{|E| - |\bar E(\bar V)|} p^{|\bar E(\bar V)|} \left(\frac{k}{n}\right)^{|\bar V|- \bar{\mathrm{CC}_{\Delta}}(\bar V)},
\mathbb V(G) = \frac{(n-2)!}{(n-|V|)!} |\mathrm{Aut}(G)|  \mathbb E[P_{G,\pi}^2]\enspace ,
\end{equation}
where, by permutation invariance, $\mathbb E[P_{G,\pi}^2]$ does not depend on the specific choice of $\pi$. 
Importantly, the definition of $\mathbb V(G)$ in~\eqref{eqn:variance of graph:estimation} is the only difference with the original construction in Sect.3.5 of~\cite{CGGV25}. In the latter work, we used a smaller variance proxy which turns out to be a lose lower bound of $\mathbb{E}[\overline{P}^2_G]$. The rationale with this new choice \eqref{eqn:variance of graph:estimation} of  $\mathbb V(G)$ is that $\mathbb{E}[\overline{P}^2_G]$ turns out to be of the same order as $\mathbb{E}[P^2_G]= \sum_{\pi_1\pi_2}\mathbb{E}[P_{G,\pi_1}P_{G,\pi_2}]$. The largest $\mathbb{E}[P_{G,\pi_1}P_{G,\pi_2}]$ are achieved for labelings $(\pi_1,\pi_2)$ such that $P_{G,\pi_1}= P_{G,\pi_2}$. As there are 
\begin{equation}\label{eq:card-label}
|\Pi_V||\mathrm{Aut}(G)|= \frac{(n-2)!}{(n-|V|)!} |\mathrm{Aut}(G)| 
\end{equation} such labelings, we arrive at~\eqref{eqn:variance of graph:estimation}.

Finally, we introduce the polynomial $\Psi_G$ by $\Psi_{G} := \frac{\overline{P}_G}{\sqrt{\mathbb V(G)}}$. Since the $(1,(\Psi_G)_{G\in \mathcal{G}_{\leq D}})$ is a basis of permutation-invariant (up to $1$ and $2$) polynomials --see~\cite{CGGV25} for details. We readily deduce from Lemma~\ref{lem:reduction:permutation} the following result. 

\begin{lemma}\label{lem:reduction:degree_pair2}
We have 
\begin{align*}
    \mathrm{Corr}^2_{\leq D} &= \sup_{\alpha_{\emptyset}, (\alpha_G)_{G\in \mathcal G_{\leq D}}} \frac{\mathbb E\left[ x\left(\alpha_{\emptyset}+ \sum_{G \in \mathcal G_{\leq D}} \alpha_G \Psi_{G}\right)\right]^2}{{\mathbb E\left[\left[\alpha_{\emptyset}+ \sum_{G\in \mathcal G_{\leq D}} \alpha_G \Psi_{G}\right]^2\right]}}\enspace .
\end{align*}
\end{lemma}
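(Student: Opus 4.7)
The plan is to combine Lemma~\ref{lem:reduction:permutation} with the spanning property of $(1,(\Psi_G)_{G\in \mathcal{G}_{\leq D}})$ that is asserted in the text immediately before the lemma.

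First I would invoke Lemma~\ref{lem:reduction:permutation}, which says that the supremum~\eqref{eq:corr:def} defining $\mathrm{Corr}^2_{\leq D}$ is attained on the subspace $\mathcal{F}_D$ of degree-$\leq D$ polynomials in $Y$ that are invariant under all permutations $\sigma:[n]\to [n]$ fixing $\{1,2\}$. Thus it suffices to establish the claimed identity with the sup on the right-hand side replaced by a sup over $\mathcal{F}_D$.

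Next I would argue that $(1,(\Psi_G)_{G\in \mathcal{G}_{\leq D}})$ spans $\mathcal{F}_D$. Any element of $\mathcal{F}_D$ is a linear combination of monomials $\prod_{(i,j)\in E'} Y_{ij}$ with $|E'|\leq D$, grouped into orbits under the action of $\mathrm{Sym}_{\{1,2\}}$; each such orbit sum is precisely $P_G$ for the unique template $G\in \mathcal{G}_{\leq D}$ representing its equivalence class, plus the constant $1$ accounting for the empty template. Hence $(1,(P_G)_{G\in \mathcal{G}_{\leq D}})$ spans $\mathcal{F}_D$. The passage $P_G\mapsto \bar P_G$ only subtracts expectations of connected sub-polynomials, a transformation that preserves the total edge count and is triangular with respect to the partial order ``is a subgraph of'' on templates; in particular it is invertible, so $(1,(\bar P_G)_{G\in \mathcal{G}_{\leq D}})$ also spans $\mathcal{F}_D$. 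Finally, dividing each $\bar P_G$ by the positive constant $\sqrt{\mathbb V(G)}$ yields $\Psi_G$ without changing the span.

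Every $f\in \mathcal{F}_D$ can therefore be written as $f=\alpha_\emptyset+\sum_{G\in \mathcal{G}_{\leq D}}\alpha_G \Psi_G$; substituting this decomposition into the numerator $\mathbb E[xf]^2$ and denominator $\mathbb E[f^2]$ of~\eqref{eq:corr:def} yields the announced identity. I do not expect any real obstacle: the statement is essentially a change-of-basis reformulation, and the two ingredients — invariance of the optimal $f$ and the spanning/basis property of $(1,(\Psi_G)_{G\in \mathcal{G}_{\leq D}})$ — are already available from Lemma~\ref{lem:reduction:permutation} and from the construction of~\cite{CGGV25}, respectively.
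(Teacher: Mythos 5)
Your argument is correct and follows essentially the same route as the paper, which simply invokes Lemma~\ref{lem:reduction:permutation} together with the fact (cited from~\cite{CGGV25}) that $(1,(\Psi_G)_{G\in\mathcal G_{\leq D}})$ spans the degree-$D$ polynomials invariant under permutations fixing $1,2$; your orbit-sum and triangular-change-of-basis discussion just makes that spanning property explicit. The only nitpick is that an orbit sum equals $P_G$ up to the factor $|\mathrm{Aut}(G)|$ rather than exactly, which of course does not affect the span.
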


\subsection{Some central notation and definition}\label{sec:graph:definition}

The crux of the proof is to establish that, in relevant regimes, the basis $(1,\Psi_G)$ is almost orthogonal. For that purpose, we need to introduce some notation. Those are similar to those in Section 5 in~\cite{CGGV25}. 

\paragraph{Labeled graph.} For a template $G=(V,E)$ and a labeling $\pi\in\Pi_{V}$, we define
the labeled graph $\pi(G)$ as the graph with node set $\ac{\pi(v):v\in V}$ and edge set $\ac{(\pi(v),\pi(v')):(v,v')\in E}$. 

\paragraph{Matching of nodes.} Consider two templates $G^{(1)}=(V^{(1)},E^{(1)})$ and $G^{(2)}=(V^{(2)},E^{(2)})$. Given labelings $\pi^{(1)}$ and $\pi^{(2)}$, we say  that two nodes $v^{(1)}$ and $v^{(2)}$ are matched if $\pi^{(1)}(v^{(1)})=  \pi^{(2)}(v^{(2)})$. More generally,  a matching $\mathbf M$ stands for a set of pairs of nodes $(v^{(1)},v^{(2)})\in V^{(1)}\times V^{(2)}$ where no node in $V^{(1)}$ or $V^{(2)}$ appears twice. We denote $\mathcal M$ for the collection of all possible node matchings. For $\mathbf M \in \mathcal M$, we define the collection of labelings that are compatible with $\mathbf M$ by 
\begin{multline*}
\lefteqn{\Pi(\mathbf M) = \Big\{\pi^{(1)}\in \Pi_{V^{(1)}},\pi^{(2)} \in \Pi_{V^{(2)}}: \forall (v^{(1)},v^{(2)}) \in V^{(1)}\times V^{(2)},}\\ \{\pi^{(1)}(v^{(1)}) = \pi^{(2)}(v^{(2)})\}\Longleftrightarrow \{(v^{(1)},v^{(2)}) \in \mathbf M\}\Big\}\enspace .
\end{multline*}
Importantly, as $\mathbb{P}$ is permutation invariant, $\mathbb{E}[P_{G^{(1)},\pi^{(1)}}P_{G^{(2)},\pi^{(2)}}]$ is the same for all $(\pi^{(1)},\pi^{(2)})$ in $\Pi(\mathbf{M})$. 
Given a matching $\mathbf{M}$, we write that two edges $e\in E^{(1)}$ and $e'\in E^{(2)}$ are matched if the corresponding incident nodes are matched.

\paragraph{Merged graph $G_\cup$, intersection graph $G_{\cap}$, and symmetric difference graph $G_{\Delta}$.} Consider two templates $G^{(1)}$ and $G^{(2)} \in \mathcal G_{\leq D}$ and two labelings $\pi^{(1)}$ and $\pi^{(2)}$. Then, the merged graph $G_{\cup}=(V_{\cup},E_{\cup})$ is defined as the union of $\pi^{(1)}(G^{(1)})$ and $\pi^{(2)}(G^{(2)})$, with the convention that two same edges are merged into a single edge. Similarly,  we define the intersection graph 
$G_{\cap}=(V_{\cap},E_{\cap})$ and the symmetric difference graph $G_{\Delta}=(V_{\Delta},E_{\Delta})$ so that $E_{\Delta}=E_{\cup}\setminus E_{\cap}$. Here, $V_{\cap}$ (resp. $V_{\Delta}$) is the set of nodes induced by the edges $E_{\cap}$ (resp. $E_{\Delta}$)
so that $G_{\cap}$ (resp. $G_{\Delta}$) does not contain any isolated node.  We also have $|E_{\cup}| = | E^{(1)}| +|E^{(2)}| - |E_\cap|$ and $|V_{\cup}| = |V^{(1)}|+|V^{(2)}|-|\mathbf{M}|$ for $(\pi^{(1)},\pi^{(2)})\in \Pi(\mathbf M)$. Note that, for a fixed matching $\mathbf{M}$, all graphs $G_{\cup}$ (resp. $G_{\cap}$, $G_\Delta$) are isomorphic for $(\pi^{(1)},\pi^{(2)})\in \Pi(\mathbf M)$ and, we shall refer to quantities such as $|E_{\Delta}|$, $|V_{\Delta}|$,\ldots associated to a matching $\mathbf{M}$.  
Finally, we write $\#\mathrm{CC}_{\Delta}$ for the number of connected components in $G_{\Delta}$.

\paragraph{Sets of unmatched nodes and of semi-matched nodes.} 
 Write $U^{(1)}$, resp.~$U^{(2)}$ for the set of nodes in $\pi^{(1)}(G^{(1)})$, resp.~$\pi^{(2)}(G^{(2)})$ that are not matched, namely the {\bf unmatched nodes}, that is 
\begin{align*}
U^{(1)} =\pi^{(1)}(V^{(1)})\setminus \pi^{(2)}(V^{(2)})\ ; \quad \quad\quad 
U^{(2)} =\pi^{(2)}(V^{(2)})\setminus \pi^{(1)}(V^{(1)})\ . 
\end{align*}
Again, $|U^{(1)}|$ and $|U^{(2)}|$ only  depend on $(\pi^{(1)},\pi^{(2)})$ through the matching $\mathbf{M}$. 
We have, for $i\in \{1,2\}$,
\begin{align}\label{eq:unmatched}
    |V^{(i)}| = |\mathbf M| + |U^{(i)}|.
\end{align}
Write also $\mathbf M_{\mathrm{SM}} = \mathbf M_{\mathrm{SM}}(\mathbf M) \subset \mathbf M$, 
for the set of node matches of $(G^{(1)},G^{(2)})$ that are matched, and yet that are not pruned when creating the symmetric difference graph $G_{\Delta}$.
This is the set of {\bf semi-matched nodes} - i.e.~at least one of their incident edges is not matched.  The remaining pairs of nodes $\mathbf M \setminus \mathbf M_{\mathrm{SM}}$ are said to be {\bf perfectly matched} as all the edges incident to them are matched. We write  $\mathbf M_{\mathrm{PM}} = \mathbf M_{\mathrm{PM}}(\mathbf M)$ for the set of perfect matches in $\mathbf M$. Note that 
\begin{equation}\label{eq:PMSM}
     |V^{(1)}| +|V^{(2)}| =   |V_{\Delta}| + |\mathbf M_{\mathrm{SM}}| + 2|\mathbf M_{\mathrm{PM}}|\enspace  .
\end{equation}

\paragraph{Definition of some relevant sets of nodes matchings.} Given $(G^{(1)},G^{(2)})$ and a matching $\mathbf M$, we define the pruned matching $\mathbf{M}^-$ as the matching $\mathbf{M}$ to which we remove $(v_1^{(1)},v_1^{(2)})$ (resp. $(v_2^{(1)},v_2^{(2)})$) if either $v_1^{(1)}$ (resp. $v_2^{(1)})$ is isolated in $G^{(1)}$ or $v_1^{(2)}$ (resp. $v_2^{(2)}$) is isolated in $G^{(2)}$. This definition accounts for the fact that, when isolated, the nodes $v_i^{(j)}$, for $i=1,2$ or $j=1,2$ do not play a role in the corresponding polynomials. 
Then, we define  $\mathcal M^\star\subset \mathcal{M}$ for the collection of matchings $\mathbf{M}$ such that all connected components of $G^{(1)}$ and of $G^{(2)}$ intersect with $\mathbf{M}^-$.
Finally, we introduce  $\mathcal M_{\mathrm{PM}}\subset \mathcal{M}$ for 
  the collection of perfect matchings, that is matchings $\mathcal M$ such that all the nodes in $V^{(1)}$ and $V^{(2)}$ are {\bf perfectly matched}. Note that, if $\mathbf M\in \mathcal M_{\mathrm{PM}}$, then $G_{\Delta}$ is the empty graph (with $E_{\Delta} = \emptyset$). Besides, $\mathcal{M}_{\mathrm{PM}} \neq \emptyset$ if and only $G^{(1)}$ and $G^{(2)}$ are isomorphic, which is equivalent to $G^{(1)}= G^{(2)}$ when $G^{(1)}, G^{(2)} \in \mathcal G_{\leq D}$.

  \paragraph{Shadow matchings.} Given two sets $W^{(1)}\subset V^{(1)}, W^{(2)} \subset V^{(2)}$ and a set of node matches $\underline{\mathbf M} \subset \mathcal M$, we define $\mathcal M_{\mathrm{shadow}}(W^{(1)}, W^{(2)}, \underline{\mathbf M} )$ as the collection of matchings $\mathbf{M}$ satisfying 
 \begin{equation}\label{eq:definition:mshadow}
  \mathbf M_{\mathrm{SM}}(\mathbf M) = \underline{\mathbf M}, \quad  (\pi^{(1)})^{-1}(U^{(1)})= W^{(1)}\quad \text{ and }\quad (\pi^{(2)})^{-1}(U^{(2)}) = W^{(2)} \enspace  
 \end{equation}
  for any  $(\pi^{(1)},\pi^{(2)})\in \Pi(\mathbf{M})$. Note that, as long as~\eqref{eq:definition:mshadow} is satisfied for one labeling $(\pi^{(1)},\pi^{(2)})\in \Pi(\mathbf{M})$, it it satisfied for all such $(\pi^{(1)},\pi^{(2)})$. In fact, $\mathcal M_{\mathrm{shadow}}(W^{(1)}, W^{(2)}, \underline{\mathbf M} )$ is the collection of all matchings that lead to the set $\underline{\mathbf M} $ of semi-matched nodes and such that  $W^{(1)}, W^{(2)}$ correspond  to unmatched nodes in resp.~$G^{(1)}, G^{(2)}$. We say that these matchings satisfy a given {\bf shadow} $(W^{(1)}, W^{(2)}, \underline{\mathbf M} )$. The only thing that can vary between two elements of $\mathcal M_{\mathrm{shadow}}(W^{(1)}, W^{(2)}, \underline{\mathbf M} )$ is the matching of the nodes that are not in $W^{(1)}, W^{(2)}$, or part of a pair of nodes in $\underline{\mathbf M}$. This matching must however ensure that all of these nodes are perfectly matched.

\paragraph{Edit Distance between graphs.} For any two templates $G^{(1)}$ and $G^{(2)}$, we define  the so-called edit-distance
\begin{equation}\label{eq:definition:edit:distance}
d(G^{(1)}, G^{(2)}) := \min_{\mathbf M \in \mathcal M} |E_{\Delta}| \enspace .
\end{equation}
Note that $d(G^{(1)}, G^{(2)})=0$ if and only if $G^{(1)}$ and $G^{(2)}$ are isomorphic. As a consequence, if $G^{(1)}$ and $G^{(2)}$ are in $\mathcal{G}_{\leq D}$, the edit distance is equal to $0$ if and only if $G^{(1)}=G^{(2)}$.

\section{Proof of Theorem~\ref{thm:BI}}\label{sec:proof:BI}
We actually prove a slightly stronger version (in terms of log factors) of Theorem~\ref{thm:BI}.
\begin{theorem}\label{thm:BI:Appendix}
    Let $c_{\texttt{s}}\geq 14$, $D\geq 2$, $q\leq 1/2$, $q+2\lambda \leq 1$, and  $K\leq n$. 
    Assume that 
\begin{align}\label{eq:signal1} 
\sup_{1\leq r\leq D}\ac{ D^{8c_{\texttt{s}}r} \times \cro{\left(\frac{\sqrt{n}}{K} \left(\frac{\lambda}{\sqrt{\bar q}}\right)^{r}\right) \land \left(\sqrt{n\over K}\lambda^{r/2}} \right)}  \leq   1.
\end{align}
Then, $\mathrm{Corr}^2_{\leq D}$ defined by \eqref{eq:corr:def} fulfills
$$\mathrm{Corr}^2_{\leq D} \leq \frac{4}{n}D^{-15 c_{\texttt{s}}}\enspace \ .$$
\end{theorem}

Theorem~\ref{thm:BI} readily follows from Theorem~\ref{thm:BI:Appendix} since Condition~\eqref{eq:signal1bis} implies Condition~\eqref{eq:signal1}.

Let us prove Theorem~\ref{thm:BI:Appendix}.
We first prove that the family $(1,(\Psi_G)_{G\in \mathcal G_{\leq D}})$ is almost-orthonormal - see Proposition~\ref{prop:ortho}. In turn, this allows us bound $\mathrm{Corr}_{\leq D}$ by simply controlling $\left\|(\mathbb E[\Psi_G x])_{G\in \mathcal G_{\leq D}}\right\|_2$ - see Equation~\eqref{eq:advat}. 
\medskip

% where the null distribution is such that:

\noindent
{\bf Step 1: Proving that $(\Psi_G)_{G\in \mathcal D}$ is almost-orthonormal.}
By definion of $\Psi_G$ and $\overline{P}_G$, we have $\mathbb{E}[1\cdot \Psi_G]= 0$ for any $G\in \mathcal{G}_{\leq D}$. As a consequence, we only have to prove that $(\Psi_G)_{G\in \mathcal D}$ is almost-orthonormal.

It is convenient to use the short notation $X_{ij}={\bf 1}\ac{z_{i}=z_{j}}$.
We observe that $\bbE\cro{Y_{ij}|X}=\lambda X_{ij}$ and 
$\bbE\cro{Y_{ij}^2|X}= \bar q+ \lambda X_{ij}(1-2q)$. 
Consider two templates $G^{(1)}, G^{(2)}$, some node matching $\mathbf M \in \mathcal M$ and two injections $(\pi^{(1)}, \pi^{(2)})\in \Pi(\mathbf{M})$. Since the $Y_{ij}$ are conditionally independent given $X$, we have %\ch{voir\ ici\ les\ $\pi$}
\begin{align}
   \mathbb{E}\left[P_{G^{(1)}, \pi^{(1)}}P_{G^{(2)}, \pi^{(2)}}\right] 
    & = \bbE\cro{\prod_{(i,j)\in E_{\Delta}}Y_{ij}\prod_{(i,j)\in E_{\cap}}Y_{ij}^2}\nonumber\\
    &=   \bbE\cro{\prod_{(i,j)\in E_{\Delta}} (\lambda X_{ij}) \prod_{(i,j)\in E_{\cap}}[\bar q+\lambda X_{ij}(1-2q)]}.\label{eqn:old basis proof ingredients}% \nonumber \\
%    & =  \lambda^{|E_{\Delta}|} \bar q^{|E_{\cap}|} \bbP\cro{z_{i}=1,\ \text{for}\ i\in V_{\Delta}} \nonumber \\
%    &   = \lambda^{|E_{\Delta}|}\pa{\frac{k}{n}}^{|V_{\Delta}|} \bar q^{|E_{\cap}|}, \label{eqn:old basis proof ingredients}
\end{align}
In particular
\begin{align}\label{eq:comp}
%    \mathbb E\left[P_{G^{(1)}, \pi^{(1)}} \Bigg| (z_{\pi_\cup(v)})_{v \in V^{(2)}}\right] =   \lambda^{|E^{(1)}|} \frac{1}{K^{|V^{(1)}| - \#\mathrm{CC}_{G^{(1)}}+w}} \xi,
    \mathbb E\left[P_{G^{(1)}, \pi^{(1)}}\right] =   \lambda^{|E^{(1)}|} \frac{1}{K^{|V^{(1)}| - \#\mathrm{CC}_{G^{(1)}}}}\ , 
\end{align}
where $\#\mathrm{CC}_{G^{(1)}}$ is the number of connected components of $G^{(1)}$. 

Given $T\subset [n]$, write $Y_{T}= (Y_{ij})_{i\in T,j\in T}$.  By independence of the $z_i$ as well as the conditional independence of the $Y_{ij}$ given $X$, it follows 
that for any set $T_1,T_2\subset [n]$ such that $T_1\cap T_2=\emptyset$, 
for any functions $f^{(1)},f^{(2)}$ %of edges respectively in $\pi^{(1)}(E^{(1)}), \pi^{(2)}(E^{(2)})$
\begin{equation}\label{eq:indep}
\mathbb E\left[f^{(1)}(Y_{T_1})f^{(2)}(Y_{T_2}) \right] = \mathbb E\left[f^{(1)}(Y_{T_1})\right]\mathbb E\left[f^{(2)}(Y_{T_2}) \right].
\end{equation}
As a consequence, when the vertex sets $\pi^{(1)}(V^{(1)})$ and $\pi^{(2)}(V^{(2)})$ are disjoint, the expectation factorizes in  (\ref{eqn:old basis proof ingredients}).
When the vertex sets overlap, we have the following upper-bound.

 \begin{proposition}\label{prop:christophe}
For any templates $(G^{(1)},G^{(2)})$, any node matching $\mathbf{M}\in \mathcal{M}^\star$, and any labeling $(\pi^{(1)},\pi^{(2)})\in \Pi(\mathbf{M})$, we have
\begin{align}\label{eq:christophe1}
\left|\frac{\mathbb{E}\left[P_{G^{(1)}, \pi^{(1)}}P_{G^{(2)}, \pi^{(2)}}\right]}{\sqrt{\mathbb E [P_{G^{(1)}, \pi^{(1)}}^2]\mathbb E [P_{G^{(2)}, \pi^{(2)}}^2]}}\right| & \leq \left(\frac{1}{K^{|U^{(1)}|+ |U^{(2)}|}} \left(\frac{\lambda}{\sqrt{\bar q}}\right)^{|E_{\Delta}|}\right) \land \left(\frac{\lambda^{|E_{\Delta}|/2}}{K^{(|U^{(1)}|+ |U^{(2)}|)/2}} \right)\enspace .
\end{align}
Besides, for any connected template  $G$, and any labeling $\pi$, we have 
\begin{align}
\label{eq:christophe2}
\left|\frac{\mathbb{E}\left[P_{G, \pi}\right]}{\sqrt{\mathbb E [P_{G, \pi}^2]}}\right| & \leq \left(\frac{1}{K^{|V|-1}} \left(\frac{\lambda}{\sqrt{\bar q}}\right)^{|E|}\right) \land \left(\frac{\lambda^{|E|/2}}{K^{(|V|-1)/2}} \right)\enspace . 
\end{align}

 \end{proposition}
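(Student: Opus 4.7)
The plan is to derive two separate upper bounds on the correlation and take their minimum. Both rest on the conditional identities $\bbE[Y_{ij}\mid X]=\lambda X_{ij}$ and $\bbE[Y_{ij}^2\mid X]=\bar q+\lambda(1-2q)X_{ij}$ already used in~\eqref{eqn:old basis proof ingredients}, together with the elementary fact that $\bar p\geq \lambda$ under $q+2\lambda\leq 1$, which follows from $\bar p-\lambda=q(1-q-2\lambda)\geq 0$.

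I would first treat the simpler bound~\eqref{eq:christophe2}. Connectedness of $G$ together with~\eqref{eq:comp} gives $\bbE[P_{G,\pi}]=\lambda^{|E|}/K^{|V|-1}$. For the denominator, expand
\[
\bbE[P_{G,\pi}^2]=\sum_{S\subseteq E}\bar q^{|E|-|S|}\bigl(\lambda(1-2q)\bigr)^{|S|}\,\bbE\Bigl[\prod_{(i,j)\in S}X_{ij}\Bigr],
\]
every summand of which is nonnegative. Retaining only the $S=\emptyset$ contribution gives $\bbE[P_{G,\pi}^2]\geq \bar q^{|E|}$; retaining only the $S=E$ contribution gives $\bbE[P_{G,\pi}^2]\geq \lambda^{|E|}(1-2q)^{|E|}/K^{|V|-1}$, which is at least $\lambda^{|E|}/K^{|V|-1}$ after invoking $\bar p\geq \lambda$. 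Plugging each lower bound into $\bbE[P_{G,\pi}]^2/\bbE[P_{G,\pi}^2]$ and taking square roots delivers, respectively, the two terms inside the minimum in~\eqref{eq:christophe2}.

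The harder statement~\eqref{eq:christophe1} will be proved along the same lines, starting from
\[
\bbE[P_{G^{(1)},\pi^{(1)}} P_{G^{(2)},\pi^{(2)}}]=\lambda^{|E_\Delta|}\,\bbE\Bigl[\prod_{(i,j)\in E_\Delta}X_{ij}\prod_{(i,j)\in E_\cap}\bigl(\bar q+\lambda(1-2q)X_{ij}\bigr)\Bigr],
\]
and I would proceed in three steps. First, use~\eqref{eq:indep} to factor out the contributions of the perfectly-matched connected components (those made entirely of nodes in $\mathbf{M}_{\mathrm{PM}}$ and edges in $E_\cap$): these contributions appear identically in the numerator and in each $\bbE[P_{G^{(i)},\pi^{(i)}}^2]$ and thus cancel exactly in the ratio, effectively reducing matters to the case $|\mathbf{M}_{\mathrm{PM}}|=0$. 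Second, on the remaining non-PM part, combine the two lower bounds on $\bbE[P_{G^{(i)},\pi^{(i)}}^2]$ obtained above with matching upper bounds on the numerator: either bound each $E_\cap$ factor by $\bar p$ (paired with the $\bar p^{|E^{(i)}|}/K^{|V^{(i)}|-\#\mathrm{CC}^{(i)}}$ lower bound, producing the $\lambda^{|E_\Delta|/2}/K^{(|U^{(1)}|+|U^{(2)}|)/2}$ term), or fully expand $\prod_{E_\cap}\bigl(\bar q+\lambda(1-2q)X_{ij}\bigr)$ into a sum indexed by $S\subseteq E_\cap$ (paired with the $\bar q^{|E^{(i)}|}$ lower bound, producing the $(\lambda/\sqrt{\bar q})^{|E_\Delta|}/K^{|U^{(1)}|+|U^{(2)}|}$ term). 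Third, translate the resulting exponents of $K$ into $|U^{(1)}|+|U^{(2)}|$ through the identity~\eqref{eq:PMSM}, namely $|V_\Delta|=|U^{(1)}|+|U^{(2)}|+|\mathbf{M}_{\mathrm{SM}}|$.

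The main obstacle will be the third step, specifically verifying that the power of $K^{-1}$ produced by $\bbE[\prod_{E_\Delta}X_{ij}]=K^{-(|V_\Delta|-\#\mathrm{CC}_\Delta)}$, augmented by the $K^{-1}$ factors from the $E_\cap$ expansion, is at least $K^{-(|U^{(1)}|+|U^{(2)}|)}$; equivalently, that the connected components of $G_\Delta$ produce enough community-equality constraints. This is precisely where the hypothesis $\mathbf{M}\in\mathcal{M}^\star$ enters: the requirement that every connected component of $G^{(1)}$ and of $G^{(2)}$ intersects $\mathbf{M}^-$ rules out configurations in which a connected component of $G_\Delta$ would consist entirely of unmatched nodes, and, once the perfectly-matched part has been peeled off, it forces $\#\mathrm{CC}_\Delta\leq |\mathbf{M}_{\mathrm{SM}}|$. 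Establishing this combinatorial inequality component-by-component in $G_\cup$ and integrating it with the subset-sum over $S\subseteq E_\cap$ constitutes the heart of the argument.
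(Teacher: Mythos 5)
Your plan for \eqref{eq:christophe2} is essentially the paper's, but one step is wrong as written: keeping only the $S=E$ term of the expansion gives $\mathbb{E}[P_{G,\pi}^2]\geq (\lambda(1-2q))^{|E|}/K^{|V|-1}$, and this is \emph{not} $\geq \lambda^{|E|}/K^{|V|-1}$ (it is smaller by $(1-2q)^{|E|}$; the inequality $\bar p\geq\lambda$ does not rescue it). The correct move, which the paper makes, is not to expand at all but to restrict the sum over community assignments to the single all-equal assignment, so that every edge contributes the full factor $\bar q+\lambda(1-2q)=\bar p\geq\lambda$; your subset expansion throws away exactly the cross terms needed to reassemble $\bar p^{|E|}$. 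This is a fixable slip, but it foreshadows the real problem with your plan for \eqref{eq:christophe1}: there, bounding the numerator and the two second moments \emph{separately} by the crude quantities you propose cannot work. Your Step 1 only cancels connected components of $G_{\cup}$ that consist entirely of perfectly matched nodes; it does not reduce to $|\mathbf{M}_{\mathrm{PM}}|=0$, because PM nodes routinely sit inside components that also contain semi-matched or unmatched nodes (e.g.\ take $G^{(1)}=G^{(2)}$ a clique on matched nodes $v_1,\dots,v_t$ plus one pendant edge from $v_1$ to an unmatched node in each graph: then $v_2,\dots,v_t$ are PM but nothing is peeled off). On that example your Option A pairing (numerator bounded by $\lambda^{|E_\Delta|}\bar p^{|E_\cap|}K^{-(|V_\Delta|-\#\mathrm{CC}_\Delta)}$ against the lower bound $\bar p^{|E^{(i)}|}K^{-(|V^{(i)}|-\#\mathrm{CC}^{(i)})}$) overshoots the second term of the minimum by roughly $K^{|\mathbf{M}_{\mathrm{PM}}|}$, because the denominator pays $K^{-1}$ for every PM node while the numerator pays nothing for them; and your Option B pairing (full expansion over $S\subseteq E_\cap$ against $\bar q^{|E^{(i)}|}$) produces terms carrying $(\bar p/\bar q)^{|S|}$ that the $\bar q$-bound cannot absorb, overshooting the first term by factors like $(\bar p/\bar q)^{|E_\cap|}K^{-(t-1)}$, which blow up whenever $\lambda\gg K\bar q^{1-\epsilon}$-type regimes are allowed (and the proposition is unconditional in $\lambda,q,K$). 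So neither term of \eqref{eq:christophe1} is actually established by your pairings.

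The missing idea is that the large assignment-dependent factor must \emph{cancel}, not be bounded. The paper computes the numerator exactly as in \eqref{eq:cross:formula}: it equals $\lambda^{|E_\Delta|}\bar q^{|E_\cap|}K^{-|V^{\star}_{\cup}|}$ times a sum over $(\varphi_\Delta,\varphi_{PM})$ of $(\bar p/\bar q)^{\sum_k|E_\cap[\varphi_{SM}^{-1}(k)\cup\varphi_{PM}^{-1}(k)]|}$, and then lower bounds each $\mathbb{E}[P_{G^{(j)},\pi^{(j)}}^2]$ by restricting its assignment sum to two families indexed by the \emph{same} pair $(\varphi_\Delta,\varphi_{PM})$ — in the first family $\varphi_j$ is constant on each $G_\Delta$-component and equal to $\varphi_{PM}$ on the PM nodes (note: the PM nodes are still summed over $[K]^{V^{\star}_{PM}}$, not frozen, which is exactly what your Option A denominator bound loses); in the second family the unmatched nodes are additionally free. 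This makes the identical sum appear in \eqref{eq:restrictedLB1} and \eqref{eq:restrictedLB2} and cancel against the numerator, leaving only the clean factors $\lambda^{|E_\Delta|}/(\bar p^{|E_\Delta|/2}K^{U/2})$ and $\lambda^{|E_\Delta|}/(\bar q^{|E_\Delta|/2}K^{U})$, after which $\bar p\geq\lambda$ finishes. Incidentally, the hypothesis $\mathbf{M}\in\mathcal{M}^\star$ is not used here to prove $\#\mathrm{CC}_\Delta\leq|\mathbf{M}_{\mathrm{SM}}|$ (that inequality is what Proposition~\ref{prop:scalprod} needs later); in this proposition it guarantees that no $G_\Delta$-component consists solely of unmatched nodes of one graph, so that the restricted assignments driven by $\varphi_\Delta$ are consistent with both graphs. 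Without a cancellation mechanism of this kind, your separate upper/lower bounds cannot yield \eqref{eq:christophe1} with the exponents $|U^{(1)}|+|U^{(2)}|$ and $|E_\Delta|$ claimed.
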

 %    \left(1+4\frac{k}{n}\right).$$

%where $G_{\Delta} = (V_{\Delta}, E_{\Delta})$ is the \textbf{symmetric difference} of the two labelled graphs. 
% Since under assumption~\ref{ass:main} $\lambda\ll 1, \frac{k}{n}\ll 1$ are both less than a large inverse of $D$ we know it is always vanishing.

A first key observation is that the Gram matrix $\left(\mathbb E\left[\overline{P}_{G^{(1)},\pi^{(1)}}\overline{P}_{G^{(2)},\pi^{(2)}}\right]\right)_{G^{(1)}, G^{(2)}\in \mathcal G_{\leq D}, \pi^{(1)} \in \Pi_{V^{(1)}}, \pi^{(1)} \in \Pi_{V^{(2)}}}$ associated to $(\overline{P}_{G,\pi})_{G\in \mathcal G_{\leq D}, \pi \in \Pi_V}$ is quite sparse - unlike the one associated to $(P_{G,\pi})_{G\in \mathcal G_{\leq D}, \pi \in \Pi_V}$ - and that  the non-zero entries are quite close of those associated to $(P_{G,\pi})_{G\in \mathcal G_{\leq D}, \pi \in \Pi_V}$. 
%; only for labelled graphs for which the matching is such that $\mathbf{M}\in\mathcal{M}^{\star}$. Our second key observation is that the candidate basis, when it is non-zero, has inner products up to constants being still a symmetric difference: 
\begin{proposition} \label{prop:boundbar}
     %Consider two templates $G^{(1)}, G^{(2)} \in \mathcal G_{\leq D}$, some node matching $\mathbf M \in \mathcal M$ and two injections $(\pi^{(1)}, \pi^{(2)})\in \Pi(\mathbf{M})$. Then:
     We have 
    \begin{enumerate}
    \item if $\mathbf{M}\notin\mathcal{M}^{\star}$ we have $\mathbb{E}\left[\overline{P}_{G^{(1)}, \pi^{(1)}}\overline{P}_{G^{(2)}, \pi^{(2)}}\right] = 0$; 
    \item if $\mathbf{M}\in \mathcal{M}^\star$ we have
    \begin{equation*}
        \left|\dfrac{\mathbb{E}\left[\overline{P}_{G^{(1)}, \pi^{(1)}}\overline{P}_{G^{(2)}, \pi^{(2)}}\right] - \mathbb{E}\left[P_{G^{(1)}, \pi^{(1)}}P_{G^{(2)}, \pi^{(2)}}\right]}{\mathbb{E}\left[P_{G^{(1)}, \pi^{(1)}}P_{G^{(2)}, \pi^{(2)}}\right]}\right| \leq 
        2^{|\mathbf M_{\mathrm{SM}}|}D^{-4\texttt{c}_s} + \mathbf{1}_{|\mathbf M_{\mathrm{SM}}|>0} 2^{|\mathbf M_{\mathrm{SM}}|}\ . 
    \end{equation*}
%    \item if $\mathbf{M}\in \mathcal{M}^{\star}$ it holds that:
%    \begin{equation*}
%            \left|\mathbb{E}\left[\overline{P}_{G^{(1)}, \pi^{(1)}}\overline{P}_{G^{(2)}, \pi^{(2)}}\right]\right|\leq (1  + o(1))\mathbb{E}\left[P_{G^{(1)}, \pi^{(1)}}P_{G^{(2)}, \pi^{(2)}}\right],
%        \end{equation*}
    \end{enumerate} 
\end{proposition}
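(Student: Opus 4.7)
My approach would combine two ingredients: a factorization of both expectations across co-components of $(G^{(1)}[\pi^{(1)}], G^{(2)}[\pi^{(2)}])$, and a signed expansion of each centered factor against its uncentered counterpart. Write $Q_l = P_{G^{(1)}_l, \pi^{(1)}}$ and $R_m = P_{G^{(2)}_m, \pi^{(2)}}$ for the connected components of $G^{(1)}, G^{(2)}$ containing at least one edge, and set $\overline{Q}_l = Q_l - \mathbb{E}[Q_l]$, $\overline{R}_m = R_m - \mathbb{E}[R_m]$, so that $\overline{P}_{G^{(i)},\pi^{(i)}}$ is the corresponding product over components. Each $Q_l$ is a function of the $Y_{ij}$'s indexed by the labels of non-isolated nodes of $G^{(1)}_l[\pi^{(1)}]$; by~\eqref{eq:indep} polynomials with disjoint label-sets are unconditionally independent. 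I would group the $c_1+c_2$ components into \emph{co-components} by linking $G^{(1)}_l$ and $G^{(2)}_m$ whenever a node of $V^{(1)}_l$ is matched via $\mathbf{M}^-$ to a node of $V^{(2)}_m$ (equivalently, their label-sets overlap). This yields
$$
\mathbb{E}\left[\overline{P}_{G^{(1)}, \pi^{(1)}}\overline{P}_{G^{(2)}, \pi^{(2)}}\right] = \prod_{C}\mathbb{E}\left[\prod_{l \in C}\overline{Q}_{l}\prod_{m \in C}\overline{R}_{m}\right],
$$
and analogously for the un-centered product. Part 1 follows immediately: if $\mathbf{M}\notin\mathcal{M}^\star$, at least one connected component fails to intersect $\mathbf{M}^-$ and forms a singleton co-component whose centered factor has mean $0$, collapsing the recentered product.

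For Part 2, under $\mathbf{M}\in\mathcal{M}^\star$ each co-component $C$ touches both sides. I would expand
$$
\prod_{l\in C_1}\overline{Q}_{l}\prod_{m\in C_2}\overline{R}_{m} = \sum_{S_1 \subseteq C_1,\, S_2 \subseteq C_2} (-1)^{|C_1 \setminus S_1|+|C_2 \setminus S_2|} \prod_{l\in S_1}Q_{l}\prod_{m\in S_2}R_{m}\prod_{l\notin S_1}\mathbb{E}[Q_l]\prod_{m\notin S_2}\mathbb{E}[R_m].
$$
The diagonal term $(S_1,S_2)=(C_1,C_2)$ reconstructs $\mathbb{E}[\prod Q \prod R]$, and the remaining signed terms constitute the correction. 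For each correction term I would use~\eqref{eq:comp} to write $\mathbb{E}[Q_l]=\lambda^{|E_l|}/K^{|V_l|-1}$ and~\eqref{eqn:old basis proof ingredients} to express joint expectations as products of $\bar q$- and $\lambda$-factors indexed by edges and vertices of $G_\cap, G_\Delta$. The ratio of each correction term to the main term is then of the form $\lambda^{|E_l|}/\bar p^{|E_l|}$ times powers of $1/K$; by the signal assumption~\eqref{eq:signal1} evaluated at the edge-to-vertex density $|E_l|/(|V_l|-1)$, this is bounded by $D^{-4c_{\texttt{s}}}$ whenever the removed component is \emph{perfectly} matched on the other side. If a \emph{semi-matched} node is involved in the subtraction, the $\lambda/\bar p$ cancellation is lost and the ratio is only bounded by a constant, which is exactly the source of the second summand $\mathbf{1}_{|\mathbf{M}_{\mathrm{SM}}|>0}2^{|\mathbf{M}_{\mathrm{SM}}|}$. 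The combinatorial factor $2^{|\mathbf{M}_{\mathrm{SM}}|}$ arises from summing signs over the essentially non-trivial expansion subsets, which are indexed by the semi-matched node pairs, since perfectly matched pairs contribute only their small $D^{-4c_{\texttt{s}}}$ multiplicative correction.

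The main obstacle I expect is exactly this ratio estimate: one must carefully track how $E_\cap$ edges (carrying variance-like factors $\bar p$) and $E_\Delta$ edges (carrying mean-like factors $\lambda/K$) combine in each expansion term, and translate the resulting $\lambda/\bar q$ and $1/K$ quotients into the precise form of~\eqref{eq:signal1} at the relevant density. Separating the cheap perfectly-matched subtractions (which yield the $D^{-4c_{\texttt{s}}}$ factor) from the expensive semi-matched ones (bounded only by the combinatorial constant $2^{|\mathbf{M}_{\mathrm{SM}}|}$) is what ultimately makes the bound useful in the almost-orthonormality argument of Proposition~\ref{prop:ortho}.
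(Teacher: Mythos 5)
Your overall architecture is the same as the paper's: part 1 is proved exactly as in the paper (a component disjoint from $\mathbf{M}^-$ factors out by \eqref{eq:indep} and its centered factor has mean zero), and for part 2 you use the same inclusion--exclusion over connected components, with the diagonal term reconstructing $\mathbb{E}[P_{G^{(1)},\pi^{(1)}}P_{G^{(2)},\pi^{(2)}}]$ and the correction terms split according to whether perfectly matched or only semi-matched nodes are touched. The combinatorial bookkeeping producing $2^{|\mathbf M_{\mathrm{SM}}|}$ is also in the spirit of the paper.

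However, there is a genuine gap at the one step that carries the analytic content, namely the bound on the ratio of a correction term to the main cross-moment (the paper's Lemma~\ref{lem:unioncomp}). Your asserted form of this ratio, ``$\lambda^{|E_l|}/\bar p^{|E_l|}$ times powers of $1/K$, bounded by $D^{-4c_{\texttt{s}}}$ via \eqref{eq:signal1} at the density $|E_l|/(|V_l|-1)$'', does not work as stated: in the regime of interest one can have $\lambda\gg\bar q$, hence $\bar p=\bar q+\lambda(1-2q)\asymp\lambda$ and $(\lambda/\bar p)^{|E_l|}\asymp 1$, so this factor provides no decay; moreover \eqref{eq:signal1} bounds quantities containing $\sqrt{n}/K$ or $\sqrt{n/K}$ factors, which are absent from the correction/main ratio, so it cannot be ``evaluated at the component density'' to conclude. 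What is actually needed is the comparison the paper makes: a correction term equals the cross-moment under the coarser matching $\bar{\mathbf M}$ in which the nodes of the removed components are unmatched, and comparing this to $\mathbb{E}[P_{G^{(1)},\pi^{(1)}}P_{G^{(2)},\pi^{(2)}}]$ requires (i) controlling how the $K$-exponents (numbers of vertices minus connected components of the relevant merged graphs) change when nodes are identified --- this is Lemma~\ref{lem:ineqsbm}, and it is not automatic from the explicit moment formulas --- and (ii) converting the gain into pure powers of $\lambda$, namely $\lambda^{|E_\cap\setminus\bar E_\cap|}\le\lambda^{|\mathbf M_{\mathrm{PM}}^{(S_1,S_2)}|/2}$, using $\bar p\ge\lambda$ and $\bar p\ge\bar q$. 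Only then does the $D^{-4c_{\texttt{s}}}$ term follow, and it does so from $\sqrt{\lambda}\le D^{-8c_{\texttt{s}}}$ (Condition \eqref{eq:signal1} at $r=1$, using $K\le n$) together with a geometric sum over $\widetilde{\mathbf M}_{\mathrm{PM}}\subseteq\mathbf M_{\mathrm{PM}}$ with $|\mathbf M_{\mathrm{PM}}|\le 2D$, not from a per-edge $\lambda/\bar p$ cancellation. Without supplying this moment-comparison lemma (or an equivalent), your proof of part 2 does not go through.
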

We then consider the family of invariant polynomials $(\Psi_G)_{G\in \mathcal G_{\leq D}}$. We prove that the associated Gram matrix  $\left(\Gamma_{G^{(1)}, G^{(2)}}\right)_{G^{(1)}, G^{(2)}\in \mathcal G_{\leq D}} = \left(\mathbb E\left[\Psi_{G^{(1)}}\Psi_{G^{(2)}}\right]\right)_{G^{(1)}, G^{(2)}\in \mathcal G_{\leq D}}$,
is close to the identity.
\begin{proposition}\label{prop:scalprod}
Let $D\geq 2$.
    Consider two templates $G^{(1)},G^{(2)} \in \mathcal G_{\leq D}$. We have% are two templates:
    \begin{equation*}
     |\Gamma_{G^{(1)}, G^{(2)}} -1|=    \left|\mathbb E\left[\Psi_{G^{(1)}}\Psi_{G^{(2)}}\right] - \mathbf 1\{G^{(1)} = G^{(2)}\}\right|\leq 3D^{-c_{\texttt{s}} d (G^{(1)}, G^{(2)})\lor 1}\enspace .
    \end{equation*}
%    Here $d$ is the edit distance of the two labelled graphs: the more graphs are far away, the less they correlate.
%    We need a quantitative control since we will show later that the sum over graphs is negligible, so we want to establish later that all terms are smaller than the cardinality of the sum. 
\end{proposition}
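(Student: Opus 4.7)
The plan is to expand $\Gamma_{G^{(1)},G^{(2)}}$ as a sum over node matchings $\mathbf{M}$ between the labelings of $G^{(1)}$ and $G^{(2)}$, isolate the contribution from perfect matchings (which produces the identity part), and bound the remaining terms using Propositions~\ref{prop:boundbar} and \ref{prop:christophe}.

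\textbf{Step 1: decomposition by matchings.} Grouping the double sum $\sum_{\pi^{(1)}}\sum_{\pi^{(2)}}\mathbb E[\overline P_{G^{(1)},\pi^{(1)}}\overline P_{G^{(2)},\pi^{(2)}}]$ according to the resulting matching $\mathbf M$, and using the direct count $|\Pi(\mathbf M)|=(n-2)!/(n-|V_\cup|)!$, I would write
\begin{equation*}
\mathbb E[\overline P_{G^{(1)}}\overline P_{G^{(2)}}]=\sum_{\mathbf M\in\mathcal M^\star}\frac{(n-2)!}{(n-|V_\cup|)!}\,\mathbb E[\overline P_{G^{(1)},\pi^{(1)}_{\mathbf M}}\overline P_{G^{(2)},\pi^{(2)}_{\mathbf M}}],
\end{equation*}
since Proposition~\ref{prop:boundbar}(i) kills all $\mathbf M\notin\mathcal M^\star$. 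Using the formula \eqref{eqn:variance of graph:estimation} for $\mathbb V(G)$ and the permutation invariance of $\mathbb P$, the ratio $|\Pi(\mathbf M)|\mathbb E[P_{G^{(1)},\pi^{(1)}}P_{G^{(2)},\pi^{(2)}}]/\sqrt{\mathbb V(G^{(1)})\mathbb V(G^{(2)})}$ simplifies, via $\frac{(n-2)!}{(n-|V|)!}\asymp n^{|V|-2}$ and $|V_\cup|-\tfrac{|V^{(1)}|+|V^{(2)}|}{2}=\tfrac{|U^{(1)}|+|U^{(2)}|}{2}$, into a combinatorial prefactor of order $n^{(|U^{(1)}|+|U^{(2)}|)/2}/\sqrt{|\mathrm{Aut}(G^{(1)})||\mathrm{Aut}(G^{(2)})|}$ multiplied by the normalized correlation from Proposition~\ref{prop:christophe}.

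\textbf{Step 2: the identity contribution.} Perfect matchings $\mathbf M\in\mathcal M_{\mathrm{PM}}$ only exist when $G^{(1)}=G^{(2)}=G$; there are exactly $|\mathrm{Aut}(G)|$ of them, each having $|V_\cup|=|V|$, $|\mathbf M_{\mathrm{SM}}|=0$ and $E_\Delta=\emptyset$. Proposition~\ref{prop:boundbar}(ii) then gives $\mathbb E[\overline P_{G,\pi^{(1)}}\overline P_{G,\pi^{(2)}}]=\mathbb E[P_{G,\pi}^2](1+O(D^{-4c_{\texttt s}}))$, so their joint contribution matches the denominator $\mathbb V(G)$ up to a $1+O(D^{-4c_{\texttt s}})$ factor. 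This handles the diagonal case $G^{(1)}=G^{(2)}$ with $d(G^{(1)},G^{(2)})=0$.

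\textbf{Step 3: bounding non-perfect matchings.} For each remaining $\mathbf M\in\mathcal M^\star\setminus\mathcal M_{\mathrm{PM}}$, combining Proposition~\ref{prop:boundbar}(ii) with Proposition~\ref{prop:christophe} bounds the contribution by
\begin{equation*}
\frac{C\,(1+2^{|\mathbf M_{\mathrm{SM}}|})}{\sqrt{|\mathrm{Aut}(G^{(1)})||\mathrm{Aut}(G^{(2)})|}}\left[\left(\frac{\sqrt n}{K}\Bigl(\frac{\lambda}{\sqrt{\bar q}}\Bigr)^{\tau}\right)^{|U^{(1)}|+|U^{(2)}|}\wedge\left(\frac{\lambda^{\tau/2}}{\sqrt{K/n}}\right)^{|U^{(1)}|+|U^{(2)}|}\right],
\end{equation*}
with $\tau=|E_\Delta|/(|U^{(1)}|+|U^{(2)}|)$ when the denominator is positive (and an analogous bound for $u=0$ using Proposition~\ref{prop:christophe} with $r=1$). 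Condition~\eqref{eq:signal1} applied at $r\approx\tau$ gives that each bracket is at most $D^{-8c_{\texttt s}|E_\Delta|}$.

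\textbf{Step 4: summing over matchings.} I would sum over matchings by first fixing their shadow $(W^{(1)},W^{(2)},\underline{\mathbf M})$ as in Section~\ref{sec:graph:definition}: the number of distinct shadows compatible with prescribed $|U^{(1)}|,|U^{(2)}|,|\mathbf M_{\mathrm{SM}}|$ is polynomial in $D$, and the number of perfect matchings consistent with a shadow is at most $D!$. Together with the factor $2^{|\mathbf M_{\mathrm{SM}}|}$ from Proposition~\ref{prop:boundbar}, all combinatorial terms grow at most like $D^{O(|E_\Delta|)}$, which is absorbed by $D^{-8c_{\texttt s}|E_\Delta|}$ given the generous exponent $8c_{\texttt s}$ in \eqref{eq:signal1} versus the target exponent $c_{\texttt s}$. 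Since $|E_\Delta|\geq d(G^{(1)},G^{(2)})\vee 1$ for any non-perfect matching, summing the resulting geometric-type series yields a total off-diagonal contribution bounded by $3D^{-c_{\texttt s}(d(G^{(1)},G^{(2)})\vee 1)}$, which is the claimed bound.

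The main obstacle is Step~4: one must count matchings/shadows carefully so that the polynomial-in-$D$ factors coming from the combinatorics (and the factor $2^{|\mathbf M_{\mathrm{SM}}|}$ inherited from Proposition~\ref{prop:boundbar}) are dominated by the $D^{-8c_{\texttt s}|E_\Delta|}$ gain coming from the signal condition. The factor $|\mathrm{Aut}|$ in the denominator, which is usually absorbed by symmetries of the matchings in the numerator, also needs a careful book-keeping to ensure the final constant is as small as $3$.
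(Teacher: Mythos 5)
Your overall architecture is the same as the paper's: expand over matchings, kill $\mathbf M\notin\mathcal M^\star$ and isolate the perfect matchings (which produce the $\mathbf 1\{G^{(1)}=G^{(2)}\}$ term via $\mathbb V(G)$), then bound each remaining matching by combining Proposition~\ref{prop:boundbar} with Proposition~\ref{prop:christophe} and the prefactor $n^{(|U^{(1)}|+|U^{(2)}|)/2}$, and finally sum via shadows. Steps 1--3 match the paper essentially line by line (including the correct handling of the $U=0$ case through $\sqrt{\lambda}\le D^{-8c_{\texttt s}}$).

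However, your Step 4 has a genuine gap exactly at the point you flag as "book-keeping". You bound the number of matchings consistent with a given shadow by $D!$ and claim that all combinatorial factors grow like $D^{O(|E_\Delta|)}$ and are absorbed by $D^{-8c_{\texttt s}|E_\Delta|}$. This fails: the matchings within a shadow differ only on the perfectly matched part, whose size is unrelated to $|E_\Delta|$. For instance, if $G^{(1)}$ and $G^{(2)}$ differ by a single edge ($|E_\Delta|$ of order $1$, so the gain is only $D^{-8c_{\texttt s}}$) but share $\Theta(D)$ identical connected components, the number of matchings per shadow is factorial in $D$, which no fixed power $D^{-8c_{\texttt s}|E_\Delta|}$ can absorb. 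The paper's mechanism is different and essential: Lemma~\ref{lem:shadow} shows $|\mathcal M_{\mathrm{shadow}}(W^{(1)},W^{(2)},\underline{\mathbf M})|\le\min\bigl(|\mathrm{Aut}(G^{(1)})|,|\mathrm{Aut}(G^{(2)})|\bigr)\le\sqrt{|\mathrm{Aut}(G^{(1)})|\,|\mathrm{Aut}(G^{(2)})|}$, so the per-shadow multiplicity is cancelled \emph{exactly} by the automorphism factors coming from the variance proxy $\mathbb V(G)$ in~\eqref{eqn:variance of graph:estimation}; only afterwards does one enumerate shadows by the cardinalities $(u_1,u_2,m)$, with counts $(2D)^{u_1+u_2+2m}$, and beat them by splitting $D^{-8c_{\texttt s}|E_\Delta|}\le D^{-2c_{\texttt s}[|U|+|\mathbf M_{\mathrm{SM}}|+d(G^{(1)},G^{(2)})\vee 1]}$ using $|E_\Delta|\ge |U|$, $|E_\Delta|\ge |\mathbf M_{\mathrm{SM}}|/2$ and $|E_\Delta|\ge d(G^{(1)},G^{(2)})\vee 1$ (the first of which requires $\mathbf M\in\mathcal M^\star$). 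Without the shadow-counting lemma and this cancellation against $1/\sqrt{|\mathrm{Aut}(G^{(1)})|\,|\mathrm{Aut}(G^{(2)})|}$, your sum over matchings is not controlled, so as written the argument does not go through; with it, your outline becomes the paper's proof.
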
 
In turn, we deduce from the above bound that  $\Gamma$ is close to the identity matrix in operator norm $\|.\|_{op}$.
\begin{proposition}\label{prop:ortho}
Let $D\geq 2$.
We have
    \[
\|\Gamma - \mathrm{Id}\|_{op}\leq 6 D^{-c_{\texttt{s}}/2}\enspace . 
\]
\end{proposition}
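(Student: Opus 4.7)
Since $\Gamma - \mathrm{Id}$ is a real symmetric matrix indexed by $\mathcal{G}_{\leq D}$, the plan is to invoke Gershgorin's disc theorem to bound its operator norm by the maximum absolute row sum, and then apply Proposition~\ref{prop:scalprod} entry-wise. Grouping the templates $G^{(2)}$ by the edit distance $k = d(G^{(1)}, G^{(2)})$ and letting $N_k(G^{(1)})$ denote the number of $G^{(2)} \in \mathcal{G}_{\leq D}$ at edit distance exactly $k$ from $G^{(1)}$, this reduces the task to showing
\[
3 D^{-c_{\texttt{s}}} + 3 \sup_{G^{(1)}} \sum_{k=1}^{2D} N_k(G^{(1)}) \, D^{-c_{\texttt{s}} k} \;\leq\; 6 D^{-c_{\texttt{s}}/2},
\]
where the first term accounts for $G^{(2)} = G^{(1)}$ (edit distance $0$, but $0 \lor 1 = 1$) and uses that $N_0 = 1$, since templates in $\mathcal{G}_{\leq D}$ are in distinct isomorphism classes.

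The crux is a combinatorial bound $N_k(G^{(1)}) \leq \mathrm{poly}(k) \cdot D^{O(k)}$. To obtain it, I would argue that any $G^{(2)}$ at edit distance $k$ from $G^{(1)}$ arises (in at least one way) by first removing some $k_1 \leq k$ edges of $G^{(1)}$ to form a common subgraph $G_{\cap}$ (dropping any non-$\{v_1, v_2\}$ nodes that become isolated), and then adding $k_2 := k - k_1$ new edges, possibly attached to up to $2 k_2$ new nodes---each new non-$\{v_1, v_2\}$ node must carry at least one incident edge by the template definition. Since $G^{(2)} \in \mathcal{G}_{\leq D}$ forces $|V^{(2)}| \leq 2 D + 2$, the $k_2$ new edges live among at most $\binom{2D+2}{2} \leq 8 D^2$ pairs, giving
\[
N_k(G^{(1)}) \;\leq\; \sum_{k_1 + k_2 = k} \binom{|E^{(1)}|}{k_1} (2k+1) (8 D^2)^{k_2} \;\leq\; (k+1)(2k+1) \cdot 8^k \cdot D^{2k}.
\]

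Plugging this in and using $c_{\texttt{s}} \geq 14$, $D \geq 2$, the ratio $8 D^{-(c_{\texttt{s}} - 2)}$ is at most $2^{-9}$, so $\sum_{k \geq 1} N_k(G^{(1)}) D^{-c_{\texttt{s}} k}$ is a rapidly convergent geometric-type series dominated by its $k = 1$ term, of order $D^{-(c_{\texttt{s}} - 2)}$. A short arithmetic check then yields the desired $\|\Gamma - \mathrm{Id}\|_{op} \leq 6 D^{-c_{\texttt{s}}/2}$ (in fact the bound is essentially tight at $D = 2$, $c_{\texttt{s}} = 14$).

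The main obstacle is the counting step: it is essential to exploit the template condition that forces $|U^{(2)}| \leq 2 k_2$, so that $|V^{(2)}|$ remains polynomially bounded in $D$ and only the $O(D^2)$ available edge slots enter the count at step (ii). Without this constraint, the raw number of templates in $\mathcal{G}_{\leq D}$ is super-polynomial in $D$, and the Gershgorin strategy would collapse.
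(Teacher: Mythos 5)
Your proposal follows essentially the same route as the paper's proof: bound $\|\Gamma-\mathrm{Id}\|_{op}$ by the maximum absolute row sum of the symmetric matrix, apply Proposition~\ref{prop:scalprod} entrywise, bound the number of templates at edit distance $k$ from a fixed $G^{(1)}$ by a quantity of the form $D^{O(k)}$, and sum the resulting rapidly decaying series. The only real difference is bookkeeping: the paper simply asserts the count $(u+D)^{2u}$ and then absorbs everything crudely via $(u+D)^{2u}\leq D^{6u}$ together with $c_{\texttt{s}}-6\geq c_{\texttt{s}}/2$, so it never needs tight constants, whereas you derive the count explicitly and run the final arithmetic close to the wire.

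That tightness exposes the one weak point in your counting step. When you place the $k_2$ new edges, their endpoints range over the retained nodes of $G^{(1)}$ (at most $2D+2$ of them) \emph{plus} up to $2k_2$ anonymous new nodes; you cannot restrict the pair universe to $\binom{2D+2}{2}$ on the grounds that $|V^{(2)}|\leq 2D+2$, because which subset of at most $2D+2$ candidates actually forms $V^{(2)}$ is only determined after the edges are placed. The correct universe is $\binom{2D+2+2k_2}{2}$, which at $D=2$ is roughly $24D^2$ rather than $8D^2$. Since you claim the final inequality is essentially tight at $D=2$, $c_{\texttt{s}}=14$, this matters: with the corrected universe and your additional $(2k+1)$ factor, the bound $6D^{-c_{\texttt{s}}/2}$ no longer follows at $D=2$. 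The fix is easy — once the anonymous new nodes carry provisional labels inside the pair universe, the $(2k+1)$ factor is redundant, and dropping it (or exploiting the slack $\binom{|E^{(1)}|}{k_1}\leq D^{k_1}\leq D^{2k_1}$) restores the inequality with room to spare. So the structure of your argument is correct and matches the paper; only the constants in the counting step need this repair, which the paper's looser absorption ($\text{count}\leq D^{6u}$ and $c_{\texttt{s}}\geq 14$) sidesteps automatically.
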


 From there, we can easily conclude on the bound on $\mathrm{Corr}_{\leq D}$. Note first that the previous proposition together with Lemma~\ref{lem:reduction:degree_pair2} and  $\mathbb{E}[x]=0$ implies 
\begin{align}\label{eq:advat}
    \mathrm{Corr}^2_{\leq D} &= \sup_{\alpha_{\emptyset}, (\alpha_G)_{G\in \mathcal G_{\leq D}}} \frac{\mathbb E\left[ x\left(\alpha_{\emptyset} +\sum_{G \in \mathcal G_{\leq D}} \alpha_G \Psi_{G}\right)\right]^2}{\mathbb E\left[\left[\alpha_{\emptyset}+ \sum_{G\in \mathcal G_{\leq D}} \alpha_G \Psi_{G}\right]^2\right]} \leq \frac{\sum_{G\in \mathcal{G}_{\leq D}}\mathbb E[x\Psi_G]^2}{1- 6 D^{-c_{\texttt{s}}/2}}\enspace .
\end{align} 
So, we just need to bound $\mathbb E[x\Psi_G]$ for each $G\in \mathcal{G}_{\leq D}$.

\medskip

\noindent
{\bf Step 2: Bounding $\mathbb E [x\Psi_G]$.} For any $G\in \mathcal G_{\leq D}$, write $c$ its number of connected components with a least one edge and write $(G_1,\ldots, G_{c})$ its decomposition in such connected components. It follows from~\eqref{eq:indep} that, for any $\pi \in \Pi_V$, we have 
\begin{equation}
\mathbb E[x\Psi_G] = \frac{(n-2)!}{(n-|V|)!}\frac{1}{\sqrt{\mathbb V(G)}}\mathbb E\left[\left(\1_{z_1=z_2}-\frac{1}{K}\right) \prod_{i=1}^{c}  \overline{P}_{G_i,\pi}\right]=  \frac{(n-2)!}{(n-|V|)!}\frac{1}{\sqrt{\mathbb V(G)}}\mathbb E\left[\1_{z_1=z_2} \prod_{i=1}^{c}  \overline{P}_{G_i,\pi}\right]\enspace . 
\end{equation}

\noindent
\underline{Case 1: $G$ contains at least one connected component that does not contain $v_1$ or $v_2$.} By Equation~\eqref{eq:indep}, we have 
$$\mathbb E[x\Psi_G] = 0.$$
So that we restrict to $G\in \mathcal G_{\leq D}$ such that each connected component contains $v_1$ or $v_2$.

\noindent
\underline{Case 2: $v_1$ and $v_2$ are not in the same connected component.} Then we know that the graph contains exactly two connected components $G_1=(V_1,E_1)$ and $G_2=(V_2,E_2)$, one of them being possibly an isolated node.  Fix $\pi\in \Pi_V$. If $G_1$ (resp. $G_2$ is an isolated node), we use the convention $P_{G_1,\pi}=1$ (resp. $P_{G_2,\pi}=1$). Developing the $\Psi_G$ and  using Equation~\eqref{eq:comp} we obtain:
\begin{align*}
\frac{(n-|V|)!}{(n-2)!}&\sqrt{\mathbb V(G)}\mathbb E[x\Psi_G] \\
&=  \mathbb E[\1_{z_1=z_2}P_{G_1,\pi}P_{G_2,\pi}] +  \mathbb E[\1_{z_1=z_2}] \mathbb E[P_{G_1,\pi}]\mathbb E[P_{G_2,\pi}]\\ &\quad  -    \mathbb E[P_{G_1,\pi}]\mathbb E[\1_{z_1=z_2}P_{G_2,\pi}]-\mathbb E[\1_{z_1=z_2}P_{G_1,\pi}]\mathbb E[P_{G_2,\pi}] \\& = \lambda^{|E|} \left[\frac{1}{K^{|V_1|+|V_2|-1}}+\frac{1}{K}\cdot \frac{1}{K^{|V_1|-1}}\cdot \frac{1}{K^{|V_2|-1}} - \frac{1}{K^{|V_1|}}\cdot\frac{1}{K^{|V_2|-1}}- \frac{1}{K^{|V_2|}}\cdot\frac{1}{K^{|V_1|-1}} \right]\\
& = 0\enspace .
\end{align*}

\noindent
\underline{Case 3: $v_1,v_2$ are in the same connected component.} Then, this implies that  $\mathbb E[x\Psi_G]=0$ unless $G$ is connected. It then follows that, for any $\pi \in \Pi_V$,
$$\frac{(n-|V|)!}{(n-2)!}\sqrt{\mathbb V(G)}\mathbb E[x\Psi_G] = \mathbb E[\1_{z_1=z_2}P_{G,\pi}] - \mathbb E[\1_{z_1=z_2}] \mathbb E[P_{G,\pi}] =   [1 - K^{-1}]  \mathbb E[P_{G,\pi}],$$
where the last equality stems from 
$$\mathbb E[\1_{z_1=z_2}P_{G,\pi}]=\lambda^{[E]}\bbP\cro{z_{i}\ \text{all equal on}\ V}=\mathbb E[P_{G,\pi}].$$
 Combining the definition of $\mathbb V(G)$ and~\eqref{eq:christophe2} in Proposition~\ref{prop:christophe}, we get 
\begin{align*}
\left|\mathbb E[x\Psi_G]|\right| & \leq   \sqrt{\frac{(n-2)!}{\mathbb E[P_{G,\pi}^2](n-|V|)!|\mathrm{Aut}(G)|}}|\mathbb E[P_{G,\pi}]| \\
& \leq \sqrt{\frac{(n-2)!}{(n-|V|)!|\mathrm{Aut}(G)|}} \left[\left(\frac{1}{K^{|V|-1}} \left(\frac{\lambda}{\sqrt{\bar q}}\right)^{|E|}\right) \land \left(\frac{\lambda^{|E|/2}}{K^{(|V|-1)/2}} \right)\right]\enspace .
\end{align*}
As a consequence, we get 
\begin{align*}
    \left|\mathbb E[x\Psi_G]|\right|     &\leq n^{(|V|-2)/2} \left[\left(\frac{1}{K^{|V|-1}} \left(\frac{\lambda}{\sqrt{\bar q}}\right)^{|E|}\right) \land \left(\frac{\lambda^{|E|/2}}{K^{(|V|-1)/2}} \right)\right]\\
    &\leq   \frac{1}{\sqrt{n}}\left[\left(\left(\frac{\sqrt{n}}{K}\right)^{|V|-1} \left(\frac{\lambda}{\sqrt{\bar q}}\right)^{|E|}\right) \land \left(\left(\sqrt{\frac{n}{K}}\right)^{|V|-1}\lambda^{|E|/2} \right)\right]\\
    &\leq \frac{1}{\sqrt{n}} \left[\left(\frac{\sqrt{n}}{K} \left(\frac{\lambda}{\sqrt{\bar q}}\right)^{r}\right) \land \left(\sqrt{\frac{n}{K}}\lambda^{r/2} \right)\right]^{|V|-1}\enspace ,
\end{align*}
where $r = r(G) = |E|/(|V|-1)$. Recall, that $G$ is a connected graph. Hence, $|E|\geq |V|-1$ and we have $1\leq r(G)\leq D$, since $|E|\leq D$. Under our signal condition in Equation~\eqref{eq:signal1}, we conclude that 
\begin{align*}
    \left|\mathbb E[x\Psi_G]|\right| &\leq \frac{1}{\sqrt{n}} D^{-8c_{\texttt{s}}|E|}.  %\leq  \frac{1}{K}D^{-8c_{\texttt{s}}|E|} \ ,
\end{align*}
\medskip

\noindent
\underline{Conclusion.} Combining the above, we arrive at 
\begin{align*}
 \sum_{G\in \mathcal{G}_{\leq D}} \mathbb{E}^2[x\Psi_G]  &\leq  \frac{1}{n}\sum_{G\in \mathcal G_{\leq D}: |E| \geq 1}  D^{-16c_{\texttt{s}}|E|}= \frac{1}{n}\sum_{v\in [2D],e \in [D]}\quad \quad  \sum_{G\in \mathcal G_{\leq D}: |V|=v, |E|=e} D^{-16c_{\texttt{s}}|E|}\\
    &\leq   \frac{1}{n}\sum_{v\in [2D],e \in [D]} v^{2e}  D^{-16c_{\texttt{s}}e}\leq  \frac{2}{n} D^{-15 c_{\texttt{s}}}\enspace 
\end{align*}
since $c_{\texttt{s}}\geq 8$ and $D \geq 2$.
Together with~\eqref{eq:advat}, we conclude that 
\begin{align*}
 \mathrm{Corr}^2_{\leq D}\leq  \frac{4}{n} D^{-15 c_{\texttt{s}}}\enspace . 
\end{align*}

\begin{proof}[Proof of Proposition~\ref{prop:boundbar}]

\noindent
{\bf Proof of 1):} Consider any matching $\mathbf M \not\in \mathcal M^\star$. As a consequence, there exists a connected component $G'$ of $G^{(1)}$ or in $G^{(2)}$ such that no nodes in $G'$ belongs to $\mathbf{M}^{-}$ ---recall the definition of $\mathbf{M}^{-}$ in Section~\ref{sec:graph:definition}. Without loss of generality, we assume that $G'$ is a connected component of $G^{(1)}$. By definition, we have  $\overline{P}_{G^{(1)}, \pi^{(1)}} = \overline{P}_{G', \pi^{(1)}} \overline{P}_{\underline{G}^{(1)}, \pi^{(1)}}$ where $\underline{G}^{(1)}$ is the complement of $G'$ in $G^{(1)}$. So that by Equation~\eqref{eq:indep}, we get
$$\mathbb E[\overline{P}_{G^{(1)}, \pi^{(1)}}\overline{P}_{G^{(2)}, \pi^{(2)}}] = \mathbb E[\overline{P}_{\underline{G}^{(1)}, \pi^{(1)}}\overline{P}_{G^{(2)}, \pi^{(2)}}] \mathbb E[\overline{P}_{G', \pi^{(1)}}] = 0\enspace ,$$
as $\mathbb E[\overline{P}_{G', \pi^{(1)}}]=0$.

\noindent
{\bf Proof of 2):} Write $c_1$ (resp. $c_2$) for the number of connected component of $G^{(1)}$ (resp. $G^{(2)}$) that contain at least one edge. Besides, we write $(G_1^{(1)}, \ldots, G^{(1)}_{c_1})$ and  $(G_1^{(2)}, \ldots, G^{(2)}_{c_2})$ for the decomposition of $G^{(1)}$, $G^{(2)}$ into these connected components. Then, we have 
\begin{align*}
    \mathbb{E}\left[\overline{P}_{G^{(1)}, \pi^{(1)}}\overline{P}_{G^{(2)}, \pi^{(2)}}\right] = & \sum_{S_1 \subset [c_1],\  S_2 \subset [c_2]} (-1)^{|S_1|+|S_2|}\mathbb E\left[\prod_{i\in [c_1]\setminus S_1}P_{G^{(1)}_i,\pi^{(1)}}\prod_{j\in [c_2]\setminus S_2}P_{G^{(2)}_j,\pi^{(2)}}\right] \\&  \hspace{2cm}\times \mathbb E\left[\prod_{i\in S_1}P_{G^{(1)}_i,\pi^{(1)}}\right]\mathbb E\left[\prod_{i\in S_2}P_{G^{(2)}_i,\pi^{(2)}}\right]\enspace . 
\end{align*}

The following lemma holds.
\begin{lemma}\label{lem:unioncomp}
For any $(\pi^{(1)},\pi^{(2)})\in \Pi(\mathbf{M})$ and any $S_1\subset [c_1]$ and any $S_2\subset [c_2]$, we have 
%    $$\mathbb{E}\left[P_{G^{(1)}, \pi^{(1)}}P_{G^{(2)}, \pi^{(2)}}\right] \geq \mathbb{E}\left[P_{G_\cup, \pi}\right],$$
%    and
\begin{multline*}
\lefteqn{0 \leq \mathbb E\left[\prod_{i\in [c_1]\setminus S_1}P_{G^{(1)}_i,\pi^{(1)}}\prod_{j\in [c_2]\setminus S_2}P_{G^{(2)}_j,\pi^{(2)}}\right] \mathbb E\left[\prod_{i\in S_1}P_{G^{(1)}_i,\pi^{(1)}}\right]\mathbb E\left[\prod_{i\in S_2}P_{G^{(2)}_i,\pi^{(2)}}\right]}\\ \leq \lambda^{|\mathbf M_{\mathrm{PM}}^{(S_1,S_2)}|/2} \mathbb{E}\left[P_{G^{(1)}, \pi^{(1)}}P_{G^{(2)}, \pi^{(2)}}\right]\enspace .
\end{multline*}
    where $\mathbf M_{\mathrm{PM}}^{(S_1,S_2)}\subset \mathbf M_{\mathrm{PM}}$ is the set of pairs of $(v,v')$ perfectly matched nodes (to the possible exception of  $v_1$ and $v_2$ in the case where they are isolated) such that either $v$ belongs to  any  $(G^{(1)}_i)$ with $i\in S_1$ or $v'$ belongs to  any  $(G^{(1)}_i)$ with $i\in S_2$. 
\end{lemma}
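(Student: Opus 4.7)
The plan is to reduce to pointwise (in $X$) inequalities and then transfer to unconditional expectations via a positive-correlation argument. Conditionally on $X$, each of $\mathbb{E}[P_A|X]$, $\mathbb{E}[P_B|X]$, $\mathbb{E}[P_C|X]$ is a product over edges of factors $\lambda X_e$ (for edges appearing once in that polynomial) and $\bar q+\lambda(1-2q)X_e$ (for edges appearing twice). Under $q\leq 1/2$ and $q+2\lambda\leq 1$ both factor types are nonnegative, which immediately gives the lower bound $\geq 0$ after taking expectations.

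For the upper bound, I would classify the edges of $E_\cap$ according to where their two copies fall: let $E_{\cap,AA}$ be those with both copies in the $A$-part (i.e.\ with $G^{(1)}$-component in $[c_1]\setminus S_1$ and $G^{(2)}$-component in $[c_2]\setminus S_2$), and call $E_\cap\setminus E_{\cap,AA}$ the \emph{split} edges. In $\mathbb{E}[P_{G^{(1)},\pi^{(1)}}P_{G^{(2)},\pi^{(2)}}|X]$ every $E_\cap$-edge contributes $\bar q+\lambda(1-2q)X_e$, whereas in the factorized product $\mathbb{E}[P_A|X]\mathbb{E}[P_B|X]\mathbb{E}[P_C|X]$ a split edge contributes $(\lambda X_e)^2=\lambda^2 X_e$ (using $X_e^2=X_e$). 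The elementary inequality $\lambda^2 X_e\leq \lambda(\bar q+\lambda(1-2q)X_e)$, valid for $X_e\in\{0,1\}$ under $q+2\lambda\leq 1$, then gives pointwise in $X$
\[
\mathbb{E}[P_A|X]\,\mathbb{E}[P_B|X]\,\mathbb{E}[P_C|X]\leq \lambda^{|E_\cap\setminus E_{\cap,AA}|}\,\mathbb{E}[P_{G^{(1)},\pi^{(1)}}P_{G^{(2)},\pi^{(2)}}|X].
\]

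Taking expectations controls $\mathbb{E}[\mathbb{E}[P_A|X]\mathbb{E}[P_B|X]\mathbb{E}[P_C|X]]$, but I still need $\mathbb{E}[P_A]\mathbb{E}[P_B]\mathbb{E}[P_C]\leq \mathbb{E}[\mathbb{E}[P_A|X]\mathbb{E}[P_B|X]\mathbb{E}[P_C|X]]$. Expanding each conditional expectation in the monomial basis $\{\prod_{e\in F}X_e\}$ (with nonnegative coefficients since $q\leq 1/2$), this reduces termwise to the positive-association statement $\mathbb{E}[\prod_{A\cup B\cup C}X_e]\geq \mathbb{E}[\prod_A X_e]\mathbb{E}[\prod_B X_e]\mathbb{E}[\prod_C X_e]$. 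Since $\mathbb{E}[\prod_E X_e]=K^{-r(E)}$, where $r(E)=|V(E)|-\#\mathrm{CC}(E)$ is the graphic-matroid rank, this is precisely the subadditivity $r(A\cup B\cup C)\leq r(A)+r(B)+r(C)$.

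It remains to establish the combinatorial bound $|E_\cap\setminus E_{\cap,AA}|\geq |\mathbf{M}_{\mathrm{PM}}^{(S_1,S_2)}|/2$, which (since $\lambda<1$) yields $\lambda^{|E_\cap\setminus E_{\cap,AA}|}\leq \lambda^{|\mathbf{M}_{\mathrm{PM}}^{(S_1,S_2)}|/2}$. For every pair $(v,v')\in\mathbf{M}_{\mathrm{PM}}^{(S_1,S_2)}$, perfectness of the matching puts all incident edges in $E_\cap$, and the defining condition ($v$ in an $S_1$-component or $v'$ in an $S_2$-component) puts at least one copy of each such edge in the $B$- or $C$-part, so the edge lies in $E_\cap\setminus E_{\cap,AA}$. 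The exclusion of isolated $v_1,v_2$ in the definition of $\mathbf{M}_{\mathrm{PM}}^{(S_1,S_2)}$ ensures each such vertex has degree $\geq 1$ in $G_\cup$; since an edge has at most two endpoints, summing degrees and dividing by $2$ yields the bound. The main conceptual hurdle is the positive-correlation step: once one recognizes that $\mathbb{E}[\prod_E X_e]=K^{-r(E)}$ is a simple exponential in the matroid rank, the otherwise soft FKG-type inequality collapses to the explicit submodularity of the graphic-matroid rank.
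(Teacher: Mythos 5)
Your proof is correct, and it reaches the bound by a genuinely different route than the paper's. The paper first rewrites the factorized product as a single joint expectation $\mathbb{E}\left[P_{G^{(1)},\bar \pi^{(1)}}P_{G^{(2)},\bar \pi^{(2)}}\right]$ for a relabelled pair of injections compatible with the pruned matching $\bar{\mathbf M}$ (obtained by unmatching every pair meeting an $S_1$- or $S_2$-component), and then compares this relabelled joint expectation with the original one via Lemma~\ref{lem:ineqsbm}, whose proof identifies the two copies of each edge of $E_\cap\setminus \bar E_\cap$ through a node-merging argument controlling $|V|-\#\mathrm{CC}$. You avoid the relabelling altogether: you compare the two sides conditionally on $X$, paying a factor $\lambda$ for each split intersection edge via $\lambda^2 X_e\leq \lambda\left(\bar q+\lambda(1-2q)X_e\right)$ — the analogue of the paper's step~\eqref{eq:coolio0}, using $q+2\lambda\leq 1$ exactly as there — and you handle the de-conditioning $\mathbb{E}[P_A]\mathbb{E}[P_B]\mathbb{E}[P_C]\leq \mathbb{E}\left[\mathbb{E}[P_A|X]\mathbb{E}[P_B|X]\mathbb{E}[P_C|X]\right]$ by expanding in the monomials $\prod_{e\in F}X_e$ (nonnegative coefficients thanks to $q\leq 1/2$) and using $\mathbb{E}\left[\prod_{e\in F}X_e\right]=K^{-r(F)}$ with $r$ the graphic-matroid rank, so that the association step reduces to rank subadditivity — the same $|V|-\#\mathrm{CC}$ inequality that powers Lemma~\ref{lem:ineqsbm}, but invoked without introducing $\bar\pi^{(2)}$. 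Your final combinatorial count $|\mathbf M_{\mathrm{PM}}^{(S_1,S_2)}|\leq 2|E_\cap\setminus E_{\cap,AA}|$ (note that your $E_{\cap,AA}$ coincides with the paper's $\bar E_\cap$) is identical to the paper's, including the treatment of possibly isolated $v_1,v_2$. What your route buys is a more self-contained argument in which the only probabilistic input is an explicit positive-association fact for the indicators $X_{ij}$, isolated as a clean matroid statement; the cost is the bookkeeping of edge multiplicities across the three factors, which the paper's relabelling device sidesteps by reusing formula~\eqref{eqn:old basis proof ingredients} verbatim for the new labelings.
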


Note that choosing any set of connected components $S_1$, $S_2$ amounts to choosing some specific subsets of node matches in $\mathbf M = \mathbf M_{\mathrm{PM}}\cup \mathbf M_{\mathrm{SM}}$. Since $\mathbf{M}\in \mathcal{M}^\star$, this subset of $\mathbf M$ is therefore non empty. This leads us to:
\begin{align*}
    \frac{\left|\mathbb{E}\left[\overline{P}_{G^{(1)}, \pi^{(1)}}\overline{P}_{G^{(2)}, \pi^{(2)}}\right] - \mathbb{E}\left[P_{G^{(1)}, \pi^{(1)}}P_{G^{(2)}, \pi^{(2)}}\right]\right|}{\mathbb{E}\left[P_{G^{(1)}, \pi^{(1)}}P_{G^{(2)}, \pi^{(2)}}\right]}&\leq  \sum_{\substack{S_1 \subseteq [c_1], \, S_2 \subseteq [c_2]\\ S_1\cup S_2\neq \emptyset}} \lambda^{|\mathbf M_{\mathrm{PM}}^{(S_1,S_2)}|/2}\\
    &\hspace{-2cm}\leq \sum_{\substack{\widetilde{\mathbf M}_{\mathrm{PM}} \subseteq \mathbf M_{PM} \ , \widetilde{\mathbf M}_{\mathrm{SM}} \subseteq \mathbf M_{SM} 
    \\  
    \widetilde{\mathbf M}_{\mathrm{PM}}\cup \widetilde{\mathbf M}_{\mathrm{SM}}\neq \emptyset}
    } \lambda^{|\widetilde{\mathbf M}_{\mathrm{PM}}|/2}\\
    & \hspace{-2cm} \leq \mathbf{1}_{|\mathbf M_{\mathrm{SM}}|>0}2^{|\mathbf M_{\mathrm{SM}}|}\sum_{\widetilde{\mathbf M}_{\mathrm{PM}} \subseteq \mathbf M_{PM}} \lambda^{|\widetilde{\mathbf M}_{\mathrm{PM}}|/2}+ \mathbf{1}_{|\mathbf M_{\mathrm{SM}}|=0} \sum_{\emptyset \subsetneq\widetilde{\mathbf M}_{\mathrm{PM}} \subseteq \mathbf M_{PM}} \lambda^{|\widetilde{\mathbf M}_{\mathrm{PM}}|/2}\\
    &\hspace{-2cm} \leq \mathbf{1}_{|\mathbf M_{\mathrm{SM}}|>0} 2^{|\mathbf M_{\mathrm{SM}}|}\sum_{r \geq 0} |\mathbf M_{\mathrm{PM}}|^r \lambda^{r/2} +  \mathbf{1}_{|\mathbf M_{\mathrm{SM}}|=0}\sum_{r \geq 1} |\mathbf M_{\mathrm{PM}}|^r \lambda^{r/2} \\ &  \hspace{-2cm} 
    \leq 2^{|\mathbf M_{\mathrm{SM}}|}D^{-4\texttt{c}_s} + \mathbf{1}_{|\mathbf M_{\mathrm{SM}}|>0} 2^{|\mathbf M_{\mathrm{SM}}|} \enspace ,
\end{align*}
where we used that $|\mathbf M_{\mathrm{PM}}| \leq |V^{(1)}| \leq  2D$, $D\geq 2$, and $\sqrt{\lambda}  \leq D^{-8\texttt{c}_s}$ with $c_{\texttt{s}}>1$ by Condition~\ref{eq:signal1}.

%{\bf Proof of 2):} If $\mathbf M \in \mathcal M_{\mathrm{PM}}$ then $G^{(1)} = G^{(2)}$ and $\mathbb{E}\left[P_{G^{(1)}, \pi^{(1)}}P_{G^{(2)}, \pi^{(2)}}\right] = \mathbb{E}\left[P_{G^{(1)}, \pi^{(1)}}^2\right]$, and
%$$1 \leq \mathbb{E}\left[P_{G^{(1)}, \pi^{(1)}}^2\right] \leq (1+\lambda)^{|V^{(1)}|}.$$
%Also $\mathbb{E}\left[\overline{P}_{G^{(1)}, \pi^{(1)}}\overline{P}_{G^{(2)}, \pi^{(2)}}\right] = \mathbb{E}\left[\overline{P}_{G^{(1)}, \pi^{(1)}}^2\right]$

 %   Use the binomial theorem and the key property of lemma~\ref{lem:one step improvement}:
 %   \begin{equation*}
 %   \mathbb{E}\left[P_{G^{(1)}, \pi^{(1)}}\right]\mathbb{E}\left[P_{G^{(2)}, \pi^{(2)}}\right]\leq (\frac{k}{n})^{|\mathbf{M}|}\mathbb{E}\left[P_{G^{(1)}, \pi^{(1)}}P_{G^{(2)}, \pi^{(2)}}\right], 
%\end{equation*}
%where $\mathbf{M}$ is the matching of $(\pi^{(1)}, \pi^{(2)})$. 

\end{proof}

\begin{proof}[Proof of Lemma~\ref{lem:unioncomp}]
%    By Equation~\eqref{eqn:old basis proof ingredients}
%    \begin{align}
%   \mathbb{E}\left[P_{G^{(1)}, \pi^{(1)}}P_{G^{(2)}, \pi^{(2)}}\right] 
%    &= \bbE\cro{\prod_{(i,j)\in E_{\Delta}} (\lambda z_{i}z_{j}) \prod_{(i,j)\in E_{\cap}}[\bar q+\lambda z_iz_j(1-2q)]} \geq  \bbE\cro{\prod_{(i,j)\in E_{\cup}} (\lambda z_{i}z_{j})} = \mathbb{E}\left[P_{G_\cup, \pi}\right],\label{eq:coolio0}
%\end{align}
%since $q+(1-2q)\lambda \geq \lambda$. So the first part of the lemma is proven.
Define the matching $\bar{\mathbf M}\subset \mathbf M$ where we have removed all the nodes $(v,v')$ such that either $v$ belongs to any connected component $G^{(1)}_i$ with $i\in S_1$ or $v'$ belongs to any connected component $G^{(2)}_j$ with $j\in S_2$. It is possible to choose labelings $(\bar \pi^{(1)}, \bar \pi^{(2)}) \in \Pi(\bar{ \mathbf M})$ such that $\bar \pi^{(1)}= \pi^{(1)}$ and $\bar{\pi}_2(i)= \pi_2(i)$ for $i\in U^{(2)}$, that is unmatched nodes of $G^{(2)}$. 
%Note that we can without loss of generality take $\bar \pi^{(1)} = \pi^{(1)}$, and $\bar \pi^{(2)}$ restricted to nodes that do not belong to a match in $\mathbf M$ is equal to $\pi^{(2)}$. 
We have 
\begin{align}\label{eq:coolio}
 \mathbb E\left[P_{G^{(1)}, \bar \pi^{(1)}}P_{G^{(2)}, \bar \pi^{(2)}}\right]=   \mathbb E\left[\prod_{i\in [c_1]\setminus S_1}P_{G^{(1)}_i,\pi^{(1)}}\prod_{j\in [c_2]\setminus S_2}P_{G^{(2)}_j,\pi^{(2)}}\right] \mathbb E\left[\prod_{i\in S_1}P_{G^{(1)}_i,\pi^{(1)}}\right]\mathbb E\left[\prod_{i\in S_2}P_{G^{(2)}_i,\pi^{(2)}}\right] \enspace . 
\end{align}

Write $\bar G_\Delta = (\bar V_\Delta, \bar E_\Delta), \bar G_\cap=(\bar V_\cap, \bar E_\cap),  \bar G_\cup=(\bar V_\cup, \bar E_\cup)$ for the resp.~symmetric difference, intersection and union graphs corresponding to $(\bar \pi^{(1)}, \bar \pi^{(2)})$.
%(\over)\bar{\mathbf M}$, and $\bar \pi_\cup$ for the union injection corresponding to $( \bar \pi^{(1)}, \bar \pi^{(2)})$. In this proof, we also write as a shorthand - for ease of notations - $G_\cup, \bar G_\cup$ for the labeled graphs $\pi_\cup(G_\cup), \bar \pi_\cup(\bar G_\cup)$. We identify also all sub-graphs of these graphs ($G_\Delta,  G_\cap, \bar G_\Delta, \bar G_\cap$, etc) with their labeled version through $\pi_\cup, \bar \pi_\cup$.
%using the permutations $\pi_\cup, \bar \pi_\cup$ - so that in what follows, we simply write $V_\cup, \bar V_\cup$ for $\pi_\cup(V_\cup), \bar \pi_\cup(\bar V_\cup)$ - and we do the same for the edges and the other node subsets.
%In what follows and to simplify notations, we identify the labeling of $G_\cup, \bar G_\cup$ to make sure that the generic labels (i.e.~the nodes in resp.~$V_\cup, \bar V_\cup$) corresponding to nodes coming from connected components indexed by resp.~$S_1^C, S_2^c$ are the same. 
So  by Eq.~\eqref{eqn:old basis proof ingredients}
\begin{align}\label{eq:coolio2}
    \mathbb E\left[P_{G^{(1)}, \bar \pi^{(1)}}P_{G^{(2)}, \bar \pi^{(2)}}\right]& = \bbE\cro{\prod_{(i,j)\in \bar E_{\Delta}} (\lambda X_{ij}) \prod_{(i,j)\in  \bar E_{\cap}}[\bar q+\lambda X_{ij}(1-2q)]}\enspace  . 
\end{align}
We have chosen $\bar \pi^{(1)} = \pi^{(1)}$. Hence, we have $\bar E_\cap \subseteq E_\cap$. 
Since $\bar q+(1-2q)\lambda \geq \lambda$, we again deduce from~\eqref{eqn:old basis proof ingredients} that 
\begin{align}
   \mathbb{E}\left[P_{G^{(1)}, \pi^{(1)}}P_{G^{(2)}, \pi^{(2)}}\right] 
    &= \bbE\cro{\prod_{(i,j)\in E_{\Delta}} (\lambda X_{ij}) \prod_{(i,j)\in E_{\cap}}[\bar q+\lambda X_{ij}(1-2q)]} \nonumber \\ 
   &\geq   \bbE\cro{\prod_{(i,j)\in E_{\cup}\setminus \bar E_\cap} (\lambda X_{ij})\prod_{(i,j)\in  \bar E_{\cap}}[\bar q+\lambda X_{ij}(1-2q)]}\enspace , 
    \label{eq:coolio0}   
\end{align}

\begin{lemma}\label{lem:ineqsbm}
We have% \alex{ajouter preuve detaillee?}
$$\bbE\cro{\prod_{(i,j)\in  \bar E_{\Delta}} X_{ij} \prod_{(i,j)\in  \bar E_{\cap}}[\bar q+\lambda X_{ij}(1-2q)]} \leq \bbE\cro{\prod_{(i,j)\in E_{\cup} \setminus \bar E_\cap} X_{ij}\prod_{(i,j)\in  \bar E_{\cap}}[\bar q+\lambda X_{ij}(1-2q)]}\enspace .$$    
\end{lemma}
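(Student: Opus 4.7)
The plan is to recast the inequality as a combinatorial statement about ranks of edge-induced subgraphs and to establish it by exhibiting a node identification that sends the edge set on the left-hand side onto the one on the right-hand side. Since $q\leq 1/2$, both $\bar q\geq 0$ and $\lambda(1-2q)\geq 0$ are non-negative, so expanding
$$\prod_{(i,j)\in\bar E_\cap}\cro{\bar q+\lambda X_{ij}(1-2q)}=\sum_{T\subseteq\bar E_\cap}\bar q^{|\bar E_\cap|-|T|}(\lambda(1-2q))^{|T|}\prod_{(i,j)\in T}X_{ij}$$
reduces the claim to checking, for every $T\subseteq\bar E_\cap$, that $\bbE\cro{\prod_{e\in A_1}X_e}\leq \bbE\cro{\prod_{e\in A_2}X_e}$, where $A_1:=\bar E_\Delta\cup T$ and $A_2:=(E_\cup\setminus\bar E_\cap)\cup T$.

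Since the latent labels $z_1,\ldots,z_n$ are i.i.d. uniform on $[K]$, the product $\prod_{e\in S}X_e$ is the indicator that every connected component of the graph $(V(S),S)$ is monochromatic in $z$; consequently $\bbE\cro{\prod_{e\in S}X_e}=K^{\#\mathrm{CC}(S)-|V(S)|}$, where $V(S)$ denotes the set of endpoints of $S$ and $\#\mathrm{CC}(S)$ the number of connected components of $(V(S),S)$. The target inequality therefore rewrites as the purely combinatorial statement $|V(A_1)|-\#\mathrm{CC}(A_1)\geq|V(A_2)|-\#\mathrm{CC}(A_2)$.

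To establish this comparison I would introduce a node identification. Let $W:=\bar\pi^{(2)}(V^{(2)})\setminus\pi^{(2)}(V^{(2)})$ be the set of fresh labels produced by $\bar\pi^{(2)}$; each $w\in W$ comes from a unique $v\in V^{(2)}$ with $\bar\pi^{(2)}(v)=w$ and $(u,v)\in\mathbf{M}\setminus\bar{\mathbf{M}}$ for some $u\in V^{(1)}$, so it admits a well-defined \emph{twin} $t(w):=\pi^{(2)}(v)=\pi^{(1)}(u)\in\pi^{(1)}(V^{(1)})$. Define $\tau:[n]\to[n]\setminus W$ by $\tau(w):=t(w)$ for $w\in W$ and $\tau(i):=i$ otherwise. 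Every edge of $A_1\cap A_2$ involves no node of $W$ and is fixed by $\tau$. Each of the remaining edges of $A_1$ has the form $(\bar\pi^{(2)}(w_1),\bar\pi^{(2)}(w_2))$ with $(w_1,w_2)\in E^{(2)}$ and at least one of $w_1,w_2$ moved; by the definition of the twin, $\tau$ maps such an edge to $(\pi^{(2)}(w_1),\pi^{(2)}(w_2))$, which is an edge of $G^{(2)}[\pi^{(2)}]$ not belonging to $G^{(2)}[\bar\pi^{(2)}]$. If this image also lies in $G^{(1)}[\pi^{(1)}]$ then it already appears in $A_1\cap A_2$, otherwise it is precisely one of the edges of $(E_\cup\setminus\bar E_\cap)\setminus A_1$. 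Since the correspondence is a bijection on these remaining edges, we obtain $\tau(A_1)=A_2$ as simple edge sets and $V(\tau(A_1))=V(A_2)$.

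Finally, identifying two vertices in a simple graph can only decrease $|V|-\#\mathrm{CC}$: if they lie in the same connected component, $|V|$ drops by one while $\#\mathrm{CC}$ is unchanged; if they lie in different components, both drop by one. Iterating the identification over $w\in W$ therefore yields $|V(A_2)|-\#\mathrm{CC}(A_2)\leq|V(A_1)|-\#\mathrm{CC}(A_1)$, which completes the proof. The only delicate step is the bookkeeping showing $\tau(A_1)=A_2$; once the different types of edges in $\bar E_\Delta$ and $E_\cup\setminus\bar E_\cap$ are correctly tracked, the rank comparison is immediate.
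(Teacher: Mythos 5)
Your proof is correct and follows essentially the same route as the paper's: expand the product over $\bar E_\cap$ with non-negative coefficients, reduce each term to the identity $\bbE[\prod_{e\in S}X_e]=K^{\#\mathrm{CC}(S)-|V(S)|}$, and compare the two edge-induced graphs by identifying nodes, which can only decrease $|V|-\#\mathrm{CC}$. The only difference is cosmetic: you make the merging explicit via the twin map $\tau$ and verify $\tau(A_1)=A_2$ in detail, whereas the paper simply observes that one graph is obtained from the other by merging $s\geq 0$ pairs of nodes.
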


%$$ \bbE\cro{\prod_{(i,j)\in E_{\cup}\setminus \bar E_\cap} X_{ij}\prod_{(i,j)\in \bar E_{\cap}}[\bar q+\lambda X_{ij}(1-2q)]}.$$

By Equation~\eqref{eq:coolio2} and Equation~\eqref{eq:coolio0}, we then deduce that 
\begin{align}\label{eq:coolio3}
    \mathbb E\left[P_{G^{(1)}, \bar \pi^{(1)}}P_{G^{(2)}, \bar \pi^{(2)}}\right]\leq \lambda^{|\bar E_\Delta| - |E_\cup\setminus \bar{E}_{\cap}|}\mathbb{E}\left[P_{G^{(1)}, \pi^{(1)}}P_{G^{(2)}, \pi^{(2)}}\right]= \lambda^{|E_{\cap}\setminus \bar E_{\cap}|}\mathbb{E}\left[P_{G^{(1)}, \pi^{(1)}}P_{G^{(2)}, \pi^{(2)}}\right]\enspace ,
\end{align}
because $|E_\cup\setminus \bar{E}_{\cap}|= |E_{\Delta}| + |E_{\cap}\setminus \bar E_{\cap}|$ and $|\bar E_\Delta|= |E_{\Delta}| + 2|E_{\cap}\setminus \bar E_{\cap}|$. The cardinality $|\mathbf M_{\mathrm{PM}}^{(S_1,S_2)}|$ is, by definition, smaller than the number of nodes in $S_1$ or in $S_2$ that are not isolated and are perfectly matched. By definition, any such perfectly matched node in $S_1$ and in $S_2$ is incident to at least an edge in $E_{\cap}\setminus \bar E_{\cap}$. We deduce that 
\[
|\mathbf M_{\mathrm{PM}}^{(S_1,S_2)}|\leq 2|E_\cap \setminus \bar E_\cap|\enspace .
\] 
Coming back to~\eqref{eq:coolio3}, we obtain 
\[
   \mathbb E\left[P_{G^{(1)}, \bar \pi^{(1)}}P_{G^{(2)}, \bar \pi^{(2)}}\right]\leq\lambda^{|\mathbf M_{\mathrm{PM}}^{(S_1,S_2)}|/2}\mathbb{E}\left[P_{G^{(1)}, \pi^{(1)}}P_{G^{(2)}, \pi^{(2)}}\right]\enspace ,
\]
which, combined with \eqref{eq:coolio}, concludes the proof. 

% Now note that $\bar{ \mathbf M}$ contains all node matches of $\mathbf M$ between nodes that are not coming from connected components indexed by $S_1,S_2$. So that $|\bar E_\Delta|$ is equal to $|E_\Delta|$ plus twice the number of edges in $E_\cap \setminus \bar E_\cap$:% that are not contained to nodes coming from connected components indexed by $S_1$ or $S_2$. 
%$$|\bar E_\Delta| - |E_\Delta| = 2|E_\cap \setminus \bar E_\cap|.$$
%Moreover the set $\mathbf M_{\mathrm{PM}}^{(S_1,S_2)}$ is such that each node match therein corresponds to a node in $V_\cap \setminus \bar V_\cap$, that is connected to at least a node in $E_\cap \setminus \bar E_\cap$. So that
%$$2|E_\cap \setminus \bar E_\cap| \geq |\mathbf M_{\mathrm{PM}}^{(S_1,S_2)}|.$$
%Combining the two last displayed equations brings
%$$|\bar E_\Delta| - |E_\Delta| \geq |\mathbf M_{\mathrm{PM}}^{(S_1,S_2)}|,$$
%which, combined with Equation~\eqref{eq:coolio3} and Equation~\eqref{eq:coolio} finishes the proof.
\end{proof}

\begin{proof}[Proof of Lemma~\ref{lem:ineqsbm}]

%We remind that we have identified the labeling of $G_\cup, \bar G_\cup$ to make sure that the generic labels (i.e.~the nodes in resp.~$V_\cup, \bar V_\cup$) corresponding to nodes coming from connected components indexed by resp.~$S_1^C, S_2^c$ are the same. 

%We remind that in this proof, we write as a shorthand - for ease of notations - $G_\cup, \bar G_\cup$ for the labeled graphs $\pi_\cup(G_\cup), \bar \pi_\cup(\bar G_\cup)$. We identify also all subsets of these graphs ($G_\Delta,  G_\cap, \bar G_\Delta, \bar G_\cap$, etc) with their labeled version through $\pi_\cup, \bar \pi_\cup$.
Let us start by developing the products of $\bar q+\lambda X_{ij}(1-2q)$. 

\begin{align*}%\label{eq:decomp1}
    \bbE\cro{\prod_{(i,j)\in \bar E_{\Delta}} X_{ij} \prod_{(i,j)\in \bar E_{\cap}}[\bar q+\lambda X_{ij}(1-2q)]} & = \bbE\cro{\sum_{S \subseteq \bar E_\cap}\prod_{(i,j)\in \bar E_{\Delta}} X_{ij} \bar q^{|\bar E_\cap| - |S|}\prod_{(i,j)\in S}[\lambda X_{ij}(1-2q)]}\ ; \\ 
    \bbE\cro{\prod_{(i,j)\in E_\cup\setminus \bar E_\cap} X_{ij} \prod_{(i,j)\in \bar E_{\cap}}[\bar q+\lambda X_{ij}(1-2q)]} &= \bbE\cro{\sum_{S \subseteq \bar E_\cap}\prod_{(i,j)\in  E_{\cup}\setminus \bar E_\cap} X_{ij} \bar q^{|\bar E_\cap| - |S|}\prod_{(i,j)\in S}[\lambda X_{ij}(1-2q)]}\enspace .  
\end{align*}

Hence, it order to conclude, if suffices to prove that, for any $S \subseteq \bar E_\cap$, we have
\begin{equation}\label{eq:toprove}
    \bbE\cro{\prod_{(i,j)\in  S\cup (E_{\cup}\setminus \bar E_\cap)} X_{ij}} \geq \bbE\cro{\prod_{(i,j)\in S\cup \bar E_{\Delta}} X_{ij} }\enspace . 
\end{equation}

To show~\eqref{eq:toprove}, we fix such a subset $S\subseteq \bar E_\cap$. Define the graphs $H$ and $\bar {H}$ respectively induced by the set of edges $S\cup (E_{\cup}\setminus \bar E_\cap)$ and $S\cup \bar E_{\Delta}$. Then denoting $|V(H)|$, $|V(\bar H)|$ their number of nodes and  $\#\mathrm{CC}_H$ and $\#\mathrm{CC}_{\bar H}$ their number of connected components; we have 
\[
  \bbE\cro{\prod_{(i,j)\in  S\cup (E_{\cup}\setminus \bar E_\cap)} X_{ij}}= K^{-|V(H)|+ \#\mathrm{CC}_H}  \ , \quad \quad \bbE\cro{\prod_{(i,j)\in S\cup \bar E_{\Delta}} X_{ij} }= K^{-|V(\bar H)|+ \#\mathrm{CC}_{\bar H}} \ , 
\]
so that~\eqref{eq:toprove} is equivalent to 
\begin{equation}\label{eq:toprove2}
    |V(H)|- \#\mathrm{CC}_H\leq |V(\bar H)|- \#\mathrm{CC}_{\bar H}\ . 
\end{equation}
Thus, we only have to establish this last inequality.

Observe that $H$ is the subgraph of $G_\cup$ where we have removed the edges $\bar E_\cap \setminus S$ and have pruned the isolated nodes. Similarly, $\bar{H}$ is the subgraph of $\bar G_{\cup}$ where we have  removed the edges $\bar E_\cap \setminus S$ and have pruned the isolated nodes. By definition of $\bar G_{\cup}$ at the beginning of the proof of Lemma~\ref{lem:unioncomp}, $G_{\cup}$ is obtained from $\bar G_{\cup}$ by merging/identifying couples of nodes. If a node has been pruned in the construction of $H$ or $\bar{H}$, then this implies that this node was only incident to edges in $\bar E_\cap \setminus S$. As a consequence, pruned nodes in $\bar{H}$ are in correspondence with those in $H$. Hence, to go from $\bar{H}$ to $H$, we merge/identify $s\geq 0$ couples of nodes in $\bar H$. The operation of merging two nodes decreases the number of nodes of the graph by one and decreases its number of connected components by at most one. Iterating the arguments $s$ times, we conclude that $  |V(H)|- \#\mathrm{CC}_H\leq |V(\bar H)|- \#\mathrm{CC}_{\bar H}$, which concludes the proof. 

\end{proof}

\begin{proof}[Proof of Proposition~\ref{prop:scalprod}]
By definition of $\Psi_G$, we have 
\begin{align*}
     \mathbb E\left[\Psi_{G^{(1)}}\Psi_{G^{(2)}}\right]  &= \sum_{\mathbf{M}\in\mathcal{M}}\sum_{(\pi^{(1)}, \pi^{(2)})\in \Pi(\mathbf{M})}\frac{1}{\sqrt{\mathbb{V}(G^{(1)})\mathbb{V}(G^{(2)})}}\mathbb{E}\left[\overline{P}_{G^{(1)}, \pi^{(1)}}\overline{P}_{G^{(2)}, \pi^{(2)}}\right]\\
     &= \sum_{\mathbf{M}\in\mathcal{M}^{\star}}\sum_{(\pi^{(1)}, \pi^{(2)})\in \Pi(\mathbf{M})}\frac{1}{\sqrt{\mathbb{V}(G^{(1)})\mathbb{V}(G^{(2)})}}\mathbb{E}\left[\overline{P}_{G^{(1)}, \pi^{(1)}}\overline{P}_{G^{(2)}, \pi^{(2)}}\right],
    \end{align*}
where the second line follows by Proposition~\ref{prop:boundbar}. 

\medskip 
\noindent
{\bf Step 1: Decomposition of the scalar product over $\mathcal M^\star \setminus \mathcal M_{\mathrm{PM}}$ and $ \mathcal M_{\mathrm{PM}}$.} The collection $\mathcal{M}_{\mathrm{PM}}$ is non-empty only if $G^{(1)} = G^{(2)}$. If $G^{(1)} = G^{(2)}$, we have
\begin{align*}
\sum_{\mathbf{M}\in\mathcal{M}_{\mathrm{PM}}}\sum_{(\pi^{(1)}, \pi^{(2)})\in \Pi(\mathbf{M})}\frac{1}{\sqrt{\mathbb{V}(G^{(1)})\mathbb{V}(G^{(2)})}}\mathbb{E}\left[\overline{P}_{G^{(1)}, \pi^{(1)}}\overline{P}_{G^{(2)}, \pi^{(2)}}\right] &=  \frac{(n-2)!|\mathrm{Aut}(G^{(1)})|\mathbb{E}\left[\overline{P}_{G^{(1)}, \pi}^2\right]}{\left(n-|V^{(1)}|\right)!\mathbb{V}(G^{(1)})}  \\
&=  \frac{\mathbb E[\overline{P}_{G^{(1)}, \pi}^2]}{\mathbb E[P_{G^{(1)}, \pi}^2]},\end{align*}
since  we have 
\begin{align}\label{eq:boubou}
    |\mathcal{M}_{\mathrm{PM}}| =|\mathrm{Aut}(G^{(1)})|~~~~\mathrm{and}~~~~|\Pi(\mathbf{M})| = \frac{(n-2)!}{\left(n-(|V^{(1)}|+ |V^{(2)}| - |\mathbf M|)\right)!}.
\end{align}
Then, it follows from the second part of  Proposition~\ref{prop:boundbar} that 
%and Equation~\eqref{eqn:old basis proof ingredients} we have
%$$\mathbb E[P_{G^{(1)}, \pi^{(1)}}^2] \leq \mathbb E[\overline{P}_{G^{(1)}, \pi^{(1)}}^2] \leq  \mathbb E[P_{G^{(1)}, \pi^{(1)}}^2] (1+D^{-4\texttt{c}_s}),$$% \leq 1 + 8\frac{k}{n},$$
%for $\lambda \leq 2^{-4}D$ since $|V^{(1)}| \leq 2D$ so that
%since as $q = 1/2$ we have that $\mathbb E[P_{G^{(1)}, \pi^{(1)}}^2] = \bar q^{|E^{(1)}|}$ so that
% = \sum_{\mathbf{M}\in\mathcal{M}_{\mathrm{PM}}}\sum_{(\pi^{(1)}, \pi^{(2)})\in \Pi(\mathbf{M})}$.
%So that for any $\pi \in \Pi_{V^{(1)}}$
\begin{align}\nonumber
     &\left|\mathbb E\left[\Psi_{G^{(1)}}\Psi_{G^{(2)}}\right] - \mathbf 1\{G^{(1)} = G^{(2)}\} \right|\\
     &\leq  \left|\sum_{\mathbf{M}\in\mathcal{M}^{\star}\setminus \mathcal M_{\mathrm{PM}}}\sum_{(\pi^{(1)}, \pi^{(2)})\in \Pi(\mathbf{M})}\frac{1}{\sqrt{\mathbb{V}(G^{(1)})\mathbb{V}(G^{(2)})}}\mathbb{E}\left[\overline{P}_{G^{(1)}, \pi^{(1)}}\overline{P}_{G^{(2)}, \pi^{(2)}}\right]\right|  + D^{-4\texttt{c}_s[d(G^{(1)},G^{(2)})\vee 1]} \nonumber\\ 
   &  \coloneqq A  + D^{-4\texttt{c}_s[d(G^{(1)},G^{(2)})\vee 1]}\enspace . \label{eq:upper_correlation}
    \end{align}

%The dominating factor $ \mathbb{V}(G)$ is the correlation over perfect matchings: these are never negligible, but they appear if and only if $G^{(1)}\simeq G^{(2)}$ are isomorphic templates and they match perfectly when labelled. Therefore, it is present only when we compute the variance. Whether we compute the variance or the covariance, we have an additional correction term we want to bound:
\noindent
{\bf Step  2: Making $A$ explicit as a sum of $A(\mathbf M)$.} Observe that, for any $\mathbf M \in \mathcal M$,  $\mathbb{E}\left[P_{G^{(1)}, \pi^{(1)}}P_{G^{(2)}, \pi^{(2)}}\right]$ is identical for any $(\pi^{(1)}, \pi^{(2)})\in \Pi(\mathbf{M})$. So that by Proposition~\ref{prop:boundbar} and Equation~\eqref{eq:boubou}
\begin{align}
    A&{\sqrt{\mathbb{V}(G^{(1)})\mathbb{V}(G^{(2)})}} = \left|\sum_{\mathbf{M}\in\mathcal{M}^{\star}\setminus \mathcal{M}_{\mathrm{PM}}}\sum_{(\pi^{(1)}, \pi^{(2)})\in \Pi(\mathbf{M})}\mathbb{E}\left[\overline{P}_{G^{(1)}, \pi^{(1)}}\overline{P}_{G^{(2)}, \pi^{(2)}}\right]\right|\nonumber\\
    &\leq %\frac{1}{\sqrt{\mathbb{V}(G^{(1)})\mathbb{V}(G^{(2)})}}
    \sum_{\mathbf{M}\in\mathcal{M}^{\star}\setminus \mathcal{M}_{\mathrm{PM}}}\frac{(n-2)!}{\left(n-(|V^{(1)}|+ |V^{(2)}| - |\mathbf M|)\right)!}\left|\mathbb{E}\left[P_{G^{(1)}, \pi^{(1)}}P_{G^{(2)}, \pi^{(2)}}\right]\right| \pa{2^{|\mathbf M_{\mathrm{SM}}|}(1+D^{-4\texttt{c}_s})+1}\nonumber \\
    &\leq 2%\frac{2}{\sqrt{\mathbb{V}(G^{(1)})\mathbb{V}(G^{(2)})}}
    \sum_{\mathbf{M}\in\mathcal{M}^{\star}\setminus \mathcal{M}_{\mathrm{PM}}}\frac{(n-2)!}{\left(n-(|V^{(1)}|+ |V^{(2)}| - |\mathbf M|)\right)!}\left|\mathbb{E}\left[P_{G^{(1)}, \pi^{(1)}}P_{G^{(2)}, \pi^{(2)}}\right]\right|2^{|\mathbf M_{\mathrm{SM}}|}\enspace ,\nonumber
    \end{align}
since $D\geq 2$.
    
 %   By Proposition~\ref{prop:boundbar} and Equation~\eqref{eqn:old basis proof ingredients} we have
Observe that $\frac{(n-2)!}{\left(n-(|V^{(1)}|+ |V^{(2)}| - |\mathbf M|)\right)!} \frac{\sqrt{(n-|V^{(1)|})!(n-|V^{(2)|})!}}{(n-2)!}\leq n^{\frac{|U^{(1)}| + |U^{(2)}|}{2}}$. Then, by Proposition~\ref{prop:christophe}, we get 
    \begin{align*}
    A &\sqrt{\left|\mathrm{Aut}(G^{(1)})\right|\left|\mathrm{Aut}(G^{(2)})\right|}\\ 
    &\leq  2\sum_{\mathbf{M}\in\mathcal{M}^{\star}\setminus \mathcal{M}_{\mathrm{PM}}}n^{\frac{|U^{(1)}| + |U^{(2)}|}{2}}
    \left[\left(\frac{\sqrt{n}}{K^{|U^{(1)}|+ |U^{(2)}|}} \left(\frac{\lambda}{\sqrt{\bar q}}\right)^{|E_{\Delta}|}\right) \land \left(\frac{\lambda^{|E_{\Delta}|/2}}{K^{(|U^{(1)}|+ |U^{(2)}|)/2}} \right)\right]2^{|\mathbf M_{\mathrm{SM}}|}\nonumber\\
       & =  2%\frac{2}{\sqrt{\left|\mathrm{Aut}(G^{(1)})\right|\left|\mathrm{Aut}(G^{(2)})\right|}}
        \sum_{\mathbf{M}\in\mathcal{M}^{\star}\setminus \mathcal{M}_{\mathrm{PM}}}\left[\left(\frac{\sqrt{n}}{K} \left(\frac{\lambda}{\sqrt{\bar q}}\right)^{r}\right) \land \left(\frac{\lambda^{r/2}}{\sqrt{K/n}} \right)\right]^U2^{|\mathbf M_{\mathrm{SM}}|}\enspace ,\nonumber
\end{align*}
where $U=U(\mathbf M) = |U^{(1)}|+ |U^{(2)}|$ and $r=r(\mathbf M) = |E_\Delta|/U$. Write $A(\mathbf M)$ for the summand in the last line.

\medskip

\noindent
{\bf Step 3: Bounding of $A$ by summing over shadows.} Recall the definition of shadows and of $\mathcal M_{\mathrm{shadow}}$ in Section~\ref{sec:graph:definition}. We now regroup the sum inside $A$ by enumerating  all possible matching  that are compatible with a  shadow. We get 
\begin{align*}
       A &\leq  \frac{2}{\sqrt{|\mathrm{Aut}(G^{(1)})| |\mathrm{Aut}(G^{(2)})|}} 
       \sum_{\substack{W^{(1)} \subset V^{(1)}, W^{(2)} \subset V^{(2)}\\ \underline{\mathbf M} \in \mathcal M\setminus \mathcal M_{\mathrm{PM}}}}\quad 
       \sum_{\mathbf M \in \mathcal M_{\mathrm{shadow}}(W^{(1)},W^{(2)}, \underline{\mathbf M})} A(\mathbf M)\enspace .
\end{align*}
Remark that $A(\mathbf M)$ is the same for all $\mathbf M \in \mathcal M_{\mathrm{shadow}}(W^{(1)},W^{(2)}, \underline{\mathbf M})$ and only depends on $U=|U^{(1)}|+ |U^{(2)}|=|W^{(1)}|+|W^{(2)}|$, and $\underline{\mathbf{M}}$. Besides, we have $\mathbf M_{\mathrm{SM}}=\underline{\mathbf M}$. 
We have the following control for the cardinality of $\mathcal M_{\mathrm{shadow}}$.
\begin{lemma}\label{lem:shadow}
For any $W^{(1)}$, $W^{(2)}$, and $\underline{\mathbf M}$, we have
    $$|\mathcal M_{\mathrm{shadow}}(W^{(1)}, W^{(2)}, \underline{\mathbf M} )| \leq \min(|\mathrm{Aut}(G^{(1)})|, |\mathrm{Aut}(G^{(2)})|)\enspace .$$
\end{lemma}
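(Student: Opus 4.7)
}
The plan is to construct an injection from $\mathcal M_{\mathrm{shadow}}(W^{(1)}, W^{(2)}, \underline{\mathbf M})$ into $\mathrm{Aut}(G^{(1)})$, which will yield the bound by $|\mathrm{Aut}(G^{(1)})|$; the bound by $|\mathrm{Aut}(G^{(2)})|$ then follows by the symmetric construction. We may assume the shadow is non-empty, and fix a reference matching $\mathbf M_0$ in it. Each $\mathbf M$ in the shadow is represented, via any $(\pi^{(1)}, \pi^{(2)})\in \Pi(\mathbf M)$, by a bijection $\phi_{\mathbf M}: V^{(1)}\setminus W^{(1)} \to V^{(2)}\setminus W^{(2)}$. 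By definition of the shadow, all such $\phi_{\mathbf M}$ share the same restriction to the semi-matched subset $V^{(1)}_{\mathrm{SM}}:=\{v:(v,\cdot)\in\underline{\mathbf M}\}$ (that restriction being prescribed by $\underline{\mathbf M}$) and vary only in the bijection between the perfectly-matched subsets $V^{(1)}_{\mathrm{PM}}:=V^{(1)}\setminus(W^{(1)}\cup V^{(1)}_{\mathrm{SM}})$ and $V^{(2)}_{\mathrm{PM}}$.

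To $\mathbf M$ I would associate the permutation $\sigma_{\mathbf M}$ of $V^{(1)}$ defined by $\sigma_{\mathbf M}(v)=v$ on $W^{(1)}\cup V^{(1)}_{\mathrm{SM}}$ and $\sigma_{\mathbf M}=\phi_0^{-1}\circ \phi_{\mathbf M}$ on $V^{(1)}_{\mathrm{PM}}$. Injectivity of $\mathbf M\mapsto\sigma_{\mathbf M}$ is immediate: $\sigma_{\mathbf M}$ determines $\phi_{\mathbf M}|_{V^{(1)}_{\mathrm{PM}}}$, which in turn determines $\mathbf M$. The core of the plan is then to verify that $\sigma_{\mathbf M}\in\mathrm{Aut}(G^{(1)})$. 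The fixing of $v_1,v_2$ is automatic, since they are always matched (with $\pi^{(i)}(v_\ell)=\ell$) and so lie either in $V^{(1)}_{\mathrm{SM}}$ (where $\sigma_{\mathbf M}$ is the identity) or in $V^{(1)}_{\mathrm{PM}}$ with $\phi_0(v_\ell)=\phi_{\mathbf M}(v_\ell)=v_\ell^{(2)}$.

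Edge preservation is the crux, and I would handle it by case analysis on $(u,v)\in E^{(1)}$. When both endpoints lie in $V^{(1)}_{\mathrm{PM}}$, the perfect-matching condition gives $(\phi_{\mathbf M}(u),\phi_{\mathbf M}(v))\in E^{(2)}$; since this edge is incident to $V^{(2)}_{\mathrm{PM}}$ it must also be matched under $\mathbf M_0$, so that $(\phi_0^{-1}(\phi_{\mathbf M}(u)),\phi_0^{-1}(\phi_{\mathbf M}(v)))\in E^{(1)}$. The mixed case with one endpoint in $V^{(1)}_{\mathrm{PM}}$ and the other in $V^{(1)}_{\mathrm{SM}}$ proceeds the same way, using that $\phi_{\mathbf M}$ and $\phi_0$ coincide on $V^{(1)}_{\mathrm{SM}}$. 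No edge can join $V^{(1)}_{\mathrm{PM}}$ with $W^{(1)}$, since such an edge would be unmatched at its $W^{(1)}$ endpoint, contradicting the perfect matching of its other endpoint; and edges lying entirely in $V^{(1)}_{\mathrm{SM}}\cup W^{(1)}$ are fixed by $\sigma_{\mathbf M}$. The main technical hurdle is this bookkeeping around the perfect-matching property, verifying that it forces $\phi_{\mathbf M}|_{V^{(1)}_{\mathrm{PM}}}$ to be a graph isomorphism onto the corresponding induced subgraph of $G^{(2)}$ (including the boundary edges to $V^{(2)}_{\mathrm{SM}}$) and that it rules out any crossing to $W^{(1)}$. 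The symmetric bound by $|\mathrm{Aut}(G^{(2)})|$ follows from the analogous map $\sigma'_{\mathbf M}:=\phi_{\mathbf M}\circ\phi_0^{-1}\in\mathrm{Aut}(G^{(2)})$.
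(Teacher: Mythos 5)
Your proof is correct: the injection $\mathbf M\mapsto \sigma_{\mathbf M}$ built from a fixed reference matching in the shadow, together with the case analysis (PM--PM and PM--SM edges are preserved via the perfect-matching property, PM--$W^{(1)}$ edges cannot exist, and edges inside $V^{(1)}_{\mathrm{SM}}\cup W^{(1)}$ are fixed) showing $\sigma_{\mathbf M}\in\mathrm{Aut}(G^{(1)})$, plus the symmetric map into $\mathrm{Aut}(G^{(2)})$, is precisely the argument the paper leaves implicit, since its proof merely defers to Lemma A.5 of \cite{CGGV25} and notes the twist that $\mathrm{Aut}$ consists of automorphisms fixing $v_1,v_2$ while every matching contains $(v_1^{(1)},v_1^{(2)})$ and $(v_2^{(1)},v_2^{(2)})$ --- a point your construction handles correctly when checking that $\sigma_{\mathbf M}$ fixes $v_1$ and $v_2$. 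In short, you supply in full the details the paper omits, along essentially the same route.
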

Observe that two matchings $\mathbf M$ and $\mathbf M'$ that belong $\mathcal M_{\mathrm{shadow}}(W^{(1)},W^{(2)}, \underline{\mathbf M})$ have isomorphic symmetric difference graph $G_{\Delta}$ and have a common value of $|\mathbf M_{\mathrm{SM}}|$. Hence
\begin{align}
       A &\leq  2 
       \sum_{W^{(1)} \subset V^{(1)}, W^{(2)} \subset V^{(2)}, \underline{\mathbf M} \in \mathcal M\setminus \mathcal M_{\mathrm{PM}}}  A(\mathbf M)\enspace ,\label{eqn:sum for proof sketch}
\end{align}
where $\mathbf{M}$ is any matching in   $\mathcal M_{\mathrm{shadow}}(W^{(1)},W^{(2)}, \underline{\mathbf M})$.

%\end{align}
\noindent
{\bf Step 4: Bounding $A(\mathbf M)$.}  Recall that $|\mathbf{M}_{\mathrm{SM}}| + |U^{(1)}| + |U^{(2)}| = |V_{\Delta}|$.
For $\mathbf{M}\in \mathcal{M}^\star$, we have $|\mathbf{M}_{\mathrm{SM}}|\geq  \#\mathrm{CC}_{\Delta}$. 
Since $|E_{\Delta}|\geq |V_{\Delta}|- \#\mathrm{CC}_{\Delta}$, we deduce that $|E_{\Delta}|\geq |U|$ which, in turn, implies that $r(\mathbf M)\geq 1$.  
Also, by definition of the edit distance, we have $|E_{\Delta}| \geq d(G^{(1)}, G^{(2)})\vee 1$. Finally, we have $|E_{\Delta}| \geq |\mathbf{M}_{\mathrm{SM}}| /2$ as any semi-matched node is connected to an edge in $G_{\Delta}$. Gathering these three lower bounds, we get 
\begin{align*}
    A(\mathbf M) &\leq 2^{|\mathbf M_{\mathrm{SM}}|} D^{-8c_{\texttt{s}}|E_\Delta|} \leq 2^{|\mathbf M_{\mathrm{SM}}|}D^{-2c_{\texttt{s}}[|U|+ |\mathbf{M}_{\mathrm{SM}}|+ d(G^{(1)},G^{(2)})\vee 1]}\enspace .
\end{align*}

\noindent
{\bf Step 5: Final bound on $A$.} Plugging this bound on $A(\mathbf M)$ back in Equation~\eqref{eqn:sum for proof sketch} we get
\begin{align}
       A &\leq  2 
       \sum_{W^{(1)} \subset V^{(1)}, W^{(2)} \subset V^{(2)}, \underline{\mathbf M} \in \mathcal M\setminus \mathcal M_{\mathrm{PM}}}  2^{|\mathbf M_{\mathrm{SM}}|} D^{-2c_{\texttt{s}}[|W^{(1)}|+ |W^{(2)}|+ |\mathbf{M}_{\mathrm{SM}}|+ d(G^{(1)},G^{(2)})\vee 1]}, 
\end{align}
since for $\mathbf M\in \mathcal M_{\mathrm{shadow}}(W^{(1)},W^{(2)}, \underline{\mathbf M})$, we have $|U^{(1)}|= |W^{(1)}|$ and $|U^{(2)}|=W^{(2)}$. 
So when we enumerate over all possible sets $W^{(1)}, W^{(2)}, \underline{\mathbf M} $ that have respective cardinality $u_1$, $u_2$, and  $m$, and since these sets have bounded cardinalities resp.~by $(2D)^{u_1}, (2D)^{u_2}$ and $(2D)^{2m}$, we have
\begin{align*}
       A &\leq  2
       \sum_{u_1, u_2, m \geq 0}2^m(2D)^{u_1+u_2+2m}  D^{-2c_{\texttt{s}}\left[d(G^{(1)}, G^{(2)})\vee 1+u_1+u_2+m\right]} \leq 2D^{-c_{\texttt{s}}  d(G^{(1)},G^{(2)})\vee 1},
\end{align*}
since $c_{\texttt{s}} \geq 5$ and $D \geq 2$. Together with~\eqref{eq:upper_correlation}, this concludes the proof.  
\end{proof}

\begin{proof}[Proof of Proposition~\ref{prop:ortho}]
Since the operator norm of a symmetric matrix is bounded by the maximum $L_1$ norm of its rows, we have 
\[
\|\Gamma - \mathrm{Id}\|_{op}\leq \max_{G^{(1)}}\left|\Gamma_{G^{(1)}, G^{(1)}}-1\right|+  \sum_{G^{(2)} \in \mathcal G_{\leq D}, G^{(2)} \neq G^{(1)} }\left|\Gamma_{G^{(1)}, G^{(2)}}\right|
\] 
To bound the latter sum, we use that for a fixed template $G^{(1)}$, the number of templates $G^{(2)}\in \mathcal{G}_{\leq D}$ such that $d( G^{(1)}, G^{(2)}) = u$ is bounded by $(u+D)^{2u}$. 
We also use that if $G^{(2)} \neq G^{(1)}$, then $d(G^{(1)}, G^{(2)})\geq 1$ as they are not isomorphic. It then follows from Proposition~\ref{prop:scalprod} that
\begin{align*}
    \sum_{G^{(2)} \in \mathcal G_{\leq D}, G^{(2)} \neq G^{(1)} }\left|\Gamma_{G^{(1)}, G^{(2)}}\right|
    &\leq \sum_{G^{(2)} \in \mathcal G_{\leq D}, G^{(2)} \neq G^{(1)}} 3D^{-c_{\texttt{s}}d(G^{(1)},G^{(2)})} \\
    &\leq \sum_{2D \geq  u \geq 1} |\{G^{(2)}: d( G^{(1)}, G^{(2)}) = u \}|  3D^{-c_{\texttt{s}}u}\enspace\\
    &\leq \sum_{2D \geq u\geq 1} 3(u+D)^{2u}  D^{-c_{\texttt{s}}u}\enspace
    \leq \sum_{2D \geq u\geq 1} 3D^{-(c_{\texttt{s}}-6)u} \leq  3D^{-c_{\texttt{s}}/2}\enspace ,
\end{align*}
since $D \geq 2$ provided we have $c_{\texttt{s}} \geq 14$. 
Using again  Proposition~\ref{prop:scalprod}, to bound $\left|\Gamma_{G^{(1)}, G^{(1)}}-1\right|$  concludes the proof.
\end{proof}

\begin{proof}[Proof of Lemma~\ref{lem:shadow}] 
The proof is a straighforward variation of that of Lemma A.5 in~\cite{CGGV25}, the twist being that $|\mathrm{Aut}(G)|$ is restrict to automorphisms that let $v_1$ and $v_2$ fixed, whereas all matchings $\mathbf{M}$ of $(G^{(1)},G^{(2)})$ contain $(v_1^{(1)},v_1^{(2)})$ and $(v_2^{(1)},v_2^{(2)})$.  
\end{proof}

\section{Proof of Proposition~\ref{prop:christophe}}

Consider two templates $G^{(1)}, G^{(2)}$, some node matching $\mathbf M \in \mathcal M^*$ and two injections $(\pi^{(1)}, \pi^{(2)})\in \Pi(\mathbf{M})$. 
The expectation $ \bbE\cro{P_{G^{(1)}, \pi^{(1)}}P_{G^{(2)}, \pi^{(2)}}}$ only depends on the graphs $G^{(1)}\cro{\pi^{(1)}}$ and $G^{(2)}\cro{\pi^{(2)}}$. 
To avoid clumsy notations, and since the injections $(\pi^{(1)}, \pi^{(2)})\in \Pi(\mathbf{M})$ are fixed, we directly work in this proof with the graphs $G^{(i)}\cro{\pi^{(i)}}=:(V^{(i)},E^{(i)})$, with $i=1,2$. Also, we shall introduce new notations to account for the case where $v_1$ or $v_2$ are isolated in $G^{(1)}$ or in $G^{(2)}$.
For $i=1,2$, we define $G^{\star(i)}=(V^{\star(i)},E^{(i)})$  by removing isolated nodes. 
We recall the notation  $E_{\cap}= E^{(1)}\cap E^{(2)}$ and $E_{\Delta}= E^{(1)}\Delta E^{(2)}$. We partition $E_{\Delta}$ according to these connected components $E_{\Delta}=\cup_{\ell=1}^{\#\mathrm{CC}_{\Delta}} E_{\Delta,\ell}$. Since we work with the pruned graph, we shall redefine the node set. In particular, we define $V^{\star}_{\cap}= V^{\star(1)}\cap  V^{\star(2)}$, we partition $V^{\star}_{\cap}$ into the sets $V^{\star}_{PM}$ and $V^{\star}_{SM}$ of perfectly and semi-matched nodes. Besides we define $U^{\star(1)}= V^{\star(1)}\setminus U^{\star(2)}$, $U^{\star(2)}= V^{\star(2)}\setminus U^{\star(1)}$, and $U^{\star}= U^{\star(1)}\cup U^{\star(2)}$. Obviously, we have $|U|=|U^{\star}|$ when neither $v_1$ nor $v_2$ is isolated in both $G^{(1)}$ and $G^{(2)}$. In general, we have $|U^{\star}|\in [|U|, |U|+2]$.
 Given a subset $W$ of $V^{\star}_{\cap}$, we define $E_{\cap}[W]$ as the subset of edges in $E_{\cap}$ that connects nodes in $W$. We first focus on proving~\eqref{eq:christophe1}.

%\nico{(il faudrait préciser l'assertion qui suit, mais je ne suis pas certain qu'elle soit utile; je serais }
%Since $\mathbf M\in \mathcal{M}^\star$, we have $E_{\Delta,\ell}\cap \mathbf M \neq \emptyset$ for all $\ell=1,\ldots,\#\mathrm{CC}_{\Delta}$. 
We have $\bbE\cro{Y_{ij}|X}=\lambda X_{ij}=\lambda  \mathbf{1}_{z_{i}=z_{j}}$ and 
$$\bbE\cro{Y_{ij}^2|X}= \bar q+ \lambda X_{ij}(1-2q)=\bar q \pa{\bar p\over \bar q}^{ \mathbf{1}_{z_{i}=z_{j}}},$$ with $\bar p=\bar q+\lambda (1-2q)\geq \bar q$ for $q\leq 1/2$.
Since the $Y_{ij}$ are conditionally independent given $X$, and since for $\mathbf{M}\in\mathcal{M}^\star$, no connected component of $G_{\Delta}$ is only composed of nodes from $U^{\star(j)}$, we have
\begin{align}\nonumber
   \bbE\cro{P_{G^{(1)}, \pi^{(1)}}P_{G^{(2)}, \pi^{(2)}}} 
    & = \bbE\cro{\prod_{(i,j)\in E_{\Delta}}Y_{ij}\prod_{(i,j)\in E_{\cap}}Y_{ij}^2}  \\ \label{eq:proof:ingredient:basis}
    &=   \bbE\cro{\prod_{(i,j)\in E_{\Delta}} (\lambda \mathbf{1}_{z_{i}=z_{j}}) \prod_{(i,j)\in E_{\cap}}\bar q \pa{\bar p\over \bar q}^{ \mathbf{1}_{z_{i}=z_{j}}}}\\ \nonumber
    &= \lambda^{|E_{\Delta}|} \bar q^{|E_{\cap}|} \bbE\cro{\prod_{\ell=1}^{\#\mathrm{CC}_{\Delta}}\prod_{(i,j)\in E_{\Delta,\ell}} \mathbf{1}_{z_{i}=z_{j}} \prod_{k=1}^K\prod_{\substack{(i,j)\in E_{\cap}\\ \nonumber z_{i}=z_{j}=k}}\pa{\bar p\over \bar q}}\\ \nonumber
    = \lambda^{|E_{\Delta}|} \bar q^{|E_{\cap}|}& \sum_{\varphi_{\Delta}\in [K]^{\#\mathrm{CC}_{\Delta}}} \sum_{\varphi_{PM}\in [K]^{V_{PM}}}\bbP\cro{z\sim(\varphi_{\Delta},\varphi_{PM})} \prod_{k=1}^K \pa{\bar p\over \bar q}^{|E_{\cap}\cro{\varphi_{SM}^{-1}(\ac{k})\cup\varphi_{PM}^{-1}(\ac{k})}|},
    \end{align}
where, for a given $\varphi_{\Delta}:\cro{\#\mathrm{CC}_{\Delta}}\to [K]$, the assignment $\varphi_{SM}:V^{\star}_{SM}\to [K]$ is defined by 
\begin{equation}\label{eq:phi-SM}
\varphi_{SM}(i)=\varphi_{\Delta}(\ell)\ \text{for}\ i\in V^{\star}_{SM}\cap V^{\star}_{\Delta,\ell}\ ,
\end{equation}
 and  where
\begin{align*}
\bbP\cro{z\sim(\varphi_{\Delta},\varphi_{PM})} & := \bbP\cro{z_{i}=\varphi_{\Delta}(\ell), \text{for all}\ i\in V^{\star}_{\Delta,\ell}\ \ \text{and}\ \ z_{i}=\varphi_{PM}(i),\ \text{for all}\ i\in V^{\star}_{PM}}\\
& = K^{-|V^{\star}_{\Delta}|-|V^{\star}_{PM}|} = K^{-|V^{\star}_{\cup}|}.
\end{align*}
So
\begin{equation}\label{eq:cross:formula}
   \bbE\cro{P_{G^{(1)}, \pi^{(1)}}P_{G^{(2)}, \pi^{(2)}}} = {\lambda^{|E_{\Delta}|} \bar q^{|E_{\cap}|}\over K^{|V^{\star}_{\cup}|}} \sum_{\varphi_{\Delta}\in [K]^{\#\mathrm{CC}_{\Delta}}} \sum_{\varphi_{PM}\in [K]^{V^{\star}_{PM}}}  \pa{\bar p\over \bar q}^{\sum_{k=1}^K|E_{\cap}\cro{\varphi_{SM}^{-1}(\ac{k})\cup\varphi_{PM}^{-1}(\ac{k})}|},
\end{equation}
We have in particular for $j=1,2$
\begin{equation}\label{eq:moment2}
\bbE\cro{P_{G^{(j)}, \pi^{(j)}}^2} =   {\bar q^{|E^{(j)}|}\over K^{|V^{^{\star}(j)}|}} \sum_{\varphi_{j}\in [K]^{V^{^{\star}(j)}}}  \pa{\bar p\over \bar q}^{\sum_{k=1}^K|E^{(j)}\cro{\varphi_{j}^{-1}(\ac{k})}|}.
\end{equation}
To lower bound $\bbE[P_{G^{(j)}, \pi^{(j)}}^2]$, we consider two different subfamilies of assignments $\varphi_{j}:V^{^{\star}(j)}\to[K]$ in the sum of \eqref{eq:moment2}. For some given $\varphi_{\Delta}:[\#\mathrm{CC}_{\Delta}]\to[K]$ and some $\varphi_{PM}:V^{\star}_{PM}\to [K]$, we consider
\begin{enumerate}
\item $\varphi_{j}$ defined by $\varphi_j(i)=\varphi_{\Delta}(\ell)$, for all $i\in V^{\star}_{\Delta,\ell}$; $\varphi_j(i)=\varphi_{PM}(i)$, for all $i\in V^{\star}_{PM}$;
\item $\varphi_{j}$ defined by $\varphi_j(i)=\varphi_{\Delta}(\ell)$, for all $i\in V^{\star}_{\Delta,\ell}\cap V^{\star}_{SM};\ \varphi_j(i)=\varphi_{PM}(i)$, for all $i\in V^{\star}_{PM}$, and $\varphi_j(i)=\varphi_{U^{\star(j)}}(i)$ for $i\in U^{\star(j)}$, where $\varphi_{U^{\star(j)}}\in [K]^{U^{\star(j)}}$.
\end{enumerate}

\paragraph{First lower bound.}
Let us consider some $\varphi_{\Delta}:[\#\mathrm{CC}_{\Delta}]\to[K]$ and  $\varphi_{PM}:V^{\star}_{PM}\to [K]$.
Let us restrict to assignments $\varphi_{j}:V^{^{\star}(j)}\to[K]$ defined by $\varphi_j(i)=\varphi_{\Delta}(\ell)$, for all $i\in V^{\star}_{\Delta,\ell}$; and $\varphi_j(i)=\varphi_{PM}(i)$, for all $i\in V^{\star}_{PM}$. 
We observe that $\varphi_{j}^{-1}(\ac{k}) \cap (V^{\star}_{SM}\cup V^{\star}_{PM})=\varphi_{SM}^{-1}(\ac{k})\cup\varphi_{PM}^{-1}(\ac{k})$, so 
\begin{align}
\left|E^{(j)}\cro{\varphi_{j}^{-1}(\ac{k})}\right| & = \left| E_{\cap}\cro{\varphi_{SM}^{-1}(\ac{k})\cup\varphi_{PM}^{-1}(\ac{k})}\right| + \left| \pa{E_{\Delta}\cap E^{(j)}}\cro{\varphi_{j}^{-1}(\ac{k})\cap \pa{V^{\star}_{SM}\cup U^{\star(j)}}}\right|, \label{eq:decomp:simple-double}
\end{align}
with
\begin{align}
\sum_{k=1}^K \left|\pa{E_{\Delta}\cap E^{(j)}}\cro{\varphi_{j}^{-1}(\ac{k})\cap \pa{V^{\star}_{SM}\cup U^{\star(j)}}} \right|
& = \sum_{\ell =1}^{\#\mathrm{CC}_{\Delta}}\sum_{k=1}^K \left|\pa{E_{\Delta,\ell}\cap E^{(j)}}\cro{\varphi_{j}^{-1}(\ac{k})\cap \pa{V^{\star}_{SM}\cup U^{\star(j)}}}\right|\nonumber\\
& = \sum_{\ell =1}^{\#\mathrm{CC}_{\Delta}} \left|\pa{E_{\Delta,\ell}\cap E^{(j)}} \right|
 = \left|E_{\Delta}\cap E^{(j)}\right|, \label{eq:partition:pure}
\end{align}
since $\varphi_{j}$ is constant on each $V^{\star}_{\Delta,\ell}$.
So, we can lower-bound \eqref{eq:moment2} by
\begin{align}
\bbE\cro{P_{G^{(j)}, \pi^{(j)}}^2} & \geq  {\bar q^{|E^{(j)}|}\over K^{|V^{^{\star}(j)}|}} \pa{\bar p \over \bar q}^{\left|E_{\Delta}\cap E^{(j)}\right|}  \sum_{\varphi_{\Delta}\in [K]^{\#\mathrm{CC}_{\Delta}}} \sum_{\varphi_{PM}\in [K]^{V^{\star}_{PM}}}  \pa{\bar p\over \bar q}^{\sum_{k=1}^K|E_{\cap}\cro{\varphi_{SM}^{-1}(\ac{k})\cup\varphi_{PM}^{-1}(\ac{k})}|}\enspace . \nonumber
\end{align}
Since $|E_{\Delta}|=|E_{\Delta}\cap E^{(1)}|+|E_{\Delta}\cap E^{(2)}|$ and
$2|E_{\cap}|=|E^{(1)}|+|E^{(2)}|-|E_{\Delta}\cap E^{(1)}|- |E_{\Delta}\cap E^{(2)}|$,
we get a first lower-bound
\begin{align}
\prod_{j=1,2}\bbE\cro{P_{G^{(j)}, \pi^{(j)}}^2}^{1/2} & \geq {\bar q^{|E_{\cap}|}\,\bar p^{|E_{\Delta}|/2}\over K^{\pa{|V^{\star(1)}|+|V^{\star(2)}|}/2}} \sum_{\varphi_{\Delta}\in [K]^{\#\mathrm{CC}_{\Delta}}} \sum_{\varphi_{PM}\in [K]^{V^{\star}_{PM}}}  \pa{\bar p\over \bar q}^{\sum_{k=1}^K\left|E_{\cap}\cro{\varphi_{SM}^{-1}(\ac{k})\cup\varphi_{PM}^{-1}(\ac{k})}\right|} \nonumber\\
\label{eq:restrictedLB1}
&=  {\bar q^{|E_{\cap}|}\,\bar p^{|E_{\Delta}|/2}\over K^{|V^{\star}_{\cup}|-\pa{|U^{\star(1)}|+|U^{\star(2)}|}/2}} \sum_{\varphi_{\Delta}\in [K]^{\#\mathrm{CC}_{\Delta}}} \sum_{\varphi_{PM}\in [K]^{V^{\star}_{PM}}}  \pa{\bar p\over \bar q}^{\sum_{k=1}^K\left|E_{\cap}\cro{\varphi_{SM}^{-1}(\ac{k})\cup\varphi_{PM}^{-1}(\ac{k})}\right|}.
\end{align}

\paragraph{Second lower bound.} 
Let us consider again some $\varphi_{\Delta}:[\#\mathrm{CC}_{\Delta}]\to[K]$,  $\varphi_{PM}:V^{\star}_{PM}\to [K]$ and $\varphi_{U^{\star(j)}}: U^{\star(j)}\to [K]$.
We now restrict to assignments 
 $\varphi_{j}:V^{*(j)}\to[K]$ defined by $\varphi_j(i)=\varphi_{\Delta}(\ell)$, for all $i\in V^{*}_{\Delta,\ell}\cap V^{\star}_{SM};\ \varphi_j(i)=\varphi_{PM}(i)$, for all $i\in V^{\star}_{PM}$, and $\varphi_j(i)=\varphi_{U^{\star(j)}}(i)$ for $i\in U^{\star(j)}$.
We still have the decomposition \eqref{eq:decomp:simple-double}, but we do not have the identity \eqref{eq:partition:pure} anymore. 
Yet, we have $\left|E^{(j)}\cro{\varphi_{j}^{-1}(\ac{k})}\right| \geq \left| E_{\cap}\cro{\varphi_{SM}^{-1}(\ac{k})\cup\varphi_{PM}^{-1}(\ac{k})}\right|$. Since $\mathbf{M}\in\mathcal{M}^\star$, no connected component of $G_{\Delta}$ is only composed of nodes from $U^{\star(j)}$. 
%\ch{Si on delete le terme en vert dans la borne inf, comme j'ai fait, on peut retirer cette in\'egalit\'e car $\left|E^{(j)}\cro{\varphi_{j}^{-1}(\ac{k})}\right| \geq \left| E_{\cap}\cro{\varphi_{SM}^{-1}(\ac{k})\cup\varphi_{PM}^{-1}(\ac{k})}\right| $ suffit...
%Yet, we have 
%\begin{align}
%\sum_{k=1}^K \left|\pa{E_{\Delta}\cap E^{(j)}}\cro{\varphi_{j}^{-1}(\ac{k})\cap \pa{V^{\star}_{SM}\cup U^{\star(j)}}} \right|
%& \geq \sum_{k=1}^K \left|\pa{E_{\Delta}\cap E^{(j)}}\cro{\varphi_{j}^{-1}(\ac{k})\cap V^{\star}_{SM} }\right| \nonumber \\
%& = \left|\pa{E_{\Delta}\cap E^{(j)}}\cro{V^{\star}_{SM} }\right|,
 %\label{eq:partition:impure}
%\end{align}
%similarly as in \eqref{eq:partition:pure}, since $\varphi_{j}$ is constant on each $V^{\star}_{\Delta,\ell}\cap V^{\star}_{SM}$.}
Since $\bar p \geq \bar q$ when $q\leq 1/2$,  we can lower-bound \eqref{eq:moment2} by
\begin{align}
\bbE\cro{P_{G^{(j)}, \pi^{(j)}}^2} & \geq  {\bar q^{|E^{(j)}|}\over K^{|V^{\star(j)}|}} \nonumber
%\ch{\pa{\bar p \over \bar q}^{\left|(E_{\Delta}\cap E^{(j)})[V^{\star}_{SM}]\right|}} 
\sum_{\varphi_{U^{\star(j)}}\in [K]^{U^{\star(j)}}} \sum_{\varphi_{\Delta}\in [K]^{\#\mathrm{CC}_{\Delta}}} \sum_{\varphi_{PM}\in [K]^{V^{\star}_{PM}}}  \pa{\bar p\over \bar q}^{\sum_{k=1}^K|E_{\cap}\cro{\varphi_{SM}^{-1}(\ac{k})\cup\varphi_{PM}^{-1}(\ac{k})}|}\\
& \geq  {\bar q^{|E^{(j)}|}\over K^{|V^{\star(j)}|-|U^{\star(j)}|}}   \sum_{\varphi_{\Delta}\in [K]^{\#\mathrm{CC}_{\Delta}}} \sum_{\varphi_{PM}\in [K]^{V^{\star}_{PM}}}  \pa{\bar p\over \bar q}^{\sum_{k=1}^K|E_{\cap}\cro{\varphi_{SM}^{-1}(\ac{k})\cup\varphi_{PM}^{-1}(\ac{k})}|}\enspace , \nonumber
\end{align}
and get a second lower bound
\begin{align}
\prod_{j=1,2}\bbE\cro{P_{G^{(j)}, \pi^{(j)}}^2}^{1/2} & \geq {\bar q^{(|E^{(1)}|+|E^{(2)}|)/2}\over K^{\pa{|V^{\star(1)}|+|V^{\star(2)}|-|U^{\star(1)}|-|U^{\star(2)}|}/2}} \nonumber\\
&\quad \times \sum_{\varphi_{\Delta}\in [K]^{\#\mathrm{CC}_{\Delta}}} \sum_{\varphi_{PM}\in [K]^{V^{\star}_{PM}}}  \pa{\bar p\over \bar q}^{\sum_{k=1}^K\left|E_{\cap}\cro{\varphi_{SM}^{-1}(\ac{k})\cup\varphi_{PM}^{-1}(\ac{k})}\right|}\nonumber \\
& = {\bar q^{|E_{\cap}|+|E_{\Delta}|/2} \over K^{|V^{\star}_{\cup}|-|U^{\star(1)}|-|U^{\star(2)}|}} \sum_{\varphi_{\Delta}\in [K]^{\#\mathrm{CC}_{\Delta}}} \sum_{\varphi_{PM}\in [K]^{V^{\star}_{PM}}}  \pa{\bar p\over \bar q}^{\sum_{k=1}^K\left|E_{\cap}\cro{\varphi_{SM}^{-1}(\ac{k})\cup\varphi_{PM}^{-1}(\ac{k})}\right|}. \label{eq:restrictedLB2}
\end{align}

\paragraph{Conclusion.}
Combining \eqref{eq:cross:formula}, \eqref{eq:restrictedLB1} and \eqref{eq:restrictedLB2}, we then conclude that
\begin{align*}
{ \bbE\cro{P_{G^{(1)}, \pi^{(1)}}P_{G^{(2)}, \pi^{(2)}}} \over \prod_{j=1,2}\bbE\cro{P_{G^{(j)}, \pi^{(j)}}^2}^{1/2}} 
& \leq \pa{\lambda^{|E_{\Delta}|}\over \bar p^{|E_{\Delta}|/2} K^{(|U^{\star(1)}|+|U^{\star(2)}|)/2}} \wedge \pa{\lambda^{|E_{\Delta}|}\over \bar q^{|E_{\Delta}|/2} K^{|U^{\star(1)}|+|U^{\star(2)}|}}\\
&\leq \pa{\lambda^{|E_{\Delta}|/2}\over K^{(|U^{\star(1)}|+|U^{\star(2)}|)/2}} \wedge \pa{\lambda^{|E_{\Delta}|}\over \bar q^{|E_{\Delta}|/2} K^{|U^{\star(1)}|+|U^{\star(2)}|}},
\end{align*}
where the last line follows from $\bar p= \lambda+q (1-q-2\lambda)\geq \lambda$ when $q+2\lambda \leq 1$. Since $|U^{\star(1)}|+|U^{\star(2)}|\geq |U^{(1)}|+|U^{(2)}|$, this concludes the proof of~\eqref{eq:christophe1}. 

%\ch{La ``vraie'' borne est
%$$\pa{\lambda^{|E_{\Delta}|/2}\over K^{(|U^{\star(1)}|+|U^{\star(2)}|)/2}} \wedge \pa{\pa{\lambda^{|E_{\Delta}|}\over \bar q^{|E_{\Delta}|/2} K^{|U^{\star(1)}|+|U^{\star(2)}|}}\times \prod_{j=1,2}\pa{\bar q \over \bar p}^{\left|(E_{\Delta}\cap E^{(j)})[V^{\star}_{SM}]\right|/2}}$$
%}

\medskip 

Let us turn to~\eqref{eq:christophe2}. Since $G$ is connected, neither $v_1$ nor $v_2$ are isolated.
We know that 
\[
\mathbb{E}[P_{G,\pi}]= \frac{\lambda^{|E|}}{K^{|V|-1}}\ . 
\]
Then, we lower bound the second moment $\mathbb{E}[P^2_{G,\pi}]$ by relying on~\eqref{eq:moment2} and considering two subfamilies of assignments as previously: one subfamily where the $z_i$'s are identical on the vertices of $G$ and one where the $z_i$'s are let arbitrary. This respectively leads us to 
 $\mathbb{E}[P^2_{G,\pi}]\geq \bar{p}^{|E|}K^{-|V|+ 1}$ and $\mathbb{E}[P^2_{G,\pi}]\geq \bar{q}^{|E|}$. Hence
\begin{align*}
0\leq { \bbE\cro{P_{G, \pi}} \over\bbE\cro{P_{G, \pi}^2}^{1/2}} 
& \leq \pa{\lambda^{|E|}\over \bar p^{|E|/2} K^{(|V|-1)/2}} \wedge \pa{\lambda^{|E|}\over \bar q^{|E|/2} K^{|V|-1}}\\
&\leq \pa{\lambda^{|E|/2}\over K^{(|V|-1)/2}} \wedge \pa{\lambda^{|E|}\over \bar q^{|E|/2} K^{(|V|-1)}}\enspace ,
\end{align*}
where we used again that $\bar{p}\geq \lambda$.
This concludes the proof of Proposition~\ref{prop:christophe}.

%\section{On the signal condition: proof of Lemma~\ref{lem:condition-equivalence}}\label{sec:signal}
%
%\begin{proof}[Proof of Lemma \ref{lem:condition-equivalence}]
%The condition \eqref{eq:condition-signal2} implies
%\begin{equation}\label{eq:condition-signal2bis}
%D^{c} \lambda \leq \pa{{\lambda \over K}\vee \bar q}^{1-\log_{n}(K)}.
%\end{equation}
%We consider the different cases apart.
%\begin{enumerate}
%\item If $D^{c} \lambda \leq \pa{{\lambda \over K}}^{1-\log_{n}(K)}$
%then $D^{c} \exp[\log(K)\log(\lambda)/\log(n)+ \log(K)- \log^2(K)/\log(n)]\leq 1$. In turn, this implies that 
%$D^{c} \lambda \leq K/n$ and for $r\geq 1$
%$$ D^{c r}\left(\frac{n\lambda^{r}}{K} \right)  \leq \pa{K\over n}^{r-1}\leq 1.$$
%\item If $D^{c} \lambda \leq  \bar q^{1-\log_{n}(K)}$ and $\bar q^r\leq 1/n$, then
%$$ D^{cr}\left(\frac{n\lambda^{r}}{K} \right)  \leq \pa{n\over K}\bar q^{r (1-\log_{n}(K))}\leq \pa{n\over K}\pa{1\over n}^{1-\log_{n}(K)} =1.$$
%\item If $D^{c} \lambda \leq  \bar q^{1-\log_{n}(K)}$ and $\bar q^r\geq 1/n$, then for $r\geq 1$
%and $\log_{n}(K)\geq 1/2$
%$$D^{cr}\frac{n}{K^2} \left(\frac{\lambda^2}{{\bar q}}\right)^{r}\leq n^{1-2\log_{n}(K)} \bar q^{r (1-2\log_{n}(K))}\leq n^{1-2\log_{n}(K)} \pa{1\over n}^{1-2\log_{n}(K)}= 1.$$
%\end{enumerate}
%This concludes the proof of Lemma~\ref{lem:condition-equivalence}
%\end{proof}

\section{Expectation and variance of $m$-clique counting: proof of Lemma~\ref{lem:cliques}}\label{sec:S}

Let us prove Lemma~\ref{lem:cliques}.
Without loss of generality, we assume henceforth that $(i,j)=(1,2)$. We recall that $G=(V,E)$ is a clique on $V=\ac{v_{1},\ldots,v_{m}}$, where we have removed the edge $(v_{1},v_{2})$, and 
$$S_{12}=\sum_{\pi \in \Pi_{1,2}} P_{G,\pi}(Y),$$
where $\Pi_{1,2}$ is the set of injections $\pi$ from $V$ to $\ac{1\ldots,n}$ such that $\pi(v_{1})=1$ and $\pi(v_{2})=2$.
We will  repeatedly  use the identity
$$|E|={m(m-1)\over 2}-1={(m+1)(m-2)\over 2}.$$

We first compute the mean and upper bound the variance under $\bbP_{12}=\bbP\cro{\cdot|z_{1}=z_{2}}$, and then under
$\bbP_{\not 12}=\bbP\cro{\cdot|z_{1}\neq z_{2}}$. 

\paragraph{Mean under $\bbP_{12}$.}
We have 
\begin{align*}
\bbE_{12}\cro{P_{G,\pi}(Y)}&= \lambda^{|E|} \,\bbE_{12}\cro{\prod_{(i,j)\in \pi(E)}\mathbf 1_{z_{i}=z_{j}}}\\
&=\lambda^{(m+1)(m-2)\over 2} \,\bbP_{12}\cro{z_{i}=z_{1}:i=3,\ldots,m}={\lambda^{(m+1)(m-2)\over 2} \over K^{m-2}}.
\end{align*}
Hence
\begin{equation*}%\label{eq:mean:12}
\bbE_{12}\cro{S_{12}}={(n-2)!\over (n-m)!} \pa{\lambda^{m+1\over 2} \over K}^{m-2}.
\end{equation*}

\paragraph{Variance under $\bbP_{12}$.}
Let $\pi^{(1)},\pi^{(2)}\in \Pi_{12}$ such that $|\text{range}(\pi^{(1)})\cap \text{range}(\pi^{(2)})|=2+u$, with $u\in \ac{0,\ldots,m-2}$.
Following~\eqref{eq:proof:ingredient:basis}, we have 
$$\bbE_{12}\cro{P_{G,\pi^{(1)}}P_{G,\pi^{(2)}}}=\lambda^{|E_{\Delta}|} \bar q^{|E_{\cap}|} \bbE_{12}\cro{\prod_{(i,j)\in E_{\Delta}}\mathbf 1_{z_{i}=z_{j}}\prod_{\substack{(i,j)\in E_{\cap}\\ z_{i}=z_{j}}}{\bar p\over \bar q}}.$$
\smallskip

\noindent\underline{Case $u=0$.} For $u=0$, we have  $E_{\cap}=\emptyset$. We also have $|V_{\Delta}|=|V_{\cup}|=2m-2$ and $|E_{\Delta}|=2|E|=(m+1)(m-2)$, so,
  when $u=0$,
\begin{align}
\bbE_{12}\cro{P_{G,\pi^{(1)}}P_{G,\pi^{(2)}}} & = \lambda^{(m+1)(m-2)} \bbP_{12}\cro{z_{i}=z_{1}:i=3,\ldots, 2m-2}\nonumber\\
& = {\lambda^{(m+1)(m-2)} \over K^{2m-4}}=
\bbE_{12}\cro{P_{G,\pi^{(1)}}}\bbE_{12}\cro{P_{G,\pi^{(2)}}}.\label{eq:var:u=0}
\end{align}
\smallskip

\noindent\underline{Case $1\leq u\leq m-3$.}
We now have $|V_{\Delta}|=|V_{\cup}|=2m-2-u$, $|E_{\cap}|=(u+3)u/2$, and $|E_{\Delta}|=2(|E|-|E_{\cap}|)=(m+1)(m-2)-(u+3)u$. Hence
\begin{align}
\bbE_{12}\cro{P_{G,\pi^{(1)}}P_{G,\pi^{(2)}}} & = \lambda^{(m+1)(m-2)-(u+3)u} {\bar q}^{(u+3)u\over 2} 
\pa{\bar p\over \bar q}^{(u+3)u\over 2} 
\bbP_{12}\cro{z_{i}=z_{1}:i=3,\ldots, 2m-2-u}\nonumber\\
&= {\lambda^{(m+1)(m-2)} \over K^{2m-4}}
\pa{\bar p\over \lambda^2}^{(u+3)u\over 2} {K}^{u} \leq {\bar p^{(m+1)(m-2)-{(u+3)u\over 2}} \over K^{2m-4}}
{K}^{u}\enspace ,\label{eq:var:generic:u}
\end{align}
since $\bar{p}= \bar q + \lambda (1-2q)\geq \lambda$ when $2\lambda+q\leq 1$.

\noindent\underline{Case $u= m-2$.} When $u=m-2$ then $\text{range}(\pi^{(1)})= \text{range}(\pi^{(2)})$
and $E_{\Delta}=\emptyset$. We define $V_{k}(z):=\ac{i\in\ac{1,\ldots,m}: z_{i}=k}$.
We have
\begin{align*}
\bbE_{12}\cro{P_{G,\pi^{(1)}}P_{G,\pi^{(2)}}}& = \bar q^{|E|} \bbE_{12}\cro{\prod_{k=1}^{K}\prod_{\substack{(i,j)\in \pi^{(1)}(E)\\ z_{i}=z_{j}=k}}{\bar p\over \bar q}}
 =  \bar q^{(m+1)(m-2)\over 2} \bbE_{12}\cro{\prod_{k=1}^{K}\prod_{\substack{1\leq i<j\leq m\\ (i,j)\neq (1,2)\\ z_{i}=z_{j}=k}}{\bar p\over \bar q}}\\
& =  \bar q^{(m+1)(m-2)\over 2} \bbE_{12}\cro{\pa{\bar p\over \bar q}^{{1\over 2}\sum_{k=1}^K |V_{k}(z)|(|V_{k}(z)|-1)-1}}\\
& =   \bar q^{(m+1)(m-2)\over 2} \bbE_{12}\cro{\pa{\bar p\over \bar q}^{{1\over 2}\sum_{k=1}^K |V_{k}(z)|^2-{m+2\over 2}}}.
\end{align*}
 Let $\ell(z):=|\ac{k:V_{k}(z)\neq \emptyset}|$. We have $\ell(z)\leq m-1$ under $\bbP_{12}$, since $z_{1}=z_{2}$ a.s.
For $\ell(z)=\ell^\star$, the sum $\sum_{k=1}^K |V_{k}(z)|^2$ is maximized  when $\ell^\star-1$ groups have 1 node and the remaining group has $m-(\ell^\star-1)$ nodes, so 
$$\sum_{k=1}^K |V_{k}(z)|^2\leq \ell(z)-1+(m-(\ell(z)-1))^2=m^2-(\ell(z)-1)(2m-\ell(z)).$$
Since  $\bar p\geq \bar q$ (since $q\leq 1/2$), we deduce
\begin{align}
\bbE_{12}\cro{P_{G,\pi^{(1)}}P_{G,\pi^{(2)}}}
& \leq   \bar q^{(m+1)(m-2)\over 2} \pa{\bar p\over \bar q}^{(m+1)(m-2)\over 2} \bbE_{12}\cro{\pa{\bar q\over \bar p}^{{(\ell(z)-1)(2m-\ell(z))\over 2}}}\nonumber\\
& \leq   \bar p^{(m+1)(m-2)\over 2} \,\bbE_{12}\cro{\pa{\bar q\over \bar p}^{(\ell(z)-1){(m+1)\over 2}}}\nonumber\\
& \leq   \bar p^{(m+1)(m-2)\over 2} \sum_{\ell=1}^{m-1}\pa{\bar q\over \bar p}^{(\ell-1){(m+1)\over 2}}\bbP_{12}\cro{\ell(z)=\ell}.\label{eq:var:u=m-2:intermediate}
\end{align}
We observe that, for $\ell\leq m-1\leq K$,
\begin{align*}
\bbP_{12}\cro{\ell(z)=\ell}& \leq \binom{m-2}{\ell-1} {K(K-1)\cdots (K-\ell+1)\over K^{\ell}} \pa{\ell \over K}^{m-2-(\ell-1)} \leq  \binom{m-2}{\ell-1}\pa{m-2 \over K}^{m-2-(\ell-1)},
\end{align*}
so plugging in \eqref{eq:var:u=m-2:intermediate}, we get
\begin{align}
\bbE_{12}\cro{P_{G,\pi^{(1)}}P_{G,\pi^{(2)}}}
&\leq   \bar p^{(m+1)(m-2)\over 2} \sum_{\ell=1}^{m-1} \binom{m-2}{\ell-1}\pa{m-2 \over K}^{m-2-(\ell-1)} \pa{\bar q\over \bar p}^{(\ell-1){(m+1)\over 2}}\nonumber \\
%&\leq   \bar p^{(m+1)(m-2)\over 2} \pa{m-2 \over K}^{m-2} \pa{1+{K\over m-2 }\pa{\bar q\over \bar p}^{{m+1\over 2}}}^{m-2}\nonumber \\
&\leq  \bar p^{(m+1)(m-2)\over 2} \pa{{m-2 \over K}+\pa{\bar q\over \bar p}^{{m+1\over 2}}}^{m-2}.
\label{eq:var:u=m-2}
\end{align}
%according to \eqref{eq:condition2}.

\noindent\underline{Combining the three terms.}
Combining \eqref{eq:var:u=0},\eqref{eq:var:generic:u}, and \eqref{eq:var:u=m-2}
\begin{align*}
\text{var}_{12}(S_{12}) &= \sum_{u=0}^{m-2} \sum_{\substack{\pi^{(1)},\pi^{(2)}\in \Pi_{12}\\ |\text{range}(\pi^{(1)})\cap \text{range}(\pi^{(2)})|=2+u}}
\pa{\bbE_{12}\cro{P_{G,\pi^{(1)}}P_{G,\pi^{(2)}}}-\bbE_{12}\cro{P_{G,\pi^{(1)}}}\bbE_{12}\cro{P_{G,\pi^{(2)}}}}\\
& \leq   \sum_{u=1}^{m-2}   \binom{m-2}{u}^{2} {(n-2)!u!\over (n-2m+u+2)!}  \bbE_{12}\cro{P_{G,\pi^{(1)}}P_{G,\pi^{(2)}}}\\
& \leq {(n-2)! (m-2)!\over (n-m)!} \bar p^{(m+1)(m-2)\over 2} \pa{{m-2 \over K}+\pa{\bar q\over \bar p}^{{m+1\over 2}}}^{m-2} \\ &\quad 
+\sum_{u=1}^{m-3}   \binom{m-2}{u}^{2} {(n-2)!u!\over (n-2m+u+2)!} {\bar p^{(m+1)(m-2)-{(u+3)u\over 2}} \over K^{2m-4-u}}
, 
\end{align*}
so
\begin{align*}
 \text{var}_{12}(S_{12}) & \leq {(n-2)! (m-2)!\over (n-m)!} \bar p^{(m+1)(m-2)\over 2} \pa{{m-2 \over K}+\pa{\bar q\over \bar p}^{{m+1\over 2}}}^{m-2} \\
&\quad +{(n-2)!(m-2)!\over (n-m)!}{\bar p^{(m+1)(m-2)} \over K^{2m-4}}\sum_{u=1}^{m-3}   \binom{m-2}{u} {(n-m)!\over (m-2-u)!(n-2m+u+2)!} \pa{K\over \bar p^{{m\over 2}} }^{u}.
\end{align*}
We have 
\begin{align*}
\sum_{u=1}^{m-3}   \binom{m-2}{u} {(n-m)!\over (m-2-u)!(n-2m+u+2)!} \pa{K\over \bar p^{{m+1\over 2}} }^{u}
&\leq (n-m)^{m-2} \sum_{u=1}^{m-3}  \binom{m-2}{u}  \pa{K\over (n-m)\bar p^{m+1\over 2}}^{u}\\
&\leq  \pa{n-m+{K\over \bar p^{m+1\over 2}}}^{m-2}.
\end{align*}
%Which gives the bound 
%$${(n-2)!(m-2)!\over (n-m)!}{\bar p^{(m+1)(m-2)} \over K^{2m-4}}\pa{n-m+{K\over \bar p^{m+1\over 2}}}^{m-2}\bar p^{1/2}$$
%We have 
%\begin{align*}
%\sum_{u=1}^{m-3}   \binom{m-2}{u} {(n-m)!\over (m-2-u)!(n-2m+u+2)!} \pa{K\over \bar p^{{m\over 2}} }^{u}
%&\leq (n-m)^{m-2} \sum_{u=1}^{m-3}  \binom{m-2}{u}  \pa{K\over (n-m)\bar p^{m/2}}^{u}\\
%&\leq  \pa{n-m+{K\over \bar p^{m/2}}}^{m-2}.
%\end{align*}
%We can also upper bound $ \binom{m-2}{u} \leq (m-2) \binom{m-3}{u-1}$ for $u\geq 1$, and get
%\begin{align*}
%\sum_{u=1}^{m-3}   \binom{m-2}{u} {(n-m)!\over (m-2-u)!(n-2m+u+2)!} \pa{K\over \bar p^{{m\over 2}} }^{u}
%&\leq (n-m)^{m-2}(m-2) \sum_{u=1}^{m-3}  \binom{m-3}{u-1}  \pa{K\over (n-m)\bar p^{m/2}}^{u}\\
%&\leq  {m-2\over (n-m)\bar p^{m/2}/K+1}   \pa{n-m+{K\over \bar p^{m/2}}}^{m-2}.
%\end{align*}
So,
\begin{align*}
 \text{var}_{12}(S_{12}) & \leq {(n-2)! (m-2)!\over (n-m)!} \bar p^{(m+1)(m-2)\over 2} \pa{{m-2 \over K}+\pa{\bar q\over \bar p}^{{m+1\over 2}}}^{m-2} \\
&\quad +{(n-2)!(m-2)!\over (n-m)!}{\bar p^{(m+1)(m-2)} \over K^{2m-4}}\pa{n-m+{K\over \bar p^{m+1\over 2}}}^{m-2}\bar p^{1/2}.
%\pa{1\wedge\pa{m-2\over (n-m)\bar p^{m/2}/K+1}} \pa{n-m+{K\over \bar p^{m/2}}}^{m-2}.
\end{align*}
We have proved~\eqref{eq:var:12}.

\paragraph{Mean under $\bbP_{\not 12}$.}
We have
\begin{equation*}
\bbE_{\not 12}\cro{P_{G,\pi}(Y)}= \lambda^{|E|} \,\bbE_{\not 12}\cro{\prod_{(i,j)\in \pi(E)}\mathbf 1_{z_{i}=z_{j}}}=0,
\end{equation*}
since $z_{1}\neq z_{2}$ a.s. under  $\bbP_{\not 12}$.

\paragraph{Variance under $\bbP_{\not 12}$.}
Let $\pi^{(1)},\pi^{(2)}\in \Pi_{12}$ such that $|\text{range}(\pi^{(1)})\cap \text{range}(\pi^{(2)})|=2+u$, with $u\in \ac{0,\ldots,m-2}$. Following~\eqref{eq:proof:ingredient:basis}, we have 
$$\bbE_{\not 12}\cro{P_{G,\pi^{(1)}}P_{G,\pi^{(2)}}}=\lambda^{|E_{\Delta}|} \bar q^{|E_{\cap}|} \bbE_{\not 12}\cro{\prod_{(i,j)\in E_{\Delta}}\mathbf 1_{z_{i}=z_{j}}\prod_{\substack{(i,j)\in E_{\cap}\\ z_{i}=z_{j}}}{\bar p\over \bar q}}.$$
\smallskip

\noindent\underline{Case $0\leq u\leq m-3$.}
We have $1,2\in V_{\Delta}$ and $|E_{\Delta}|=(m+1)(m-2)-(u+3)u\geq 4$, so 
$\prod_{(i,j)\in E_{\Delta}}\mathbf 1_{z_{i}=z_{j}}=0$ a.s. under $\bbP_{\not 12}$ since 
 $z_{1}\neq z_{2}$.
 Hence $\bbE_{\not 12}\cro{P_{G,\pi^{(1)}}P_{G,\pi^{(2)}}}=0$ in this case.
 \smallskip

\noindent\underline{Case $u= m-2$.} When $u=m-2$ then $\text{range}(\pi^{(1)})= \text{range}(\pi^{(2)})$
and $E_{\Delta}=\emptyset$. Following the same lines as for deriving \eqref{eq:var:u=m-2:intermediate}, we get
$$\bbE_{\not 12}\cro{P_{G,\pi^{(1)}}P_{G,\pi^{(2)}}} 
=   \bar q^{(m+1)(m-2)\over 2} \bbE_{\not 12}\cro{\pa{\bar p\over \bar q}^{{1\over 2}\sum_{k=1}^K |V_{k}(z)|^2-{m\over 2}}}
 \leq   \bar p^{(m+1)(m-2)\over 2} \,  \bbE_{\not 12}\cro{\pa{\bar q\over \bar p}^{{(\ell(z)-1)(2m-\ell(z))\over 2}{-1}}}.$$
We have $2\leq \ell(z)\leq m$ a.s. under $\bbP_{\not 12}$, since $z_{1}\neq z_{2}$ a.s.
Hence

\begin{align*}
\bbE_{\not 12}\cro{P_{G,\pi^{(1)}}P_{G,\pi^{(2)}}}
& \leq   \bar p^{(m+1)(m-2)\over 2} \sum_{\ell=2}^{m-1}\pa{\bar q\over \bar p}^{(\ell-1){(m+1)\over 2}{-1}}\bbP_{\not 12}\cro{\ell(z)=\ell}\\
&\quad + \bar p^{(m+1)(m-2)\over 2}\pa{\bar q\over \bar p}^{{(m-1)m\over 2}{-1}}\bbP_{\not 12}\cro{\ell(z)=m}
\end{align*}
Arguing as in the previous variance bound, we observe that, for $\ell\leq m\leq K$, we have 
\begin{align*}
\bbP_{\not 12}\cro{\ell(z)=\ell}&  \leq  \binom{m-2}{\ell-2}\pa{m-2 \over K}^{m-2-(\ell-2)},
\end{align*}
so  we get
\begin{align*}
\bbE_{\not 12}&\cro{P_{G,\pi^{(1)}}P_{G,\pi^{(2)}}}\\
&\leq   \bar p^{(m+1)(m-2)\over 2} \sum_{\ell=2}^{m-1} \binom{m-2}{\ell-2}\pa{m-2 \over K}^{m-2-(\ell-2)} \pa{\bar q\over \bar p}^{(\ell-1){(m+1)\over 2}{-1}}+ \bar p^{(m+1)(m-2)\over 2}\pa{\bar q\over \bar p}^{{(m-2)(m+1)\over 2}}\\
%&\leq   \bar p^{(m+1)(m-2)\over 2} \pa{m-2 \over K}^{m-2} \pa{1+{K\over m-2 }\pa{\bar q\over \bar p}^{{m+1\over 2}}}^{m-2}\pa{\bar q\over \bar p}^{{m+1\over 2}} + \bar p^{(m+1)(m-2)\over 2}\pa{\bar q\over \bar p}^{{(m-1)m\over 2}}\\
&\leq  \bar p^{(m+1)(m-2)\over 2} \pa{{m-2 \over K}+\pa{\bar q\over \bar p}^{{m+1\over 2}}}^{m-2}\pa{\bar q\over \bar p}^{{m-1\over 2}}+ \bar p^{(m+1)(m-2)\over 2}\pa{\bar q\over \bar p}^{{(m-2)(m+1)\over 2}}.
\end{align*}
%according to \eqref{eq:condition2}.

\noindent\underline{Final bound on the variance.}

\begin{align}
\text{var}_{\not 12}(S_{12}) &=  \sum_{\substack{\pi^{(1)},\pi^{(2)}\in \Pi_{12}\\ \text{range}(\pi^{(1)})= \text{range}(\pi^{(2)})}}
\pa{\bbE_{\not 12}\cro{P_{G,\pi^{(1)}}P_{G,\pi^{(2)}}}-\bbE_{\not 12}\cro{P_{G,\pi^{(1)}}}\bbE_{\not 12}\cro{P_{G,\pi^{(2)}}}}\nonumber\\
& \leq {(n-2)! (m-2)!\over (n-m)!} \bar p^{(m+1)(m-2)\over 2} \pa{\pa{{m-2 \over K}+\pa{\bar q\over \bar p}^{{m+1\over 2}}}^{m-2} \pa{\bar q\over \bar p}^{{m-1\over 2}}+ \pa{\bar q\over \bar p}^{{(m-2)(m+1)\over 2}} }\enspace . \nonumber
\end{align}
We  have proved~\eqref{eq:var:not12}.
%\end{proof}

\section{Counting self-avoiding paths: Proof of Lemma~\ref{lem:paths}}\label{sec:lem:paths}

\subsection{Results under the distribution $\bbP_{12}$}

\paragraph{Mean under $\bbP_{12}$.}
We have
\begin{equation*}
\bbE_{12}\cro{P_{G,\pi}(Y)}= \lambda^{|E|} \,\bbE_{ 12}\cro{\prod_{(i,j)\in \pi(E)}\mathbf 1_{z_{i}=z_{j}}}= \frac{1}{K^{|V|-2}} \lambda^{|E|} = \frac{1}{K^{m-2}} \lambda^{m-1},
\end{equation*}
since $z_{1}= z_{2}$ a.s. under  $\bbP_{12}$.

\paragraph{Variance under $\bbP_{12}$.} As for the proof of Lemma~\ref{lem:cliques}, we consider $\pi^{(1)},\pi^{(2)}\in \Pi_{12}$ such that $|\text{range}(\pi^{(1)})\cap \text{range}(\pi^{(2)})|=2+u$, with $u\in \ac{0,\ldots,m-2}$.
Following~\eqref{eq:proof:ingredient:basis}, we have 
$$\bbE_{12}\cro{P_{G,\pi^{(1)}}P_{G,\pi^{(2)}}}=\lambda^{|E_{\Delta}|} \bar q^{|E_{\cap}|} \bbE_{12}\cro{\prod_{(i,j)\in E_{\Delta}}\mathbf 1_{z_{i}=z_{j}}\prod_{\substack{(i,j)\in E_{\cap}\\ z_{i}=z_{j}}}{\bar p\over \bar q}}.$$
\smallskip

\noindent\underline{Case $u=0$.} For $u=0$, we have   $E_{\cap}=\emptyset$. We also have $|V_{\Delta}|=|V_{\cup}|=2m-2$ and $|E_{\Delta}|=2(m-1)$, so,
  when $u=0$,
\begin{align} \nonumber
\bbE_{12}\cro{P_{G,\pi^{(1)}}P_{G,\pi^{(2)}}} & = \lambda^{2(m-1)} \bbP_{12}\cro{z_{i}=z_{1}:i=3,\ldots, 2m-2}\nonumber\\
& = {\lambda^{2(m-1)} \over K^{2m-4}}= \label{eq:upper_cross_u:0}
\bbE_{12}\cro{P_{G,\pi^{(1)}}}\bbE_{12}\cro{P_{G,\pi^{(2)}}}\enspace .
\end{align}
\smallskip

\noindent\underline{Case $1\leq u\leq m-3$.}
We now have $|V_{\cup}|=2m-2-u$.
By definition of $u$, there are $m-u-2$ unmatched nodes in $\pi^{(2)}(V)\setminus \pi^{(1)}(V)$. Note that all the nodes in $\pi^{(2)}(V)\setminus \pi^{(1)}(V)$ belong to $G_{\Delta}$. Since the line graph $G$ is connected, all the nodes in $\pi^{(2)}(V)\setminus \pi^{(1)}(V)$ are connected through $G_{\Delta}$ to a node in $\pi^{(1)}(V)\cap \pi^{(2)}(V)$. Define $E_{\Delta}^{(2)}\subset E_{\Delta}$ as the subset of edges $(i,j)$ such that either $i\in \pi^{(2)}(V)\setminus \pi^{(1)}(V)$ or $j\in \pi^{(2)}(V)\setminus \pi^{(1)}(V)$. Then, we have 
\[
\bbE_{12}\left[\prod_{(i,j)\in E^{(2)}_{\Delta} } \mathbf 1_{z_{i}=z_{j}}  | z_{\pi^{(1)}(V)}  \right] \leq \frac{1}{K^{m-u-2}} \ , 
\]
almost surely. It then follows that 
\[
\bbE_{12}\cro{P_{G,\pi^{(1)}}P_{G,\pi^{(2)}}}\leq \lambda^{|E_{\Delta}|} \bar q^{|E_{\cap}|}\frac{1}{K^{m-u-2}}
 \bbE_{12}\cro{\prod_{(i,j)\in E_{\Delta}\setminus  E^{(2)}_{\Delta} }\mathbf 1_{z_{i}=z_{j}}\prod_{\substack{(i,j)\in E_{\cap}\\ z_{i}=z_{j}}}{\bar p\over \bar q}}.
 \]
Then, we consider $E_{\Delta}^{(1)}= E_{\Delta} \cap (\pi^{(1)}(V)\times \pi^{(1)}(V))\subset E_{\Delta}\setminus  E^{(2)}_{\Delta}$ as the subset of edges
that only arise through the labeling $\pi^{(1)}$ of $G$. We have 
\begin{align}\label{eq:upper_E_12}
\bbE_{12}\cro{P_{G,\pi^{(1)}}P_{G,\pi^{(2)}}}\leq \lambda^{|E_{\Delta}|} \bar q^{|E_{\cap}|}\frac{1}{K^{m-u-2}}
 \bbE_{12}\cro{\prod_{(i,j)\in E^{(1)}_{\Delta} }\mathbf 1_{z_{i}=z_{j}}\prod_{\substack{(i,j)\in E_{\cap}\\ z_{i}=z_{j}}}{\bar p\over \bar q}}.
\end{align}
Note that $E^{(1)}_{\Delta}$ and $E_{\cap}$ form a partition of the edges of the labelled line graph on $\pi^{(1)}(V)$. If $E_{\cap}=\emptyset$, we arrive at 
\[
\bbE_{12}\cro{P_{G,\pi^{(1)}}P_{G,\pi^{(2)}}}\leq \lambda^{|E_{\Delta}|} \frac{1}{K^{2m-u-4}}= \lambda^{2(m-1)} \frac{1}{K^{2m-u-4}} \enspace . 
\]
If $E_{\cap}\neq \emptyset$, then the graph $G_{\cap}$ is a collection of $t\geq 1$ connected components with $|V_{\cap}|=|E_{\cap}|+t \geq 2$ nodes in total. 
The term $\prod_{(i,j)\in E^{(1)}_{\Delta} }\mathbf 1_{z_{i}=z_{j}}$ is not equal to $0$ if and only if some constraints on the communities $z_i$ of the nodes  are satisfied. Namely, this sets the constraint that the nodes from each connected component of $G_\Delta^{(1)}$  must belong to the same community. Also, by definition of $\mathbb{P}_{12}$, we have $z_1=z_2$. Furthermore, by invariance of the choice of the communities, we can assume that $z_1$ is fixed. By considering separately the cases where $V_{\cap}\cap \{1,2\}=\emptyset, \{1\}, \{2\}, \{1,2\}$, we deduce that $m-2-|V_{\cap}|+t+1= m-1-|E_{\cap}|$ nodes in $G$ have their communities fixed by the constraints, and only $|E_{\cap}|-1$ can still be assigned freely - namely their label can still be chosen freely without setting the term $\prod_{(i,j)\in E^{(1)}_{\Delta} }\mathbf 1_{z_{i}=z_{j}}$ to $0$. This set of nodes - that we write $V_u$ - is composed of (i) exactly one node pro connected component in the graph $G_\Delta^{(1)}$  that one can choose freely and (ii) of all perfectly matched nodes, i.e.~the nodes that are not in $G_\Delta^{(1)}$, except $v_1$ and $v_2$. This yields
\[
\bbE_{12}\cro{\prod_{(i,j)\in E^{(1)}_{\Delta} }\mathbf 1_{z_{i}=z_{j}}\prod_{\substack{(i,j)\in E_{\cap}\\ z_{i}=z_{j}}}{\bar p\over \bar q}}\leq
\left(\frac{1}{K}\right)^{m-1-|E_{\cap}|}\mathbb{E}_{12}\left[\prod_{\substack{(i,j)\in E_{\cap}\\ z_{i}=z_{j}}}{\bar p\over \bar q}~~~\Big| \prod_{(i,j)\in E^{(1)}_{\Delta} }\mathbf 1_{z_{i}=z_{j}} = 1\right]\ .
\]
Write $E_\cap^{=} = \{(i,j)\in E_{\cap}: z_{i}=z_{j}\}$. Now we consider the labels of the $|E_{\cap}|-1$ nodes in $V_u$ that have their labels unconstrained. We order them as $(v'_1,\ldots, v'_{|E_{\cap}|-1})$ according to their order in the path $v_1,v_3, \ldots, v_m, v_2$ so that either (i) node $v_i'$ is connected to a connected component of $G_\Delta^{(1)}$ to which $v_{i-1}'$ belongs, or (ii) $v_i'$ is connected to $v'_{i-1}$ (with the convention $v'_0=v_1$). The communities of these nodes are sampled independently which leads to the set $E_\cap^{=}$ of identical edges. If the community of $v_i'$ is the same as that of $v_{i-1}'$ (resp.~of $v_1$ for $i=1$) - which happens with probability $1/K$ - then one edge is added in $E_\cap^{=}$ - namely the edge connecting $v_i'$ with either the neighboring  connected component represented by $v_{i-1}'$, or with $v_{i-1}'$ itself. Finally, the last edge in $E_{\cap}$ belongs to  $E_{\cap}^{=}$, if $v'_{|E_{\cap}|-1}$ and $v_2$ (and therefore $v_1$) belong to the same community. As a consequence, we can decompose $E_{\cap}^{=}$ as $E_{\cap}^{=}=N_1+N_2$ where $N_1\sim \mathrm{Bin}(|E_{\cap}|-1,1/K)$ and $N_2\in \{0,1\}= \mathbf{1}\{z_{v'_{|E_{\cap}|-1}}=z_{v_1}\}$. One easily checks that $\mathbb{P}_{12}[N_2=1|N_1=|E_{\cap}|-1]=1$ and that  $\mathbb{P}_{12}[N_2=1|N_1=|E_{\cap}|-2]=0$. Next, for any $a\in \{1,\ldots,|E_{\cap}|-3\}$, conditionally to $N_1=a$, we have $\mathbb{P}_{12}[N_2=1|N_1=|E_{\cap}|-2]\leq 1/(K-1)$. Indeed, if the community of second-to last node is the same as that of $v_1$, then this probability is equal to $0$, whereas, if the community of second-to last node differs from that of $v_1$, this probability is equal to $1/(K-1)$.  As a consequence, we get
\[\mathbb{E}_{12}\left[\prod_{\substack{(i,j)\in E_{\cap}\\ z_{i}=z_{j}}}{\bar p\over \bar q}~~~\Big| \prod_{(i,j)\in E^{(1)}_{\Delta} }\mathbf 1_{z_{i}=z_{j}} = 1\right] = \mathbb{E}_{12}\left[\left({\bar p\over \bar q}\right)^{|E_\cap^{=}|}~~~\Big| \prod_{(i,j)\in E^{(1)}_{\Delta} }\mathbf 1_{z_{i}=z_{j}} = 1\right] \leq \mathbb{E}\left[ \left({\bar p\over \bar q}\right)^{N_1+N_2} \right] .\]
%\[
%\bbE_{12}\cro{\prod_{(i,j)\in E^{(1)}_{\Delta} }\mathbf 1_{z_{i}=z_{j}}\prod_{\substack{(i,j)\in E_{\cap}\\ z_{i}=z_{j}}}{\bar p\over \bar q}}\leq
%\left(\frac{1}{K}\right)^{m-1-|E_{\cap}|}\mathbb{E}_{12}\left[ \left({\bar p\over \bar q}\right)^{N_1+N_2} \right]\ , 
%\]
We have to bound the exponential moment of $N_1+ N_2$. With probability at most $(K-1)^{-(|E_\cap|-1)}$, $N_1+N_2$ is equal to $|E_{\cap}|$. Conditionally to $N_1 < |E_{\cap}|-1$, $N_1+N_2$ is stochastically upper bounded by a Binomial distribution with parameters $|E_{\cap}|$ and $1/(K-1)$. 
For $b\in \mathbb R$, we therefore derive that  $\mathbb E[e^{a(N_1+N_2)}] \leq  \mathbb E[e^{bN_3}] + (K-1)^{-(|E_\cap|-1)} e^{b|E_\cap|}$, where $N_3 \sim  \mathrm{Bin}(|E_{\cap}|,1/(K-1))$. As a consequence, we obtain
\begin{align*}
\bbE_{12}\cro{\prod_{\substack{(i,j)\in E_{\cap}\\ z_{i}=z_{j}}}{\bar p\over \bar q}~~~\Big| \prod_{(i,j)\in E^{(1)}_{\Delta} }\mathbf 1_{z_{i}=z_{j}} = 1}  \leq
\left[\left(\frac{1}{K}\right)^{|E_{\cap}|-1}\left(\frac{\bar p}{\bar q}\right)^{|E_{\cap}|} + \exp\left(|E_{\cap}|\log\left(\frac{\bar{p}}{(K-1)\bar{q}}+1-\frac{1}{K-1}\right)\right)\right]\enspace ,
\end{align*}
%\begin{align*}
%\bbE_{12}\cro{\prod_{(i,j)\in E^{(1)}_{\Delta} }\mathbf 1_{z_{i}=z_{j}}\prod_{\substack{(i,j)\in E_{\cap}\\ z_{i}=z_{j}}}{\bar p\over \bar q}} \hspace{11cm}\\ \leq
%\left(\frac{1}{K}\right)^{m-1-|E_{\cap}|  }\left[\left(\frac{1}{K}\right)^{|E_{\cap}|-1}\left(\frac{\bar p}{\bar q}\right)^{|E_{\cap}|} + \exp\left(|E_{\cap}|\log\left(\frac{\bar{p}}{(K-1)\bar{q}}+1-\frac{1}{K-1}\right)\right)\right]\enspace ,
%\end{align*}
 Coming back to~\eqref{eq:upper_E_12}, we get
\begin{align*}
\bbE_{12}\cro{P_{G,\pi^{(1)}}P_{G,\pi^{(2)}}}&\leq \lambda^{|E_{\Delta}|} \bar p^{|E_{\cap}|}\frac{1}{K^{2m-u-4}}+ \lambda^{|E_{\Delta}|} \frac{1}{K^{2m-u-3}} \bar q^{|E_{\cap}|} \left(2\frac{\bar{p}}{\bar{q}}+   K \right)^{|E_{\cap}|}\\
&\leq \frac{\lambda^{2(m-1)}}{K^{2m-4}}\left[K^{u} \left(\frac{\bar p}{\lambda^2}\right)^{|E_{\cap}|}+ K^{u-1}\left[\left(4 \frac{\bar{p}}{\lambda^2}\right)^{|E_{\cap}|}+ \left[\frac{2K\bar{q}}{\lambda^2} \right]^{|E_{\cap}|}  \right]\right]\enspace . 
\end{align*}

Next, we observe that, if $u\leq m-3$, we have $|E_\cap|\in [1,u]$, whereas, for $u=m-2$, we have $|E_{\cap}|\in [1,m-1]$. This yields
\begin{align}\label{eq:upper_cross_u:middle}
\bbE_{12}\cro{P_{G,\pi^{(1)}}P_{G,\pi^{(2)}}}&\leq 
 \frac{\lambda^{2(m-1)}}{K^{2m-4}}\left[ 2\left(4K\frac{\bar p}{\lambda^2}\right)^{u}+ %\frac{1}{K}\left(4K \frac{\bar{p}}{\lambda^2}\right)^{u}+ 
 \frac{1}{K}\left(\frac{2K^2\bar{q}}{\lambda^2} \right)^{u} + 3K^u \right]\enspace \ , 
\end{align}
for $u\leq m-3$ and 
\begin{align}\label{eq:upper_cross_u:max}
\bbE_{12}\cro{P_{G,\pi^{(1)}}P_{G,\pi^{(2)}}}&\leq 
 \frac{\lambda^{2(m-1)}}{K^{2m-4}}\left[ \frac{2}{K} \left(\frac{4K\bar p}{\lambda^2}\right)^{m-1}+ %\frac{1}{K^2}\left(4K \frac{\bar{p}}{\lambda^2}\right)^{m-1}+ 
 \frac{1}{K^2}\left(\frac{2K^2\bar{q}}{\lambda^2} \right)^{m-1} + 3K^{m-2} \right]\enspace \ , 
\end{align}
for $u=m-2$.

\noindent\underline{Combining the three terms.}
Combining \eqref{eq:upper_cross_u:0}, \eqref{eq:upper_cross_u:middle}, and~\eqref{eq:upper_cross_u:max}, we obtain 
\begin{align*}
\text{var}_{12}(R_{12}) &= \sum_{u=0}^{m-2} \sum_{\substack{\pi^{(1)},\pi^{(2)}\in \Pi_{12}\\ |\text{range}(\pi^{(1)})\cap \text{range}(\pi^{(2)})|=2+u}}
\pa{\bbE_{12}\cro{P_{G,\pi^{(1)}}P_{G,\pi^{(2)}}}-\bbE_{12}\cro{P_{G,\pi^{(1)}}}\bbE_{12}\cro{P_{G,\pi^{(2)}}}}\\
& \leq   \sum_{u=1}^{m-2}   \binom{m-2}{u}^{2} {(n-2)!u!\over (n-2m+u+2)!}  \bbE_{12}\cro{P_{G,\pi^{(1)}}P_{G,\pi^{(2)}}}\\
&\leq \left(\frac{(n-2)!}{(n-m)!}\right)^2\sum_{u=1}^{m-2} (m-2)^{2u}  \frac{1}{(n-m)^u}\bbE_{12}\cro{P_{G,\pi^{(1)}}P_{G,\pi^{(2)}}} \\ 
%& \leq {(n-2)! (m-2)!\over (n-m)!} \bar p^{(m+1)(m-2)\over 2} \pa{{m-2 \over K}+\pa{\bar q\over \bar p}^{{m+1\over 2}}}^{m-2} \\ &\quad 
%+\sum_{u=1}^{m-3}   \binom{m-2}{u}^{2} {(n-2)!u!\over (n-2m+u+2)!} {\bar p^{(m+1)(m-2)-{(u+3)u\over 2}} \over K^{2m-4-u}}
%, 
&\leq \left(\frac{(n-2)!}{(n-m)!}\right)^2 \frac{\lambda^{2(m-1)}}{K^{2m-4}}\Bigg[\sum_{u=1}^{m-3} \left( 2\left(4Km^2\frac{\bar p}{ (n-m) \lambda^2}\right)^{u}+  \frac{1}{K}\left(\frac{2m^2K^2\bar{q}}{(n-m) \lambda^2} \right)^{u} + 3\left(m^2\frac{K}{n-m}\right)^u \right)   
\\ & \quad \quad\quad \quad  + m^{2m-4}\left( \frac{2(n-m)}{K} \left(\frac{4K\bar p}{(n-m)\lambda^2}\right)^{m-1}+ \frac{n-m}{K^2}\left(\frac{2K^2\bar{q}}{(n-m)\lambda^2} \right)^{m-1} + 3\left(\frac{K}{n-m}\right)^{m-2}\right)\Bigg]\\
&\leq \left(\frac{(n-2)!}{(n-m)!}\right)^2  \frac{\lambda^{2(m-1)}}{K^{2m-4}}m \Bigg[\frac{8Km^2 \bar p}{ (n-m) \lambda^2}+  \frac{2m^2K\bar{q}}{(n-m) \lambda^2} + 3\frac{m^2 K}{n-m}+ 3\left(\frac{m^2 K}{n-m}\right)^{m-2}\\ &\hspace{3cm} +  \frac{2nm^2}{K} \left(\frac{4Km^2\bar p}{(n-m)\lambda^2}\right)^{m-1} + \frac{n m^2}{K^2}\left(\frac{2m^2K^2\bar{q}}{(n-m)\lambda^2} \right)^{m-1}
\Bigg] \ . 
\end{align*}
This concludes the proof of~\eqref{eq:var:12:path}.

\subsection{Results under the distribution $\bbP_{\not 12}$}

\paragraph{Mean under $\bbP_{\not 12}$.}
We have
\begin{equation*}
\bbE_{\not 12}\cro{P_{G,\pi}(Y)}= \lambda^{|E|} \,\bbE_{\not 12}\cro{\prod_{(i,j)\in \pi(E)}\mathbf 1_{z_{i}=z_{j}}}=0,
\end{equation*}
since $z_{1}\neq z_{2}$ a.s. under  $\bbP_{\not 12}$.

\paragraph{Variance under $\bbP_{\not 12}$.}
Let $\pi^{(1)},\pi^{(2)}\in \Pi_{12}$ such that $|\text{range}(\pi^{(1)})\cap \text{range}(\pi^{(2)})|=2+u$, with $u\in \ac{0,\ldots,m-2}$. Following~\eqref{eq:proof:ingredient:basis}, we again have 
$$\bbE_{\not 12}\cro{P_{G,\pi^{(1)}}P_{G,\pi^{(2)}}}=\lambda^{|E_{\Delta}|} \bar q^{|E_{\cap}|} \bbE_{\not 12}\cro{\prod_{(i,j)\in E_{\Delta}}\mathbf 1_{z_{i}=z_{j}}\prod_{\substack{(i,j)\in E_{\cap}\\ z_{i}=z_{j}}}{\bar p\over \bar q}}.$$
\smallskip

If $u=0$, we have $E_{\cap}=\emptyset$ and since $z_1\neq z_2$, we deduce that 
$\bbE_{\not 12}\cro{P_{G,\pi^{(1)}}P_{G,\pi^{(2)}}}=0$. 

 \medskip

\noindent\underline{Case $1\leq u\leq m-2$.}
First, if $E_{\cap}=\emptyset$, we easily derive as above that $\bbE_{\not 12}\cro{P_{G,\pi^{(1)}}P_{G,\pi^{(2)}}}=0$. Hence, we assume henceforth that $|E_{\emptyset}|\geq 1$.
Arguing as for $\bbE_{12}$ in the previous subsection, we derive from~\eqref{eq:upper_E_12} that 
\begin{align*}
\bbE_{\not 12}\cro{P_{G,\pi^{(1)}}P_{G,\pi^{(2)}}}\leq \lambda^{|E_{\Delta}|} \bar q^{|E_{\cap}|}\frac{1}{K^{m-u-2}}
 \bbE_{\not 12}\cro{\prod_{(i,j)\in E^{(1)}_{\Delta} }\mathbf 1_{z_{i}=z_{j}}\prod_{\substack{(i,j)\in E_{\cap}\\ z_{i}=z_{j}}}{\bar p\over \bar q}}\ , 
\end{align*}
and we also have in a similar way as for $\bbE_{12}$ in the previous subsection
\[
\bbE_{\not 12}\cro{\prod_{(i,j)\in E^{(1)}_{\Delta} }\mathbf 1_{z_{i}=z_{j}}\prod_{\substack{(i,j)\in E_{\cap}\\ z_{i}=z_{j}}}{\bar p\over \bar q}}\leq
\left(\frac{1}{K}\right)^{m-1-|E_{\cap}|}\mathbb{E}\left[ \left({\bar p\over \bar q}\right)^{N'_1+N'_2} \right]\ , 
\]
where $N'_1\sim \mathrm{Bin}(|E_{\cap}|-1,1/K)$ and $N'_2$ satisfies $\bbP[N'_2=1|N_1=|E_{\cap}|-1]=0$ (as now $v_1$ and $v_2$ do not have the same labels) and $\bbP [N'_2=1|N'_1<|E_{\cap}|-1]\leq  \frac{1}{K-1}$, so that $N'_1+N'_2$ is stochastically dominated by Binomial distribution $\mathrm{Bin}(|E_{\cap}|,1/(K-1))$. 
This yields 
\begin{align*}
\bbE_{\not 12}\cro{P_{G,\pi^{(1)}}P_{G,\pi^{(2)}}}&\leq \lambda^{|E_{\Delta}|} \frac{1}{K^{2m-u-3}} \bar q^{|E_{\cap}|} \left(2\frac{\bar{p}}{\bar{q}}+   K \right)^{|E_{\cap}|}\\
&\leq \frac{\lambda^{2(m-1)}}{K^{2m-4}}\cdot  K^{u-1}\left[\left(4 \frac{\bar{p}}{\lambda^2}\right)^{|E_{\cap}|}+ \left[\frac{2K\bar{q}}{\lambda^2} \right]^{|E_{\cap}|}  \right]\enspace . 
\end{align*}
Recall that $|E_{\cap}|\in [1,u]$ if $u\leq m-3$ and $|E_{\cap}|\in [1,u+1]$ for $u=m-2$. This yields 
\begin{align}\label{eq:upper_cross_u:middle:not12}
\bbE_{\not 12}\cro{P_{G,\pi^{(1)}}P_{G,\pi^{(2)}}}&\leq 
 \frac{\lambda^{2(m-1)}}{K^{2m-4}}\left[ \frac{1}{K}\left(4K\frac{\bar p}{\lambda^2}\right)^{u}+ %\frac{1}{K}\left(4K \frac{\bar{p}}{\lambda^2}\right)^{u}+ 
 \frac{1}{K}\left(\frac{2K^2\bar{q}}{\lambda^2} \right)^{u} + 2K^{u-1} \right]\enspace \ , 
\end{align}
for $u\leq m-3$ and 
\begin{align}\label{eq:upper_cross_u:max:not12}
\bbE_{\not 12}\cro{P_{G,\pi^{(1)}}P_{G,\pi^{(2)}}}&\leq 
 \frac{\lambda^{2(m-1)}}{K^{2m-4}}\left[ \frac{1}{K^2} \left(\frac{4K\bar p}{\lambda^2}\right)^{m-1}+ %\frac{1}{K^2}\left(4K \frac{\bar{p}}{\lambda^2}\right)^{m-1}+ 
 \frac{1}{K^2}\left(\frac{2K^2\bar{q}}{\lambda^2} \right)^{m-1} + 2K^{m-2} \right]\enspace \ , 
\end{align}
for $u=m-2$.

\noindent\underline{Final bound on the variance.}
Combining \eqref{eq:upper_cross_u:middle:not12} and~\eqref{eq:upper_cross_u:max:not12},  we obtain 
\begin{align*}
\text{var}_{\not 12}(T_{12}) &= \sum_{u=0}^{m-2} \sum_{\substack{\pi^{(1)},\pi^{(2)}\in \Pi_{12}\\ |\text{range}(\pi^{(1)})\cap \text{range}(\pi^{(2)})|=2+u}}
\pa{\bbE_{\not 12}\cro{P_{G,\pi^{(1)}}P_{G,\pi^{(2)}}}-\bbE_{\not 12}\cro{P_{G,\pi^{(1)}}}\bbE_{\not 12}\cro{P_{G,\pi^{(2)}}}}\\
&\leq \sum_{u=1}^{m-2}  \binom{m-2}{u}^{2} {(n-2)!u!\over (n-2m+u+2)!}\bbE_{\not 12}\cro{P_{G,\pi^{(1)}}P_{G,\pi^{(2)}}} \\ 
%& \leq {(n-2)! (m-2)!\over (n-m)!} \bar p^{(m+1)(m-2)\over 2} \pa{{m-2 \over K}+\pa{\bar q\over \bar p}^{{m+1\over 2}}}^{m-2} \\ &\quad 
%+\sum_{u=1}^{m-3}   \binom{m-2}{u}^{2} {(n-2)!u!\over (n-2m+u+2)!} {\bar p^{(m+1)(m-2)-{(u+3)u\over 2}} \over K^{2m-4-u}}
%, 
&\leq \left(\frac{(n-2)!}{(n-m)!}\right)^2  \frac{\lambda^{2(m-1)}}{K^{2m-4}}\Bigg[\sum_{u=1}^{m-3} \left(\frac{1}{K} \left(4Km^2\frac{\bar p}{ (n-m) \lambda^2}\right)^{u}+  \frac{1}{K}\left(\frac{2m^2K^2\bar{q}}{(n-m) \lambda^2} \right)^{u} + 2\left(m^2\frac{K}{n-m}\right)^u \right)   
\\ & \quad \quad\quad \quad \quad \quad + m^{2m-4}\left( \frac{n}{K^2} \left(\frac{4K\bar p}{(n-m)\lambda^2}\right)^{m-1}+ \frac{n}{K^2}\left(\frac{2K^2\bar{q}}{(n-m)\lambda^2} \right)^{m-1} + 2\left(\frac{K}{n-m}\right)^{m-2}\right)\Bigg]\\
&\leq \left(\frac{(n-2)!}{(n-m)!}\right)^2 \frac{\lambda^{2(m-1)}}{K^{2m-4}}m \Bigg[\frac{4m^2 \bar p}{ (n-m) \lambda^2}+  \frac{2m^2K\bar{q}}{(n-m) \lambda^2}\\ 
&\hspace{1.5cm}+ 2m^2\frac{K}{n-m}+ 2\left(\frac{mK}{n-m}\right)^{m-2}+  \frac{nm^2}{K} \left(\frac{4Km^2\bar p}{(n-m)\lambda^2}\right)^{m-1}  + \frac{n m^2}{K^2}\left(\frac{2m^2K^2\bar{q}}{(n-m)\lambda^2} \right)^{m-1}
\Bigg] \ . 
\end{align*}
We  have proved~\eqref{eq:var:not12:path}.

\section{Proof of Proposition~\ref{prop:key:blow-up}}\label{sec:key:blow-up}

In this proof, we need additional notation. Let $V_{\mathrm{cyc},0}$ denote the set of cycle nodes  having the same community as $v_1$ and $v_2$. For $k=1,\dots,I$ let $V_{\mathrm{cyc},k}$ be the set of cycle nodes of community $k$. For short, we write 
$V_{\mathrm{cyc},\neq 0}=\cup_{k=1}^IV_{\mathrm{cyc},k}$. 

We consider separately cycle edges and fastener edges. For $k=0,\ldots, I$, we write $E^{\uparrow}_{\mathrm{cyc},k}\subset E_{\mathrm{cyc}}$ for the collection of cycle edges that are incident to a node in $V_{\mathrm{cyc},k}$ and a node from a different community --- these edges will be called \textit{boundary} edges. For $k=0,\ldots, I$, we define $V_{\mathrm{fst},k} := V_{\mathrm{cyc},k} \cap V_{\mathrm{fst}}$, the subset of fastener nodes within community $k$. Note that $|V_{\mathrm{fst},k}|$ also corresponds to the number of fastener edges within community $k$. For short, we also write $V_{\mathrm{fst},\neq 0}=\cup_{k=1}^I V_{\mathrm{fst},k}$.

\bigskip

First, we consider the specific  and trivial case where $I=1$ and $V_{\mathrm{cyc},1}=V_{\mathrm{cyc}}$. Then, we obviously have 
\[
|E^{\neq}|= |V_{\mathrm{fst}}|=a\kappa\gamma \geq \gamma+1 \geq (\gamma+a)I\ , 
\] 
since $\kappa\geq 2/a$.  We assume henceforth $V_{\mathrm{cyc},k}\neq V_{\mathrm{cyc}}$ for any $k$.

The proof mainly divides into the two following lemmas. The first one states that the boundary edges of any community that does not contain $v_1$ or $v_2$ --- namely the sets of edges $E_{\mathrm{cyc},k}^{\uparrow}$ for $k \in \{1, \ldots, I\}$ --- is of size at least $2\gamma$.

\begin{lemma}\label{lem:Ecyc_k}
If $|V_{\mathrm{cyc},k}| < \kappa \gamma$, i.e. $V_{\mathrm{cyc},k}\neq V_{\mathrm{cyc}}$, we have 
\[
|E_{\mathrm{cyc},k}^{\uparrow}|\geq 2\gamma\enspace . 
\]
\end{lemma}
The second lemma lower bounds the sum of twice the number of fasteners $V_{\mathrm{fst},\neq 0}$ that do not belong to the same community as $v_1$ or $v_2$, plus the number of boundary edges for the community of $v_1,v_2$, namely $E_{\mathrm{cyc},0}^{\uparrow}$. If the nodes $V_{\mathrm{cyc},\neq 0}$ were sampled uniformly, then $|V_{\mathrm{fst},\neq 0}|$ would be of the order of $a|V_{\mathrm{cyc},\neq 0}|$. However, the number of fasteners in $V_{\mathrm{fst},\neq 0}$ can be much smaller for unfavorable configurations such that fastener nodes are in the same community as $v_1$ or $v_2$. Nevertheless, if this happens, then the number of boundary edges in $E_{\mathrm{cyc},0}^{\uparrow}$ must be high accordingly to compensate this drop in the number of fasteners in $V_{\mathrm{fst},\neq 0}$. %as is stated in the next lemma. 
\begin{lemma}\label{lem:eq:out_0}
    We have 
 \begin{equation}\label{eq:objective_fastener}
|E_{\mathrm{cyc},0}^{\uparrow}|+ 2 |V_{\mathrm{fst},\neq 0}| \geq 2 a |V_{\mathrm{cyc},\neq 0}| \geq 2a I\enspace . 
\end{equation}
\end{lemma}

Let us explain how Proposition~\ref{prop:key:blow-up} follows from Lemmas~\ref{prop:key:blow-up} and~\ref{lem:Ecyc_k}.
First, we observe that  an edge between two communities is either a fastener edge and is therefore incident to $V_{\mathrm{fst}, \neq 0}$ or is a cycle edge and therefore both arises in $E_{\mathrm{cyc},k}^{\uparrow}$ and $E_{\mathrm{cyc},k'}^{\uparrow}$ for $k\neq k'$ being the two communities to which the nodes of the edge belong.
Hence, we have the following decomposition
\[
2|E^{\neq}|\geq \sum_{k=0}^{I} |E_{\mathrm{cyc},k}^{\uparrow}| + 2|V_{\mathrm{fst}, \neq 0}|\ ,
\]
and  Lemmas~\ref{lem:Ecyc_k} and~\ref{lem:eq:out_0} ensure  that 
\[
2|E^{\neq}|\geq 2\gamma I + 2a I \ , 
\]
which concludes the proof of the Proposition~\ref{prop:key:blow-up}.

\begin{proof}[Proof of Lemma~\ref{lem:Ecyc_k}]
We consider three cases, 
(i) at least a layer does not intersect $V_{\mathrm{cyc},k}$; (ii)  each layer intersects  $V_{\mathrm{cyc},k}$ and $|V_{\mathrm{cyc},k}| \leq (\kappa-1) \gamma$; (iii)   each layer intersects  $V_{\mathrm{cyc},k}$ and $|V_{\mathrm{cyc},k}| > (\kappa-1) \gamma$. 

\medskip

\noindent
\underline{\bf Case (i): at least a layer does not intersect $V_{\mathrm{cyc},k}$.} In this case there exist two layers $i_1$ and $i_2$ (not necessarily distinct) such that 
\[
L_{i_1}\cap V_{\mathrm{cyc},k}\neq \emptyset;\quad  L_{i_1-1}\cap V_{\mathrm{cyc},k}= \emptyset;\quad  L_{i_2}\cap V_{\mathrm{cyc},k}\neq \emptyset;\quad  L_{i_2+1}\cap V_{\mathrm{cyc},k}= \emptyset ,
\]
with the convention that $L_0=L_{\kappa}$ and $L_{\kappa+1}= L_1$. 
There are $\gamma$ edges between $L_{i_1-1}$ and one node of $V_{\mathrm{cyc},k}$ in $L_{i_1}$. Similarly, there are $\gamma$ edges between $L_{i_2+1}$ and one node of $V_{\mathrm{cyc},k}$ in $L_{i_2}$. Since $\kappa \geq 3$, these edges are distinct and we deduce $|E_{\mathrm{cyc},k}^{\uparrow}|\geq 2\gamma$. 

\medskip

\noindent
\underline{\bf Case (ii): each layer intersects  $V_{\mathrm{cyc},k}$, with $|V_{\mathrm{cyc},k}| \leq (\kappa-1) \gamma$.} Since in this case each layer intersects $V_{\mathrm{cyc},k}$, it follows from the definition of the graph that each node $v$ in $V_{\mathrm{cyc}}\setminus V_{\mathrm{cyc},k}$ is connected to $V_{\mathrm{cyc},k}$ by at least two edges corresponding to nodes of $V_{\mathrm{cyc},k}$ in the previous layer and in the following layer of that of $v$. As a consequence, 
\[
|E_{\mathrm{cyc},k}^{\uparrow}| \geq 2|V_{\mathrm{cyc}}\setminus V_{\mathrm{cyc},k}|= 2[\kappa \gamma- |V_{\mathrm{cyc},k}|]\geq 2\gamma\ , 
\]
as soon as $|V_{\mathrm{cyc},k}| \leq (\kappa-1) \gamma$.

\medskip

\noindent
\underline{\bf Case (iii): each layer intersects  $V_{\mathrm{cyc},k}$, with $|V_{\mathrm{cyc},k}| > (\kappa-1) \gamma$.}
It remains to consider the case where $|V_{\mathrm{cyc},k}|\in [(\kappa-1)\gamma+1, \kappa\gamma-1]$ which is non empty only if $\gamma>1$. Then, we consider the complementary set $V'= V_{\mathrm{cyc}}\setminus V_{\mathrm{cyc},k}$ which satisfies $|V'|\leq\gamma-1\leq \kappa(\gamma-1)$. Since $V_{\mathrm{cyc},k}$ and $V'$ have a symmetric role for counting $|E_{\mathrm{cyc},k}^{\uparrow}|$, 
 we can apply the same arguments as in Case~(i) and~(ii), switching the role of $V_{\mathrm{cyc},k}$ and $V'$. Accordingly,  $|E_{\mathrm{cyc},k}^{\uparrow}|\geq 2\gamma$, and the proof of Lemma~\ref{lem:Ecyc_k} is complete.

\end{proof}

\begin{proof}[Proof of Lemma~\ref{lem:eq:out_0}]

Write $\mathcal{L}_0= \{\omega\in [\kappa]: L_{\omega}\cap V_{\mathrm{cyc},0}\neq \emptyset\}$ the collection of layers that intersect $V_{\mathrm{cyc},0}$. To start with, we consider three cases (a) $\mathcal{L}_0=\emptyset$, (b) $\mathcal{L}_0=[\kappa]$, and (c) $|\mathcal{L}_0|\in [2;\kappa-1]$.

\noindent
\underline{\bf Case (a): $\mathcal{L}_0=\emptyset$.} In this case no fastener node belongs to the community of $v_1, v_2$, therefore we have 
\[
|V_{\mathrm{fst},\neq 0}|=  |V_{\mathrm{fst}}|= a |V_{\mathrm{cyc}}|\geq a  |V_{\mathrm{cyc},\neq 0}| \ , 
\]
and~\eqref{eq:objective_fastener} holds.

\medskip 

\noindent
\underline{\bf Case (b): $\mathcal{L}_0=[\kappa]$.} In this case we observe as in the proof of Lemma~\ref{lem:Ecyc_k} (case (ii)), that any node $v$ in $V_{\mathrm{cyc}}\setminus V_{\mathrm{cyc},0}$ is connected to two nodes in $V_{\mathrm{cyc},0}$, one in the previous layer and one in the following layer. Thus, we get 
\[
|E_{\mathrm{cyc},0}^{\uparrow}|\geq 2 \sum_{k=1}^I |V_{\mathrm{cyc},k}| =  2|V_{\mathrm{cyc},\neq 0}| \geq 2a|V_{\mathrm{cyc},\neq 0}|\ , 
\]
and~\eqref{eq:objective_fastener} holds.

\medskip 

\noindent
\underline{\bf Case (c): $|\mathcal{L}_0|\in [2;\kappa-1]$.} This is the intermediary case between cases (a) and (b). Consider $[\kappa]$ as the nodes of a cycle graph and $\mathcal{L}_0$ as a subset of nodes of this graph. Write $\#\mathrm{CC}(\mathcal L_0)\geq 1$ for the number of connected components of the subgraph induced by $\mathcal{L}_0$. 
\smallskip

\noindent
\underline{Step 1: lower bound of $|E_{\mathrm{cyc},0}^{\uparrow}|$.} We partition $\mathcal{L}_0$ into $\mathcal{L}_{0,\mathrm{in}}$, $\mathcal{L}_{0,\mathrm{bd}}$, and $\mathcal{L}_{0,\mathrm{sg}}$ where $\mathcal{L}_{0,\mathrm{in}}$ are internal layers of $\mathcal{L}_0$ that is, they are not at the boundary of a connected component of $\mathcal{L}_0$, $\mathcal{L}_{0,\mathrm{sg}}$ correspond to connected components that are singletons, and $\mathcal{L}_{0,\mathrm{bd}}$ stands for all the the remaining boundary layers. 

Fix any $\omega\in \mathcal{L}_{0,\mathrm{in}}$ and any node $v\in L_\omega\setminus V_{\mathrm{cyc},0}$. This node $v$ is connected by at least two edges to $V_{\mathrm{cyc},0}$ (at least one node in the previous layer and one node in the following layer). Fix any $\omega\in\mathcal{L}_{0,\mathrm{bd}}$. For any node $v\in L_\omega\setminus V_{\mathrm{cyc},0}$, we also observe that $v$ is connected by at least one edge to $V_{\mathrm{cyc},0}$. Also, for any $v\in L_{\omega}\cap V_{\mathrm{cyc},0}$, there are $\gamma$ edges between $v$ and nodes in a layer that does not belong to $\mathcal{L}_0$. Finally, we consider any $\omega\in \mathcal{L}_{0,\mathrm{sg}}$  and any $v\in L_{\omega}\cap V_{\mathrm{cyc},0}$. This node is connected to $2\gamma$ nodes outside the layers in $\mathcal{L}_0$. Gathering these bounds, we get
\begin{align}\nonumber
|E_{\mathrm{cyc},0}^{\uparrow}|&\geq 2 \sum_{\omega\in\mathcal{L}_{0,\mathrm{in}}}|L_{\omega}\setminus V_{\mathrm{cyc},0}|+ \sum_{\omega\in\mathcal{L}_{0,\mathrm{bd}}}[|L_{\omega}\setminus V_{\mathrm{cyc},0}| + \gamma]  +  2\sum_{\omega\in\mathcal{L}_{0,\mathrm{sg}}}\gamma  \\
& \geq 2 \sum_{\omega\in\mathcal{L}_{0}}|L_{\omega}\setminus V_{\mathrm{cyc},0}| + 2\#\mathrm{CC}(\mathcal L_0)\  , \label{eq:lower:E:cyc,0,up}
\end{align}
where we used that, for any $\omega\in \mathcal{L}_{0}$, $|L_{\omega}\setminus V_{\mathrm{cyc},0}|\leq \gamma - 1$, and where we recall that $\#\mathrm{CC}(\mathcal L_0)$ is the number of connected components of $\mathcal{L}_0$. 

\smallskip

\noindent
\underline{Step 2: lower bound of $|V_{\mathrm{fst},\neq 0}|$.} We now focus on the set $[\kappa]\setminus \mathcal{L}_0$ of layers that do not intersect $V_{\mathrm{cyc},0}$. We start from 
\[
|V_{\mathrm{fst},\neq 0}|\geq \sum_{\omega\in [\kappa]\setminus \mathcal{L}_0} |V_{\mathrm{fst}}\cap L_{\omega}|.
\]
Although $|V_{\mathrm{fst}}\cap L_{\omega}|\in [\lfloor a\gamma\rfloor, \lceil a\gamma\rceil]$ for each $\omega$, we have to be slightly more careful to control $|V_{\mathrm{fst},\neq 0}|$. The set $[\kappa]\setminus \mathcal{L}_0$ decomposes into $\#\mathrm{CC}(\mathcal L_0)$ connected components, since the set $\mathcal L_0$ decomposes in $\#\mathrm{CC}(\mathcal L_0)$ connected components. We write $r_1,\ldots, r_{\#\mathrm{CC}(\mathcal L_0)}$ for their respective number of layers. Into the $i$-th connected components, there are at least $\lfloor a r_i\gamma\rfloor$ fasteners. From this and from $\sum_i r_i \geq \kappa - |\mathcal L_0|$, we deduce that 
\begin{align}\nonumber
|V_{\mathrm{fst},\neq 0}|&\geq \sum_{i=1}^{\#\mathrm{CC}(\mathcal L_0)} \lfloor a r_i \gamma\rfloor \geq a \left[\kappa - |\mathcal{L}_0|\right]\gamma - \#\mathrm{CC}(\mathcal L_0)\\
&= a \sum_{\omega\in [\kappa]\setminus \mathcal L_0}|L_{\omega}\setminus V_{\mathrm{cyc},0}|   - \#\mathrm{CC}(\mathcal L_0)\enspace . \label{eq:fastener_open}
\end{align}
Combining~\eqref{eq:lower:E:cyc,0,up} and \eqref{eq:fastener_open}, we obtain 
\begin{align*}
|E_{\mathrm{cyc},0}^{\uparrow}|+ 2 |V_{\mathrm{fst},\neq 0}| &\geq  2a \left[\sum_{\omega\in [\kappa]}|L_{\omega}\setminus V_{\mathrm{cyc},0}| \right] =  2a \left[\sum_{\omega\in [\kappa]}|L_{\omega}\cap V_{\mathrm{cyc},\neq 0}| \right]\\
& \geq 2a |V_{\mathrm{cyc},\neq 0}|\ ,
\end{align*}
which concludes the proof of Lemma~\ref{lem:eq:out_0}.
\end{proof}

\section{Proof of Proposition~\ref{prop:mean:variance:blow_up}}\label{sec:mean:variance:blow_up}

This proposition is a straightforward consequence of the following result. 

\begin{proposition}\label{lem:blow_up}
Assume that $q\leq 1/4$, $q+2\lambda \leq 1$, $n\geq 2\kappa\gamma +4$, and set $\bar p=\bar q+\lambda(1-2q)$. We also assume that $a\kappa \gamma$ is an even integer and $\kappa\geq 3\vee 2/a$. 
Then, for any $1\leq i< j\leq n$, we have 
\begin{align}
\bbE_{ij}\cro{R_{ij}}&={(n-2)!\over (n-\kappa\gamma-2)!} \left({\lambda^{ \gamma + a } \over K}\right)^{\kappa \gamma }, \label{eq:mean:12:blow-up}\\
\bbE_{\not ij}\cro{R_{ij}}&= 0, \label{eq:mean:not12:blow-up}\\
 \mathrm{var}_{\not ij}(R_{ij})\vee \mathrm{var}_{ij}(R_{ij}) & \leq \bbE^2_{12}[R_{12}](\kappa\gamma)^2 \Bigg[\frac{2(\kappa \gamma)^{3} K^2 } {n}  \left(\frac{\bar{q}}{\lambda^2}\vee \frac{\bar{q}}{\bar p } \right)^{\gamma + a}  + \frac{2(\kappa \gamma)^{3} K } {n} \left(\frac{\bar{p}}{\lambda^2}\vee 1 \right)^{\gamma + a} \nonumber  \\& \quad \quad  + \left[\frac{2(\kappa \gamma)^{3} K^2 } {n}   \left(\frac{\bar{q}}{\lambda^2}\vee \frac{\bar{q}}{\bar p } \right)^{\gamma + a}\right]^{\kappa \gamma}  + \left[\frac{2(\kappa \gamma)^{3} K } {n} \left(\frac{\bar{p}}{\lambda^2}\vee 1 \right)^{\gamma + a}\right]^{\kappa \gamma}   \Bigg] . 
\label{eq:var:12:blow-up}
\end{align}
\end{proposition}

\subsection{Notation and preliminaries}

To control the expectation and the variance of $R_{ij}$, we need to introduce some graph notation as in Appendix~A of~\cite{carpentier2025phase}.
For a motif $G=(V,E)$ and a labeling $\pi\in\Pi_{ij}$, we define the labeled graph $\pi(G)$ as the graph with node set $\ac{\pi(v):v\in V}$ and edge set $\ac{(\pi(v),\pi(v')):(v,v')\in E}$. Given two labelings $\pi^{(1)}$ and $\pi^{(2)}\in \Pi_{ij}$, the labeled merged graph $G_{\cup}=(V_{\cup},E_{\cup})$ is defined as the union of $\pi^{(1)}(G^{(1)})$ and $\pi^{(2)}(G^{(2)})$, with the convention that two same edges are merged into a single edge. Similarly,  we define the  intersection graph  $G_{\cap}=(V_{\cap},E_{\cap})$ and the  symmetric difference graph $G_{\Delta}=(V_{\Delta},E_{\Delta})$ so that $E_{\Delta}=E_{\cup}\setminus E_{\cap}$. Here, $V_{\cap}$ (resp. $V_{\Delta}$) is the set of nodes induced by the edges $E_{\cap}$ (resp. $E_{\Delta}$) so that $G_{\cap}$ (resp. $G_{\Delta}$) does not contain any isolated node.

The two following expressions are the starting point of our analysis. They easily derive from the definition~\eqref{eq:definition:Y} of $Y$:
\begin{align}
 \bbE\cro{P_{G, \pi}}&=    \bbE\cro{\prod_{(v,v')\in E }Y_{\pi(v)\pi(v')}}   \label{eq:proof:ingredient:basis:mean}
    =   \bbE\cro{\prod_{(v,v')\in E} (\lambda \mathbf{1}_{z_{\pi(v)}=z_{\pi(v')}})},\\
   \bbE\cro{P_{G, \pi^{(1)}}P_{G, \pi^{(2)}}} 
   &   = \bbE\cro{\prod_{(i,j)\in E_{\Delta}}Y_{ij}\prod_{(i,j)\in E_{\cap}}Y_{ij}^2}   \nonumber%\label{eq:proof:ingredient:basis}
    =   \bbE\cro{\prod_{(i,j)\in E_{\Delta}} (\lambda \mathbf{1}_{z_{i}=z_{j}}) \prod_{(i,j)\in E_{\cap}}\bar q \pa{\bar p\over \bar q}^{ \mathbf{1}_{z_{i}=z_{j}}}}.
\end{align}

Without loss of generality,  we assume in the following that $(i,j)=(1,2)$.

\subsection{Results under the distribution $\bbP_{12}$}

In what follows, we consider the graph $G := G_{\kappa,\gamma,a}=(V,E)$.  We will study respectively the expectation and variance of the associated polynomial, both under $\bbP_{12}$ and under $\bbP_{\not 12}$.

\paragraph{Mean under $\bbP_{12}$.}
Since the graph $G$ is connected, we have from Equation~\eqref{eq:proof:ingredient:basis:mean}
\begin{equation}\label{eq:mean:12:pi:blow-up}
\bbE_{12}\cro{P_{G,\pi}(Y)}= \lambda^{|E|} \,\bbE_{ 12}\cro{\prod_{(i,j)\in \pi(E)}\mathbf 1_{z_{i}=z_{j}}}= \frac{1}{K^{|V|-2}} \lambda^{|E|} = \frac{1}{K^{\kappa \gamma}} \lambda^{(\gamma+a)\kappa\gamma},
\end{equation}
since $z_{1}= z_{2}$ a.s. under  $\bbP_{12}$. The identity~\eqref{eq:mean:12:blow-up} follows.

\paragraph{Variance under $\bbP_{12}$.}
Let us turn to proving~\eqref{eq:var:12:blow-up}, which is the main part of the proof. Fix $\pi^{(1)},\pi^{(2)}\in\Pi_{12}$. 
We start by controlling $\bbE_{12}\cro{P_{G,\pi^{(1)}}P_{G,\pi^{(2)}}}$. Let $u\in\{0,\dots,|V_{\mathrm{cyc}}|\}$ be the integer so that
\[
|\mathrm{range}(\pi^{(1)})\cap\mathrm{range}(\pi^{(2)})| = 2+u\enspace.
\]
Recall that  $V_{\cap},V_{\Delta},E_{\cap},E_{\Delta}$ stand for the common and symmetric-difference vertex- and edge-sets. 
Following \eqref{eq:proof:ingredient:basis}, we have 
\[
\bbE_{12}\cro{P_{G,\pi^{(1)}}P_{G,\pi^{(2)}}}
=\lambda^{|E_{\Delta}|} \bar q^{|E_{\cap}|} \,
\bbE_{12}\cro{\prod_{(i,j)\in E_{\Delta}}\mathbf 1_{z_{i}=z_{j}}
\prod_{\substack{(i,j)\in E_{\cap}\\ z_{i}=z_{j}}}{\bar p\over \bar q}}.
\]

\noindent\underline{Case $u=0$.} Then, we have $E_{\cap}=\emptyset$, $|V_{\Delta}|=|V_{\cup}|=2\kappa \gamma+2$ and $|E_{\Delta}|=2\kappa \gamma(\gamma+a)$. Therefore
\begin{align}\nonumber
\bbE_{12}\cro{P_{G,\pi^{(1)}}P_{G,\pi^{(2)}}}
&= \lambda^{2\kappa \gamma(\gamma+a)} \,\bbP_{12}\cro{z_i=z_1\ \text{for }i\in V_{\Delta}} \\
& =  \lambda^{2\kappa \gamma(\gamma+a)} K^{-2\kappa\gamma}  \nonumber \\ 
& = \bbE_{12}\cro{P_{G,\pi^{(1)}}}\bbE_{12}\cro{P_{G,\pi^{(2)}}}\enspace , \label{eq:upper_cross_u:0:blow-up}
\end{align}
by~\eqref{eq:mean:12:pi:blow-up}.

\noindent\underline{Case $1\le u\le \kappa \gamma$.} In this case $|V_{\cup}|=2\kappa \gamma+2-u$ and there are $\kappa \gamma -u$ vertices that appear only in $\pi^{(2)}(V)\setminus\pi^{(1)}(V)$. All these vertices belong to $G_{\Delta}$ and each such vertex is connected (within $G_\Delta$) to at least one vertex in the intersection $\pi^{(1)}(V)\cap\pi^{(2)}(V)$ because  $G$ is connected. Let $E_{\Delta}^{(2)}\subset E_{\Delta}$ be the set of edges of $G_{\Delta}$ that arise from $\pi^{(2)}[G]$. In particular, $E_{\Delta}^{(2)}$ contains all the edges that have at least one endpoint in $\pi^{(2)}(V)\setminus\pi^{(1)}(V)$. Conditionally on the communities  of the vertices of $\pi^{(1)}(V)$, the product of indicators on $E_{\Delta}^{(2)}$ enforces the community of the  $\kappa \gamma-u$ vertices in $\pi^{(2)}(V)\setminus\pi^{(1)}(V)$. Hence,  we have 
\[
\bbE_{12}\left[\prod_{(i,j)\in E^{(2)}_{\Delta}} \mathbf 1_{z_{i}=z_{j}} \,\Bigg|\, z_{\pi^{(1)}(V)}\right] \le \frac{1}{K^{\kappa \gamma-u}}\qquad\text{a.s.}
\]
Therefore (conditioning and removing $E^{(2)}_\Delta$), we get 
\[
\bbE_{12}\cro{P_{G,\pi^{(1)}}P_{G,\pi^{(2)}}}
\le \lambda^{|E_{\Delta}|} \bar q^{|E_{\cap}|}\frac{1}{K^{\kappa \gamma-u}}
 \bbE_{12}\cro{\prod_{(i,j)\in E_{\Delta}\setminus  E^{(2)}_{\Delta} }\mathbf 1_{z_{i}=z_{j}}\prod_{\substack{(i,j)\in E_{\cap}\\ z_{i}=z_{j}}}{\bar p\over \bar q}}.
\]
Now, define $E^{(1)}_{\Delta}:= E_{\Delta}\setminus E^{(2)}_{\Delta}$. Observe that  $E^{(1)}_{\Delta}$ together with $E_{\cap}$ form a partition of the edges of the labeled copy of $G$ coming from $\pi^{(1)}$. Given the community assignments $(z_i)_{i\in \pi^{(1)}(V)}$, denote $\ell(z)$ the number of distinct communities in $\pi^{(1)}(V)$, and $|E^{\neq}|$ the number of edges in the graph between distinct communities. We have
\begin{align}
\bbE_{12}\cro{P_{G,\pi^{(1)}}P_{G,\pi^{(2)}}} \nonumber 
&\le \lambda^{|E_{\Delta}|}\bar p^{|E_{\cap}|}\frac{1}{K^{\kappa \gamma-u}}
\;\bbE_{12}\cro{\left({\bar q \over \bar p}\right)^{|E^{\neq}|} \prod_{(i,j)\in E^{(1)}_{\Delta}}\mathbf 1_{z_{i}=z_{j}}}\\ 
&\le \lambda^{|E_{\Delta}|}\bar p^{|E_{\cap}|}\frac{1}{K^{\kappa \gamma-u}}
\;\bbE_{12}\cro{\left({\bar q \over \bar p}\right)^{(\ell(z)-1)(\gamma + a)} \prod_{(i,j)\in E^{(1)}_{\Delta}}\mathbf 1_{z_{i}=z_{j}}} ,\label{eq:up_E_12:blow-up}
\end{align}
where, importantly, we apply Proposition~\ref{prop:key:blow-up} in the second line. Consider the  graph $\overline{\mathcal{G}}^{(1)}_{\Delta}=(\pi^{(1)}(V),E^{(1)}_{\Delta})$.  In Equation~\eqref{eq:up_E_12:blow-up} above, the condition $\prod_{(i,j)\in E^{(1)}_{\Delta}}\mathbf 1_{z_{i}=z_{j}}=1$ enforces that the nodes of each of the connected components of $\overline{\mathcal{G}}^{(1)}_{\Delta}$ are in the same community. Let $\mathrm{cc}$ denote the number of connected components of $\overline{\mathcal{G}}^{(1)}_{\Delta}$ when we remove those containing $\pi^{(1)}(v_1)=1$ or  $\pi^{(1)}(v_2)=2$. Then, conditionally to $\prod_{(i,j)\in E^{(1)}_{\Delta}}\mathbf 1_{z_{i}=z_{j}}=1$, we have $(\ell(z)-1)\in [0,\mathrm{cc}]$ and for any $x\in [0,\mathrm{cc}]$, we have 
\begin{equation}\label{eq:l(z)}
\mathbb{P}_{12}\cro{\ell(z)-1= x\,\Bigg|\, \prod_{(i,j)\in E^{(1)}_{\Delta}}\mathbf 1_{z_{i}=z_{j}}=1}\leq \mathrm{cc}^{\mathrm{cc}} K^{-(\mathrm{cc}-x)} \ . 
\end{equation}
Since the condition $\prod_{(i,j)\in E^{(1)}_{\Delta}}\mathbf 1_{z_{i}=z_{j}}=1$ enforces that the nodes of each of the connected components of $\overline{\mathcal{G}}^{(1)}_{\Delta}$ are in the same community, we get
$$\mathbb{P}_{12}\cro{\prod_{(i,j)\in E^{(1)}_{\Delta}}\mathbf 1_{z_{i}=z_{j}}=1}\leq {1\over K^{\kappa\gamma+1-cc-1}}={1\over K^{\kappa\gamma-cc}}$$
and, combining with \eqref{eq:l(z)}
\begin{align*}
    \bbE_{12}\cro{\mathbf 1_{l(z)-1 = x}\left({\bar q \over \bar p}\right)^{(\ell(z)-1)(\gamma + a)} \prod_{(i,j)\in E^{(1)}_{\Delta}}\mathbf 1_{z_{i}=z_{j}}} &\leq  \frac{1}{K^{\kappa \gamma -\mathrm{cc}}}  \cdot \mathrm{cc}^{\mathrm{cc}} K^{-(\mathrm{cc}-x)} \left(\frac{\bar q }{\bar p}\right)^{x (\gamma +a)}\\ 
    &\quad\quad = \frac{1}{K^{\kappa \gamma }}  \cdot \mathrm{cc}^{\mathrm{cc}} K^{x} \left(\frac{\bar q }{\bar p}\right)^{x (\gamma +a)}\enspace .
\end{align*}    
Since $G$ is connected,  each connected component of $\overline{\mathcal{G}}^{(1)}_{\Delta}$ must contain at least one node in $\pi^{(1)}(V)\cap \pi^{(2)}(V)$. As a consequence, we have $\mathrm{cc}\leq u$. Combining Equation~\eqref{eq:up_E_12:blow-up} with the last equation, and using  $|E_{\Delta}|+ 2|E_{\cap}|= 2|E|$, we get 
\begin{align*}
\bbE_{12}\cro{P_{G,\pi^{(1)}}P_{G,\pi^{(2)}}} %&\leq 
%\lambda^{|E_{\Delta}|}\bar p^{|E_{\cap}|}\frac{1}{K^{\kappa \gamma-u}} \cdot \frac{1}{K^{\kappa \gamma -\mathrm{cc}}} \cdot 
%  u^u \sum_{x = 0 }^{u} \frac{1}{K^{\mathrm{cc}-x}} \left(\frac{\bar q }{\bar p}\right)^{x (\gamma +a)}\\ 
  & \leq \lambda^{|E_{\Delta}|}\bar p^{|E_{\cap}|}\frac{1}{K^{2\kappa \gamma-u}} 
  u^u \sum_{x = 0 }^{u} \left(K \left(\frac{\bar q }{\bar p}\right)^{\gamma +a}\right)^{x }\ \\
  & \leq \lambda^{2|E|}\left(\frac{\bar p}{\lambda^2}\right)^{|E_{\cap}|}\frac{1}{K^{2\kappa \gamma-u}} 
  u^u \sum_{x = 0 }^{u} \left(K \left(\frac{\bar q }{\bar p}\right)^{\gamma +a}\right)^{x }\ .
\end{align*}
\begin{lemma}\label{lem:cap}
We have 
\[
|E_{\cap}|\leq (\gamma + a ) u.
\]
\end{lemma}
\begin{proof}[Proof of Lemma~\ref{lem:cap}]
This result is a consequence of Proposition~\ref{prop:key:blow-up}. Indeed, consider a community assignment such that the $2+u$ nodes in $\pi^{(1)}(V)\cap \pi^{(2)}(V)$ belong to same community, whereas the $\kappa \gamma-u$ other nodes in $\pi^{(1)}(V)\setminus \pi^{(2)}(V)$ belong to distinct communities. Hence, for that assignment, we have $\kappa\gamma - u + 1$ communities and $|E^{\neq}|= |E|-|E_{\cap}|$. Then, Proposition~\ref{prop:key:blow-up} states that 
\[
|E|-|E_{\cap}|\geq (\gamma+a)[\kappa\gamma -u]\  .
\]
Since $|E|=(\gamma+a )\kappa \gamma$, Lemma~\ref{lem:cap} holds. 
\end{proof}

We deduce from the previous lemma that 
\begin{align}\nonumber
\bbE_{12}\cro{P_{G,\pi^{(1)}}P_{G,\pi^{(2)}}}&\leq \bbE^2_{12}\cro{P_{G,\pi}(Y)}  \left[K \left(\frac{\bar p}{\lambda^2}\vee 1\right)^{\gamma + a}\right]^u    u^u \sum_{x = 0 }^{u} \left(K \left(\frac{\bar q }{\bar p}\right)^{\gamma +a}\right)^{x }\\ 
&\leq \bbE^2_{12}\cro{P_{G,\pi}(Y)}  u^{u+1} \left[\left[K^2  \left(\frac{\bar{q}}{\lambda^2}\vee \frac{\bar{q}}{\bar p } \right)^{\gamma + a}\right]^u  + \left[K \left(\frac{\bar{p}}{\lambda^2}\vee 1 \right)^{\gamma + a}\right]^u   \right] \enspace . \label{eq:upper:covariance:u:blow-up}
\end{align}

\noindent\underline{Combining the terms.}
Combining \eqref{eq:upper_cross_u:0:blow-up} and  \eqref{eq:upper:covariance:u:blow-up}, we get 
\begin{align*}
\text{var}_{12}(R_{12}) &= \sum_{u=0}^{\kappa \gamma} \sum_{\substack{\pi^{(1)},\pi^{(2)}\in \Pi_{12}\\ |\text{range}(\pi^{(1)})\cap \text{range}(\pi^{(2)})|=2+u}}
\pa{\bbE_{12}\cro{P_{G,\pi^{(1)}}P_{G,\pi^{(2)}}}-\bbE_{12}\cro{P_{G,\pi^{(1)}}}\bbE_{12}\cro{P_{G,\pi^{(2)}}}}\\
& \leq   \sum_{u=1}^{\kappa \gamma}   \binom{\kappa \gamma}{u}^{2} {(n-2)!u!\over (n-2\kappa\gamma+u - 2)!}  \bbE_{12}\cro{P_{G,\pi^{(1)}}P_{G,\pi^{(2)}}}\\
&\leq \left(\frac{(n-2)!}{(n-\kappa \gamma-2)!}\right)^2\sum_{u=1}^{\kappa \gamma} (\kappa \gamma)^{2u}  \frac{1}{(n-\kappa \gamma-2)^u}\bbE_{12}\cro{P_{G,\pi^{(1)}}P_{G,\pi^{(2)}}} \\ 
%& \leq {(n-2)! (m-2)!\over (n-m)!} \bar p^{(m+1)(m-2)\over 2} \pa{{m-2 \over K}+\pa{\bar q\over \bar p}^{{m+1\over 2}}}^{m-2} \\ &\quad 
%+\sum_{u=1}^{m-3}   \binom{m-2}{u}^{2} {(n-2)!u!\over (n-2m+u+2)!} {\bar p^{(m+1)(m-2)-{(u+3)u\over 2}} \over K^{2m-4-u}}
%, 
&\leq \left(\frac{(n-2)!}{(n-\kappa \gamma-2)!}\right)^2 \bbE^2_{12}\cro{P_{G,\pi}(Y)}\sum_{u=1}^{\kappa\gamma } \frac{u^{u+1}(\kappa \gamma)^{2u} } {(n-\kappa \gamma-2)^u} \Bigg[\left[K^2  \left(\frac{\bar{q}}{\lambda^2}\vee \frac{\bar{q}}{\bar p } \right)^{\gamma + a}\right]^u  + \left[K \left(\frac{\bar{p}}{\lambda^2}\vee 1 \right)^{\gamma + a}\right]^u   \Bigg]\\
&\leq \bbE^2_{12}[R_{12}](\kappa\gamma)^2 \Bigg[\frac{2(\kappa \gamma)^{3} K^2 } {n}  \left(\frac{\bar{q}}{\lambda^2}\vee \frac{\bar{q}}{\bar p } \right)^{\gamma + a}  + \frac{2(\kappa \gamma)^{3} K } {n} \left(\frac{\bar{p}}{\lambda^2}\vee 1 \right)^{\gamma + a} \\& \quad\quad\quad \quad \quad \quad \quad \quad   + \left[\frac{2(\kappa \gamma)^{3} K^2 } {n}   \left(\frac{\bar{q}}{\lambda^2}\vee \frac{\bar{q}}{\bar p } \right)^{\gamma + a}\right]^{\kappa \gamma}  + \left[\frac{2(\kappa \gamma)^{3} K } {n} \left(\frac{\bar{p}}{\lambda^2}\vee 1 \right)^{\gamma + a}\right]^{\kappa \gamma}   \Bigg] \ . 
\end{align*}
This concludes the bound on $\mathrm{var}_{12}(R_{12})$ in Equation~\eqref{eq:var:12:blow-up}.

\subsection{Results under the distribution $\bbP_{\not 12}$}

\paragraph{Mean under $\bbP_{\not 12}$.}
We have from Equation~\eqref{eq:proof:ingredient:basis:mean}
\begin{equation*}
\bbE_{\not 12}\cro{P_{G,\pi}(Y)}= \lambda^{|E|} \,\bbE_{\not 12}\cro{\prod_{(i,j)\in \pi(E)}\mathbf 1_{z_{i}=z_{j}}}=0,
\end{equation*}
since $z_{1}\neq z_{2}$ a.s. under  $\bbP_{\not 12}$. Equation~\eqref{eq:mean:not12:blow-up} follows.

\paragraph{Variance under $\bbP_{\not 12}$.}
We follow the same approach as for  $\bbP_{12}$. 
Let $\pi^{(1)},\pi^{(2)}\in \Pi_{12}$ such that $|\text{range}(\pi^{(1)})\cap \text{range}(\pi^{(2)})|=2+u$, with $u\in \ac{0,\ldots,\kappa \gamma}$. Following~\eqref{eq:proof:ingredient:basis}, we again have 
$$\bbE_{\not 12}\cro{P_{G,\pi^{(1)}}P_{G,\pi^{(2)}}}=\lambda^{|E_{\Delta}|} \bar q^{|E_{\cap}|} \bbE_{\not 12}\cro{\prod_{(i,j)\in E_{\Delta}}\mathbf 1_{z_{i}=z_{j}}\prod_{\substack{(i,j)\in E_{\cap}\\ z_{i}=z_{j}}}{\bar p\over \bar q}}.$$
\smallskip

If $u=0$, we have $E_{\cap}=\emptyset$ and since $z_1\neq z_2$, we deduce that 
$\bbE_{\not 12}\cro{P_{G,\pi^{(1)}}P_{G,\pi^{(2)}}}=0$.  Then, for $u\in [1, \kappa \gamma]$, we argue as for $\bbE_{12}$. In particular, the counterpart of~\eqref{eq:up_E_12:blow-up} still holds up to the difference that under $\bbE_{\not 12}$, the number of communities that are distinct of $z_1$ or of $z_2$ is $\ell(z)-2$. Hence, we have 
\begin{align*}
\bbE_{\not 12}\cro{P_{G,\pi^{(1)}}P_{G,\pi^{(2)}}}
&\le \lambda^{|E_{\Delta}|}\bar p^{|E_{\cap}|}\frac{1}{K^{\kappa \gamma-u}}
\;\bbE_{\not 12}\cro{\left({\bar q \over \bar p}\right)^{(\ell(z)-2)_+(\gamma + a)} \prod_{(i,j)\in E^{(1)}_{\Delta}}\mathbf 1_{z_{i}=z_{j}}} \enspace . 
\end{align*}
Write again $\mathrm{cc}$ as the number of connected components of the graph $\overline{\mathcal{G}}^{(1)}_{\Delta}=(\pi^{(1)}(V),E^{(1)}_{\Delta})$ when we remove those containing $\pi^{(1)}(v_1)=1$ or  $\pi^{(1)}(v_2)=2$. Then, conditionally to $\prod_{(i,j)\in E^{(1)}_{\Delta}}\mathbf 1_{z_{i}=z_{j}}=1$, we have $(\ell(z)-2)\in [0,\mathrm{cc}]$ and for any $x\in [0,\mathrm{cc}]$ 
\[
\mathbb{P}_{\not 12}\cro{\ell(z)-2= x\,\Bigg|\, \prod_{(i,j)\in E^{(1)}_{\Delta}}\mathbf 1_{z_{i}=z_{j}}=1}\leq \mathrm{cc}^{\mathrm{cc}} K^{-(\mathrm{cc}-x)} \ . 
\]
Hence, we arrive as previously at 
\begin{align*}
\bbE_{\not 12}\cro{P_{G,\pi^{(1)}}P_{G,\pi^{(2)}}}&
 \leq \lambda^{2|E|}\left(\frac{\bar p}{\lambda^2}\right)^{|E_{\cap}|}\frac{1}{K^{2\kappa \gamma-u}} 
  u^u \sum_{x = 0 }^{u} \left(K \left(\frac{\bar q }{\bar p}\right)^{\gamma +a}\right)^{x }\ , 
\end{align*}
and the counterpart of~\eqref{eq:upper:covariance:u:blow-up} holds: 
\begin{align*}\nonumber
\bbE_{\not 12}\cro{P_{G,\pi^{(1)}}P_{G,\pi^{(2)}}}
&\leq \bbE^2_{12}\cro{P_{G,\pi}(Y)}  u^{u+1} \left[\left[K^2  \left(\frac{\bar{q}}{\lambda^2}\vee \frac{\bar{q}}{\bar p } \right)^{\gamma + a}\right]^u  + \left[K \left(\frac{\bar{p}}{\lambda^2}\vee 1 \right)^{\gamma + a}\right]^u   \right] \enspace . 
\end{align*}
Then, summing over all $\pi^{(1)}$ and $\pi^{(2)}$, we conclude as previously 
the bound on $\mathrm{var}_{\not 12}(R_{12})$
 in Equation~\eqref{eq:var:12:blow-up}.

\section{Proof of Lemma \ref{lem:rstar}}\label{sec:signal}

The signal condition showing up in the low-degree lower bound is of the form $\lambda\leq D^{-c}\lambda_{c}$, where
\begin{equation}\label{eq:transition}
\sup_{r\geq 1} \ac{n\lambda_{c}^{2r}\over K \lambda_{c}^r+K^2 \bar q^r} = 1\ .
\end{equation}
Lemma~\ref{lem:rstar}, stated  page~\pageref{lem:rstar}, makes explicit the phase transition in the Condition~\eqref{eq:transition}. 

%\begin{lemma}\label{lem:rstar}
%Assume that $q=n^{-\alpha_{n}}$ with $\alpha_{n}\in(0,+\infty)$, and $K=n^{1+\delta_{n}\over 2}$  with $\delta_{n}\in(-1,1)$. Define $\lambda_{c}$ such that
%\begin{equation}\label{eq:supr-transition}
%\sup_{r\geq 1} \ac{n\lambda_{c}^{2r}\over K \lambda_{c}^r+K^2 q^r}=1,
%\end{equation}
%and $\bar r_{n}$ achieving the maximum.
%Then, for any small $\epsilon>0$, and for $n$ large 
%\begin{enumerate}
%\item If $K\leq (1-\epsilon)\sqrt{n}$, then $\bar r_{n}=1$ and 
%$$\lambda_{c}=\lambda_{KS}:={K\over 2n} \pa{1+\sqrt{1+2nq}}.$$
%\item If $(1+\epsilon)\sqrt{n}\leq K\leq n^{1-\epsilon}$, then
%\begin{itemize}
%\item If $q\leq {1-\delta_{n}^2\over 4\delta_{n}^2}\times{1-o(1)\over n}$, then $\bar r_{n}=1$ and $\lambda_{c}=\lambda_{KS}$;
%\item $q\geq {1-\delta_{n}^2\over 4\delta_{n}^2}\times{1-o(1)\over n}$, then 
%\begin{align*}
%\bar r_{n}&\sim \alpha_{n}^{-1} \pa{1-O\pa{\log(1/\delta_{n})\over \log(n)}}\ ,\\
%\lambda_{c} &\sim q^{(1-\delta_{n})/2} \pa{(1+\delta_{n})^{1+\delta_{n}}\over (1-\delta_{n})^{1-\delta_{n}}(2\delta_{n})^{2\delta_{n}}}^{\alpha_{n}/2}.
%\end{align*}
%\end{itemize}
%\end{enumerate}
%\end{lemma}

\bigskip

\noindent{\bf Proof of Lemma \ref{lem:rstar}}

Let $\rho:=q/\lambda$ and
$$\phi(r):= {\lambda^r\over 1+K \rho^r}= {\lambda^{2r}\over \lambda^r+K q^r}.$$
Let $r^*$ such that $\phi'(r^*)=0$.
We have
\begin{equation}\label{eq:r*}
\rho^{r^*}= {1\over K}\times{\log(1/\lambda)\over  \log(\lambda/\rho)}:={u\over K}.
\end{equation}
We have
$$\phi(r^*)= {\lambda^{r^*}\over 1+K \rho^{r^*}}=  {\lambda^{r^*}\over 1+u},$$
so if  $\lambda_{*}$ is such that $\phi(r^*)=K/n$, then 
\begin{equation}\label{eq:lambda*}
\lambda_{*}^{r^*}={(1+u)K\over n}.
\end{equation}
Since $q=\lambda_{*} \rho$, we then get
\begin{equation}\label{eq:q*}
q^{r^*}={u(1+u)\over n},\quad \text{with}\ \ u={\log(1/\lambda_{*})\over  \log(\lambda_{*}^2/q)}.
\end{equation}
Below, we write $\lambda$ for $\lambda_{*}$
We have
$$u={r^*\log(1/\lambda)\over r^* \log(\lambda/\rho)}={\log(n/K)-\log(1+u)\over \log(K^2/n)+\log(1+1/u)},
$$
and hence
\begin{equation}\label{eq:u*}
u \log(K^2/n)+u\log(1+1/u)+\log(1+u)=\log(n/K).
\end{equation}
We seek for  a solution $u>0$ and $r^*\geq 1$ to \eqref{eq:q*} and \eqref{eq:u*}.
We denote $K=n^{(1+\delta_{n})/2}$.

\paragraph{Case $K\leq (1-\epsilon)\sqrt{n}$.} 
Then $\log(K^2/n)\leq  2\log(1-\epsilon)<0$ and  
$$-2u\log\pa{1\over 1-\epsilon}+u\log(1+1/u)+\log(1+u)\geq {1\over 2}\log(n) $$
which has no positive solution in $u$ for $n$ large, since the left hand side is bounded from above.
In this case, the supremum in~\eqref{eq:transition} is achieved for $r=1$ and the critical value is the KS threshold
$$\lambda_{c}=\lambda_{KS}= {K\over 2n} \pa{1+\sqrt{1+2nq}}.$$

\paragraph{Case $(1+\epsilon)\sqrt{n}\leq K\leq n^{1-\epsilon}$.} Hence ${2\log(1+\epsilon)\over \log(n)}\leq \delta_{n}\leq 1-\epsilon/2$. 
Then, we have
$$u={1-\delta\over 2\delta} -{u\log(1+1/u)+\log(1+u)\over \log(n)}={1-\delta\over 2\delta}-O\pa{\log(1/\delta)\over \log(n)}={1-\delta\over 2\delta}\pa{1-O\pa{\delta\log(1/\delta)\over \log(n)}}.$$
We observe that $r^*\geq 1$ iff $q\geq u(1+u)/n$,
i.e. 
\begin{equation}\label{eq:outKS}
q\geq {1-\delta^2\over 4\delta^2}\times{1-O\pa{\delta\log(1/\delta)\over \log(n)}\over n}.
\end{equation}
Writing $q=n^{-\alpha_{n}}$ and solving $q^{r^*}= u(1+u)/n$
gives 
$$n^{1-\alpha_{n}r^*}={1-\delta^2\over 4\delta^2}\times\pa{1-O\pa{\delta\log(1/\delta)\over \log(n)}}$$
and
\begin{align}
{1\over r^*}&= \alpha_{n} \pa{1-{\log((1-\delta^2)/4\delta^2)-O(\delta\log(1/\delta)/\log(n))\over \log(n)}}^{-1}\nonumber\\
&=\alpha_{n}\pa{1+{\log((1-\delta^2)/4\delta^2)\over \log(n)}+O\pa{\delta\log(1/\delta)\over \log(n)^2}}.\label{eq:r*value}
\end{align}
From \eqref{eq:lambda*}, $q^{r^*}= u(1+u)/n$ and $\log(1+u)=\log(1/\delta)=\log\log(n)$ we get
\begin{align*}
\lambda_{c}&= \pa{(1+u) n^{-(1-\delta)/2}}^{\alpha_{n}\pa{1+{\log(u(1+u))\over \log(n)}+O\pa{\pa{\log\log(n)\over \log(n)}^2}}}\\
&=n^{-\alpha_{n}(1-\delta)/2}\times\pa{(1+u)^{1+\delta}\over u^{1-\delta}}^{\alpha_{n}/2}(1+o(1))\\
&\sim q^{(1-\delta)/2} \pa{(1+\delta)^{1+\delta}\over (1-\delta)^{1-\delta}(2\delta)^{2\delta}}^{\alpha_{n}/2}.
\end{align*}

\end{document}